\definecolor{Bleu}{RGB}{0,0,204}
\newcommand\smallO{
  \mathchoice
    {{\scriptstyle\mathcal{O}}}% \displaystyle
    {{\scriptstyle\mathcal{O}}}% \textstyle
    {{\scriptscriptstyle\mathcal{O}}}% \scriptstyle
    {\scalebox{.7}{$\scriptscriptstyle\mathcal{O}$}}%\scriptscriptstyle
  }
\newcommand{\norm}[1]{\left\lVert#1\right\rVert}
\DeclareMathOperator*{\argmin}{argmin}
\declaretheoremstyle[
  headfont=\bfseries,
  headindent=-0.25em,
  postheadspace=0.5em,
  notefont=\bfseries, 
  notebraces={}{},
  headformat=\NOTE\thmt@space\NUMBER,
  bodyfont=\mdseries,
  spaceabove=12pt,spacebelow=12pt,
]{namedthmstyle}
   \newtheorem{theorem}{Theorem}
 \newtheorem{lemma}{Lemma}
\newcommand{\miid}{\,|\,}
\declaretheoremstyle[
  headfont=\bfseries,
  headindent=-0.25em,
  postheadspace=0.5em,
  notefont=\bfseries, 
  notebraces={}{},
   bodyfont=\normalfont\itshape,
  headformat=\NAME\NUMBER\NOTE\thmt,
  spaceabove=12pt,spacebelow=12pt,
]{nospacetheorem}
\declaretheoremstyle[
  bodyfont=\normalfont\itshape,
  headformat=\NAME\NUMBER  
]{tmp}
\declaretheorem[style=definition]{example}
\renewcommand\thmcontinues[1]{continued}
\newcounter{parentnumber}
\crefname{example}{Example}{Examples} 
\renewcommand\thmcontinues[1]{continued}
\begin{document}

%\bibliographystyle{natbib}

% \def\spacingset#1{\renewcommand{\baselinestretch}%
% {#1}\small\normalsize} \spacingset{1}

%%%%%%%%%%%%%%%%%%%%%%%%%%%%%%%%%%%%%%%%%%%%%%%%%%%%%%%%%%%%%%%%%%%%%%%%%%%%%%

\title{Combining T-learner and DR-learning:\\
a framework for oracle-efficient estimation of causal contrasts
}
 
\date{\today}
\author[1]{Lars van der Laan, Marco Carone, Alex Luedtke}
% \author[2]{Author 2}
% \author[3]{Author 3}

% \affil[1]{Department of Statistics, University of Washington}
% \affil[2]{Affil 2}
% \affil[3]{Affil 3}

\maketitle

\begin{abstract}
 
\onehalfspacing
We introduce efficient plug-in (EP) learning, a novel framework for the estimation of heterogeneous causal contrasts, such as the conditional average treatment effect and conditional relative risk. The EP-learning framework enjoys the same oracle efficiency as Neyman-orthogonal learning strategies, such as DR-learning and R-learning, while addressing some of their primary drawbacks: (i) their practical applicability can be hindered by non-convex loss functions; and (ii) they may suffer from poor performance and instability due to inverse probability weighting and pseudo-outcomes that violate bounds. To overcome these issues, the EP-learner leverages an efficient plug-in estimator of the population risk function for the causal contrast. In doing so, it inherits the stability of plug-in strategies such as T-learning, while improving on their efficiency. Under reasonable conditions, EP-learners based on empirical risk minimization are oracle-efficient, exhibiting asymptotic equivalence to the minimizer of an oracle-efficient one-step debiased estimator of the population risk function. In simulation experiments, we show that EP-learners of the conditional average treatment effect and conditional relative risk outperform state-of-the-art competitors, including the T-learner, R-learner, and DR-learner. Open-source implementations of the proposed methods are available in our \texttt{R} package \texttt{hte3}.

%We introduce efficient plug-in (EP) learning, a novel framework for the estimation of heterogeneous causal contrasts, such as the conditional average treatment effect and conditional relative risk. The EP-learning framework enjoys the same oracle-efficiency as Neyman-orthogonal learning strategies, such as DR-learning and R-learning, while addressing some of their primary drawbacks, including that (i) their practical applicability can be hindered by loss function non-convexity; and (ii) they may suffer from poor performance and instability due to inverse probability weighting and pseudo-outcomes that violate bounds. To avoid these drawbacks, EP-learner constructs an efficient plug-in estimator of the population risk function for the causal contrast, thereby inheriting the stability and robustness properties of plug-in estimation strategies like T-learning. Under reasonable conditions, EP-learners based on empirical risk minimization are oracle-efficient, exhibiting asymptotic equivalence to the minimizer of an oracle-efficient one-step debiased estimator of the population risk function. In simulation experiments, we illustrate that EP-learners of the conditional average treatment effect and conditional relative risk outperform state-of-the-art competitors, including T-learner, R-learner, and DR-learner. Open-source implementations of the proposed methods are available in our \texttt{R} package \texttt{hte3}.

\end{abstract}

\section{Introduction}

\subsection{Background}
In recent years, there has been a surge in interest in the development of tools designed for the flexible estimation of causally interpretable functions through the use of machine learning techniques. Many of these functions constitute subgroup-specific contrasts possibly describing heterogeneous treatment effects. Notable examples of such functions include the conditional average treatment effect (CATE) \citep{Robins2004NestedModels, KennedyCATERates}, conditional relative risk (CRR) \citep{van2007estimation, richardson2017modeling, LuedtkeEntropyDiscussion}, and causal conditional log-odds ratio \citep{vansteelandt2003causal, robins2004estimation, TchetgenOddsRatio}. Under causal conditions, these functions can be expressed in terms of quantities estimable from randomly sampled data; often, these quantities involve outcome regressions, that is, conditional means of the outcome given certain baseline covariates and treatment levels.

Plug-in estimation, also known as T-learning \citep{Tlearner}, offers a straightforward approach for estimating heterogeneous causal contrasts. In this approach, outcome regressions are estimated and then directly substituted into the contrast to compute the causal quantity of interest. However, a limitation of T-learners is that their performance is heavily dependent on the quality of the outcome regression estimators. This sensitivity is undesirable as the form of the outcome regressions can be significantly more complex---and therefore more difficult to estimate---than that of the causal contrast of interest. For example, the outcome regressions can be highly non-smooth, even if the conditional average treatment effect is constant. Consequently, there is a need for developing estimation methodologies that leverage the parsimony of the contrast function when feasible, while maintaining sufficient flexibility to accommodate complex functional forms \citep{superLearnOptRule,KennedyCATERates,QuasiOracleWager}.

 As a means of estimating a causal contrast, it is natural to first identify a risk function for this particular contrast. When an estimator for such risk function is available, contrasts can be estimated using flexible supervised learning tools such as regularized empirical risk minimizers \citep{van2000empirical}, random forests \citep{breiman2001random}, and gradient boosting \citep{BoostingFruend}, among others. However, estimating a risk function can be challenging, particularly when the loss function depends on unknown nuisance functions. For instance, the DR-learner loss \citep{TMLEOptTrt, superLearnOptRule, KennedyCATERates} and R-learner loss \citep{QuasiOracleWager, atheywager2014causalforest} for CATE estimation as well as the E-learner loss \citep{JiangEntropy} for CRR estimation depend on the outcome regression and propensity score. 
 When a nonparametric efficient estimator of the resulting risk is used, the empirical risk minimizer based on estimated nuisance functions often estimates the causal contrast as accurately as the (oracle) empirical risk minimizer using the true nuisance functions \citep{atheywager2014causalforest, orthogonalLearning,KennedyCATERates, van2023causal, yang2023forster}. In parametric settings, an empirical risk minimizer based on an efficient risk estimator is typically efficient for the causal contrast under certain regularity conditions \citep{VaartEffFunctionEff, mcclean2022nonparametric, van2023adaptive}---this motivates the use of an efficient risk estimator in such problems. In nonparametric settings, to our knowledge, such a general statement is not currently available. However, the advantages of using an efficient risk estimator have been theoretically established in the case of certain causal functions (Theorem 3 of \citealp{conttrtKennedy}; Theorems 1 and 5 of \citealp{atheyWagerPolicyLearning}) and validated numerically for estimating the CATE \citep{KennedyCATERates, van2023causal}.

Many efficient estimators of commonly used causal contrast risks are based on loss functions that are Neyman-orthogonal \citep{RobinsRotnitzkyZhao1995, VanderLaanRobins2003, DoubleML,  orthogonalLearning, curth2020estimating, yang2023forster}, including those leveraged by DR-learner and R-learner \citep{orthogonalLearning}. In most instances in the literature, a Neyman-orthogonal loss is derived, either explicitly or implicitly, by debiasing an initial plug-in risk estimator using the one-step estimation methodology \citep{pfanzagl1985contributions, bickel1993efficient}. However, there are two substantial drawbacks associated with efficient risk estimators built upon Neyman-orthogonal loss functions. First, Neyman-orthogonal loss functions can be nonconvex, even when they are derived by orthogonalizing a convex loss function. For instance, the Neyman-orthogonal losses associated with the E-learning risk for the CRR \citep{JiangEntropy, LuedtkeEntropyDiscussion} and the bound-enforcing risk for the CATE \citep{superLearnOptRule} are nonconvex. Specifically, they can be expressed as weighted logistic regression loss functions with weights that may take negative values. This is especially problematic as numerous supervised learning implementations, including widely used software for fitting generalized additive models \citep{mgcv}, random forests \citep{ranger}, and gradient-boosted regression trees \citep{xgboost}, necessitate nonnegative weights. Second, in certain scenarios, empirical risk minimizers based on Neyman-orthogonal loss functions may yield extreme and unrealistic values for the causal contrast of interest. For example, when the outcome is binary, DR-learners \citep{TMLEOptTrt, KennedyCATERates} may output treatment effect estimates residing outside of $[-1,1]$, which is undesirable since the true CATE is confined within this interval. This failure to respect known bounds arises because fitting DR-learner involves regressing a pseudo-outcome on the covariates of interest, and this pseudo-outcome can take extreme values since it involves inverse-weighting by the estimated propensity score.

The inherent limitations of efficient risk estimators based on Neyman-orthogonal losses have been recognized in the literature, and several strategies have been proposed to address them. As these efficient risk estimators can yield nonconvex loss functions, researchers may opt for risk estimators derived from convex loss functions, even despite their inefficiency. Common examples include inverse-probability-weighted \citep{luedtke2016super,JiangEntropy, chen2017general} or plug-in \citep{curth2021nonparametric} risk estimators, which often fail to achieve parametric-rate consistency when the nuisance functions are estimated flexibly. \textcolor{black}{To avoid instability from inverse propensity weighting, overlap weighting can be used to down-weight observations in the tails of the propensity-score distribution \citep{overlapweights, vansteelandt2020assumption, morzywolek2023general}. For example, IV-DR-learner uses a DR-learner loss with estimated overlap weights \citep{fisher2023connection}, and R-learner uses an orthogonal loss for the overlap weighted risk (Corollary~9.1 of \citealp{robins2004optimal}; Section~5.2 of \citealp{robins2008higher};  \citealp{QuasiOracleWager}). The key to the improved stability of these methods is that they more heavily weight regions of stronger overlap and underweight others. When the CATE is sufficiently smooth to allow reliable extrapolation from regions of greater overlap, such weighting can improve unweighted risk performance \citep{KennedyCATERates, KennedyMiximax, fisher2023connection}. %However, when it is not, such weighting may lead to a bias-variance tradeoff in the weighted population that is suboptimal for the unweighted one.
}

\subsection{Our contributions}

We provide a method for estimating causal contrasts based on a novel efficient plug-in risk estimator. 
Rather than focusing on a single risk function for a particular contrast, such as the CATE, we study a broad class of risk functions that includes this well-studied case as well as others, such as the CRR function. 
\textcolor{black}{Like T-learners, EP-learners rely on the plug-in principle. However, unlike T-learners,  they do this by computing a constrained minimizer of an efficient plug-in risk estimator. As such, EP-learners achieve the fast rates and oracle efficiency enjoyed by Neyman-orthogonal learning strategies.} Our main contributions are as follows:
\begin{enumerate}[label=(\roman*)]
    \item we introduce the EP-learner for estimating causal contrasts, which is based on a novel efficient plug-in (EP) risk estimator;

    \item we establish that, under reasonable conditions, the EP-learner is oracle-efficient, being asymptotically equivalent to the minimizer of an oracle-efficient one-step risk estimator;
 
    \item we introduce EP-learners for the CATE and CRR functions, both of which are doubly robust.
\end{enumerate}

\begin{comment}

\end{comment}

\section{Problem setup}
\label{section::setup3} 

\label{section::setup}

\subsection{Data structure and notation}

Suppose that we have at our disposal a sample of $n$ independent and identically distributed observations, $O_1,O_2, \dots, O_n$, of the data structure $O = (W,A,Y)$ drawn from a probability distribution $P_0$ belonging to a nonparametric statistical model $\mathcal{M}$. In this data structure, $W \in \mathcal{W} \subset \mathbb{R}^d$ is a vector of baseline covariates, $A \in \mathcal{A} \subset \mathbb{R}$ is a possibly continuous treatment assignment, and $Y \in \mathbb{R}$ is a bounded real-valued outcome. These observations may arise from an observational study or a randomized controlled trial. We let $\{Y^a: a \in \mathcal{A}\}$ be the set of potential outcomes associated with the observed data structure $(W,A,Y)$ \citep{Rubin2005}, where $Y^a$ is the outcome that would have been observed if, possibly contrary to fact, treatment $a \in \mathcal{A}$ had been administered. 

 For a given distribution $P \in \mathcal{M}$ and a generic realization $(a,w)$ of $(A,W)$, we denote the outcome regression by $\mu_P(a,w):= E_P(Y \miid A = a , W = w)$ and the propensity score by $\pi_P(a\miid w) := P(A=a\miid W =w)$. Further, we denote $\mu^a(w) :=\mathbb{E}(Y^a \miid W=w)$, where $\mathbb{E}$ denotes the expectation under the joint distribution of the covariates $W$ and the potential outcomes. Throughout, we let $E_0^n$ denote the expectation with respect to the product measure $P_0^n$ from which $(O_1,O_2,\ldots,O_n)$ is drawn. Further, we denote by $P_{0,W}$ the marginal distribution of $W$ under $P_0$, and by $\norm{f}$ and $\norm{f}_{\infty}$ the $L^2(P_0)$ and $P_0$--essential supremum norm, respectively, of a given function $f \in L^2(P_0)$. For notational convenience, we denote the set $[m] := \{1,2, \dots, m\}$ for $m \in \mathbb{N}$ and %.  To simplify notation, we 
 write $\mathcal{S}_0$ to denote any summary $\mathcal{S}_{P_0}$ of the true distribution $P_0$.

\subsection{Statistical goal}
\label{section::riskfunctions}

\textcolor{black}{Our goal is to estimate a causal summary, such as the CATE, or its projection onto a convex action space \(\mathcal{F}\). Specifically, we consider \(\theta_0 = \argmin_{\theta \in \overline{\mathcal{F}}} R_{0}(\theta)\), the minimizer of a population risk function \(R_0\) over the \(L^2(P_{0,W})\)–closure \(\overline{\mathcal{F}}\) of \(\mathcal{F}\). The function class \(\mathcal{F}\) is user-specified and may be infinite-dimensional, as is the case for a reproducing kernel Hilbert space or H\"older class. If the causal summary lies in \(\mathcal{F}\), then \(\theta_0\) coincides with the summary itself, with \(\mathcal{F}\) encoding smoothness assumptions such as H\"older continuity. Otherwise, \(\mathcal{F}\) serves as a working model that provides a useful approximation.}

In this work, we restrict our attention to risk functions of the form $R_0 := R_{P_0}$, where $R_{P}(\theta) := E_P[L_{\mu_P}(W,\theta)]$ with loss function 
 \begin{equation}
L_{\mu_P}(w, \theta) := h_1 (\theta(w)) \sum_{s \in \mathcal{A}} c_{s,1} g_1 (\mu_P(s,w)) + h_2(\theta(w)) \sum_{s \in \mathcal{A}} c_{s,2} g_2(\mu_P(s,w))\label{eqn::GeneralClassRisk}
 \end{equation}
for arbitrary functions $g_1,g_2: \mathbb{R} \rightarrow \mathbb{R}$ with a Lipschitz derivative, twice-differentiable Lipschitz functions $h_1,h_2: \mathbb{R} \rightarrow \mathbb{R}$ with a continuous second derivative, and known constants $c_{a,1},c_{a,2}$ for $a\in\mathcal{A}$. \textcolor{black}{We introduce the general loss structure in \eqref{eqn::GeneralClassRisk} to encompass a wide range of causal summaries. Notably, any summary of the form $\theta_0 : w\mapsto \sum_{a\in\mathcal{A}} c_a f(\mu_a(w))$ with constants $c_a\in\mathbb{R}$  and fixed function $f:\mathbb{R}\rightarrow\mathbb{R}$ can be expressed in this manner. The particular form of the loss is not central to our method---beyond mild smoothness conditions, EP-learners can be constructed much more generally. In many applications, the loss simplifies considerably; for instance, in common cases $g_1$ and $g_2$ are simply the identity.}
The CATE and CRR functions are notable examples of such summaries.

 \begin{example}[label=ex1, name=conditional average treatment effect] 
 The CATE $\theta_0^{-}:w \mapsto \mu_0(1,w) - \mu_0(0,w)$ minimizes the risk $\theta\mapsto R_0^{-}(\theta)$ over $L^2(P_{0,W})$, where we define
 \begin{equation}
R_P^{-}(\theta) := E_P\left[\theta(W)^2 - 2\theta(W)\left\{\mu_P(1,W) - \mu_P(0,W) \right\} \right].\label{eqn::CATERisk}
\end{equation}
This risk function is obtained by taking $\mathcal{A} = \{0,1\}$, $h_1(\theta) = \theta^2$, $h_2(\theta) = -2\theta $,  $g_1 \circ \mu = 1$, $g_2 \circ \mu = \mu$, $c_{1,1} = 1, \, c_{0, 1} = 0$, $c_{1,2} = 1$ and $c_{0,2} = -1$. We note that this same risk function can be used to learn any $V$-specific CATE function $v\mapsto E_0[\mu_0(1,W) - \mu_0(0,W)\,|\,V=v]$ for a coarsened covariate vector $V := f(W)$ with  $f:\mathcal{W} \rightarrow \mathcal{V}\subseteq \mathbb{R}^{d_0}$ and $d_0\leq d$. For example, $V$ may represent a subset of components of $W$. This is achieved by minimizing $\theta\mapsto R_0^-(\theta)$ over $L^2(P_{0,V})$, where $P_{0,V}$ denotes the distribution of $f(W)$ under sampling from $P_0$, rather than $L^2(P_{0,W})$ \citep{morzywolek2023general}.
\label{Example::CATE}
\end{example}

 \begin{example}[label=ex2, name=conditional relative risk]
When the outcome $Y$ is binary or nonnegative, the log-CRR function $\theta_0^\div : w\mapsto \log \mu_0(1,w)-\log \mu_0(0,w)$  minimizes the risk $\theta\mapsto R_0^{\div}(\theta)$ over $L^2(P_{0,W})$ \citep{JiangEntropy, LuedtkeEntropyDiscussion}, where we define
\begin{equation}
    R_P^{\div}(\theta) := E_P\left[\left\{\mu_P(1,W) + \mu_P(0,W)\right\}  \log(1 + \exp \{\theta(W)\}) - \mu_P(1,W)\theta(W)  \right]. \label{eqn::popriskRR}
\end{equation}
This risk function is obtained by taking $\mathcal{A} = \{0,1\}$, $h_1(\theta) = \log (1 + \exp (\theta))$, $h_2(\theta) = -\theta$, $g_1(\mu) = g_2(\mu) =\mu$, $c_{0,1} = c_{1,1}= 1$, $c_{0,2} = 0$ and $c_{1,2} = 1$. Similarly as in Example \ref{Example::CATE}, for a coarsened covariate vector $V := f(W)$, this risk function can also be used to learn the $V$-specific CRR function $v\mapsto \log E_0[\mu_0(1,W) \,|\, V=v] - \log E_0[\mu_0(0,W) \,|\, V=v]$. A related exponential loss was also proposed by \cite{chen2017general}.

\label{Example::CRR}
\end{example}

 We assume that the loss function giving rise to $R_0$ is $\gamma$--strongly convex \citep[Equation 14.42 of][]{wainwright_2019}---in Lemma~\ref{lemma::uniquePopMinimizer2}, we show that this condition suffices for the existence and uniqueness of $\theta_0$. In Lemma~\ref{lemma::uniquePopMinimizer1}, we show that this strong convexity holds whenever there exists a constant $\gamma>0$ such that $\sum_{a \in \mathcal{A}} c_{a,1} g_1(\mu_0(a,W))\not=0$, $\ddot{h}_2(\theta(W))\not=0$ and
$$\frac{\ddot{h}_1(\theta(W))}{\ddot{h}_2(\theta(W))} > - \frac{ \sum_{a \in \mathcal{A}} c_{a,2} g_2(\mu_0(a,W))    }{ \sum_{a \in \mathcal{A}} c_{a,1}  g_1(\mu_0(a,W))  } +   \frac{\gamma}{\ddot{h}_2(\theta(W)) \sum_{a \in \mathcal{A}} c_{a,1} g_1(\mu_0(a,W)) }$$ both hold $P_0$--almost surely.
Here, for $m \in \{1,2\}$, we denote the first derivatives of $g_m$ and $h_m$ as $\dot{g}_m$ and $\dot{h}_m$, respectively, and the second derivative of $h_m$ by $\ddot{h}_m$. These conditions apply to both the CATE and CRR examples introduced above ---see Examples 14.16 and 14.18 in \cite{wainwright_2019} for details.

\subsection{Our proposed approach: EP-learning}
\label{section:eplrnrproposal}

Given a population risk function $R_0$, it is natural to estimate the causal summary $\theta_0$ using empirical risk minimization techniques based on an efficient estimator of $R_0$. In the existing literature, an `orthogonal learning' strategy is often employed, wherein an efficient risk estimator is derived from a Neyman-orthogonal loss function \citep{orthogonalLearning}. Such a strategy benefits from relative insensitivity to the accuracy of involved nuisance estimators. In this section, we introduce, at a high level, our proposed EP-learning framework---an alternative approach to orthogonal learning---based on a novel, efficient plug-in estimator for the class of population risk functions under consideration.

To derive an efficient estimator of $R_0$, in the context of either orthogonal or EP-learning, we exploit the fact that the population risk parameter $P \mapsto R_{P}(\theta)$ at a specified $\theta \in \overline{\mathcal{F}}$ is a pathwise differentiable parameter under $\mathcal{M}$ and therefore amenable to nonparametric efficient estimation using standard techniques \citep{diaz2013targeted}. Pathwise differentiability implies the existence of a nonparametric efficient influence function of the $\theta$--specific population risk parameter, the variance of which provides the generalized Cramér-Rao lower bound for estimating $R_0(\theta)$. Specifically, under regularity conditions, for given $\theta \in \overline{\mathcal{F}}$, this efficient influence function has the form
\[D_{P,\theta} : (w,a,y) \mapsto L_{\mu_P}(\theta,w)  + \Delta_{\pi_P, \mu_P}(w,a,y; \theta) - R_{P}(\theta)\ ,\]
where, letting $H_{m,\mu_P}(a,w) :=   c_{a,m}\cdot\dot{g}_m(\mu_P(a,w))$ for $m\in \{1,2\}$, we write
\begin{align*}
    &\Delta_{\pi_P, \mu_P}(w,a,y; \theta) := \frac{ 1}{\pi_P(a,w)} \left\{\sum_{m \in \{1,2\}}  H_{ m,\mu_P}(a,w) h_m(\theta(w))  \right\}\left\{y - \mu_P(a,w) \right\}.
\end{align*}
 In the theorem below, we formalize this fact. To do so, we require the following overlap condition:

  \begin{enumerate}[label=\bf{A\arabic*)}, ref = A\arabic*]
    \item $P(\pi_P(a \miid W) > \delta) = 1$ for all $a \in \mathcal{A}$ and $P \in \mathcal{M}$. \label{cond::pos}
\end{enumerate}
\begin{theorem}
Suppose Condition \ref{cond::pos} holds. Then, for an arbitrary element $\theta \in \overline{\mathcal{F}}$, the nonparametric efficient influence function of $P' \mapsto R_{P'}(\theta)$ at $P \in \mathcal{M}$ is given by $D_{P,\theta}$. 
\label{theorem::generalEIF}
\end{theorem}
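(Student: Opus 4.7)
The plan is to derive the EIF by decomposing $R_P(\theta)$ into a sum of simpler functionals, each of which depends on $P$ only through the marginal distribution of $W$ and through a single evaluation $\mu_P(s,\cdot)$ of the outcome regression, and then to compute the EIF of each piece using a standard pathwise differentiability calculation. Because $\theta$ is held fixed (it is not estimated here but treated as an element of $\overline{\mathcal{F}}$), $h_1\circ\theta$ and $h_2\circ\theta$ act as known, fixed functions of $W$, so the loss $L_{\mu_P}(w,\theta)$ is a finite $\mathcal{F}$-linear combination of terms of the form $f(w)\, g_m(\mu_P(s,w))$ with $f(w) := c_{s,m}h_m(\theta(w))$ a fixed weight.

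First, I would invoke the well-known EIF formula for functionals of the form $\Psi_{s,m,f}(P) := E_P[f(W)\,g_m(\mu_P(s,W))]$ at $P \in \mathcal{M}$; under Condition \ref{cond::pos} a chain-rule / delta-method calculation along a regular parametric submodel $\{P_\varepsilon\}$ with score $s_0(o) = s_0(y\miid a,w) + s_0(a\miid w) + s_0(w)$ gives
\[
\frac{d}{d\varepsilon}\Psi_{s,m,f}(P_\varepsilon)\bigg|_{\varepsilon=0}
\;=\; E_P\!\left[\phi_{s,m,f}(O)\,s_0(O)\right],
\]
with influence function
\[
\phi_{s,m,f}(w,a,y) \;=\; \frac{\mathbbm{1}\{a=s\}}{\pi_P(s\miid w)}\,f(w)\,\dot g_m(\mu_P(s,w))\,\{y-\mu_P(s,w)\} + f(w)\,g_m(\mu_P(s,w)) - \Psi_{s,m,f}(P).
\]
The first term in $\phi_{s,m,f}$ arises from the $y\miid a,w$-score perturbation of $\mu_P(s,w)$ (using inverse propensity weighting to turn a conditional-on-$A=s$ expectation into an unconditional one), while the second term minus the constant is the contribution from perturbing the marginal law of $W$; no contribution comes from perturbing $\pi_P$, since $\Psi_{s,m,f}$ does not depend on the treatment mechanism.

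Next, I would linearly combine these influence functions with the prescribed weights $f(w)=c_{s,m}h_m(\theta(w))$ and sum over $m\in\{1,2\}$ and $s\in\mathcal{A}$. The marginal-$W$ contributions sum telescopically to $L_{\mu_P}(w,\theta) - R_P(\theta)$. For the outcome-regression contributions, the factor $\mathbbm{1}\{a=s\}$ collapses the sum over $s$ to the value at $s=a$, producing
\[
\frac{1}{\pi_P(a\miid w)}\sum_{m\in\{1,2\}} c_{a,m}\,\dot g_m(\mu_P(a,w))\,h_m(\theta(w))\,\{y-\mu_P(a,w)\}
\;=\; \Delta_{\pi_P,\mu_P}(w,a,y;\theta),
\]
using the definition $H_{m,\mu_P}(a,w) = c_{a,m}\dot g_m(\mu_P(a,w))$. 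This yields $D_{P,\theta}$ as claimed.

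The main thing to verify carefully is that $D_{P,\theta}$ is in fact the \emph{efficient} influence function and not merely an influence function. Since the model $\mathcal{M}$ is nonparametric, the tangent space at $P$ is all of $L^2_0(P)$, so any valid gradient is the unique efficient one; it therefore suffices to check $E_P[D_{P,\theta}(O)] = 0$ and $D_{P,\theta} \in L^2(P)$, both of which follow under Condition \ref{cond::pos} from boundedness of $Y$, $\theta$, $h_m$, and $\dot g_m \circ \mu_P$ on the relevant range, together with the tower property applied to $\Delta_{\pi_P,\mu_P}$ (which has conditional mean zero given $(A,W)$). The only mildly delicate point is justifying the interchange of $d/d\varepsilon$ and the integral defining $\Psi_{s,m,f}(P_\varepsilon)$; this is standard under the usual regularity conditions on regular parametric submodels (uniform boundedness of scores, Lipschitz-ness of $g_m$), and I would cite these rather than re-derive them.
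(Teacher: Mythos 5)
Your proposal is correct and follows essentially the same route as the paper: decompose $R_P(\theta)$ into a linear combination of functionals of the form $E_P[\text{fixed weight}(W)\cdot g_m(\mu_P(s,W))]$, compute the pathwise derivative of each along a submodel (identifying the $W$-marginal and $Y\mid A,W$ contributions and noting no contribution from $A\mid W$), read off the influence function, and sum. The only cosmetic difference is that you factor the score explicitly into likelihood components, whereas the paper keeps an abstract score $v$ and projects via $v_Y(o)=v(o)-E_P[v(O)\mid A=a,W=w]$; you also make explicit the step the paper leaves implicit, namely that in a nonparametric model the unique gradient in $L^2_0(P)$ is automatically the efficient one once $E_P[D_{P,\theta}]=0$ and square-integrability are verified.
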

Knowledge of this efficient influence function is important because it encodes, in first order, the sensitivity of the population risk to perturbations in its nuisance parameters and facilitates the debiasing of plug-in estimators. We note that $D_{P,\theta}$ can be composed into a sum of three terms: the plug-in loss function $L_{\mu_P}$, a weighted residual $\Delta_{\pi_P,\mu_P}$ of the outcome regression $\mu_P$, and the negative population risk $-R_{P}(\theta)$. Using that $E_P[D_{P, \theta}(O)] = 0$, this decomposition allows us to deduce that $L_{\mu_P} + \Delta_{\pi_P,\mu_P}$ is a Neyman-orthogonal loss function for $R_P$.

Let $\pi_n$ and $\mu_n$ be estimators of $\pi_0$ and $\mu_0$, respectively, which we assume for the time being are obtained using an independent dataset; later, we will describe approaches to use cross-fitting when such an independent dataset is not available. Given these nuisance estimators, a one-step debiased estimator of the $\theta$--specific risk $R_0(\theta)$ is given by
\begin{equation}
    R_{n,\pi_n,\mu_n}(\theta) := \frac{1}{n}\sum_{i=1}^n  L_{\mu_n}(\theta,W_i) + \frac{1}{n}\sum_{i=1}^n \Delta_{\pi_n, \mu_n}(O_i; \theta)\ .\label{eqn::onestepRisk}
\end{equation}
This estimator can be seen as a debiased version of the plug-in estimator $\frac{1}{n}\sum_{i=1}^n  L_{\mu_n}(\theta,W_i)$ with debiasing term $\frac{1}{n}\sum_{i=1}^n \Delta_{\pi_n, \mu_n}(O_i; \theta)$. Under appropriate conditions, $R_{n,\pi_n,\mu_n}(\theta)$ is an asymptotically efficient estimator of $R_0(\theta)$. 

The decomposition in \eqref{eqn::onestepRisk} suggests two approaches for learning $\theta_0$ based on an efficient estimator of $R_0$. One approach, orthogonal learning, uses loss-based learning techniques based on the orthogonalized loss $L_{\mu_n} + \Delta_{\pi_n, \mu_n}$, that is, it relies on using the debiased risk estimator $R_{n,\pi_n,\mu_n}$. The alternative approach we propose, EP-learning, instead relies on the \emph{plug-in} risk estimator $R^*_{n}: \theta \mapsto \frac{1}{n}\sum_{i=1}^{n}L_{\mu_n^*}(\theta,W_i)$, where $\mu_n^*$ is a carefully constructed estimator of $\mu_0$ that negates the need for the debiasing term $\frac{1}{n}\sum_{i=1}^n \Delta_{\pi_n, \mu_n^*}(O_i; \theta)$. \textcolor{black}{When empirical risk minimization is used, the EP-learner for the causal contrast is the plug-in estimator $\argmin_{\theta \in \mathcal{F}} R_n^*(\theta)$ of the constrained risk minimizer $\theta_0 = \argmin_{\theta \in \mathcal{F}} R_0(\theta)$.} In this sense, EP-learning follows the spirit of targeted minimum loss-based estimation (TMLE) \citep{vanderLaanRose2011,seqDRLuedtke}.

\begin{algorithm}[htb!]
\textcolor{black}{
\caption{Meta-algorithm for EP-learning (informal)}
\label{alg::EPlearnerMeta}
\begin{algorithmic}[1]
\STATE \textbf{Input:} Data $\{O_i\}_{i=1}^n$; nuisance estimators $\mu_n$, $\pi_n$; function class $\mathcal{F}$; learning algorithm.
\STATE \textbf{Step 1: Debias nuisance.} Construct $\mu_n^*$ from $\mu_n$ and $\pi_n$ so that the debiasing term
\[
\tfrac{1}{n}\sum_{i=1}^n \Delta_{\pi_n,\mu_n^*}(O_i;\theta) 
= \tfrac{1}{n}\sum_{i=1}^n \!\big\{L_{\pi_n,\mu_n^*}(\theta, O_i) - L_{\mu_n^*}(\theta, W_i)\big\}
\]
is small for all $\theta \in \mathcal{F}$.
\STATE \textbf{Step 2: Estimate risk by plug-in.} Define $R_n^*(\theta) := \tfrac{1}{n}\sum_{i=1}^n L_{\mu_n^*}(W_i; \theta)$ for $\theta \in \mathcal{F}$.
\STATE \textbf{Step 3: Estimate causal contrast.} Apply the learning algorithm to $R_n^*$ to obtain an estimate $\theta_n^*$ of $\theta_0$, e.g., $\theta_n^* := \argmin_{\theta \in \mathcal{F}} R_n^*(\theta)$.
\STATE \textbf{Output:} EP-learner $\theta_n^*$
\end{algorithmic}
}
\end{algorithm}

The EP-learner algorithm is designed to attain, on one hand, the oracle-efficiency of orthogonal learning strategies based on the loss $L_{\mu_n}+\Delta_{\pi_n,\mu_n}$, and on the other hand, the desirable properties---stability and loss convexity---enjoyed by plug-in estimation strategies based on the `naive' loss $L_{\mu_n}$. Rather than substituting any `good' estimator of $\pi_0$ and $\mu_0$ into the orthogonal loss function, in EP-learning, an outcome regression estimator $\mu_n^*$ is constructed from $\mu_n$ through a sieve-based adjustment to ensure that the debiasing term $\frac{1}{n}\sum_{i=1}^n \Delta_{\pi_n, \mu_n^*}(O_i; \theta)$ is negligible across values of $\theta$, rendering explicit debiasing unnecessary. Since the difference between $R_n^*$ and $R_{\pi_n, \mu_n^*}$ is negligible, the EP-learner risk estimator benefits both from the plug-in property of the loss $L_{\mu_n^*}$ and the orthogonality property of the orthogonalized loss $L_{\mu_n^*}+\Delta_{\pi_n,\mu_n^*}$. \textcolor{black}{We present a high-level meta-algorithm for EP-learning in Algorithm \ref{alg::EPlearnerMeta}.} For the class of losses we consider, the precise procedure for building the estimator $\mu_n^*$ is detailed later in Algorithm \ref{alg::debiasing}, and our EP-learner for $\theta_0$ is presented in Section \ref{section::algo}. In the next section, we illustrate how the plug-in property of EP-learning resolves the issues with Neyman-orthogonal learning referred to in the Introduction.

{\color{black}It is worth elaborating on the plug-in nature of EP-learning, particularly in contrast to T-learning. We say that an estimator $\widehat{R}_n(\theta)$ of the population risk $R_0(\theta)$ is a uniform plug-in estimator over $\theta \in \mathcal{F}$ if there exists a distribution $\widehat{P}_n \in \mathcal{M}$ such that $\widehat{R}_n(\theta) = R_{\widehat{P}_n}(\theta)$ for all $\theta \in \mathcal{F}$. The EP-learner risk estimator $R_{n}^*$ is such a uniform plug-in estimator, with $\widehat{P}_n$ chosen so that its outcome regression satisfies $\mu_{\widehat{P}_n} = \mu_n^*$ and its marginal distribution of $(W,A)$ equals the empirical distribution $P_n$. When paired with empirical risk minimization, EP-learner likewise returns a plug-in estimator, $\theta_n^*\in \argmin_{\theta\in\mathcal{F}} R_{\widehat{P}_n}(\theta)$, of the constrained minimizer $\theta_0\in \argmin_{\theta\in\mathcal{F}} R_{P_0}(\theta)$. Similarly, T-learner returns a plug-in estimator of the unconstrained risk minimizer $\argmin_{\theta \in L^2(P_{0,W})} R_P(\theta)$, where the $\widehat{P}_n$ that is plugged in is any distribution whose outcome regressions equal $\mu_n$; in the special case of CATE estimation, this unconstrained risk minimizer is just $\mu_n(1,\cdot)-\mu_n(0,\cdot)$ \citep{kunzel2019metalearners}. Compared to the unconstrained risk minimizer, EP-learner  exploits the restriction to $\mathcal{F}$ to inherit the favorable robustness properties of orthogonal learning. Indeed, when the initial estimator $\mu_n$ is misspecified, the EP-learner risk estimator can still be consistent uniformly over $\mathcal{F}$ provided the propensity is well-specified. As a consequence, EP-learner can still return a best approximation to the CATE in $\mathcal{F}$ in such cases.

While restricting to $\mathcal{F}$ may seem restrictive, this is standard in statistical learning learning theory \citep{vapnik1999overview, orthogonalLearning, wainwright_2019}. In related problems such as regression, this choice is made to reflect structural or smoothness assumptions on the estimand. Such smoothness conditions are necessary because, without them, regression functions are not nonparametrically learnable \citep{vapnik1971chervonenkis, stone1982optimal, gyorfi2002distribution}. For the CATE, the same limitation is reflected in minimax rates that become arbitrarily slow as smoothness decreases \citep{KennedyMiximax}.
}

An efficient TMLE-based plug-in estimator for the population risk $R_0(\theta)$ at a specified candidate function $\theta$ for the causal dose–response curve was proposed in \cite{diaz2013targeted}. However, their estimator does not provide a single plug-in estimator of the full population risk function $R_0$ and is therefore not amenable to empirical risk minimization over an infinite-dimensional function class. \textcolor{black}{In contrast, the EP-learner risk estimator $R_{n}^*(\theta)$ provides a uniform plug-in estimator, meaning that the same outcome regression estimator $\mu_n^*$ is used to compute $R_{n}^*(\theta)$ for every $\theta \in \mathcal{F}$. This uniform plug-in property ensures that if $\theta \mapsto R_P(\theta)$ is a convex risk function for any $P$, then $\theta \mapsto R_{n}^*(\theta)$ will also be convex.} The concurrent work of \cite{vansteelandt2023orthogonal} similarly uses sieves to construct a debiased plug-in risk estimator, although they restrict their attention to estimation of a covariate-adjusted conditional mean. Their risk is a special case of \eqref{eqn::GeneralClassRisk} with $\mathcal{A} = \{1\}$, $h_1(\theta) = \theta^2$, $h_2(\theta) = -2\theta $,  $g_1 \circ \mu = 1$, $g_2 \circ \mu = \mu$, $c_{1,1} = 1$, and $ c_{0, 1} = 0$. 
Besides considering a general class of risks, our approach uses a different form of cross-fitting and sieve-based adjustment to ensure the asymptotic equivalence of our EP-learner risk estimator with an oracle-efficient one-step risk estimator. This difference complicates our theoretical analysis, as it prohibits us from invoking statistical learning bounds based on sample-split nuisance estimators \citep{orthogonalLearning}. \textcolor{black}{In particular, ensuring that the debiasing term $\tfrac{1}{n}\sum_{i=1}^n \Delta_{\pi_n, \mu_n}(O_i; \theta)$ is negligible (i.e., $o_p(n^{-1/2})$) requires the regression estimator $\mu_n^*$ to satisfy certain score equations on the full dataset (see Section~\ref{section::algo} for details). This requirement is not unique to the EP-learner but is shared more broadly by sieve-based plug-in estimation \citep{shen1997methods, chen2007large, CaroneDataAdaptSieve}. Consequently, the empirical process remainders in our analysis must be handled directly rather than through sample splitting, which necessitates sharper maximal inequalities and entropy bounds.
}

\section{Limitations of existing Neyman-orthogonal learning strategies}

\label{section::introexamp}

\subsection{Sensitivity of CATE DR-learner to large propensity weights}
 \label{section::introCATEexample}

For estimation of the CATE, the DR-learner is a popular orthogonal learning approach based on the least-squares empirical risk function
$$\theta \mapsto \frac{1}{n} \sum_{i=1}^n \left\{\theta(W_i)^2 - 2\theta(W_i)\chi_n(W_i, A_i, Y_i)\right\},$$
where $\chi_n(w,a,y) := \mu_n(1,w) - \mu_n(0,w) + \frac{2a-1}{\pi_n(a \mid w)}\{y - \mu_n(a,w)\}$ is an estimated pseudo-outcome. This empirical risk coincides with the debiased risk estimator $R_{n,\pi_n,\mu_n}$ of the population risk $R_0^-$ introduced in Example \ref{Example::CATE}. \textcolor{black}{A DR-learner can be obtained either by minimizing this empirical risk over $\mathcal{F}$ or by using standard regression methods to regress the pseudo-outcomes $\{\chi_n(W_i,A_i,Y_i)\}_{i=1}^n$ on the covariates $\{W_i\}_{i=1}^n$.} As discussed in the Introduction, the pseudo-outcome used by the DR-learner can take extreme values when the propensity score is close to zero or one, which can lead to poor behavior of the resulting CATE estimator.

In contrast to the DR-learner, our EP-learner for the CATE---formally defined in Section \ref{section::algo}---is based \textcolor{black}{on the squared-error loss with} the estimated pseudo-outcome $\mu_n^*(1,W) - \mu_n^*(0,W)$, where $\mu_n^*$ is an outcome regression estimator carefully constructed from the initial estimates $\mu_n$ and $\pi_n$. Unlike a typical T-learner, the estimate $\mu_n^*$  is constructed to ensure that the resulting plug-in risk estimator 
\begin{align*}
    R_{n}^{*-}(\theta):= \frac{1}{n}\sum_{i=1}^n \left[\theta(W_i)^2  -2 \theta(W_i)  \left\{ \mu_n^*(1,W_i) - \mu_n^*(0,W_i) \right\}\right]
\end{align*}
is efficient under reasonable conditions. \textcolor{black}{EP-learning then estimates the CATE by a second-stage regression of $\{\mu_n^*(1,W_i) - \mu_n^*(0,W_i)\}_{i=1}^n$ on $\{W_i\}_{i=1}^n$. The combination of the specially constructed $\mu_n^*$ with the second-stage regression is what allows EP-learning to inherit the robustness properties of orthogonal learning strategies---such as fast rates and oracle efficiency---that the T-learner $\mu_n^*(1,\cdot) - \mu_n^*(0,\cdot)$ alone typically does not have.} The construction of $\mu_n^*$ incorporates an inverse-propensity-weighted regression adjustment of $\mu_n$, which renders the corresponding risk estimator doubly robust in the sense that, for each $\theta$, $R_n^{*-}(\theta)$ is consistent if either the outcome regression or the propensity score is estimated consistently. \textcolor{black}{This double robustness of the risk generally carries over to the corresponding EP-learner, as we show formally for empirical risk minimization in Section \ref{section::theory}. }

We now provide a heuristic explanation of why EP-learner can outperform DR-learner, particularly when paired with a local regression estimator, such as a Nadaraya-Watson \citep{nadaraya1964estimating,watson1964smooth}, $K$-nearest neighbors \citep{fix1989discriminatory}, or random forest \citep{Breiman1984} estimator. When used in a DR-learner or EP-learner, local methods estimate a regression function at a point by averaging the pseudo-outcomes of `nearby' observations. When the number of points averaged is small, the estimates returned by DR-learner can be erratic due to the potentially large values taken by its estimated pseudo-outcomes. In contrast, the EP-learner is expected to be more stable since it is simply a local average of an initial CATE estimator. In fact, if this initial estimator is uniformly consistent, then the corresponding EP-learner remains consistent even when the number of local observations averaged is held fixed with sample size---a formal argument in the case of $K$-nearest neighbors estimation is provided in Section \ref{section::knn}. 
Moreover, even if the initial estimator fails to be uniformly consistent, then EP-learner can still benefit from the desirable properties that it shares with DR-learner, namely the efficiency and double robustness of its risk estimator---details are provided in Section \ref{section::EffEPLearnerRisk}.

\begin{figure}[htb]
    \centering
    \begin{subfigure}{1\textwidth}\centering
    \hspace{1.1in}\includegraphics[width=.55\textwidth]{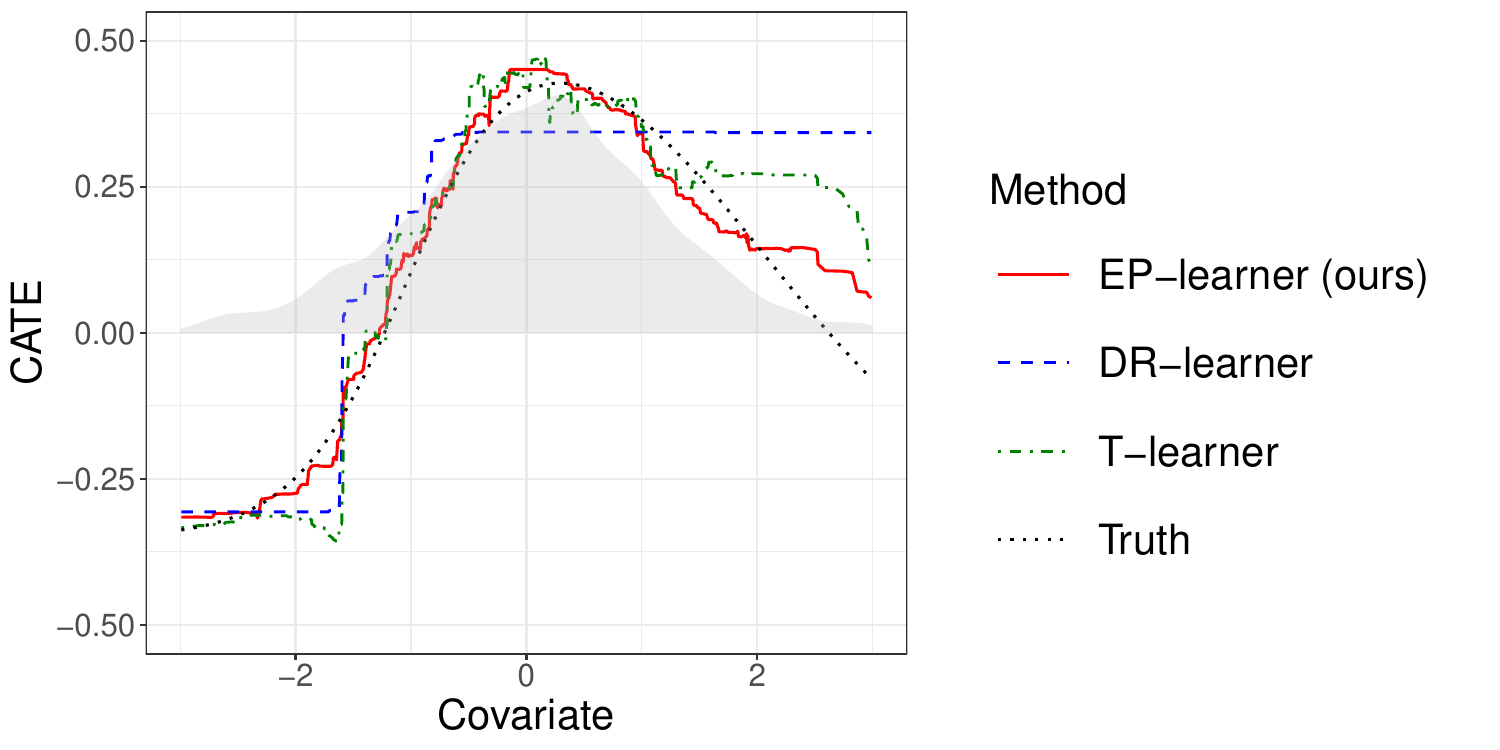}
    \subcaption[]{Cross-validated maximum tree depth}
          \label{fig:DRlearnerBoundViol_CV}
    \end{subfigure}
    \begin{subfigure}{1\textwidth}\centering
    \includegraphics[width=1\textwidth]{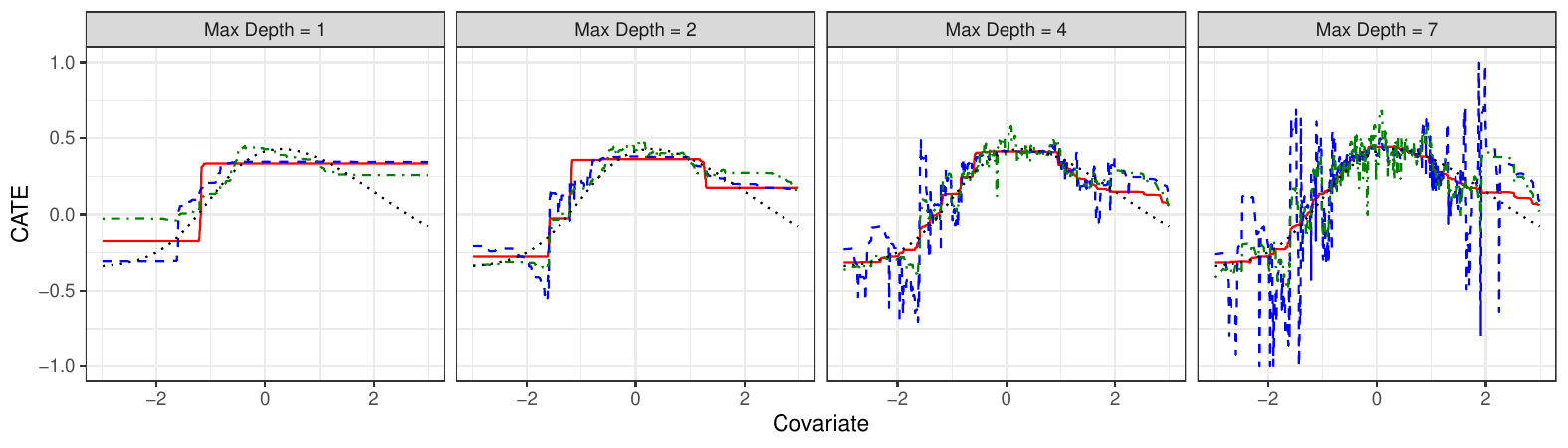}
      \subcaption[]{Fixed maximum tree depth}
 \label{fig:DRlearnerBoundViol_depth}
    \end{subfigure}
    \caption{CATE estimates based on EP-learner, DR-learner, and T-learner with random forests and various maximum tree depths computed on a single dataset. (Top) EP-learner, DR-Learner, and T-learner CATE estimates with 10-fold cross-validated maximum tree depth; mean squared prediction errors  are 0.0021, 0.012, and 0.005, respectively. Observed covariate distribution is depicted in gray. (Bottom) EP-learner, DR-learner,and T-learner CATE estimate for maximum tree depths of 1, 2, 4 and 7.}
    \label{fig:DRlearnerBoundViol}
\end{figure}

 The benefits of EP-learner over DR-learner can be illustrated through a simple numerical experiment. 
To do so, we simulated a dataset consisting of 1,500 observations and used the \texttt{ranger} implementation of random forests \citep{ranger} to estimate the CATE function based on both the DR-learner and EP-learner risks---the code used to run our experiment is provided in Appendix \ref{appendix::introFigs}. In our illustration, the outcome is binary, and we use the known CATE range, $[-1,1]$, to truncate DR-learner predictions. Figure \ref{fig:DRlearnerBoundViol_CV} displays the estimated CATE curve for DR-learner and EP-learner with maximum tree depth in random forests selected within $\{1,2,\ldots,7\}$ via 10-fold cross-validation. This figure reveals that the CATE curve estimated using EP-learner more accurately reflects the bell shape of the true CATE curve than using the DR-learner. This improvement is evident quantitatively, with a six-fold reduction in mean squared prediction error when using EP-learner (0.002) compared to DR-learner (0.012). To delve deeper into this phenomenon, we assessed the fits obtained by using four of the seven maximum tree depth values considered, namely 1, 2, 4 and 7 (Figure~\ref{fig:DRlearnerBoundViol_depth}). DR-learner fits become unstable at depths as low as 2, stemming from the limited observation numbers in the regression tree nodes and some pseudo-outcomes reaching magnitudes of 6---this can be seen in Figure \ref{fig:PseudoOutcomeDensityPlot} of Appendix \ref{appendix::introFigs}. As a result, cross-validation chooses a depth of 1 for DR-learner, which fails to accurately capture the true CATE shape. In contrast, EP-learner fits tend to improve as the maximum depth increases from 1 to 7, with a selected maximum tree depth of 7.
 \textcolor{black}{Regression trees, as used in random forests, are outcome-adaptive, often learned using CART \citep{breiman2017classification}. Thus, the difference in pseudo-outcomes between EP-learner and DR-learner affects not only the outcomes averaged within each tree node but also the learned tree structure itself. In particular, EP-learner uses the T-learner $\mu_n^*(1,\cdot) - \mu_n^*(0,\cdot)$ to build the tree structure, whereas DR-learner relies on a noisier pseudo-outcome. Thus, EP-learner can yield superior fits by leveraging less noisy pseudo-outcomes to both stabilize node averages and guide more accurate tree splits.}

  %However, unlike the DR-learner, its dependence on IPW arises only indirectly through a second-stage finite-dimensional regression used to obtain $\mu_n^*$, rather than directly through the pseudo-outcome.
% }

% This can improve stability since $\mu_n^*$
% In particualr,

% Thus, although observations with small propensity scores can still affect stability, this issue is expected to be less problematic in practice.}

\subsection{Nonconvexity of Neyman-orthogonal CRR loss function}\label{section::introRRexample}

We introduced in Example \ref{Example::CATE} the population risk function $R_0^{\div}$ for the CRR $\theta_0^\div : w\mapsto \log \mu_0(1,w)-\log \mu_0(0,w)$. In light of \eqref{eqn::onestepRisk}, a doubly-robust and efficient one-step debiased risk estimator for the population risk function is given by
\begin{align}
R_{n,DR}^{\div}(\theta) :=  \frac{1}{n}\sum_{i=1}^n \left[   \left(\widehat{\mu}_{0,i} + \widehat{\mu}_{1,i} \right) \log \left(1 + \exp  \{\theta(W_i) \} \right) - \widehat{\mu}_{1,i} \theta(W_i) \right],
   \label{eqn::DRriskRR} 
\end{align}
where, for $s \in \{0,1\}$ and $i \in [n]$, we define $\widehat{\mu}_{s,i} := \mu_n(s,W_i) + \frac{1(A_i=s)}{\pi_n(s \miid W_i)}\left\{ Y_i - \mu_n(s,W_i)\right\}$. The corresponding doubly-robust orthogonal learner of the log-CRR function is computed by performing a weighted logistic regression of the pseudo-outcomes $\left\{ \widehat{\mu}_{1,i}/(\widehat{\mu}_{0,i} + \widehat{\mu}_{1,i}) : i \in [n]\right\}$ on the covariate values $\{W_i:i\in[n]\}$ with weights $\left\{  \widehat{\mu}_{0,i} + \widehat{\mu}_{1,i} : i \in [n]\right\}$.

Because the weights may take negative values, the Neyman-orthogonal loss associated with the one-step debiased risk estimator need not be convex. \textcolor{black}{To illustrate, consider a simple example with a binary covariate \(X \in \{0,1\}\), an unrestricted parameter space \(\Theta = \mathbb{R}^2\), and two observations (\(n=2\)), one for each value of \(X\). In this setting, the orthogonal loss takes the form
\[
L_{\pi_n,\mu_n}^{\div}(\theta_i, O_i)
= \bigl(\widehat{\mu}_{0,i}+\widehat{\mu}_{1,i}\bigr)
\Big[\log\{1+\exp(\theta_i)\}
- \tfrac{\widehat{\mu}_{1,i}}{\widehat{\mu}_{0,i}+\widehat{\mu}_{1,i}}\,\theta_i\Big],
\]
so the second derivative with respect to \(\theta_i\) is
\[
\nabla^2 L_{\pi_n,\mu_n}^{\div}(\theta_i, O_i)
= \bigl(\widehat{\mu}_{0,i}+\widehat{\mu}_{1,i}\bigr)
\,\sigma(\theta_i)\bigl(1-\sigma(\theta_i)\bigr),
\qquad
\sigma(t)=\tfrac{e^t}{1+e^t}.
\]
Since the weight \(\widehat{\mu}_{0,i}+\widehat{\mu}_{1,i}\) may be negative, the Hessian can also be negative, implying that the loss is not guaranteed to be convex---even in this simple case. The same issue persists in higher-dimensional settings, where negative weights can lead to indefinite Hessians.} Furthermore, the pseudo-outcomes may take values outside of $[0,1]$, which renders this a nonstandard use case for logistic regression. Figure \ref{fig:exampDataLRR} lists several common logistic regression implementations in \texttt{R} and indicates whether they allow the use of negative weights or outcomes that fall outside of $[0,1]$. As is revealed in this figure, most do not, which demonstrates the limited applicability of orthogonal learning in this setting. Figure \ref{fig:exampDataLRR} shows the first five rows of a synthetic input dataset containing the covariate values, pseudo-weights and pseudo-outcomes that would be entered into logistic regression software to estimate the log conditional relative risk.  Notably, the mock dataset contains negative pseudo-weight values and pseudo-outcomes that fall outside of $[0,1]$. \textcolor{black}{These issues are also encountered in the orthogonalization of the IPW loss for the CRR proposed in Section~2.3 of \cite{chen2017general}.} Code to reproduce the mock dataset can be found in Appendix \ref{appendix::introFigs}.

\begin{figure}[htb]
    \centering
    \includegraphics[width=0.5\textwidth]{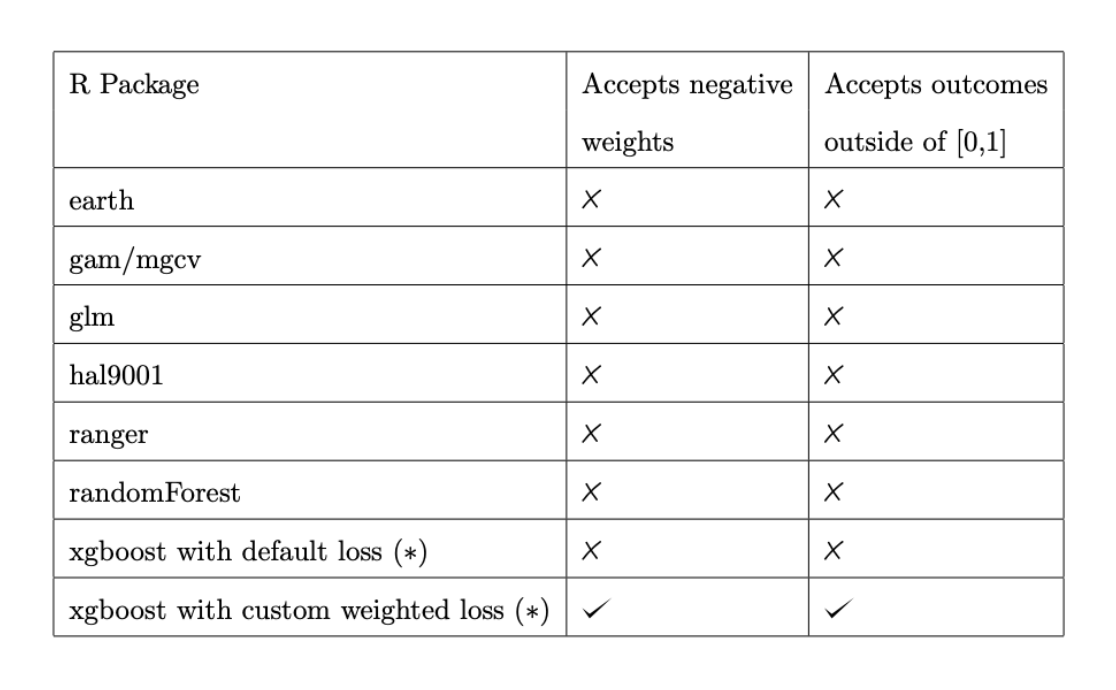}\includegraphics[width=0.5\textwidth]{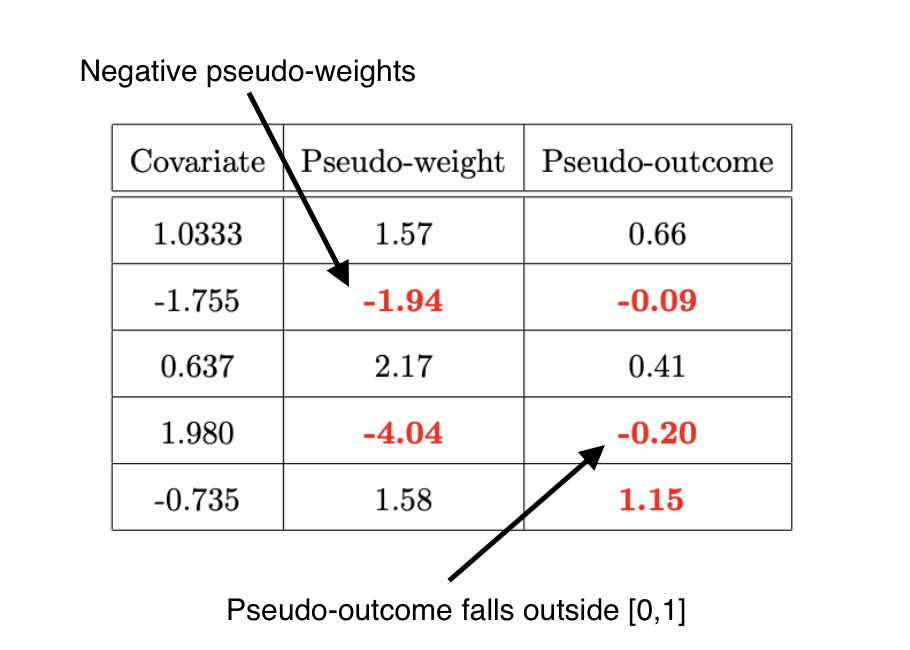}
    \caption{(Left) Common R packages for logistic regression may or may not allow negative weights and outcomes outside of [0,1]; a checkmark indicates that they do. ($\ast$) The xgboost package has many built-in loss functions, but they do not accept negative weights. However, these negative weights can be absorbed  into a custom loss function. (Right) Example dataset for estimating the CRR using the one-step efficient risk estimator.    }
    \label{fig:exampDataLRR}
\end{figure}

%Our EP-learner for this problem resolves the aforementioned challenges faced by orthogonal learning.
For a post-hoc constructed outcome regression estimator $\mu_n^*$, our EP-learner of the CRR is based on the following EP risk estimator
$$R_{n,EP}^{\div}(\theta) := \frac{1}{n}\sum_{i=1}^n \left\{\mu_n^*(1,W_i) + \mu_n^*(0,W_i)\right\}\left[  \log(1 + \exp \{\theta(W_i)\}) - \frac{\mu_n^*(1,W_i)}{\mu_n^*(1,W_i) + \mu_n^*(0,W_i) } \theta(W_i) \right] . $$
An estimator of the log-CRR function based on the EP-learner risk estimator is computed by performing the weighted logistic regression of the pseudo-outcome $\mu_n^*(1,W)/[\mu_n^*(1,W) + \mu_n^*(0,W) ] $ onto the covariate vector $W$ with weight $\mu_n^*(1,W) + \mu_n^*(0,W) $. Unlike for the efficient one-step risk estimator, the pseudo-weight is always nonnegative and the pseudo-outcome always falls in $[0,1]$. As a consequence, the EP-learner risk estimator corresponds with a convex loss function, and EP-learners of the log-CRR function can be implemented using most standard logistic regression software. In Section~\ref{section::algo}, we will provide an algorithm to construct $\mu_n^*$ so that $R_{n,EP}^\div$ is an efficient, doubly robust plug-in risk estimator.

\section{Proposed approach: EP-learning algorithm}\label{section::algo}

The EP-learner algorithm is designed to attain both the efficiency of the one-step debiased risk estimator in \eqref{eqn::onestepRisk} and the desirable properties, such as stability and convexity, that are enjoyed by plug-in risk estimators. This learner is based on a risk function of the form $\theta\mapsto \frac{1}{n}\sum_{i=1}^n  L_{\mu_n}(\theta,W_i)$ for an outcome regression estimator $\mu_n$. Naturally, the fact that EP-learner does not explicitly appear to debias this plug-in estimator may lead to concerns that it might be based on a suboptimal---in particular, inefficient---estimator of the risk. However, the EP-learner outcome regression estimator is carefully constructed to perform implicit debiasing. Specifically, it is designed such that the seemingly-critical debiasing term $\frac{1}{n}\sum_{i=1}^n \Delta_{\pi_n, \mu_n}(O_i; \theta)$ is negligible in an appropriate sense across values of $\theta$, so that the plug-in risk estimator is asymptotically equivalent to a one-step debiased estimator. Before describing the general EP-learner approach, we illustrate its construction in the context of a simple, finite-dimensional example.

\begin{example}[continues = ex1, , name=conditional average treatment effect]
Consider the problem of CATE estimation from Example \ref{Example::CATE} in the special case where the action space $\mathcal{F}$ is the linear class including each function $w \mapsto \alpha^\top w$ with $\alpha \in \mathbb{R}^d$. In this finite-dimensional case, we write $R_0^-(\alpha)$ as shorthand for $R_0^-(w\mapsto \alpha^\top w)$. Given initial estimators $\pi_n$  of $\pi_0$ and $\mu_n$ of $\mu_0$, EP-learner uses the refined estimator $\mu_n^*:(a,w)\mapsto\mu_n(a,w) + (2a-1)\beta_n^\top w$, where
$$\beta_n := \argmin_{\beta \in \mathbb{R}^d}\frac{1}{n} \sum_{i=1}^n \frac{1}{\pi_n(A_i \miid W_i)} \left\{Y_i - \mu_n(A_i, W_i) - (2A_i-1)\beta^\top W_i \right\}^2.$$
The first-order conditions satisfied by $\beta_n$ ensure that the debiasing term for estimating $R_0^-(\alpha)$, notably $\frac{1}{n} \sum_{i=1}^n \frac{2A_i - 1}{\pi_n(A_i \miid W_i)} \alpha^\top W_i \left\{Y_i - \mu_n^*(A_i, W_i) \right\}$, is exactly zero for $\alpha \in \mathbb{R}^d$. \textcolor{black}{In this special case, the parametric EP-learner $w \mapsto (\alpha_n^*)^\top w$ based on $\mu_n^*$, where $\alpha_n^* := \argmin_{\alpha \in \mathbb{R}^d} \sum_{i=1}^n \{\mu_n^*(1,W_i) - \mu_n^*(0,W_i) - \alpha^\top W_i\}^2$, corresponds to a targeted minimum loss-based estimator \citep{vanderLaanRose2011} of the best linear predictor of the CATE, $w \mapsto \alpha_0^\top w$, with $\alpha_0 := \argmin_{\alpha \in \mathbb{R}^d} E_0\{\mu_0(1,W) - \mu_0(0,W) - \alpha^\top W\}^2$.}

%In this special case, the refined outcome regression estimator used by EP-learner corresponds to a targeted minimum loss-based estimator \citep{vanderLaanRose2011} for the gradient of the risk function $\alpha\mapsto R_0^-(\alpha)$. 

 \label{example::toyCATE}
\end{example}

When the dimension of the action space $\mathcal{F}$ is large relative to sample size (e.g., infinite action space), it is generally not possible to construct an outcome regression estimator $\mu_n$ that both has good predictive performance and makes the debiasing term $\frac{1}{n}\sum_{i=1}^n \Delta_{\pi_n, \mu_n}(O_i; \theta)$ exactly zero for each $\theta\in\mathcal{F}$. 
 Indeed, there is an inherent trade-off between the size of the debiasing term and the mean squared error of the outcome regression estimator. The EP-learner algorithm is designed to carefully balance these two terms. It does so by separating the estimation of the outcome regression into two stages. In the first stage, statistical learning tools are used to obtain an initial estimate of the outcome regression with good predictive performance. In the second stage, this initial estimator is refined to make the debiasing term as small as possible without harming the performance of EP-learner.

 The debiasing approach described in Example \ref{example::toyCATE} can be generalized to infinite-dimensional function spaces using the idea of sieves. Specifically, if the linear span of a finite but growing set of basis functions approximates $\mathcal{F}$ increasingly well, then the refinement procedure described in Example \ref{example::toyCATE} could be performed within this span. This procedure would ensure that the debiasing term is small for each element in the linear space spanned by these basis functions. Critically, the number of basis functions used would need to grow with sample size so that the worst-case approximation error tends to zero. \textcolor{black}{The separation of the outcome regression estimation and debiasing steps in the EP-learner ensures that the sieve does not need to approximate the true outcome regression $\mu_0$ itself, but only needs to provide sufficient flexibility to reduce the debiasing term.} \textcolor{black}{For the CATE, given an approximating sequence of linear spaces $\mathcal{H}_k$ for $\mathcal{F}$ with dimension $k = k(n)$ typically chosen to grow with $n$, we can construct $\mu_n^*$ as in Example \ref{example::toyCATE} so that the debiasing term $\tfrac{1}{n}\sum_{i=1}^n \Delta_{\pi_n,\mu_n^*}(O_i;\theta)$ is zero for each $\theta \in \mathcal{H}_k$. Assuming that $\mathcal{H}_k$ approximates $\mathcal{F}$ with vanishing error, we formally show in Section \ref{section::theory} that the empirical risk minimizer $\argmin_{\theta \in \mathcal{F}} \sum_{i=1}^n \{\mu_n^*(1,W_i) - \mu_n^*(0,W_i) - \theta(W_i)\}^2$ over $\mathcal{F}$ will inherit this debiasing property for the constrained minimizer $\argmin_{\theta \in \mathcal{H}_k} R_0(\theta)$.}

We now formalize our general EP-learner algorithm, which is an instance of the meta-algorithm in Algorithm~\ref{alg::EPlearnerMeta}. Similar to competing procedures such as the DR-learner and R-learner, EP-learner can incorporate cross-fitting of the initial outcome regression and propensity score estimators to improve performance. Below, we present a general cross-fitted implementation of EP-learner. Let $J \in \mathbb{N}$ denote a fixed number of cross-fitting splits. Let $\mathcal{D}_n^1,\mathcal{D}_n^2, \ldots, \mathcal{D}_n^J$ be a partition of the available data $\mathcal{D}_n$ into $J$ datasets of approximately equal size, corresponding to index sets $\mathcal{I}^1_n,\mathcal{I}^2_n,\ldots,\mathcal{I}^J_n$. For each $i \in [n]$, let $j(i) \in [J]$ be the index of the data fold containing observation $i$, so that $i \in \mathcal{I}_{n}^{j(i)}$. Suppose that, for each $j\in[J]$, we have constructed estimators $\pi_{n,j}$ of $\pi_0$ and $\mu_{n,j}$ of $\mu_0$ based on $\mathcal{D}_n \backslash \mathcal{D}_n^j$. Let $\varphi_{k}:\mathbb{R}^d \mapsto \mathbb{R}^{k}$ be a feature mapping such that elements of the transformed action space ${\mathcal{H}}:=\{h_j \circ \theta : \theta \in \mathcal{F},j=1,2\}$ are approximated well by the linear space $\mathcal{H}_{k}:=\{w \mapsto \beta^\top\varphi_{k}(w) : \beta \in \mathbb{R}^{k} \}$, where $k=k(n)$ is typically chosen to grow with $n$. Algorithm~\ref{alg::debiasing} is \textcolor{black}{a particular choice of Step~1 of Algorithm~\ref{alg::EPlearnerMeta}}. For each $j$, it transforms the initial outcome regression estimator $\mu_{n,j}$ into a refined estimator $\mu_{n,j}^{*}$ such that
\begin{equation}
   \frac{1}{n}\sum_{i=1}^n \frac{1}{\pi_{n,j(i)}(A_i \miid W_i)} \left\{\sum_{m \in \{1,2\}}H_{m,n,j(i)}(A_i,W_i) \psi_m(W_i) \right\}\left\{Y_i - \mu_{n,j(i)}^{*}(A_i , W_i) \right\} = 0 \label{eqn::scoreEqnSolvedByAdjust}
\end{equation}for each $\psi_1,\psi_2 \in \mathcal{H}_{k(n)}$, where 
    $H_{m,n,j}$ is defined as $(a,w)\mapsto c_{a,m}\dot{g}_m(\mu_{n,j}(a,w))$ with $c_{a,m}$ and $g_m$ introduced in \eqref{eqn::GeneralClassRisk}.  \textcolor{black}{
We note that the EP-learner still relies on inverse propensity weight estimates through this step and, like the DR-learner, may suffer from instability when treatment overlap is limited.}

 Modifications of Algorithm~\ref{alg::debiasing} are possible. 
 For certain population risk functions, including the CATE risk function used in Example \ref{Example::CATE}, Algorithm \ref{alg::debiasing} can be simplified by performing the adjustment over a lower-dimensional space; details are provided in Appendix \ref{appendix:algoDebiasing1}.  It is also possible to implement an alternative debiasing step that preserves bounds on the initial outcome regression estimator, even in cases in which the outcome itself may be unbounded; details are provided in Appendix \ref{appendix:algoDebiasing2}. Such modification may improve performance by ensuring that the refinement step does not exceed bounds possibly specified a priori and enforced in the initial learning step. %\textcolor{black}{In addition, the refinement step of the EP-learner could benefit from using Forster-Warmuth sieve regression, as proposed by \cite{yang2023forster}, which is a more robust alternative to standard sieve regression approaches.}

\begin{algorithm}[htb]
\caption{\textcolor{black}{Debiasing procedure for implementing Step~1 of Algorithm~\ref{alg::EPlearnerMeta}}}
\label{alg::debiasing}
\vspace{.1in}
   \begin{algorithmic}
     \REQUIRE
    \STATE-- dataset $\mathcal{D}_n$ consisting of observations $(W_1,A_1,Y_1),(W_2,A_2,Y_2),\ldots,(W_n,A_n,Y_n)$;
    \STATE
     -- cross-fitted estimates $\pi_{n,1},\pi_{n,2},\ldots,\pi_{n,J}$ of $\pi_{0}$ and $\mu_{n,1},\mu_{n,2},\ldots,\mu_{n,J}$ of $\mu_0$;
    \STATE  -- feature mapping $\varphi_{k}:\mathbb{R}^d \mapsto \mathbb{R}^{k}$ of output dimension $k \in \mathbb{N}$;
    \end{algorithmic}
    \vspace{.05in}
\begin{algorithmic}[1]
\STATE for each $i\in[n]$, construct $\widehat \varphi_{k, j(i)}: \mathbb{R}^{d+1} \mapsto \mathbb{R}^{2k}$ as $( a, w)\mapsto (H_{1,n,i}(a, w), H_{2,n,i}(a, w))\varphi_{k}(w) $ with $H_{1,n,i}(a,w):= c_{a,1}\dot{g}_1(\mu_{n,j(i)}(a,w))$ and $H_{2,n,i}(a,w):= c_{a,2}\dot{g}_2(\mu_{n,j(i)}(a,w))$;
\STATE choose link function $g$ and obtain regression coefficient estimate $\beta_n$ as follows:
 \begin{itemize}[label={--},leftmargin=*]
     \item \textit{method 1: outcomes in $[0,1]$.} set $g:x\mapsto\text{logit}(x)$ and $g^{-1}:x\mapsto \text{expit}(x)$, and compute coefficient vector estimate $\beta_n$ obtained from logistic regression of outcome $Y_i$ on feature vector $\widehat\varphi_{k, j(i)}(A_i,W_i)$ with offset $\text{logit}\,\mu_{n,j(i)}(A_i,W_i)$ and weight $1/\pi_{n,j(i)}(A_i \miid W_i)$ for $i\in[n]$;
     \item \textit{method 2: general outcomes.}  set $g:x\mapsto x$ and $g^{-1}:x\mapsto x$, and compute coefficient vector estimate $\beta_n$ obtained from linear regression of outcome $Y_i$,  on feature vector $\widehat \varphi_{k, j(i)}(A_i,W_i)$ with offset $\mu_{n,j(i)}(A_i,W_i)$ and weight $1/\pi_{n,j(i)}(A_i \miid W_i)$ for $i\in[n]$;
 \end{itemize}
  \STATE for each $j\in[J]$, return debiased out-of-fold  outcome regression estimator 
\begin{equation}
    \mu_{n,j}^{*}:(a,w)\mapsto g^{-1}\left(g(\mu_{n,j}(a,w)) + \widehat \varphi_{k,j}(a,w)^{\top}\beta_n\right).
\end{equation}  

\end{algorithmic}
\end{algorithm}
 
  \textcolor{black}{We now present our EP-learner algorithm for feature mappings of fixed dimension $k$ in Algorithm~\ref{alg::EPlearner}. This algorithm is a cross-fitted variant of Algorithm~\ref{alg::EPlearnerMeta} that incorporates Algorithm~\ref{alg::debiasing} in Step~1.} Line \ref{step::constructEPLearner} of  this algorithm encompasses many estimation strategies, including penalized empirical risk minimization \citep{van2000empirical}, random forests \citep{breiman2001random}, and gradient boosting \citep{BoostingFruend}. 
Algorithm \ref{alg::CVEPlearner} instead describes a cross-validated EP-learner implementation incorporating data-driven selection of the feature mapping dimension. 
  This selection is made based on a one-step debiased estimate of the risk function.  
 The potential nonconvexity of this risk does not cause computational issues in this case since optimization is performed over a finite set of candidate values of $k$.

\begin{algorithm}[htb]
\caption{EP-learner algorithm (fixed sieve dimension)}
\label{alg::EPlearner}
\vspace{.1in}
\begin{algorithmic}
    \REQUIRE    
    \STATE-- dataset $\mathcal{D}_n$ consisting of observations $(W_1,A_1,Y_1),(W_2,A_2,Y_2),\ldots,(W_n,A_n,Y_n)$;
    \STATE --  cross-fitted estimates $\pi_{n,1},\pi_{n,2},\ldots,\pi_{n,J}$ of $\pi_{0}$ and $\mu_{n,1},\mu_{n,2},\ldots,\mu_{n,J}$ of $\mu_0$;
    \STATE -- feature mapping $\varphi_{k}:\mathbb{R}^d \mapsto \mathbb{R}^{k}$ of output dimension $k \in \mathbb{N}$;
    \STATE -- supervised learning algorithm for causal contrast;
\end{algorithmic}
    \vspace{.05in}
\begin{algorithmic}[1]

\STATE  obtain debiased outcome regression estimates $\mu_{n,j}^*$ from $\pi_{n,j}$ and $\mu_{n,j}$ using Algorithm \ref{alg::debiasing}; 
 \vspace{.1in}
\STATE return minimizer $\theta_{n,k}^*$ of $\theta\mapsto \frac{1}{n}\sum_{i =1}^n L_{\mu^*_{n,j(i)}}(\theta, W_i)$. \label{step::constructEPLearner}
\end{algorithmic}
\end{algorithm}

\begin{algorithm}[htb]
   \caption{EP-learner algorithm (cross-validated sieve dimension)}
   \label{alg::CVEPlearner}
   \vspace{.1in}
\begin{algorithmic}
    \REQUIRE    
    \STATE-- dataset $\mathcal{D}_n$ consisting of observations $(W_1,A_1,Y_1),(W_2,A_2,Y_2),\ldots,(W_n,A_n,Y_n)$;
    \STATE -- number of cross-fitting splits $J$;
    \STATE -- growing sequence $\varphi_1,\varphi_2,\ldots,\varphi_{K(n)}$ of feature mappings, with $\varphi_k:\mathbb{R}^d \mapsto \mathbb{R}^{k}$ and $K(n)\in \mathbb{N}$;
    \STATE -- supervised learning algorithm for causal contrast;
\end{algorithmic}
\vspace{.05in}
 \begin{algorithmic}[1]
       \STATE partition $\mathcal{D}_n$ into $J$ datasets $\mathcal{D}_n^1,\mathcal{D}_n^2, \ldots, \mathcal{D}_n^J$ of approximately equal size, say with corresponding index sets $\mathcal{I}^1_n,\mathcal{I}^2_n,\ldots,\mathcal{I}^J_n$;
        \STATE for $j \in [J]$, construct estimates $\pi_{n,j}$ and $\mu_{n,j}$ of $\pi_0$ and $\mu_0$ using $\mathcal{D}_n \,\backslash \, \mathcal{D}_n^j$; 
        
    \FOR{$k=1,2,\dots,K(n)$}
        \FOR{$j=1,2,\dots, J$}
            \STATE obtain debiased estimates $\mu_{n,j,k}^*$ from $\pi_{n,j}$ and $\mu_{n,j}$ using Algorithm \ref{alg::debiasing} with dataset $\mathcal{D}_n \backslash \mathcal{D}_n^j$, cross-fitted estimates $\pi_{n,1},\pi_{n,2},\ldots,\pi_{n,J}$ and $\mu_{n,1},\mu_{n,2},\ldots,\mu_{n,J}$, and feature-mapping $\varphi_k$; 
            \STATE obtain minimizer $\theta_{n,j,k}^*$ of $\theta\mapsto \sum_{i \in [n] \backslash \mathcal{I}_n^j} L_{\mu^*_{n,j(i)}}(\theta, W_i)$.
        \ENDFOR
    \ENDFOR
    \vspace{.1in}
    \STATE compute optimal dimension $k_{cv}(n) := \argmin_{k \in \mathbb{N}} \frac{1}{n} \sum_{i=1}^n
 L_{\pi_{n,j(i)}, \mu_{n,j(i)}}(O_i, \theta_{n,j(i),k}^{*})$;
    \STATE return $\theta_{n,k_{cv}(n)}^*$ obtained using Algorithm \ref{alg::EPlearner} with dataset $\mathcal{D}_n$ and feature mapping $\varphi_{k_{cv}(n)}$.
\end{algorithmic}
\end{algorithm}

 As presented in Algorithm \ref{alg::CVEPlearner}, the cross-validated EP-learner only requires the construction of cross-fitted outcome regression and propensity score estimators once. Nuisance estimators do not need to be recomputed within each training fold. Since the cross-fitted nuisance estimators are constructed using the entire dataset, there is some data leakage across the training folds used for cross-validation. To avoid such data leakage, the nuisance functions could alternatively be re-estimated within each training fold, although this would come at a possibly significant computational cost.

The sieve $\mathcal{H}_{1},\mathcal{H}_{2},\ldots$ used in the EP-learner implementation is an important ingredient that must be specified. In some cases, the transformed action space ${\mathcal{H}}$ suggests a natural choice of sieve. For instance, if ${\mathcal{H}}$ is a reproducing kernel Hilbert space, the linear span of the first $k$ elements of the leading eigenfunctions of the integral kernel operator \citep{mendelson2010regularization} is a canonical choice for $\mathcal{H}_{k}$. More generally, sieves based on the trigonometric polynomials \citep{JacksonOnAB}, B-splines \citep{Gordon1974BSPLINECA}, and wavelets \citep{Antoniadis_2007} are natural candidates with strong approximation guarantees under smoothness assumptions on the action space. \textcolor{black}{In our experiments, we used the tensor-product trigonometric polynomial sieve proposed by \citet{zhang2022regression} and implemented in the associated R package \texttt{Sieve} \citep{Sieve}.} In principle, the cross-validated EP-learner could be modified to select from different basis functions used to define the sieve. For example, Algorithm \ref{alg::EPlearner} can be used to construct EP-learners based on different subsets of trigonometric polynomial and B-spline basis functions.

 \section{Theoretical guarantees for empirical risk minimization}
\label{section::theory}

 \subsection{Efficiency of the EP risk estimator} \label{section::EffEPLearnerRisk}

 In this section, we study the EP risk estimator presented in Algorithm \ref{alg::EPlearner} with a deterministic feature mapping dimension $k=k(n)$ growing with sample size. Specifically, we establish conditions under which this EP risk estimator is equivalent to an oracle-efficient one-step risk estimator. To do so, we show that the debiasing term for the EP-learner risk estimator has a second-order dependence on the estimation error of the outcome regression estimator and the sieve approximation error of the transformed action space ${\mathcal{H}}$. Thus, if this debiasing term converges to zero rapidly enough, the EP-learner risk estimator is $n^{\frac{1}{2}}$--consistent, asymptotically linear, and nonparametric efficient for the population risk uniformly over the action space $\mathcal{F}$, as we formalize below.

\textcolor{black}{In what follows, we say `the loss is linear in \(\mu\)' if both \(g_1\) and
\(g_2\) are the identity map in \eqref{eqn::GeneralClassRisk}, and is not linear in $\mu$ otherwise.} For  $\theta \in L^2(P_{0,W})$, we denote by $\ \|\theta\|_{\infty}$ the $P_0$--essential supremum norm and by ${\Pi_{k}\,}\theta \in \argmin_{\phi \in \mathcal{H}_{k}} \norm{\theta - \phi}$ the  $L^2(P_0)$--projection of $\theta \in \mathcal{H}$ onto the sieve $\mathcal{H}_k$. Below, we make use of the following conditions:

  \begin{enumerate}[label=\bf{C\arabic*)}, ref=C\arabic*,series=cond]
    \item  \label{cond::boundPos}   there exist constants $\eta\in (0,1)$ and $M\in(0,\infty)$ such that, for all $j \in [J]$, for $P_0$--almost every realization $(w,a)$ of $(W,A)$, and with $P_0$--probability tending to one:
    \begin{enumerate}
        \item \textit{strong positivity:} \label{cond::positivity} $\pi_0(a\miid w)\in [\eta,1-\eta]$ and $\pi_{n,j}(a\miid w)\in [\eta,1-\eta]$;
        \item \textit{boundedness:}  \label{cond::boundedEStimators} $|\mu_{n,j}(a,w)| < M$ and $|\mu_{n,j}^*(a,w)| < M$;
    \end{enumerate} 
    \item \textit{polynomial nuisance rates:} \label{cond::A2Nuisance} there exist $\gamma > 0$ and $\beta > \frac{1}{4\gamma}$ such that, for all $j \in [J]$:
    \begin{enumerate}
     \item \textit{propensity score:}  \label{cond::outcomerateFirst} $\norm{\pi_{n,j} - \pi_0} = \mathcal{O}_p\big{(}n^{-\gamma/(2\gamma+1)}\big{)}$;%  and $\gamma > 0$. %$\gamma > (1/2\rho)$;
     \item \textit{outcome regression:}   \label{cond::outcomerateSecond} $\norm{\mu_{n,j} - \mu_0} = \mathcal{O}_p\big{(} n^{-\beta/(2\beta+1)}\big{)}$. \textcolor{black}{Moreover, if the loss is not linear in $\mu$, then $\beta>1/2$, so that $\norm{\mu_{n,j} - \mu_0}=o_p(n^{-1/4})$.}
     
     % if $g_1 = g_2=\mathrm{id}$ then $\norm{\mu_{n,j} - \mu_0} = \mathcal{O}_p\big{(} n^{-\beta/(2\beta+1)}\big{)}$, otherwise $\norm{\mu_{n,j} - \mu_0} = \mathcal{O}_p\big{(} n^{-\beta/(2\beta+1)}\big{)} = o_p(n^{-1/4})$.
     
     % and---if at least one of $g_1 = g_2$ or $g_2$ is different from the identity---it holds that $\beta > \frac{1}{2}$; %unless both $g_1$ and $g_2$ are the identity function;
     \item \textit{debiased outcome regression:} $\|\mu_{n,j}^* - \mu_0\| = \mathcal{O}_p\big{(}n^{-\beta/(2\beta+1)} + \sqrt{k(n) \log n / n}\big{)}$;\label{cond::outcomerateDebiased}
        % In comment below, we rewrite old conditions
    \end{enumerate}
     \item \textit{sufficiently small sieve approximation error:}\label{cond::sieveApproxthrm} there exist $\rho > \frac{1}{2}$ and $\nu \geq 0$ such that:
     \begin{enumerate}
            \item \textit{slowly growing Lebesgue constant:}\label{cond::sieveApproxthrm0}  $\sup_{\phi \in \mathcal{H}, v \in \mathcal{H}_{k(n)}}\frac{\|\Pi_{k(n)} (\phi + v)\|_{\infty}}{\|\phi + v\|_{\infty}} = \mathcal{O}\big{(}\{\log k(n)\}^{\nu}\big{)}$;
         \item \textit{polynomial approximation rate:} \label{cond::sieveApproxthrm1}  $\sup_{\theta \in \mathcal{H}} \inf_{\phi \in \mathcal{H}_{k(n)}}\norm{\theta -  \phi}_{\infty} = \mathcal{O}\big{(}k(n)^{-\rho}\big{)}$;
         \item \textit{large enough exponent:}\label{cond::sieveApproxthrm2} if $\min\{\beta, \gamma\} > \frac{1}{2}$, then $\rho \geq \frac{1}{2\beta + 1}$, and otherwise, $\rho \geq \frac{1}{4\min\{\beta, \gamma\}}$. 
     \end{enumerate}

\end{enumerate}

 Conditions \ref{cond::boundPos} and \ref{cond::A2Nuisance} of Theorem~\ref{theorem::EPriskEff} commonly appear in the nonparametric inference literature \citep{bickel1993efficient, shen1997methods,VanderLaanRobins2003, vanderLaanRose2011, DoubleML}. Condition \ref{cond::positivity} strengthens \ref{cond::pos} so that strong positivity also holds for the propensity score estimators. \textcolor{black}{Conditions \ref{cond::outcomerateFirst} and \ref{cond::outcomerateSecond} are satisfied for some $\gamma > 1/2$ and $\beta > 1/2$ if the cross-fitted nuisance estimators converge at a polynomial rate faster than $n^{-1/4}$.} Under suitable smoothness assumptions, such rates can be achieved by several statistical learning methods, including generalized additive models \citep{GAMhastie1987generalized}, reproducing kernel Hilbert space estimators \citep{QuasiOracleWager}, the highly adaptive lasso \citep{vanderlaanGenerlaTMLE}, and certain neural network architectures \citep{Farrell2018DeepNN}. For the CATE and log-CRR risks in Section \ref{section::introexamp}, \textcolor{black}{the loss is linear in $\mu$}, Condition \ref{cond::A2Nuisance} requires that the rate exponent of the outcome regression estimators satisfy $\beta > 1/(4\gamma)$. \textcolor{black}{Condition \ref{cond::outcomerateFirst} and \ref{cond::outcomerateSecond} together represent a rate double robustness condition \citep{bangrobins, rotnitzky2021characterization}.} \textcolor{black}{Condition \ref{cond::outcomerateDebiased} imposes a high-level rate requirement on the sieve-adjusted estimator $\mu_{n,j}^*$. In particular, under \ref{cond::outcomerateSecond} it holds if $\max_{j \in [J]} \|\mu_{n,j}^* - \mu_0\| = \mathcal{O}_p(\max_{j \in [J]} \|\mu_{n,j} - \mu_0\| + \sqrt{k(n)\log n / n})$, where the first and second terms on the right capture the sieve approximation error for $\mu_0$ and the statistical estimation error a $k(n)$-dimensional parameter, respectively \citep{chen2007large}. In Appendix~\ref{appendix::debiasedrate}, we verify this condition explicitly for the case where $(\mu_j^*: j \in [J])$ is obtained from Method 2 in Algorithm~\ref{alg::debiasing}.} Condition \ref{cond::regularityOnActionSpace} controls the complexity of the action space $\mathcal{F}$ in supremum norm. This condition holds with exponent $\alpha := s/d$ when $\mathcal{F}$ falls in a $d$--variate H\"{o}lder or Sobolev smoothness class with smoothness exponent $s > d/2$ (Theorem 2.7.2. of \citealp{vanderVaartWellner}; Corollary 4 of \citealp{nickl2007bracketing}).

By controlling the Lebesgue constant \citep{BelloniSieveApprox} in \ref{cond::sieveApproxthrm0}, approximation theory can be used to bound $\sup_{\theta \in \mathcal{H}}\|\theta - \Pi_{k(n)} \theta\|_{\infty}$ by the sup-norm approximation error of \ref{cond::sieveApproxthrm1} up to logarithmic dependence on $k(n)$ \citep{huang2003asymptotics, chen2013optimal, BelloniSieveApprox}. Only requiring elements of $\mathcal{H}$ to be continuous, Section 3.2 of \cite{BelloniSieveApprox} provides an overview of several sieve choices that satisfy this condition, including those based on trigonometric series, splines, wavelets, and local polynomials. Condition \ref{cond::outcomerateDebiased} is the convergence rate expected for an empirical risk minimizer over a sieve space of dimension $k(n)$ based on a loss indexed by cross-fitted nuisances \citep{orthogonalLearning}. This condition is satisfied under regularity conditions when each $\mu_{n,j}^*$ is obtained using Algorithm \ref{alg::debiasing}. Together, conditions \ref{cond::A2Nuisance} and \ref{cond::sieveApproxthrm} ensure the existence of a sequence of sieve dimensions $k(n)$ such that, for each $j \in [J]$, remainder terms of the order $\|\mu_{n,j}^* - \mu_0 \|\norm{\pi_{n,j} - \pi_0}$ and $\sup_{\phi \in {\mathcal{H}}} \|\phi- \Pi_{k(n)}\phi\| \|\mu_{n,j}^* - \mu_0 \|$ are $\smallO_p(n^{-\frac{1}{2}})$. The precise choice of $k(n)$ depends on $\beta$, $\gamma$ and $\rho$, which are typically unknown. However, under mild conditions, using arguments similar to those in \cite{van2003unified}, it can be shown that the cross-validated EP-learner is equivalent to the infeasible EP-learner that uses an oracle selector of $k(n)$ minimizing the population risk.

Compared to cross-fitted empirical risk estimators \citep{KennedyCATERates, orthogonalLearning}, the nuisance estimator $\mu_{n,j}^{*}$ depends on the entire dataset, and so, the EP-learner loss function $L_{\mu_{n,j}}^{*}$ is not independent of observations in the $j$th fold. Consequently, to obtain tighter control of empirical process remainders \citep{van2014uniform}, we require that the action space $\mathcal{F}$ not be too large with respect to the supremum norm metric. To quantify the complexity of $\mathcal{F}$, we consider the $\varepsilon$--covering number $N_{\infty}(\varepsilon, \mathcal{F})$ with respect to the $P_{0,W}$--essential supremum metric $(\theta_1,\theta_2)\mapsto d_{\infty}(\theta_1,\theta_2):=\|\theta_1-\theta_2\|_{\infty}$ on $\mathcal{F}$ as the smallest number of balls of $d_{\infty}$--radius $\varepsilon$ required to cover $\mathcal{F}$ \citep[Chapter 2 of][]{vanderVaartWellner}, and the corresponding metric entropy integral $\mathcal{J}_{\infty}(\delta, \mathcal{F}) := \int_0^{\delta} \{\log N_{\infty}(\varepsilon, \mathcal{F})\}^{\frac{1}{2}}d\varepsilon$. To establish our results, we require the following regularity conditions on the class of functions $\mathcal{F}$:

  \begin{enumerate}[label=\bf{C\arabic*)}, ref = C\arabic*, resume=cond]
\item \textit{nonparametric, convex action space that is not too large:} \label{cond::regularityOnActionSpace}
   $\mathcal{F}$ is uniformly bounded, convex, and $\mathcal{J}_{\infty}(\delta, \mathcal{F}) \leq C \delta^{1-1/(2\alpha)}$ for some $\alpha > 1/2$ and $C>0$, and for every $\delta > 0$. 
\end{enumerate}

 In the following theorem, $l^\infty(\mathcal{F})$ denotes the Banach space of bounded functionals $b : \mathcal{F}\rightarrow\mathbb{R}$ equipped with the norm $b\mapsto\|b\|_{\ell^\infty(\mathcal{F})}:=\sup_{\theta\in\mathcal{F}}|b(\theta)|$. Finally, for $\theta \in \mathcal{F}$, we denote the EP-learner risk estimator obtained from Algorithm \ref{alg::EPlearner} by $R_{n,k(n)}(\theta) := \frac{1}{n}\sum_{i=1}^n L_{\mu_{n,j(i)}^{*}}(\theta, W_i)$. %\textcolor{black}{Let $c(\beta,\gamma) := \min\{\beta,\gamma,1/2\}$, except when both $g_1$ and $g_2$ are the identity function, in which case $c(\beta,\gamma) := \min\{\beta,\gamma\}$.}

\begin{theorem}[Oracle efficiency of EP-learner risk]
  Suppose that conditions \ref{cond::boundPos}--\ref{cond::regularityOnActionSpace} hold and that $k(n)$ is a deterministic sequence satisfying the rate conditions \[
n^{\tfrac{1}{2\rho(2\beta+1)}} \cdot \frac{(\log n)^{\nu/\rho}}{k(n)}
\;\longrightarrow\; 0, 
\qquad
\frac{k(n)\,\log n}{n^{\tfrac{2c(\beta,\gamma)}{2c(\beta,\gamma)+1}}}
\;\longrightarrow\; 0,
\]
with $c(\beta,\gamma) := \min\{\beta, \gamma, 1/2\}$.
 Then, it holds that \[n^{\frac{1}{2}}\sup_{\theta \in \mathcal{F}}\left| \frac{1}{n}\sum_{i=1}^n \Delta_{\pi_{n,j(i)},\mu_{n,j(i)}^{*}}(O_i; \theta) \right| \stackrel{p}{\longrightarrow}0\ .\] Furthermore, it follows that $\sup_{\theta \in \mathcal{F}} |  n^{\frac{1}{2}} \{ R_{n,k(n)}(\theta)- R_{n,0}(\theta)\}|\stackrel{p}{\longrightarrow}0$, and thus, the stochastic process $(n^{\frac{1}{2}}\{R_{n,k(n)}(\theta) - R_0(\theta) \}: \theta \in \mathcal{F})$ converges weakly in $\ell^\infty(\mathcal{F})$ to the tight Gaussian process $\mathbb{G}$ with mean zero and covariance function $(\theta_1, \theta_2) \mapsto P_0(D_{0,\theta_1} D_{0,\theta_2})$.
  \label{theorem::EPriskEff}
\end{theorem}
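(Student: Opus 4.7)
The plan is to prove the three claims sequentially: show uniformly over $\theta$ that the debiasing term $T_n(\theta) := \frac{1}{n}\sum_{i=1}^n \Delta_{\pi_{n,j(i)},\mu_{n,j(i)}^{*}}(O_i;\theta)$ is $o_p(n^{-1/2})$; then use this together with a von Mises expansion to deduce the equivalence to the oracle one-step estimator $R_{n,0}$; and finally invoke a functional CLT for the oracle process. The pivot for the first step is the exact score identity~\eqref{eqn::scoreEqnSolvedByAdjust}: subtracting and adding the sieve projections $\Pi_{k(n)}(h_m\circ\theta) \in \mathcal{H}_{k(n)}$ inside $T_n(\theta)$, the projected components vanish exactly by construction of $\mu_{n,j}^*$, leaving
\[ \frac{1}{n}\sum_{i=1}^n \sum_m \frac{H_{m,\mu_{n,j(i)}}(A_i,W_i)}{\pi_{n,j(i)}(A_i\,|\,W_i)}\left\{(h_m\circ\theta - \Pi_{k(n)}(h_m\circ\theta))(W_i)\right\}\left(Y_i - \mu_{n,j(i)}^{*}(A_i,W_i)\right), \]
up to a lower-order discrepancy between the $H_{m,\mu_{n,j}}$ appearing in~\eqref{eqn::scoreEqnSolvedByAdjust} and the $H_{m,\mu_{n,j}^*}$ appearing in $\Delta$, which I would control by the Lipschitz property of $\dot g_m$ combined with~\ref{cond::boundPos} and~\ref{cond::A2Nuisance}.

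I split this display into a drift $P_0 f_{n,\theta}$ and an empirical-process remainder $(P_n-P_0)f_{n,\theta}$. The drift is second order: integrating out $Y_i$ conditionally on the nuisances replaces $Y_i - \mu_{n,j}^*$ with $\mu_0 - \mu_{n,j}^*$, and Cauchy--Schwarz yields a bound of $\|\mu_{n,j}^*-\mu_0\|$ times the sieve approximation error of $h_m\circ\theta$, which by~\ref{cond::sieveApproxthrm0}--\ref{cond::sieveApproxthrm1} is $O(k(n)^{-\rho}\{\log k(n)\}^\nu)$; the $k(n)$ growth conditions then force this to be $o(n^{-1/2})$ uniformly in $\theta$. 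The empirical-process remainder is the main hurdle, and the principal novelty relative to orthogonal-learning analyses: because $\mu_{n,j}^*$ depends on the entire dataset, one cannot condition on a fold-independent nuisance as in standard sample-split arguments. The resolution is to index $f_{n,\theta}$ jointly by $\theta\in\mathcal{F}$ and by the nuisance estimators viewed as random elements of function classes of controlled complexity, then invoke a Dudley-type maximal inequality whose envelope is of order $\{\log k(n)\}^\nu k(n)^{-\rho}$ and whose entropy integral is controlled by the polynomial bound on $\mathcal{J}_\infty(\delta,\mathcal{F})$ in~\ref{cond::regularityOnActionSpace} (the projection operator is $L^\infty$-bounded by~\ref{cond::sieveApproxthrm0}, which is what allows me to lift the envelope to the sup norm). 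The stated rate conditions on $k(n)$ then deliver $o_p(n^{-1/2})$.

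Given this uniform control, the second claim follows by writing
\[ R_{n,k(n)}(\theta) - R_{n,0}(\theta) = \tfrac{1}{n}\sum_i \{L_{\mu_{n,j(i)}^*}(\theta,W_i) - L_{\mu_0}(\theta,W_i)\} + T_n(\theta) - \tfrac{1}{n}\sum_i \Delta_{\pi_0,\mu_0}(O_i;\theta). \]
Taking $P_0$-expectations, the deterministic bias is the von Mises remainder for the pathwise differentiable parameter $P\mapsto R_P(\theta)$ with efficient influence function $D_{P,\theta}$; by Neyman orthogonality this remainder is of product form $O(\|\mu_{n,j}^*-\mu_0\|\cdot\|\pi_{n,j}-\pi_0\|)$, hence $o_p(n^{-1/2})$ under~\ref{cond::A2Nuisance} since $\beta>1/(4\gamma)$. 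The centered variation is handled by the same empirical-process machinery as above via~\ref{cond::regularityOnActionSpace}. Combined with the first step, this yields $\sup_\theta|n^{1/2}\{R_{n,k(n)}(\theta) - R_{n,0}(\theta)\}|\stackrel{p}{\longrightarrow} 0$. Finally, $n^{1/2}\{R_{n,0}(\theta) - R_0(\theta)\} = n^{1/2}(P_n-P_0)D_{0,\theta}$, and the class $\{D_{0,\theta}:\theta\in\mathcal{F}\}$ is $P_0$-Donsker by~\ref{cond::regularityOnActionSpace} together with preservation of the Donsker property under the bounded Lipschitz transformations $h_m$ and $g_m$ (with boundedness ensured by~\ref{cond::boundPos}); the standard functional central limit theorem then delivers the stated tight Gaussian limit $\mathbb{G}$ in $\ell^\infty(\mathcal{F})$.
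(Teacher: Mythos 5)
Your proposal tracks the paper's proof closely and takes essentially the same route: subtract and add the sieve projections $\Pi_{k(n)}(h_m\circ\theta)$ so that the projected part is annihilated by the score equations~\eqref{eqn::scoreEqnSolvedByAdjust} (up to the $H_{m,\mu_{n,j}}$ versus $H_{m,\mu_{n,j}^*}$ mismatch, which you correctly flag), split the residual into a $P_0$ drift controlled by Cauchy--Schwarz and the sup-norm sieve approximation rate and an empirical-process remainder controlled by embedding $\mu_{n,j}^*$ in a conditionally deterministic class of complexity $O(k(n))$ (this is precisely the role of the paper's Lemma~\ref{lemma::HighProbSetmunstar} and the maximal inequalities of Appendix~\ref{appendix::supportinglemmas}), then pass from $R_{n,\pi_{n,\diamond},\mu_{n,\diamond}^*}$ to the oracle $R_{n,0}$ via a second-order bias bound (Lemma~\ref{lemma::P0boundDR}) plus Lemma~\ref{lemma::empProcDeltastar4}, and finish with Donsker preservation and Slutsky. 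One small imprecision worth noting: when $g_1,g_2$ are not the identity, the Neyman-orthogonal remainder is not purely of product form $\|\mu_{n,j}^*-\mu_0\|\cdot\|\pi_{n,j}-\pi_0\|$ but also carries a quadratic term $\|\mu_{n,j}^*-\mu_0\|^2$ from the second-order Taylor expansion of $g_m$, which is exactly why~\ref{cond::outcomerateSecond} strengthens $\beta>\tfrac14$ to $\beta>\tfrac12$ in that case; you implicitly reach the right conclusion by citing~\ref{cond::A2Nuisance}, but it is worth making that extra term explicit.
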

We note that the constraints on $\gamma$, $\beta$ and $\rho$ imposed by \ref{cond::A2Nuisance} and \ref{cond::sieveApproxthrm} ensure the existence of a sieve dimension $k(n)$ satisfying the growth rate bounds of Theorem \ref{theorem::EPriskEff}. Since $D_{0,\theta}$ is the (nonparametric) efficient influence function of $P\mapsto R_P(\theta)$, it holds that $R_{n,k(n)}(\theta)$ is a nonparametric efficient estimator of $R_0(\theta)$ for each $\theta\in\mathcal{F}$ \citep{bickel1993efficient}. In addition to being efficient, the estimator $R_{n,k(n)}$ exhibits a form of double robustness whenever \textcolor{black}{the loss is linear in $\mu$;} 
% $g_1$ and $g_2$ appearing in the definition of the loss specified in \eqref{eqn::GeneralClassRisk} are the identity function; 
this is the case for both the CATE and log-CRR loss functions introduced in Sections \ref{section::introCATEexample} and \ref{section::introRRexample}. In particular, even when the outcome regression estimators are inconsistent, the EP-learner risk estimator $R_{n,k(n)}(\theta)$ is still consistent for $R_0(\theta)$ if $\|\pi_{n,j} - \pi_0\|$ and $\sup_{\phi \in {\mathcal{H}}} \|\phi- \Pi_{k(n)}\phi\|$ tend to zero in probability---this is studied in Lemmas \ref{lemma::P0boundDR} and \ref{lemma::P0boundSieve2} of the Supplement.

Theorem \ref{theorem::EPriskEff} indicates that no additional correction is needed to ensure the asymptotic linearity and efficiency of the plug-in risk function estimator whenever it is based on the debiased outcome regression estimators $\mu_{n,1}^*,\mu_{n,2}^*,\ldots,\mu_{n,j}^*$. However, this property comes at the cost of condition \ref{cond::sieveApproxthrm}, which requires that the product of the outcome regression estimator rate $\|\mu_{n,j}^* - \mu_0\|$ and sieve approximation error $\sup_{\phi \in \mathcal{H}} \|\phi - \Pi_{k(n)} \phi \|_{\infty}$ be $\smallO_p(n^{-\frac{1}{2}})$. Fixing nuisance rate exponents $\beta$ and $\gamma$ of \ref{cond::A2Nuisance}, in order to obtain a sieve approximation rate satisfying \ref{cond::sieveApproxthrm}, elements in the transformed action space ${\mathcal{H}}$ must generally be sufficiently smooth with respect to the sieve basis. If $\mathcal{H}$ is a subset of the class of $d$--variate H\"{o}lder smooth functions with smoothness exponent $s > d/2$, this condition is known to hold with $\rho:=s/d$ for appropriate sieves based on trigonometric, spline, wavelets, and local polynomial basis functions (Section 2.3 of \citealp{chen2007large}; Section 3 of \citealp{BelloniSieveApprox}). For appropriate sieves based on trigonometric and wavelet basis functions, condition \ref{cond::sieveApproxthrm} also holds with $\rho := s/d$ when $\mathcal{H}$ is contained in the $d$--variate Sobolev class of smoothness exponent $s > d/2$ \citep{kon2002sup, cobos2016optimal}.

\subsection{Convergence rate guarantees for ERM-based EP-learners}
 \label{Section::ERMEPLearner}

  We now establish properties of an EP-learner of the population risk minimizer $\theta_0 := \argmin_{\theta \in \mathcal{F}} R_0(\theta)$ constructed using empirical risk minimization. We refer to this special case of Algorithm \ref{alg::EPlearner} as the ERM-based EP-learner $\theta_{n,k(n)}^*$ defined as any element of $\argmin_{\theta \in \mathcal{F}} R_{n, k(n)}(\theta)$. To simplify our results, we assume that $\theta_{n,k(n)}^*$ always exists. However, when that is not the case, it suffices to identify a near-minimizer in the sense of \cite{orthogonalLearning}.

To theoretically evaluate the performance of the ERM-based EP-learner, we compare it with the oracle learner $\theta_{n,0} \in \argmin_{\theta \in \mathcal{F}} R_{n, \pi_0, \mu_0}(\theta)$, which minimizes the oracle efficient one-step risk estimator over $\mathcal{F}$. Before studying the ERM-based EP-learner, we present an upper bound on the rate of the oracle learner. For the following theorem and the remainder of this section, the exponent $\alpha > 1/2$ is the entropy integral exponent from \ref{cond::regularityOnActionSpace}.

 \begin{theorem}[Oracle learner rate]
  \label{theorem::oracleRate}
 Under \ref{cond::positivity} and \ref{cond::regularityOnActionSpace}, it holds that
 $\|\theta_{n,0} - \theta_0\| = \mathcal{O}_p(n^{-\alpha/(2\alpha +1)})$.
 \end{theorem}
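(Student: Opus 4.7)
The plan is to bound the excess risk of $\theta_{n,0}$ via a standard localized empirical risk minimization argument, exploiting the $\gamma$-strong convexity of $R_0$ assumed in Section~\ref{section::riskfunctions}, the fact that $P_0\Delta_{\pi_0,\mu_0}(\cdot\,;\theta)=0$ for each $\theta$, and the sup-norm entropy bound in~\ref{cond::regularityOnActionSpace}.

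First, for each $\theta\in\mathcal F$ I would introduce the oracle excess-loss functional $f_\theta(o):=L_{\mu_0}(w,\theta)+\Delta_{\pi_0,\mu_0}(o;\theta)-L_{\mu_0}(w,\theta_0)-\Delta_{\pi_0,\mu_0}(o;\theta_0)$, so that $P_0 f_\theta = R_0(\theta)-R_0(\theta_0)$ by the mean-zero property of the debiasing term. The $\gamma$-strong convexity of $R_0$ combined with the first-order optimality of $\theta_0$ over the convex set $\overline{\mathcal F}$ yields the curvature bound $\tfrac{\gamma}{2}\|\theta-\theta_0\|^2\leq P_0 f_\theta$ for every $\theta\in\mathcal F$. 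The defining inequality $R_{n,\pi_0,\mu_0}(\theta_{n,0})\leq R_{n,\pi_0,\mu_0}(\theta_0)$ can be rewritten as $P_n f_{\theta_{n,0}}\leq 0$, so combining with the curvature bound produces the basic inequality
\begin{equation*}
\tfrac{\gamma}{2}\,\|\theta_{n,0}-\theta_0\|^2 \ \leq\ P_0 f_{\theta_{n,0}}\ \leq\ |(P_n-P_0)\,f_{\theta_{n,0}}|.
\end{equation*}

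Next I would transfer the entropy control from $\mathcal F$ to the localized loss class $\mathcal G_\delta:=\{f_\theta:\theta\in\mathcal F,\ \|\theta-\theta_0\|\leq\delta\}$. Under~\ref{cond::positivity} the factor $1/\pi_0$ is bounded by $1/\eta$; combined with the boundedness of $Y$ and $\mu_0$, the Lipschitz continuity of $h_1$ and $h_2$, and the Lipschitz continuity of $\dot g_1,\dot g_2$ on compact sets, the map $\theta\mapsto f_\theta$ is Lipschitz from $(\mathcal F,\|\cdot\|_\infty)$ into $L^\infty(P_0)$ with some finite constant $C$. Consequently $\mathcal G_\delta$ admits a uniform envelope of $L^2(P_0)$-norm at most $C\delta$, and $\log N_\infty(\varepsilon,\mathcal G_\delta)\leq\log N_\infty(\varepsilon/C,\mathcal F)$, so $\mathcal J_\infty(\delta,\mathcal G_\delta)\lesssim\delta^{1-1/(2\alpha)}$. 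A standard localized maximal inequality (e.g.\ Theorem~3.4.1 of \citealp{vanderVaartWellner}) then yields, up to lower-order $n^{-1}$ Bernstein terms,
\begin{equation*}
E\!\left[\sup_{\theta\in\mathcal F,\ \|\theta-\theta_0\|\leq\delta}|(P_n-P_0)\,f_\theta|\right]\ \lesssim\ n^{-1/2}\,\delta^{1-1/(2\alpha)}.
\end{equation*}

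Finally, a peeling/fixed-point argument (e.g.\ Theorem~3.2.5 of \citealp{vanderVaartWellner}) converts the heuristic deterministic inequality $r^2\lesssim n^{-1/2}\,r^{1-1/(2\alpha)}$ into the stochastic rate $\|\theta_{n,0}-\theta_0\|=O_p(r_n)$, where $r_n$ solves $r_n^{1+1/(2\alpha)}\asymp n^{-1/2}$, i.e.\ $r_n=n^{-\alpha/(2\alpha+1)}$. The main technical step is the Lipschitz transfer: one has to verify that the weighted-residual correction $\Delta_{\pi_0,\mu_0}(\cdot;\theta)$ does not inflate either the envelope or the sup-norm entropy of $\mathcal G_\delta$ beyond multiplicative constants. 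This reduces to tracking the boundedness and Lipschitz constants supplied by~\ref{cond::positivity} and the structural assumptions on $h_1,h_2,g_1,g_2$ in~\eqref{eqn::GeneralClassRisk}; once those are in hand, the rate follows from off-the-shelf nonparametric M-estimation machinery.
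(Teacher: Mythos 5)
Your proposal follows essentially the same route as the paper's proof: the basic inequality from strong convexity plus $R_{n,0}(\theta_{n,0})\le R_{n,0}(\theta_0)$, a Lipschitz transfer of sup-norm covering numbers from $\mathcal F$ to the localized excess-loss class (using boundedness from \ref{cond::positivity} and the Lipschitz assumptions on $h_m,\dot g_m$), a localized maximal inequality, and the fixed-point rate theorem; the only cosmetic difference is that the paper invokes its own sup-norm maximal inequality (Lemma 3, from \citealp{van2014uniform}) where you cite a bracketing version from van der Vaart and Wellner, which yields the same bound once one notes $N_{[\,]}(2\varepsilon,\cdot,L_2)\le N_\infty(\varepsilon,\cdot)$.
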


If $\mathcal{F}$ consists of $d$--variate, compactly-supported, H\"{o}lder-smooth functions with H\"{o}lder exponent $s > d/2$, the entropy integral satisfies \ref{cond::regularityOnActionSpace} with $\alpha = s/d$ \citep[Theorem 2.7.2. of][]{vanderVaartWellner}, which gives an oracle rate of $n^{-\alpha/(2\alpha + 1)}$. When the target of interest is the value of the CATE function at a point, \cite{KennedyMiximax} shows that this rate is minimax for an oracle learner based on observing counterfactual outcomes. The results in \cite{YangBarron} can similarly be used to show that this oracle rate is minimax when the goal is instead estimation of the CATE function in an $L^2(P_{0,W})$--sense. Hence, at least in the CATE estimation setting, this suggests that the rate for H\"{o}lder smooth functions provided by Theorem~\ref{theorem::oracleRate} cannot be improved. In light of this, throughout, we will refer to $n^{-\alpha/(2\alpha+1)}$ as the oracle rate.

The following theorem provides an upper bound on the convergence rate of the ERM-based EP-learner. To obtain fast rates for the ERM-based EP-learner, we impose the following coupling between the supremum and $L^2(P_0)$ norms:

\begin{enumerate}[label=\bf{C\arabic*)}, ref = C\arabic*, resume=cond]
\item \label{cond::theorem2::couplings}  \textit{sup-norm coupling:} there exists $C>0$ such that $ \norm{ \theta }_{\infty} \leq C \norm{ \theta}^{1-1/(2\alpha)}$ for all $\theta \in \mathcal{F} - \mathcal{F}$.
\end{enumerate}
Condition \ref{cond::theorem2::couplings} is known to hold with exponent $\alpha := 1/d$ when $\mathcal{F}$ is a subset of the $d$--variate H\"{o}lder smoothness class of order $s =1$ \citep[Lemma 4 of][]{bibaut2021sequential}. Moreover, this condition holds for appropriate reproducing kernel Hilbert spaces \citep[Lemma 5.1 of ][]{mendelson2010regularization}; in particular, it holds with exponent $\alpha := s/d$ when $\mathcal{F}$ is a subset of the $d$--variate Sobolev smoothness class of order $s > d/2$. This condition also holds when $\mathcal{F}$ equals the signed convex hull of appropriate basis functions \citep[Lemma 2 of][]{van2014uniform}. \textcolor{black}{For empirical risk minimization within an RKHS, such sup-norm couplings have been used to establish fast rates under $L^2$ convergence of the nuisance estimators \citep{QuasiOracleWager, orthogonalLearning}.}

Our next result describes how close the ERM-based EP-learner and its oracle counterpart are, and involves the sequence $\varepsilon_n^2:=\min\{a_n,b_n\}$ with \begin{align*}
    a_n&:=\frac{(\log n)^{2\nu}}{k(n)^\rho}\left[\frac{1}{n^{\beta/(2\beta+1)}}  + \left\{\frac{k(n) \log n}{n}\right\}^{1/2} + \left\{\frac{k(n)^{(\rho/\alpha)}}{n}\right\}^{1/2}\right];\\
    b_n&:=\frac{1}{n^{2\beta/(2\beta+1)}} +  \frac{k(n) \log n}{n} +  \frac{(\log n)^{2\nu}}{n^{1/2}k(n)^{\rho\{1- 1/(2\alpha)\}}}\ .\label{eqn::rateEPlearnerEpsn}
\end{align*}

  \begin{theorem}[EP-learner convergence rate for a deterministic sieve growth rate]
  Suppose that conditions \ref{cond::boundPos}, \ref{cond::A2Nuisance}, \ref{cond::sieveApproxthrm0} , \ref{cond::sieveApproxthrm1}, \ref{cond::regularityOnActionSpace} and \ref{cond::theorem2::couplings} hold. Then, it holds that $$\|\theta_{n,k(n)}^* - \theta_{n,0}\| = \mathcal{O}_p\left(\varepsilon_n\right)+\smallO_p\left(n^{-\alpha/(2\alpha+1)}\right)$$
  for any sieve growth rate satisfying $(\log n)k(n)/n^{2c(\beta,\gamma)/\{2c(\beta,\gamma)+1\}}\rightarrow 0$.

  \label{theorem::EpLearnerRate}
 \end{theorem}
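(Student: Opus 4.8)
The plan is to run a localized basic-inequality (``peeling'') argument that compares the ERM-based EP-learner $\theta_{n,k(n)}^*$ with the oracle learner $\theta_{n,0}\in\argmin_{\theta\in\mathcal{F}}R_{n,\pi_0,\mu_0}(\theta)$, using the curvature of the population risk to convert an excess-risk bound into a bound on $\|\theta_{n,k(n)}^*-\theta_{n,0}\|$. Writing $\mathbb{P}_n$ for the empirical distribution and $R_{n,0}:=R_{n,\pi_0,\mu_0}$ for the oracle one-step risk estimator of \eqref{eqn::onestepRisk}, I decompose the EP-learner risk as $R_{n,k(n)}=R_{n,0}+\Gamma_n$ with remainder process
\[
\Gamma_n(\theta):=\mathbb{P}_n\!\left[L_{\mu_{n,j}^{*}}(\theta,\cdot)-L_{\mu_0}(\theta,\cdot)-\Delta_{\pi_0,\mu_0}(\cdot\,;\theta)\right].
\]
Since $\theta_{n,k(n)}^*$ minimizes $R_{n,k(n)}$ over $\mathcal{F}$ and $\theta_{n,0}\in\mathcal{F}$, the optimality inequality gives $R_{n,0}(\theta_{n,k(n)}^*)-R_{n,0}(\theta_{n,0})\le\Gamma_n(\theta_{n,0})-\Gamma_n(\theta_{n,k(n)}^*)$. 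The proof then has two halves: lower-bounding the left side by a quadratic in $\|\theta_{n,k(n)}^*-\theta_{n,0}\|$ (up to fluctuations common to both learners), and upper-bounding the right side by the modulus of continuity of $\Gamma_n$; solving the resulting inequality by peeling delivers the rate.

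For the lower bound, I expand $R_{n,0}(\theta)-R_{n,0}(\theta_{n,0})=\{R_0(\theta)-R_0(\theta_{n,0})\}+(\mathbb{P}_n-P_0)(D_{0,\theta}-D_{0,\theta_{n,0}})$, use the $\gamma$--strong convexity of the loss from Lemmas~\ref{lemma::uniquePopMinimizer1}--\ref{lemma::uniquePopMinimizer2}, a Lipschitz-gradient upper bound, and the oracle-learner rate $\|\theta_{n,0}-\theta_0\|=\mathcal{O}_p(n^{-\alpha/(2\alpha+1)})$ of Theorem~\ref{theorem::oracleRate} (to handle that $\nabla R_0(\theta_{n,0})$ is not zero) to get $R_0(\theta)-R_0(\theta_{n,0})\gtrsim\|\theta-\theta_{n,0}\|^2-\mathcal{O}_p(n^{-2\alpha/(2\alpha+1)})$. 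The empirical-process increment is controlled by a localized maximal inequality: the entropy bound \ref{cond::regularityOnActionSpace} (which, since the $L^2(P_0)$ metric entropy is dominated by the sup-norm entropy, also bounds the corresponding entropy integral) combined with variance localization --- valid because $h_1,h_2$ are Lipschitz, so $\|D_{0,\theta}-D_{0,\theta'}\|\lesssim\|\theta-\theta'\|$ --- gives $\sup_{\theta\in\mathcal{F},\,\|\theta-\theta_{n,0}\|\le\delta}|(\mathbb{P}_n-P_0)(D_{0,\theta}-D_{0,\theta_{n,0}})|=\mathcal{O}_p(n^{-1/2}\delta^{1-1/(2\alpha)})$, whose fixed point $\delta\asymp n^{-\alpha/(2\alpha+1)}$ is the oracle rate. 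Because all these terms are referenced to the common point $\theta_{n,0}$, they enter the final statement only through the $\smallO_p(n^{-\alpha/(2\alpha+1)})$ slack rather than the leading term.

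The crux is controlling the modulus of $\Gamma_n$. I split $\Gamma_n(\theta)=\mathbb{P}_n\Xi_{n,j}(\theta)-\mathbb{P}_n\Delta_{\pi_{n,j},\mu_{n,j}^{*}}(\cdot\,;\theta)$, where $\Xi_{n,j}(\theta):=(L_{\mu_{n,j}^{*}}+\Delta_{\pi_{n,j},\mu_{n,j}^{*}})(\theta,\cdot)-(L_{\mu_0}+\Delta_{\pi_0,\mu_0})(\theta,\cdot)$ is the increment of the Neyman-orthogonal loss from true to estimated nuisances. The debiasing term $\mathbb{P}_n\Delta_{\pi_{n,j},\mu_{n,j}^{*}}(\cdot\,;\theta)$ vanishes whenever $h_1\circ\theta,h_2\circ\theta\in\mathcal{H}_{k(n)}$ by the score equations \eqref{eqn::scoreEqnSolvedByAdjust} defining $\mu_{n,j}^{*}$, and for general $\theta$ the argument behind Theorem~\ref{theorem::EPriskEff}, refined to track the concrete rate in \ref{cond::outcomerateDebiased}, bounds it by the sup-norm sieve error \ref{cond::sieveApproxthrm1} amplified by the Lebesgue constant \ref{cond::sieveApproxthrm0}, times $\|\mu_{n,j}^{*}-\mu_0\|$, plus an empirical-process term --- this produces the $(\log n)^{2\nu}k(n)^{-\rho}$ factor visible in $a_n$. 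The population piece $P_0\Xi_{n,j}(\theta)$ is, by Neyman orthogonality of $L+\Delta$ for $R_0$ (the remark after Theorem~\ref{theorem::generalEIF}), a second-order remainder of order $\|\mu_{n,j}^{*}-\mu_0\|\{\|\pi_{n,j}-\pi_0\|+\|\mu_{n,j}^{*}-\mu_0\|\}$ uniformly in $\theta$, and its increment over $\theta$ and $\theta_0$ is $\lesssim\|\theta-\theta_0\|$ times this product by Lipschitzness of $h_1,h_2$; applying \ref{cond::A2Nuisance} contributes the $n^{-2\beta/(2\beta+1)}$ and $k(n)\log n/n$ pieces of $b_n$.

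The remaining term $(\mathbb{P}_n-P_0)\Xi_{n,j}(\theta)$ is the main obstacle: because $\mu_{n,j}^{*}$ is built from the full sample, $\Xi_{n,j}$ is not independent of the fold-$j$ observations, so the cross-fitted learning bounds of \cite{orthogonalLearning} do not apply directly. I would bound it by a supremum of the empirical process over a class indexed by $(\pi,\mu^{*},\theta)$, with $\pi$ and the cross-fitted $\mu_{n,j}$ entering at their \ref{cond::A2Nuisance}-rates and the only full-sample ingredient being the $2k(n)$--dimensional correction in $\mu_{n,j}^{*}=\bar g(g(\mu_{n,j})+\widehat\varphi_{k(n),j}^{\top}\beta_n)$; a chaining bound combining the entropy of $\mathcal{F}$ from \ref{cond::regularityOnActionSpace}, the parametric entropy $\lesssim k(n)\log(1/\varepsilon)$ of a Euclidean ball for $\beta$, boundedness \ref{cond::boundPos}, and the sup-norm coupling \ref{cond::theorem2::couplings} (used to feed sup-norm sieve-approximation bounds into the localization radius, giving the $k(n)^{\rho/\alpha}$ interaction) yields a modulus of order $n^{-1/2}\|\theta-\theta_0\|^{1-1/(2\alpha)}\{\sqrt{k(n)\log n/n}+(\text{nuisance rates})\}$, the source of the $\{k(n)\log n/n\}^{1/2}$ and $\{k(n)^{\rho/\alpha}/n\}^{1/2}$ terms in $a_n$. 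Assembling these bounds into the modulus of $\Gamma_n$, substituting $\delta=\|\theta_{n,k(n)}^*-\theta_{n,0}\|$, and solving $\tfrac{\gamma}{4}\delta^2\lesssim(\text{modulus})+\smallO_p(n^{-\alpha/(2\alpha+1)})$ by the usual peeling device gives $\delta=\mathcal{O}_p(\varepsilon_n)+\smallO_p(n^{-\alpha/(2\alpha+1)})$, with $a_n$ and $b_n$ corresponding to two ways of allocating the approximation factor $k(n)^{-\rho}$ between $P_0\Xi_{n,j}$ and the debiasing term. The delicate points are this non-sample-split chaining bound and the bookkeeping that keeps the shared oracle-rate fluctuations inside the $\smallO_p$ term.
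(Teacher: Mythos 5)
Your modulus-of-$\Gamma_n$ analysis tracks the paper's Lemma~\ref{lemma::modulusOfCont} closely --- the split into a Neyman-orthogonal increment $\Xi_{n,j}$ and a debiasing term, the use of the score equations on the sieve, second-order bias bounds via orthogonality, and chaining over the class indexed by the $2k(n)$-dimensional $\beta_n$ alongside $\mathcal{F}$ all match the paper's Lemmas~\ref{lemma::HighProbSetmunstar}--\ref{lemma::P0boundSieve2}. The lower bound, however, is where your route genuinely diverges from the paper's and where it fails to deliver the stated conclusion.

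You decompose $R_{n,0}(\theta)-R_{n,0}(\theta_{n,0})=\{R_0(\theta)-R_0(\theta_{n,0})\}+(\mathbb{P}_n-P_0)(D_{0,\theta}-D_{0,\theta_{n,0}})$ and then lower-bound $R_0(\theta)-R_0(\theta_{n,0})\gtrsim\|\theta-\theta_{n,0}\|^2-\mathcal{O}_p(n^{-2\alpha/(2\alpha+1)})$ via strong convexity at $\theta_0$, Lipschitz gradient, and the oracle rate. This additive $\mathcal{O}_p(n^{-2\alpha/(2\alpha+1)})$ is real and unavoidable along this route: it is the excess risk $R_0(\theta_{n,0})-R_0(\theta_0)$, which is genuinely of that order, not smaller. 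Feeding it into the basic inequality $\tfrac{\gamma}{4}\delta_n^2\lesssim\text{(modulus)}+\mathcal{O}_p(n^{-2\alpha/(2\alpha+1)})+\mathcal{O}_p(n^{-1/2}\delta_n^{1-1/(2\alpha)})$ and solving gives $\delta_n=\mathcal{O}_p(\varepsilon_n)+\mathcal{O}_p(n^{-\alpha/(2\alpha+1)})$ --- big-$\mathcal{O}$, not little-$o$. The same holds for the localized increment of $D_{0,\cdot}$: its $L^2(P_0)$ localization radius is $\|\theta-\theta_{n,0}\|\le\delta$ (linear), so the fixed point of $n^{-1/2}\mathcal{J}_{\infty}(\delta,\mathcal{F})\asymp\delta^2$ is exactly $n^{-\alpha/(2\alpha+1)}$. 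Your sentence asserting that these pieces "enter the final statement only through the $\smallO_p(n^{-\alpha/(2\alpha+1)})$ slack \ldots because [they] are referenced to the common point $\theta_{n,0}$" carries no mathematical content; referencing the same point does not produce any cancellation in the basic inequality.

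The paper's Lemma~\ref{lemma::quadraticLowerBound} avoids the problem by never detouring through $R_0$. It Taylor-expands $R_{n,0}$ itself at $\theta_{n,0}$, using the empirical first-order optimality (directional derivative of $R_{n,0}$ at $\theta_{n,0}$ is $\ge 0$) to kill the linear term. What remains is a second-order term whose population part is $\ge\gamma\|\theta-\theta_{n,0}\|^2$ by strong convexity and whose empirical part (Lemma~\ref{lemma::quadraticLowerBoundEmpiricalProcess}) is a centered process over a \emph{squared}-difference class. By the sup-norm coupling \ref{cond::theorem2::couplings}, this class has local $L^2$-radius $\delta^{2-1/(2\alpha)}$, yielding $n^{-1/2}\mathcal{J}_{\infty}(\delta^{2-1/(2\alpha)},\mathcal{F})$, whose fixed point is $n^{-\sigma}$ with $\sigma>\alpha/(2\alpha+1)$ strictly for all $\alpha>1/2$ (since $(2\alpha-1)^2>0$). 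That strict gain is exactly the source of the $\smallO_p(n^{-\alpha/(2\alpha+1)})$ in the theorem statement, and it is what Theorem~\ref{theorem::EpLearnerOracleEff} subsequently relies on via \ref{cond::theorem2::oracleRateTight}. Your route proves a strictly weaker statement that would not support the oracle-efficiency result. The fix is to replace your $R_0$-based lower bound with the $R_{n,0}$-based second-order Taylor expansion at $\theta_{n,0}$.
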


 Theorem \ref{theorem::EpLearnerRate} establishes that, under certain conditions, the rate of the root-mean-squared error between the EP-learner and the oracle learner is dominated by the oracle rate up to a remainder term $\varepsilon_n$ that depends on nuisance estimation rates and the sieve approximation error. In view of the $b_n$ term, the EP-learner rate is no worse than the debiased outcome regression estimator rate $n^{-\beta/(2\beta+1)} + \{k(n) \log n /n\}^{1/2}$ as long as (i) $\theta_0$ is at least as smooth as $\mu_0$ so that $\beta \leq \alpha$; and (ii) the sieve growth rate is fast enough so that $ n^{\beta/\{\rho(2\beta+1)\}}(\log n)^{2\nu\beta/\rho}/k(n)\rightarrow 0$. \textcolor{black}{The rate in Theorem~\ref{theorem::EpLearnerRate} does not appear to depend directly on $\gamma$ or $\beta$, except through the sieve approximation error term $\varepsilon_n$. This is because \ref{cond::A2Nuisance} ensures that the errors of the initial nuisance estimators can be absorbed into the $\smallO_p(n^{-\alpha/(2\alpha+1)})$ term.}

 The next result demonstrates that, under appropriate conditions, the EP-learner is oracle-efficient in the sense of \cite{KennedyCATERates} so long as the oracle rate is tight in expectation and the sieve approximation error and nuisance estimation rates tend to zero quickly enough. To achieve oracle efficiency, we require $k(n)$, $\beta$ and $\rho$ to be such that $n^{\alpha/(2\alpha+1)}\varepsilon_n\rightarrow 0$, which mandates that $k(n)$ grow at a faster rate than imposed by Theorem \ref{theorem::EPriskEff}. We also require the following additional condition:

  \begin{enumerate}[label=\bf{C\arabic*)}, ref = C\arabic*, resume=cond]
 \item  \textit{tight oracle learner rate:} $n^{-\alpha/(2\alpha+1)} / E_0^n\norm{\theta_{n,0} - \theta_0} = \mathcal{O}(1)$.\label{cond::theorem2::oracleRateTight}
\end{enumerate}

\begin{theorem} [Oracle efficiency of ERM-based EP-learner]   \label{theorem::EpLearnerOracleEff}
\textcolor{black}{Suppose that conditions \ref{cond::theorem2::oracleRateTight}, \ref{cond::A2Nuisance} with $\gamma >  1/2$ and $\beta >  \max\{\tfrac{1}{4\gamma}, \frac{2\alpha}{4\alpha^2+1}\}$,} and  \ref{cond::sieveApproxthrm1} with $\rho = \alpha$ hold in addition to the conditions of Theorem \ref{theorem::EpLearnerRate}. Then,  \[\|\theta_{n, k(n)}^* - \theta_{n,0}\|=\smallO_p\left(E_0^n\|\theta_{n,0} - \theta_0\|\right)\] for any sieve growth rate satisfying $\frac{(\log n)\,k(n)}{n^{2c(\beta,\gamma)/(2c(\beta,\gamma)+1)}}\to 0$   and $a_n n^{\tfrac{2\alpha}{2\alpha+1}} \rightarrow 0 $.

%for any sieve growth rate satisfying, for some $\epsilon > 0$, $n^{\epsilon + 2/(2\alpha+1) - \min\{\beta/\alpha,1\}/(2\min\{\beta,\alpha\}+1)})/k(n)\rightarrow 0$ and $n^{-\frac{1}{2}}(\log n)k(n)\rightarrow 0$.

 \end{theorem}

By the reverse triangle inequality, Theorem \ref{theorem::EpLearnerOracleEff} implies that $\|\theta_{n,k(n)}^* - \theta_0\| - \|\theta_{n,0} - \theta_0\|$ tends to zero in probability faster than $E_0^n\|\theta_{n,0} - \theta_0\|$. If this statement can be strengthened to convergence in expectation rather than in probability, it would follow that $E_0^n\|\theta_{n, k(n)}^* - \theta_{0}\|/E_0^n\|\theta_{n,0} - \theta_{0}\| \rightarrow 1$. In such case, $\theta_{n, k(n)}^*$ would converge in expected mean squared error at the same rate and constant as the oracle learner.

In view of Lemma \label{lem:exist-k} in Appendix \ref{appendix::proofsoracle}, there exists a sieve growth rate $k(n)$ satisfying the growth bounds of Theorem \ref{theorem::EpLearnerOracleEff}. It is not immediately obvious that it is possible to obtain a sieve approximation rate exponent $\rho$ equal to the metric entropy and a supremum norm coupling exponent $\alpha$ satisfying \ref{cond::regularityOnActionSpace} and \ref{cond::theorem2::couplings}, as is required in Theorem~\ref{theorem::EpLearnerRate}. However, this is indeed guaranteed if, for example, $\mathcal{F}$ and $\mathcal{H}$ consist of $d$--variate functions that are H\"{o}lder smooth with order $s=1$ or Sobolev smooth with order $s > d/2$.

\subsection{Discussion and related literature}

\textcolor{black}{Under the conditions of Theorem \ref{theorem::EpLearnerOracleEff}, EP-learner is oracle efficient if both nuisance estimators converge in $L^2$ at polynomial rates faster than $n^{-1/4}$ ($\gamma > 1/2$ and $\beta > 1/2$). These rate conditions are analogous to those for orthogonal learning when no additional structure is assumed beyond \ref{cond::regularityOnActionSpace} and \ref{cond::theorem2::couplings} (Theorem~3 of \citealp{QuasiOracleWager}; Example~1 of \citealp{orthogonalLearning}). We expect that alternative conditions can be obtained by working with $L^4$ rates \citep[cf.][Proposition 1]{orthogonalLearning}. Here, we instead follow \citet{QuasiOracleWager} in formulating
conditions in terms of $L^2$ rates, as these are more widely established.}

{\color{black}In doubly robust settings,  DR and other orthogonal learners can achieve
oracle efficiency when the outcome regression
converges more slowly than $n^{-1/4}$, provided the
propensity is estimated sufficiently quickly so that the product of the $L^2$ errors is $o_p(n^{-1/2})$. In this regime, EP-learner's outcome regression can similarly converge slower than $n^{-1/4}$, but the rate requirement is dictated by the more stringent of the usual product rate condition ($\beta>1/[4\gamma]$) and a condition dictated by the smoothness of the causal summary ($\beta>2\alpha/[4\alpha^2+1]$). The latter requirement is imposed to ensure the sieve approximation error is negligible. As $\alpha \to 1/2$, this condition is at its most restrictive, requiring an $o_p(n^{-1/4})$ rate of convergence for the outcome regression. However, once $\alpha$ is large enough (more than $2\gamma - 1/[8\gamma]$), the usual product rate condition is the more restrictive, and so the requirement on the outcome regression rate is the same as for orthogonal learners.}

\textcolor{black}{The rates achieved by many Neyman-orthogonal learning strategies---including the DR-learner, R-learner, and EP-learner---may be suboptimal when additional structure is imposed on the causal contrast and nuisance functions. In particular, for CATE estimation under H\"older smoothness of the estimand, the propensity score, and the outcome regression, these learners fail to attain the minimax rate and oracle efficiency in low-smoothness regimes of the nuisances. By contrast, specialized procedures that exploit H\"older regularity and higher-order bias corrections can achieve minimax-optimal rates in such settings \citep{KennedyCATERates, KennedyMiximax, fisher2023connection}. However, even for marginal treatment effect estimation, such rate improvements are known to not be achievable in structure-agnostic settings, where only high-level $L^2$ rate conditions are imposed on the nuisances \citep{balakrishnan2023fundamental,
jin2024structure}.}

%Achieving oracle efficiency is more difficult for EP-learner than for other orthogonal learning strategies, such as the DR-learner and R-learner. The key distinction is the presence of an additional sieve-approximation remainder term, which imposes stronger smoothness requirements on the causal contrast. As discussed below Theorem~\ref{theorem::EpLearnerOracleEff}, EP-learner attains oracle efficiency only if the CATE is sufficiently smooth, as quantified by the entropy exponent $\alpha$ of $\mathcal{F}$. This condition is stricter than that of orthogonal learners---such as the DR-learner---in doubly robust settings with $\beta < 1/2$. Nonetheless, when $\beta > 1/2$ or when the CATE is substantially smoother than the outcome regression EP-learner achieves efficiency under conditions comparable to those required for orthogonal learning.

\section{Example of when EP-learners are more robust than DR-learners: $K$-nearest neighbors}

\label{section::knn}

\textcolor{black}{In this section, we provide theoretical support for how EP-learners can outperform orthogonal learners, such as DR-learners, by being less sensitive to tuning parameter choices, as discussed in Section \ref{section::introCATEexample}. Focusing on estimation of the CATE using $K$-nearest neighbors, we show that if the adjusted T-learner $\mu_n^*(1,\cdot) - \mu_n^*(0,\cdot)$, used to construct pseudo-outcomes for EP-learning, is uniformly consistent, then EP-learner remains consistent even with a fixed number of neighbors as $n$ grows.} %In the next section, for empirical risk minimizers, we show EP-learners also inherit the advantages of DR-learners, including efficiency and double robustness of the risk estimator. By contrast, DR-learners may yield unstable fits when few points are averaged and are typically inconsistent when the number of neighbors is held fixed.}

Consider estimation of the CATE based on the population risk function in Section \ref{section::introCATEexample}. Let $\mu_n^*$ be constructed from initial estimators $\pi_n$ and $\mu_n$ of $\pi_0$ and $\mu_0$ using Algorithm \ref{alg::debiasing}, where for each $j \in [J]$ we set $\pi_{n,j} = \pi_n$ and $\mu_{n,j} = \mu_n$. Let $\theta_{n,EP}$ denote the $K$-nearest neighbors EP-learner, defined pointwise for each $w \in (0,1)^d$ as
\[
\theta_{n,EP}(w) = \frac{1}{K}\sum_{i=1}^n 1\{i \in \mathcal{I}_{K,n}(w)\}\left[ \mu_n^*(1,W_i) - \mu_n^*(0,W_i) \right],
\]
where $\mathcal{I}_{K,n}(w)$ denotes the indices of the $K$ nearest observations to $w$ under the Euclidean norm, with ties broken arbitrarily. Our main result is the following theorem,.

\begin{theorem}\label{theorem:knn}
Suppose $W$ is uniformly distributed on $[0,1]^d$ and that $\mu_0$ is Lipschitz-continuous on $\{0,1\}\times [0,1]^d$. \textcolor{black}{At each sample size $n$, let the number of neighbors $K_n$ be an arbitrary element of $[n]$.} Then, 
$$\sup_{w \in (0,1)^d} \left|\theta_{n,EP}(w) - \theta_0^-(w) \right| \lesssim \sup_{w \in (0,1)^d} \left|\mu_n^*(1, w) - \mu_n^*(0,w) - \theta_0^-(w)\right| + \mathcal{O}_p\left( (K_n \log n/n)^{1/d}\right). $$ 
\end{theorem}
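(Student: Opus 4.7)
The plan is to decompose the pointwise error of the $K$-nearest-neighbor EP-learner into a term reflecting the quality of the debiased outcome-regression CATE estimate $w\mapsto \mu_n^*(1,w)-\mu_n^*(0,w)$ and a ``smoothing'' term reflecting how far the $K$ nearest neighbors of $w$ can be from $w$ itself. Writing $\widehat\theta_n^-(w') := \mu_n^*(1,w')-\mu_n^*(0,w')$, I would use the identity
\[
\theta_{n,EP}(w)-\theta_0^-(w) \;=\; \frac{1}{K}\!\!\sum_{i\in\mathcal{I}_{K,n}(w)}\!\!\bigl[\widehat\theta_n^-(W_i)-\theta_0^-(W_i)\bigr] \;+\; \frac{1}{K}\!\!\sum_{i\in\mathcal{I}_{K,n}(w)}\!\!\bigl[\theta_0^-(W_i)-\theta_0^-(w)\bigr].
\]
The first sum is uniformly dominated in absolute value by $\sup_{w'\in(0,1)^d}|\widehat\theta_n^-(w')-\theta_0^-(w')|$, which matches the first term on the right-hand side of the target inequality.

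For the second sum, I would exploit that $\theta_0^-=\mu_0(1,\cdot)-\mu_0(0,\cdot)$ inherits Lipschitz continuity from $\mu_0$, with some constant $L<\infty$. Consequently,
\[
\Bigl|\frac{1}{K}\!\!\sum_{i\in\mathcal{I}_{K,n}(w)}\!\!\bigl[\theta_0^-(W_i)-\theta_0^-(w)\bigr]\Bigr|
\;\leq\; L\cdot\max_{i\in\mathcal{I}_{K,n}(w)}\|W_i-w\|
\;=\; L\cdot R_{K,n}(w),
\]
where $R_{K,n}(w)$ is the distance from $w$ to its $K$-th nearest sample point. The task then reduces to establishing the uniform $K$-NN distance bound $\sup_{w\in(0,1)^d} R_{K,n}(w)=O_p\bigl((K\log n/n)^{1/d}\bigr)$.

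The uniform distance bound is the part I expect to be the main obstacle, although it is classical. The plan is a covering argument: for a small constant $c>0$, cover $[0,1]^d$ by a grid of $N_n\asymp n/(K\log n)$ axis-aligned cubes of side $r_n:=c(K\log n/n)^{1/d}$. If every cube contains at least $K$ sample points, then for any $w$ the $K$-th nearest neighbor lies within the union of $w$'s cube and its neighbors, hence at distance $\lesssim r_n$. The expected number of $W_i$ in a single cube is $nr_n^d\asymp K\log n$; by a Chernoff/Bernstein bound for the Binomial, each cube fails to contain $K$ points with probability at most $n^{-2}$ for $c$ chosen large enough, and a union bound over the $N_n\lesssim n$ cubes yields the uniform control with probability tending to one. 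Combining this with the two-term decomposition above completes the proof; note that the argument uses only Lipschitzness of $\mu_0$ and the assumed uniform density of $W$, and the rate $(K\log n/n)^{1/d}$ is independent of whether cross-fitting is used to build $\mu_n^*$.
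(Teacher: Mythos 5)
Your decomposition into a T-learner error term and a Lipschitz ``smoothing'' term controlled by the $K$-th nearest-neighbor radius is exactly the one used in the paper, and the covering/union-bound argument you sketch for the uniform radius bound $\sup_w R_{K,n}(w)=O_p\bigl((K\log n/n)^{1/d}\bigr)$ is sound. The only difference is that the paper obtains that uniform radius bound by citing Corollary 1.5 of Chenavier et al.\ (2022) on the maximal volume of the $K$-NN ball rather than proving it; your self-contained Chernoff-plus-grid argument is a more elementary substitute for that citation, and the two routes are otherwise identical.
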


Importantly, the above establishes consistency provided $K_n=o(n/\log(n))$ and $\mu_n^*(1,\cdot)-\mu_n^*(0,\cdot)$ is uniformly consistent; special case of this is that $K_n=1$ for all $n$.  Hence, for $K$-nearest neighbors, EP-learner is more robust to suboptimal tuning than the DR-learner, as illustrated in Figure~\ref{fig:DRlearnerBoundViol}, where it remains stable across a wider range of tuning parameters.

\section{Numerical experiments} \label{Section::sims}

\subsection{Evaluating EP-learner}

We evaluated the empirical performance of the implementation of our cross-validated EP-learner described in Algorithm \ref{alg::CVEPlearner} through three experiments based on the CATE and log-CRR risk functions introduced in Sections \ref{section::introCATEexample} and \ref{section::introRRexample}.

In the first two experiments, we considered estimation of the CATE function. In the first experiment, the covariate vector $W$ was taken to have dimension $d=3$, whereas in the second, it was taken to have dimension $d=20$ but with only $5$ active components. \textcolor{black}{The goal of the second experiment is to assess how EP-learner performs under sparsity of the causal summary. Because the sieve uses all covariates without penalization, it is natural to ask whether including basis functions for inactive variables degrades performance.} In the third experiment, we considered estimation of a log-CRR with a covariate vector of dimension $d=3$. In each experiment, we further considered four settings characterized by estimation based on a simple versus moderately complex functional form, and on limited versus moderate propensity overlap across treatment arms. Additional details on the design of these simulation experiments are provided in Appendix \ref{appendix::experiments}.

In all experiments conducted, the data-generating mechanism implied a causal contrast additive in the covariates, and the EP-learner sieve was constructed using a univariate trigonometric cosine polynomial basis, as considered in \cite{zhang2022regression}. The truncation level for the trigonometric series was selected from the first six leading frequencies of the trigonometric series using cross-validation. To obtain initial cross-fitted propensity score and outcome regression estimates, ensemble learning based on cross-validation and a library of candidate algorithms including gradient-boosted trees (xgboost) with tree depths ranging from 1 to 8 and generalized additive models (GAMs) was used. The performance of EP-learner was compared to that of DR-learner, R-learner \citep{QuasiOracleWager}, T-learner, and a causal forest estimator \citep{atheywager2014causalforest} for the CATE in the first two experiments. We considered various EP-learners, DR-learners, T-learners, and R-learners, each differing in the supervised learning algorithm used in Step 2 of Algorithm \ref{alg::EPlearner}. The supervised learning algorithms considered were GAMs, multivariate adaptive regression splines (MARS), random forests (RF), and xgboost, and are implemented using the \texttt{R} packages \texttt{gam} \citep{hastie2015package}, \texttt{earth} \citep{milborrow2019package}, \texttt{ranger} \citep{ranger}, and \texttt{xgboost} \citep{xgboost}. \textcolor{black}{We note that only GAM (for a fixed tuning parameter) is an example of an empirical risk minimizer, with $\mathcal{F}$ taken as the cubic smoothing spline class \citep{HastieTibshirani1990}, whereas all other methods are greedy estimators that select tree splits or basis functions in an adaptive manner.} For the R-learner, we omitted the use of MARS  because the weight implementation of \texttt{earth} is computationally prohibitive. The causal forest estimator used is implemented using the R package \texttt{grf} and was internally tuned using the tune.parameters=``all" argument of the \texttt{causal\_forest()} function \citep{tibshirani2018package}. For the DR-learner, R-learner, T-learner, and EP-learners, tuning parameters were selected using 10-fold cross-validation based on the DR-learner loss.

In the third experiment, we compare the performance of EP-learner for log-CRR estimation with that of the T-learner and of the IPW-based E-learner introduced in \cite{JiangEntropy}. We did not consider learners based on the one-step debiased risk estimator due to the nonconvexity of its corresponding loss function. The base-learner MARS was omitted for all causal learners as the \texttt{R} package \texttt{earth} does not support non-binary outcomes for logistic regression. For similar reasons, we used the \texttt{xgboost} (rather than the \texttt{ranger}) implementation of random forests. Apart from these changes, the base statistical learning algorithms used for the causal learners were the same as those used for CATE estimation.

For each experimental setup and sample size $n\in\{500,1000,2000, 3000, 4000,5000\}$, we generated 1000 simulated datasets. The Monte Carlo empirical mean squared error was computed for each EP-learner and its competitors. Results for the CATE experiments with a complex functional form and moderate treatment overlap are displayed in Figure \ref{figSim:CATELowDimMain}, whereas results for the log-CRR experiment with limited and moderate treatment overlap are shown in Figure \ref{figSim:CRRMain}. Results for simple causal contrasts and settings with limited treatment overlap are qualitatively similar and summarized in Appendix~\ref{appendix::experiments}.

\begin{figure}[!htbp]
 \centering

  \begin{subfigure}[b]{0.45\textwidth}
   \includegraphics[width=0.5\textwidth]{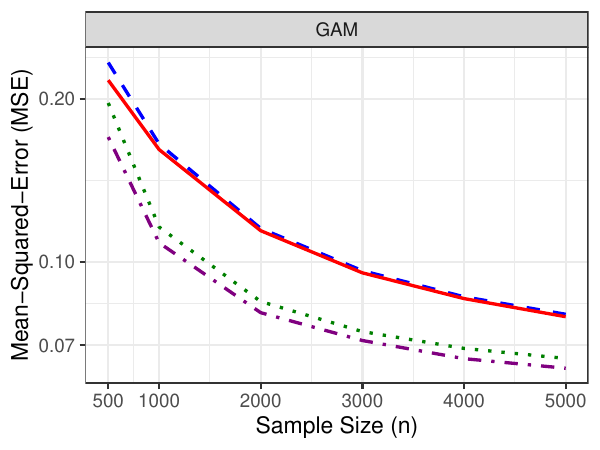}\includegraphics[width=0.5\textwidth]{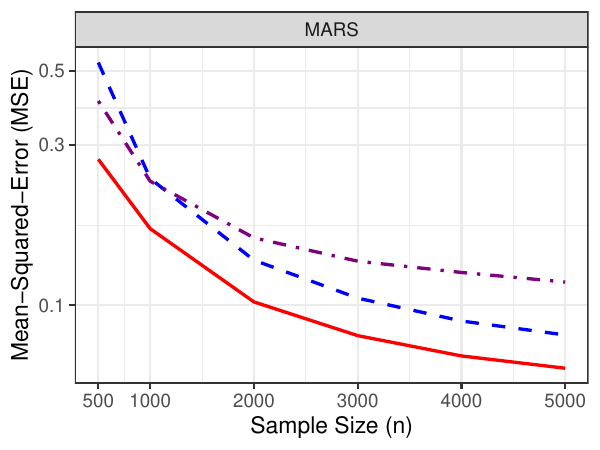}   
    \includegraphics[width=0.5\textwidth]{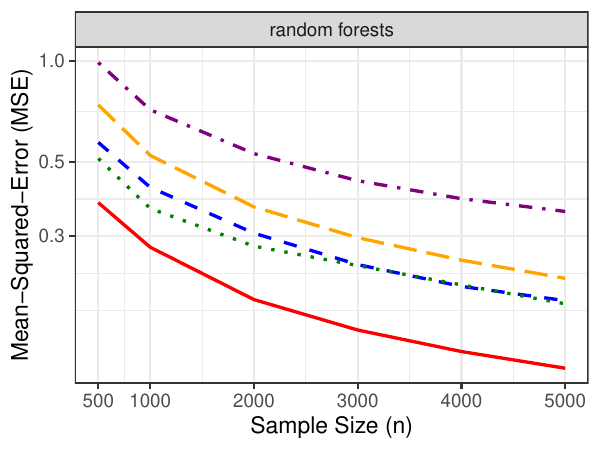}\includegraphics[width=0.5\textwidth]{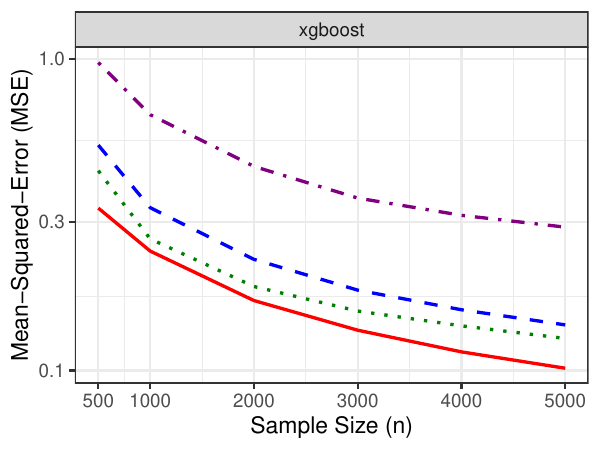}
    \subcaption{$d=3$}
    \end{subfigure}\hfill\begin{subfigure}[b]{0.45\textwidth}
\includegraphics[width=0.5\textwidth]{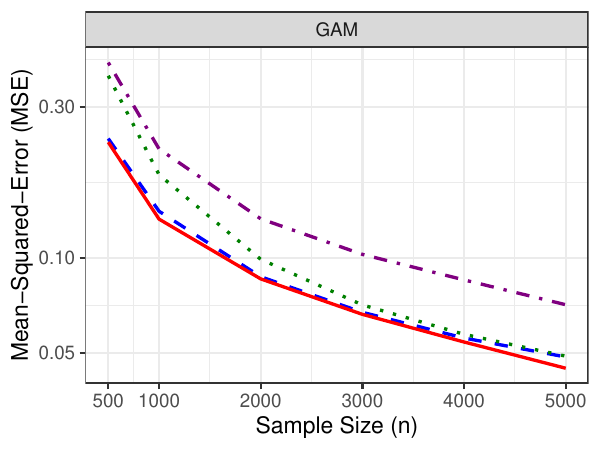}\includegraphics[width=0.5\textwidth]{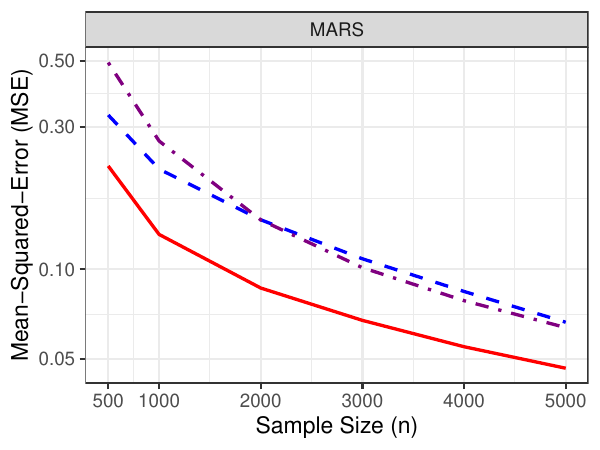}   
   \includegraphics[width=0.5\textwidth]{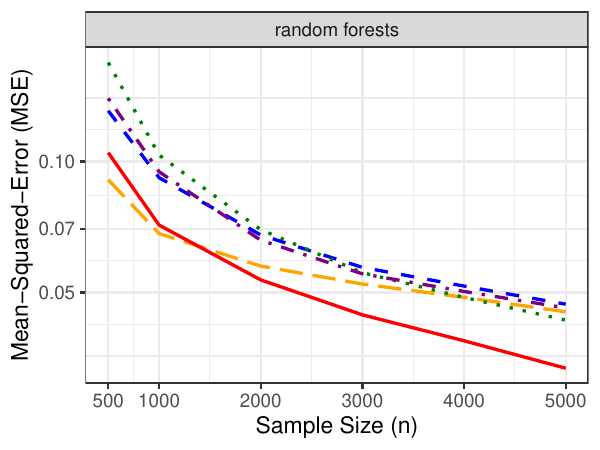}\includegraphics[width=0.5\textwidth]{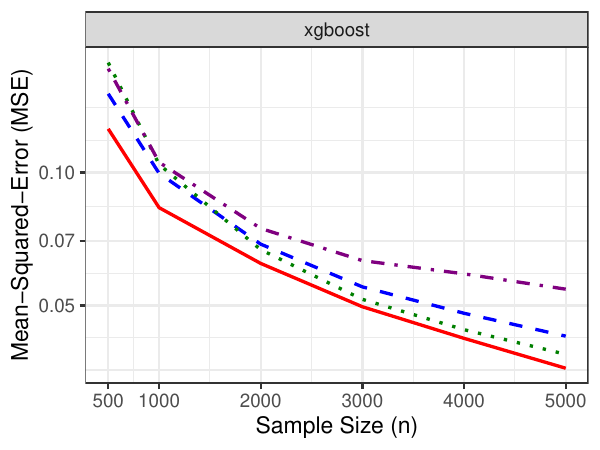}
   \subcaption{$d=20$ with $5$ active}
    \end{subfigure} 
 \includegraphics[width=\textwidth]{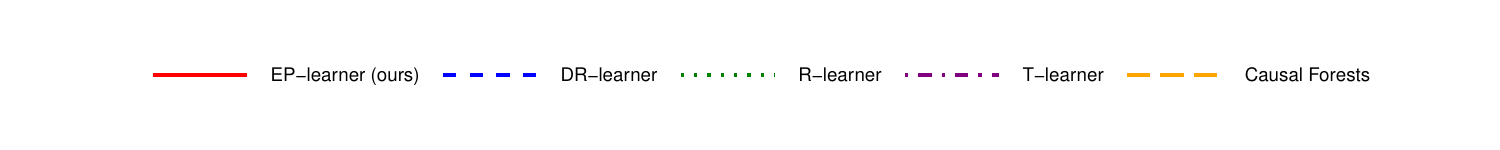}  
    \caption{
    CATE experiments with a complex CATE and moderate treatment overlap: Mean squared error for DR-learner, R-learner, T-learner, and cross-validated EP-learner with supervised learning algorithm GAM, MARS, \texttt{ranger}, and \texttt{xgboost}. For the plots reporting the results of \texttt{ranger}, we also display the results of causal forests for comparison. 
    }  
 
      \label{figSim:CATELowDimMain}
\end{figure}

 \begin{figure}[!htbp]
    \centering
    \begin{subfigure}[b]{0.45\textwidth}
    \includegraphics[width=0.5\textwidth]{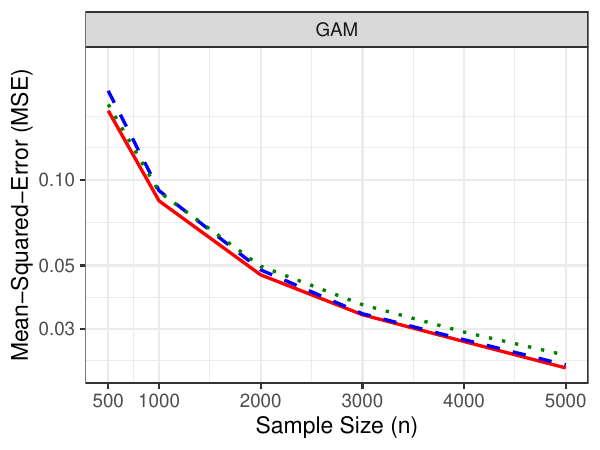}\includegraphics[width=0.5\textwidth]{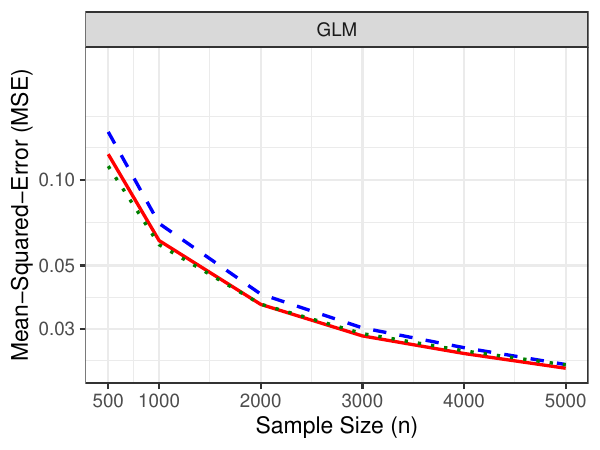}  
    \includegraphics[width=0.5\textwidth]{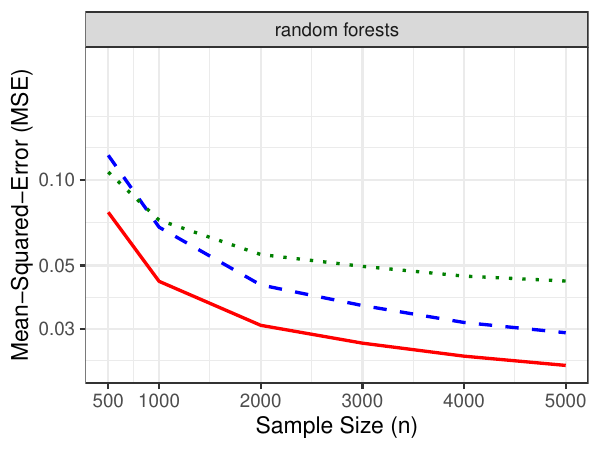}\includegraphics[width=0.5\textwidth]{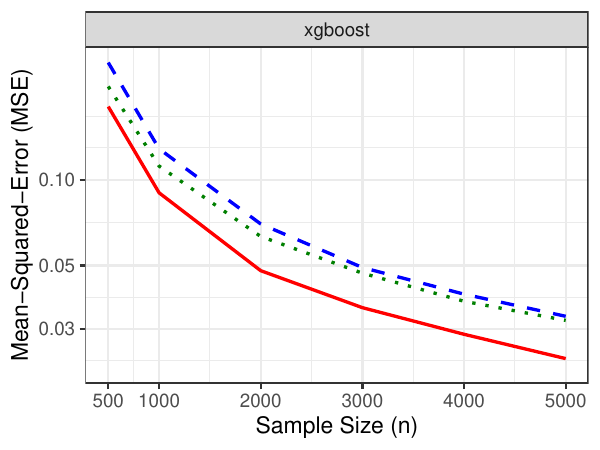}
    % \subcaption{CRR ($d=1$)}
    \subcaption{
    Moderate treatment overlap}
    \end{subfigure}\hfill\begin{subfigure}[b]{0.45\textwidth}
    \includegraphics[width=0.5\textwidth]{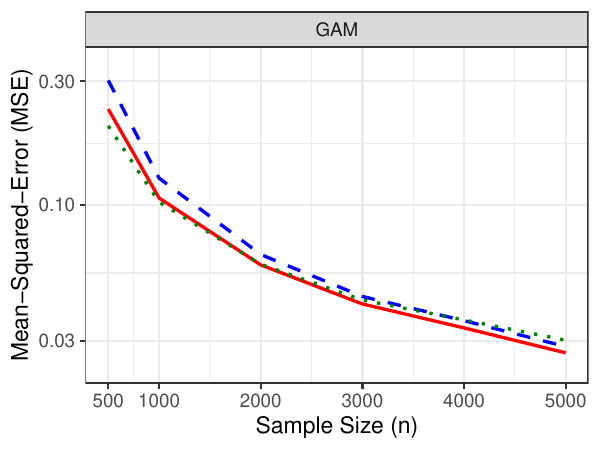}\includegraphics[width=0.5\textwidth]{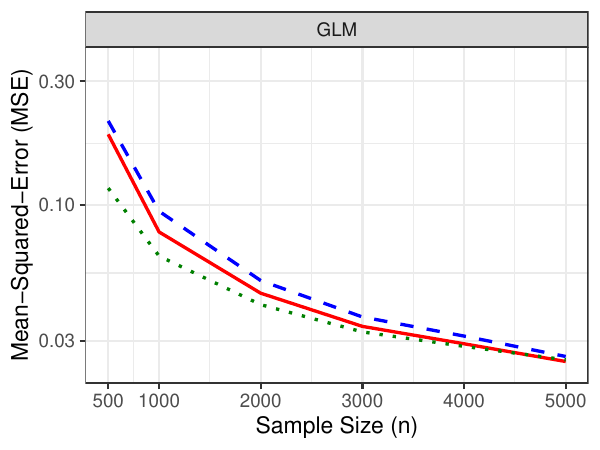}  
    \includegraphics[width=0.5\textwidth]{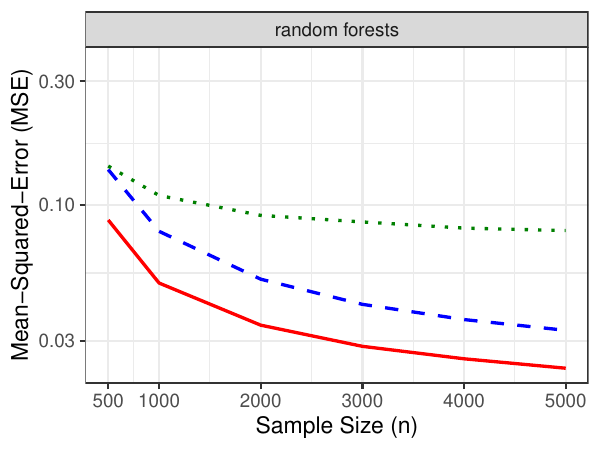}\includegraphics[width=0.5\textwidth]{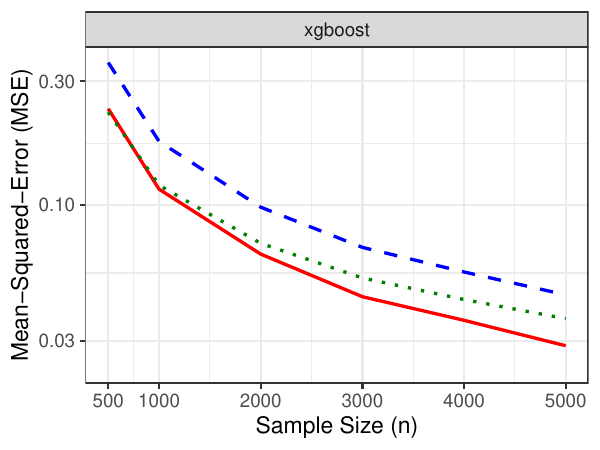}
    % \subcaption{CRR ($d=1$)}
    \subcaption{
    Limited treatment overlap}
    \end{subfigure}
     
 \includegraphics[width=\textwidth]{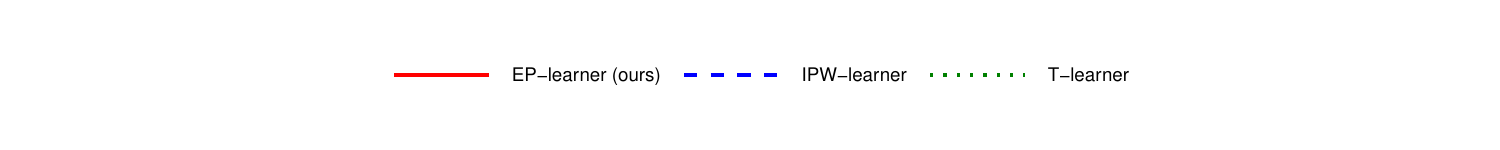}  
 \caption{
  CRR experiments with a complex CRR and moderate (left) and limited (right) treatment overlap: Mean-squared error for IPW-learner, T-learner, and CV-EP-learner with supervised learning algorithm GAM, random forests, and xgboost. The DR-learner algorithm for the CRR is not implemented due to nonconvexity of the loss.
    }

      \label{figSim:CRRMain}
\end{figure}

\textcolor{black}{In most experimental settings and simulation scenarios, EP-learner performed at least as well as---and in some cases better than---all other methods considered, particularly when using tree-based methods. In the case of CATE learners based on GAMs, EP-learner and DR-learner had comparable performance across all sample sizes. However, the R-learner and T-learner outperformed EP-learner and DR-learner for GAM with $d=3$, suggesting that plug-in estimation and overlap weighting were beneficial for improving efficiency in this case.} For more flexible algorithms, such as tree-based learners and MARS, EP-learner significantly outperformed DR-learner in mean squared error across all settings, even for large sample sizes. For instance, in the CATE experiments with \texttt{xgboost}, EP-learner reduced the mean squared error approximately three-fold compared to DR-learner at sample size $n=5000$. In all CATE experiments, R-learner and causal forests performed well. In the CATE experiment with covariate dimension $d=3$, the \texttt{ranger} and \texttt{xgboost} EP-learners exhibited better mean squared error than R-learner and causal forests across all settings and sample sizes. In the CATE experiment with covariate dimension $d=20$, causal forests slightly outperformed the \texttt{ranger} and \texttt{xgboost} EP-learners for smaller sample sizes, but EP-learners perform best for larger sample sizes. In the log-CRR experiments, we found that EP-learners generally performed as well or better than the T-learner, depending on the degree of treatment overlap. The IPW-learner exhibited significantly worse performance than EP-learner in most settings.

\subsection{Comparing with R-learner}

\textcolor{black}{We reproduce the four benchmark settings (A–D) from \citet{QuasiOracleWager}, where covariates $X$ are drawn from a distribution $P_d$, treatment is assigned via a propensity $e^\star(x)$, and outcomes follow $Y=b^\star(X)+(W-\tfrac12)\tau^\star(X)+\sigma\varepsilon$ with $W\mid X\sim \mathrm{Bernoulli}(e^\star(X))$ and $\varepsilon\sim\mathcal{N}(0,1)$. Setup~A features complex nuisances but a simple treatment effect; Setup~B mimics a randomized trial with no confounding; Setup~C imposes strong confounding with an easy propensity, a difficult baseline, and a constant treatment effect; and Setup~D generates uncorrelated treated and control outcomes so joint learning offers no benefit. We fix $d=5$ and $\sigma=1$, vary $n\in\{500,1000,2000\}$, and use the exact data-generating functions from \citet{QuasiOracleWager}.}

\textcolor{black}{
Figure~\ref{fig:wager_replication} reports the mean squared error (MSE) of CATE estimators across sample sizes $n=500,1000,2000$ under settings A--D, averaged over 100 independent data draws. The EP-learner consistently outperformed or matched the performance of the DR-learner and R-learner across all settings when the base learner was XGBoost or Random Forest, while all methods performed similarly with GAM. The EP-learner also outperformed the T-learner except in Setup~D, where the strong performance of the T-learner is expected since the treated and control models are unrelated and separate fitting naturally aligns with the data-generating process. With random forests, EP-learner was competitive with causal forests (the
random-forest–based R-learner from \texttt{grf}): it outperformed them in
Setups~B and~D and matched their performance in larger samples for Setup~A. Setup~C features a constant CATE and overlap weighting used by the R-learner and causal forests is expected to be beneficial since extrapolation from regions of high overlap to those of lower overlap is straightforward. This is reflected in R-learner and causal forests generally outperforming both EP-learner and DR-learner. Overall, EP-learner
either outperforms or matches the DR-learner and R-learner with gradient
boosting or random forests across diverse regimes, while remaining competitive
with state-of-the-art causal forests.
}

\begin{figure}
    \centering
    \includegraphics[width=1\linewidth]{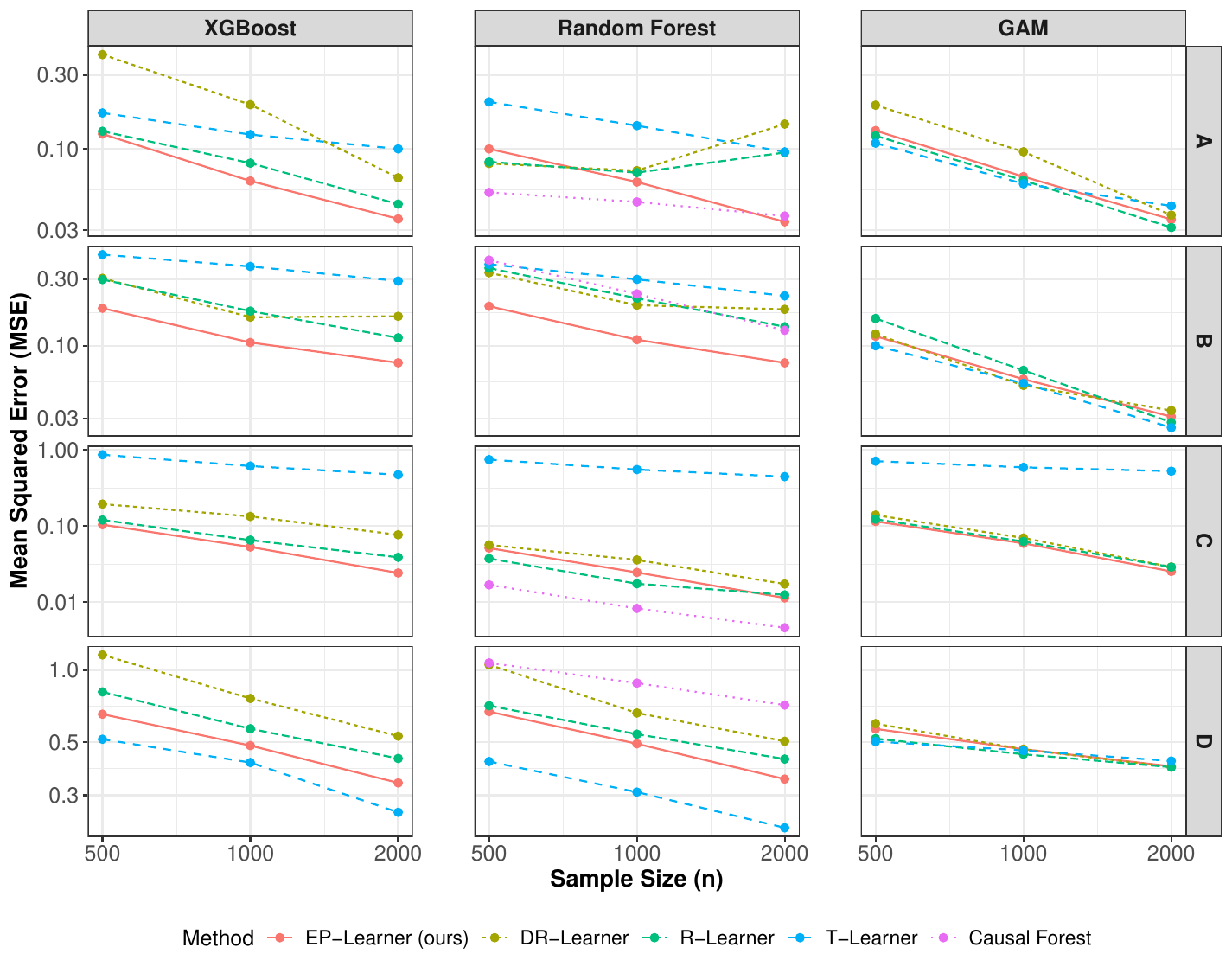}
    \caption{Mean squared error (MSE) of CATE estimates across sample sizes $n \in \{500,1000,2000\}$ under the four simulation settings (A--D) of \citet{QuasiOracleWager}, comparing EP-learner, R-learner, DR-learner, and T-learner across different base learners (XGBoost, Random Forest, GAM). For comparison, the causal forest implementation from \texttt{grf} is also displayed under the Random Forest base learner.}
    \label{fig:wager_replication}
\end{figure}

\section{Conclusion}

In this work, we have introduced a framework for estimating a heterogeneous causal contrast based on a particular construction of an efficient plug-in risk function estimator. To the best of our knowledge, our construction gives the first doubly-robust, efficient estimator of a population risk function for the log-CRR function that corresponds to a convex loss function. While our primary focus has been on EP-learners for use in settings with a single time-point, discrete treatment, our framework can be extended to contexts involving continuous or longitudinal treatments. In future research, it would be of interest to explore the development of EP-learners for broader causal summaries, such as conditional treatment effects under longitudinal interventions \citep{seqDRLuedtke,rotnitzky2017multiply}. In contrast to the plug-in methods presented in these earlier works, EP-learner would make it possible to leverage parsimony found in contrasts between counterfactual means under longitudinal interventions but otherwise not found in the counterfactual means themselves, for example. 

To determine sieve specifications for EP-learner, including basis function types and dimensions, we proposed using the cross-validated one-step risk estimator. In high-dimensional settings, defining a candidate basis for constructing a sieve can be challenging. In our simulation experiments, additive sieves based on univariate trigonometric series performed well, even with complex action spaces like random forests and boosted trees. However, incorporating tensor-product sieves with higher interaction degrees could potentially improve the performance of EP-learner. Alternatively, it may be fruitful to data-adaptively learn a subset of variables that drive treatment effect heterogeneity and then use basis functions of those variables. In the case of estimating the covariate-adjusted conditional mean, \cite{vansteelandt2023orthogonal} proposes a penalized sieve method for constructing debiased plug-in risk estimators. Adapting our approach to incorporate penalization would be an interesting area for future work.  

\textcolor{black}{While in this work we assumed access to an estimator of the propensity, from which estimates of the inverse propensity were obtained, we note that EP-learner can also be modified to use inverse-propensity weights estimated directly---for example, using stable balancing weights \citep{zubizarreta2015stable} or autoDML \citep{chernozhukov2022automatic}. Moreover, EP-learner may benefit from stabilizing propensity score estimates through calibration techniques \citep{van2024stabilized, klaassen2025calibration}.}

An unexplored application of EP-learner is its potential to improve the calibration of predictors for the CATE through causal isotonic calibration \citep{van2023causalcal}. Isotonic calibration often exhibits poor calibration at the boundary of uncalibrated predictions, primarily because of overfitting of isotonic regression in that region \citep{groeneboom1993isotonic}. However, this issue could be mitigated by utilizing an EP-learner risk estimator for the CATE, which would ensure that the pseudo-outcome employed for calibration itself serves as a CATE estimator. %A related refinement step of EP-learner could benefit from using Forster-Warmuth sieve regression in finite samples, as proposed by \cite{yang2023forster}, which guarantees bounded.

\hspace{0.5cm}

\noindent\textbf{Acknowledgements.} Research reported in this publication was supported by NIH grants DP2-LM013340 and  R01-HL137808, and NSF grant DMS-2210216. The content is solely the responsibility of the authors and does not necessarily represent the official views of the funding agencies.

{\singlespacing

\bibliography{ref}}

\newpage 
\doublespacing
\appendix

\section{Code}

An \texttt{R} package, \textit{hte3}, implementing EP-learner can be found on GitHub here: \url{https://github.com/Larsvanderlaan/hte3}. Code to reproduce our experiments can be found here: \url{https://github.com/Larsvanderlaan/hte3/tree/main/paper_EPlearner_experiments}.

\section{Supplemental information for experiments}

\subsection{Small-scale experiments}
\label{appendix::introFigs}

 \begin{figure}[htb]
    \centering
    \includegraphics[width=0.5\textwidth]{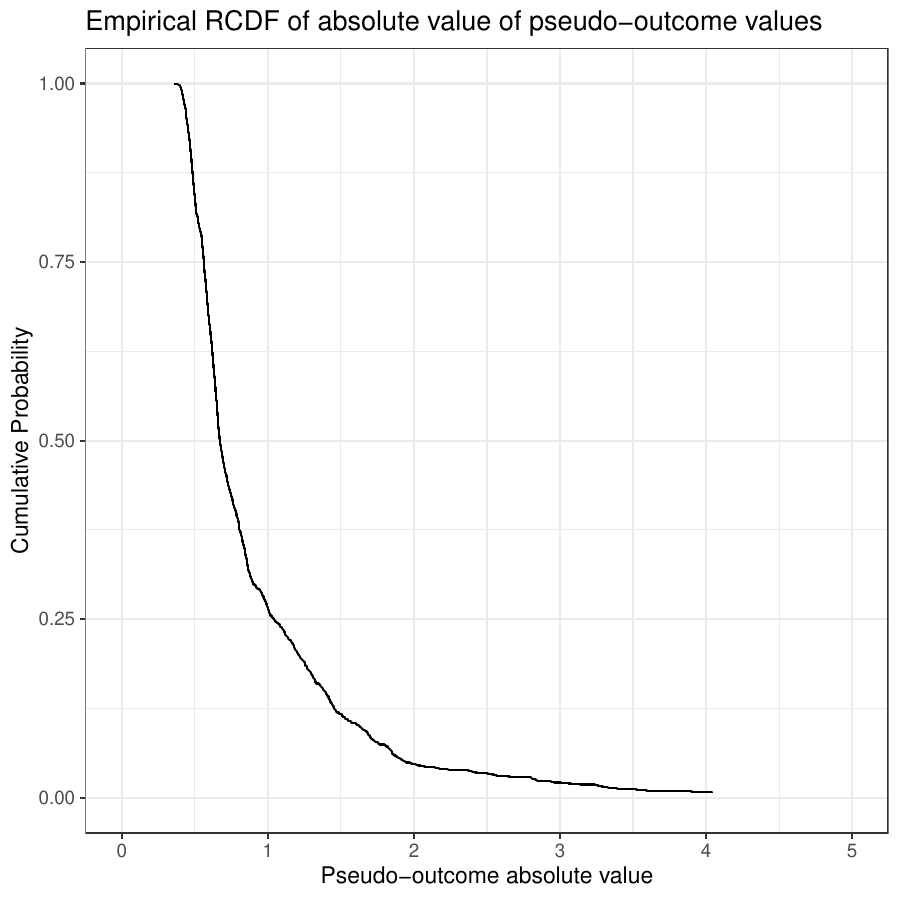}\includegraphics[width=0.5\textwidth]{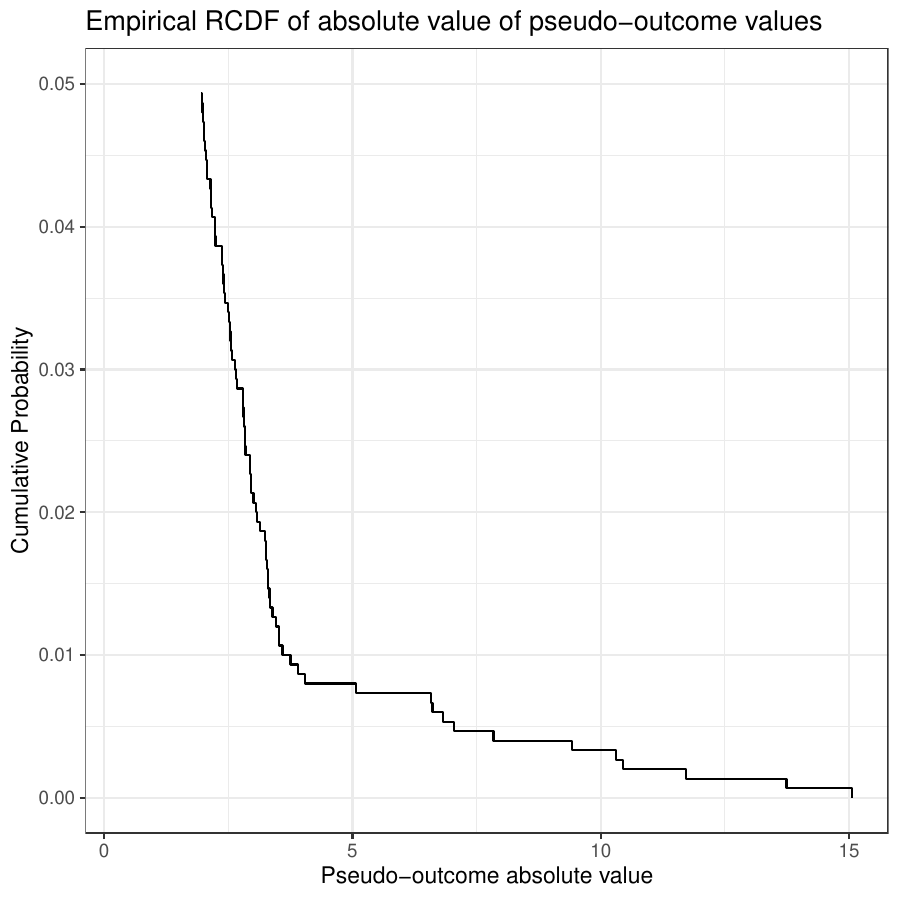}
    \caption{(Left) Empirical reverse cumulative distribution function (RCDF) of estimated DR pseudo-outcome values for the simulated dataset. (Right) A zoomed in empirical RCDF. The 1\% and 99\% empirical quantiles of the pseudo-outcome are -3.2 and 3.1 while the maximum and minimum values are -16 and 14.}
    \label{fig:PseudoOutcomeDensityPlot}
\end{figure}

 The following \texttt{R} code was used to generate the dataset corresponding to Figure \ref{fig:DRlearnerBoundViol}. The code for running the analysis and generating the figures can be found at \url{github/Larsvanderlaan/npcausalML/reproduceSims/introCATEFigure.R}.

\singlespacing
 \begin{verbatim}
set.seed(12345)
n = 1500
W1 <- rt(n, df = 5)
pi0 <- plogis(W1)
A <- rbinom(n, 1, pi0)
mu0 =  0.35 + 0.65*plogis(W1-2 )
mu1 <- mu0 + plogis(2*W1+2) - plogis(W1-2) - 0.349
mu <- ifelse(A==1, mu1, mu0)
Y <- rbinom(n, 1, mu)
 \end{verbatim}

\doublespacing
\noindent The following \texttt{R} code was used to generate the example dataset of Figure \ref{fig:exampDataLRR}.

\singlespacing
\begin{verbatim}
set.seed(123456)
n = 5
W1 <- rt(n, df = 5)
pi0 <- plogis(W1)
pi <- ifelse(A==1, pi0, 1-pi0)
A <- rbinom(n, 1, pi0)
mu0 =  0.35 + 0.65*plogis( W1-2 ) 
mu1 <- mu0 + plogis(2*W1+2) - plogis(W1-2) - 0.349 
mu <- ifelse(A==1, mu1, mu0)
Y <- rbinom(n, 1, mu)
theta <- W1
weights <- round( (mu1 + mu0 + 1/pi*(Y - mu)),2)
outcome <- round((mu1  + A/pi0*(Y - mu1)) / weights,2)
dat <- data.frame(Covariate = round(W1,3), Weight  = weights, Outcome = outcome)
dat
\end{verbatim}
\doublespacing

\subsection{Simulation design}
 
\label{appendix::experiments}

For the low-dimensional CATE experiments, we consider $O = (W_1, W_2, W_3, A, Y)$ for independent covariates $W_1, W_2, W_3$ each drawn from the uniform distribution on $(-1,+1)$. Given $W=w:=(w_1,w_2,w_3)$, the treatment assignment $A$ was generated from a Bernoulli distribution with conditional mean defined, for the moderate overlap setting, as $\pi_0(1\miid w) :=  \text{expit}\left\{(w_1 + w_2 + w_3)/3 \right\}$ and, for the limited overlap setting, as $\pi_0(1\miid w) :=  \text{expit}\left\{w_1 + w_2 + w_3 \right\}$. Given $(W,A)=(w,a)$, the outcome variable $Y$ was generated from a normal distribution with conditional mean $\mu_0(0, w) + a\cdot \theta_0^{-}(w)$ and variance $4$ where $\mu_0(0, w) := \sum_{k=1}^3 \{w_k/2+ \sin(5w_k) + 1/(w_k + 1.2)\}$ and, in the simple setting, $\theta_0^{-}(w) :=  1 +\sum_{k=1}^3 w_k$ and, in the complex setting, $\theta_0^{-}(w) :=  1 +\sum_{k=1}^3 \{w_k + \sin(5w_k)\}$.

For the high-dimensional CATE experiments, we generate $O = (W, A, Y)$ as follows. The covariate $W$ is drawn such that $W/2$ is distributed as a mean-zero multivariate truncated normal random variable with support $[-2,2]$, variances $1$, and covariances $0.4$. Given $W=w$, the treatment assignment $A$ was generated from a Bernoulli distribution with conditional mean $\pi_0(1\miid w)$ defined by $\text{logit}\{\pi_0(w)\} := (1/1.3)(w_1 + w_5 + w_9 + w_{11} + w_{19})$.  Given $(W,A)=(w,a)$, the outcome variable $Y$ was generated from a normal distribution with conditional mean $\mu_0(0, w) + A\cdot \theta_0^{-}(w)$ and variance $4$ where $\mu_0(0,w) = (\cos(4w_1) + \cos(4w_5) + \sin(4w_9) + 1/(1.5+w_{15}) + 1/(1.5+ w_{10}))/5$ and, in the simple setting, $\theta_0^{-}(w) := 1 + (w_1 + w_5 + w_9 + w_{15} + w_{10})/5$ and, in the complex setting, $\theta_0^{-}(w) := 1 + (\sin(4w_1) + \sin(4w_5) + \cos(4w_9) + 1.5 (w_{15}^2 - w_{10}^2))/5 $.

For the CRR experiments, we consider $O = (W_1, W_2, W_3, A, Y)$ for independent covariates $W_1, W_2, W_3$ each drawn from the uniform distribution on $(-1,+1)$. Given $W =w$, the treatment assignment $A$ was generated from a Bernoulli distribution with conditional mean $\pi_0(1\miid w)$ where, in the moderate overlap setting, $\pi_0(1\miid w) :=  \text{expit}\left\{(w_1 + w_2 + w_3)/3 \right\}$ and, in the limited overlap setting, $\pi_0(1\miid w) :=  \text{expit}\left\{w_1 + w_2 + w_3 \right\}$. Given $(W,A) = (w,a)$, the outcome variable $Y$ was generated from Bernoulli distribution with conditional mean $\mu_0(0, w)\exp\{a\theta_0^{\div}(w)\} $ where $\text{logit}\,\mu_0(0, w) := -1 + 0.3 \sum_{k=1}^3 w_k + \sin (4 w_k)$ and, in the simple setting, $\theta_0^{\div}(w) :=  -0.1 + 0.1\sum_{k=1}^3 w_k$ and, in the complex setting, $\theta_0^{\div}(w) := -0.1 + 0.1\sum_{k=1}^3 \{w_k + \sin (4 w_k) \}$.

\subsection{Additional figures}
\begin{figure}[H]
 
    \centering

  \begin{subfigure}[b]{0.45\textwidth}
   \includegraphics[width=0.5\textwidth]{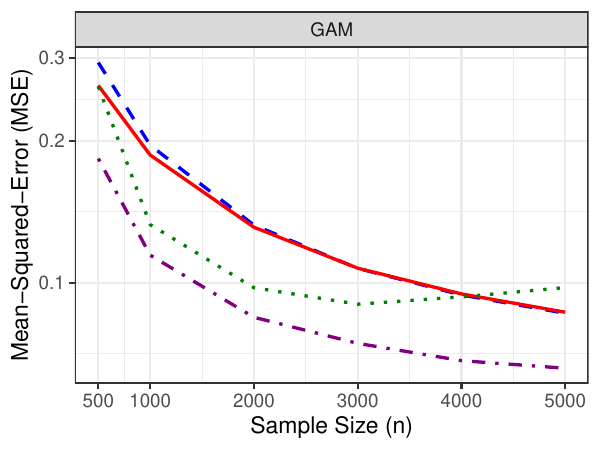}\includegraphics[width=0.5\textwidth]{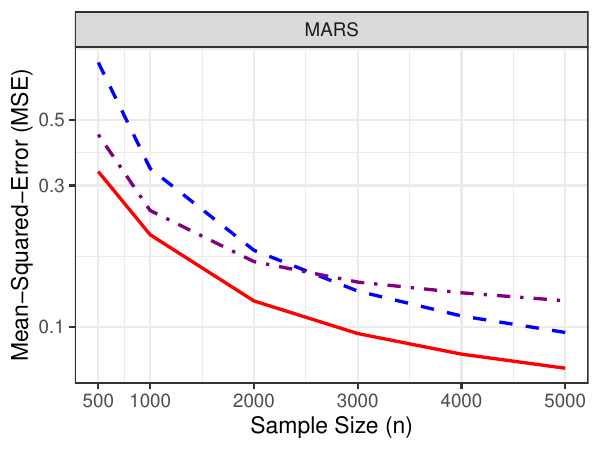}   
    \includegraphics[width=0.5\textwidth]{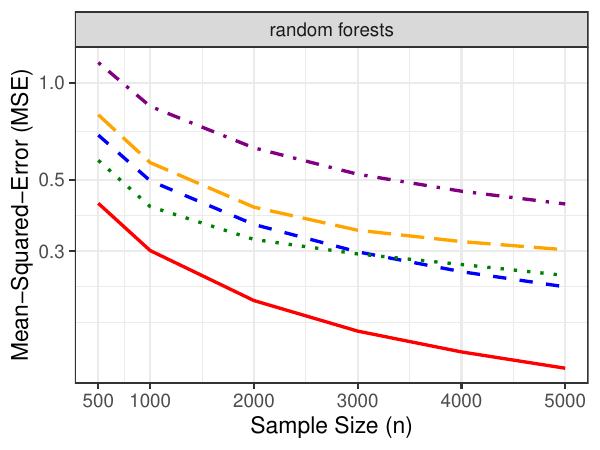}\includegraphics[width=0.5\textwidth]{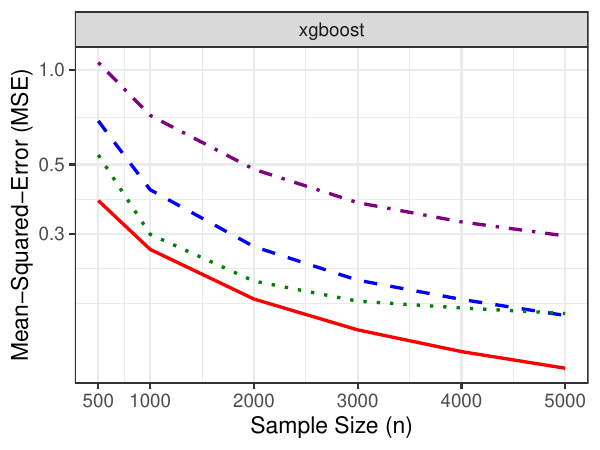}
    \subcaption{$d=3$, limited overlap}
    \end{subfigure}\hfill\begin{subfigure}[b]{0.45\textwidth}
\includegraphics[width=0.5\textwidth]{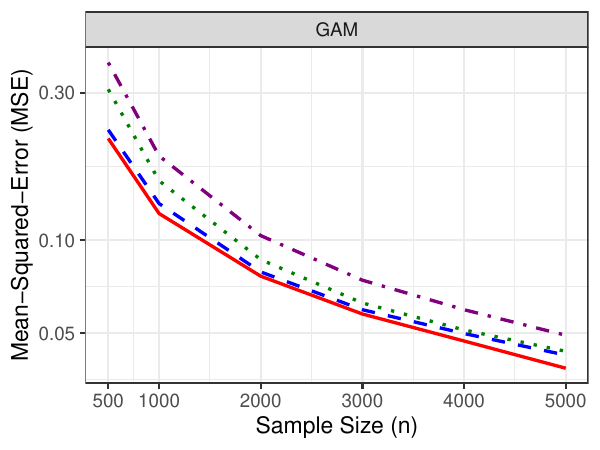}\includegraphics[width=0.5\textwidth]{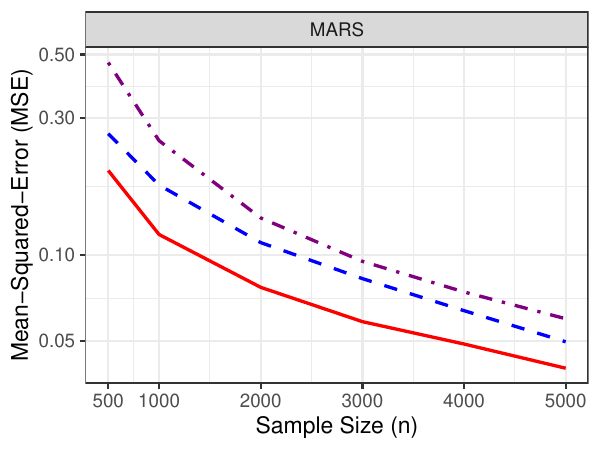}   
   \includegraphics[width=0.5\textwidth]{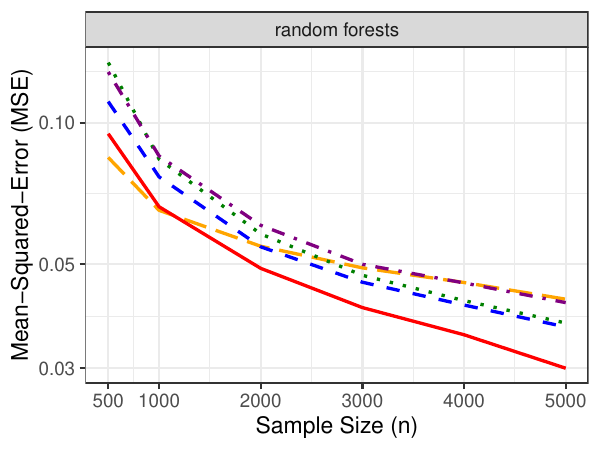}\includegraphics[width=0.5\textwidth]{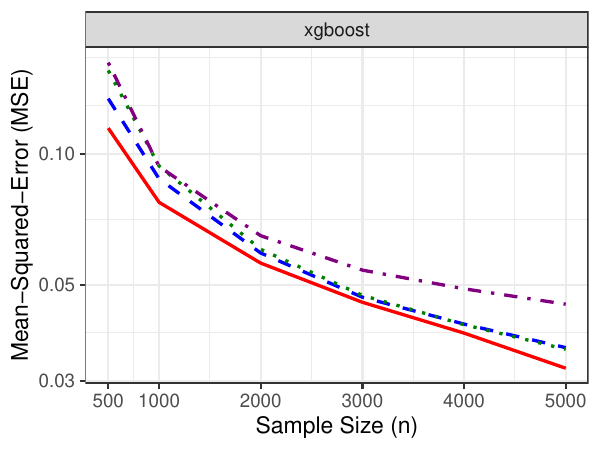}
   \subcaption{$d=20$ with $5$ active, limited overlap}
    \end{subfigure}

\includegraphics[width=.8\textwidth]{newplots/legend_CATElow.pdf}  
      
      \caption{
        CATE experiments with a complex CATE and limited treatment overlap: Mean-squared error for DR-learner and CV-EP-learner with supervised learning algorithm GAM, MARS, ranger, and xgboost. For the plots reporting the results of tree-based algorithms (ranger and xgboost), we also display the results of causal forests for comparison. As a common benchmark across all learners, we display a cross-validated T-learner obtained from an ensemble library including xgboost and GAM learners. 
    }  
 
      %\label{figSim:CATELowDimMain}
\end{figure}

\begin{figure}[H]
 
    \centering

  \begin{subfigure}[b]{0.45\textwidth}
   \includegraphics[width=0.5\textwidth]{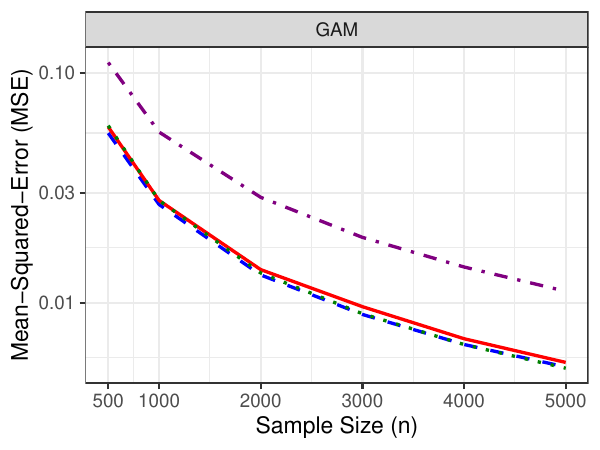}\includegraphics[width=0.5\textwidth]{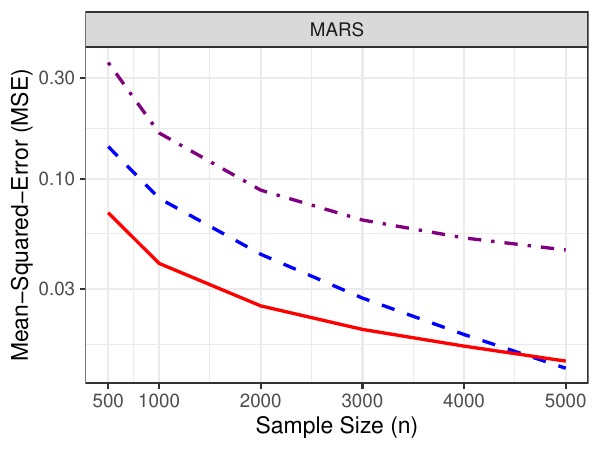}   
    \includegraphics[width=0.5\textwidth]{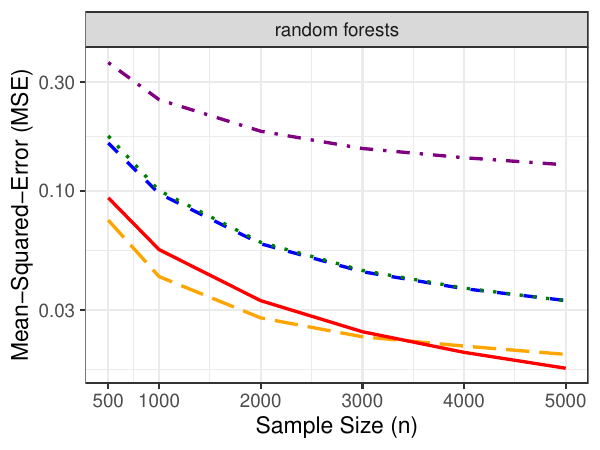}\includegraphics[width=0.5\textwidth]{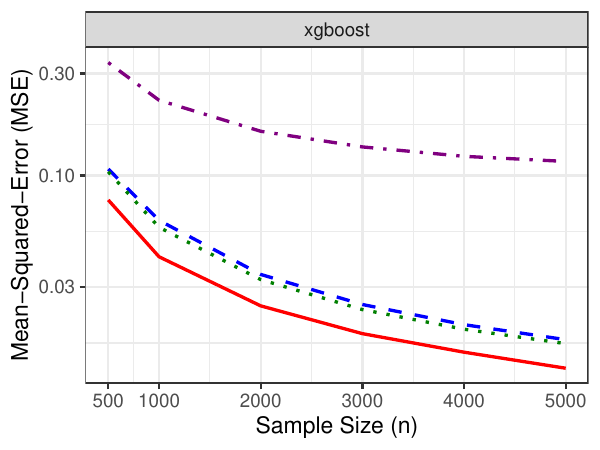}
    \subcaption{$d=3$, moderate overlap}
    \end{subfigure}\hfill\begin{subfigure}[b]{0.45\textwidth}
\includegraphics[width=0.5\textwidth]{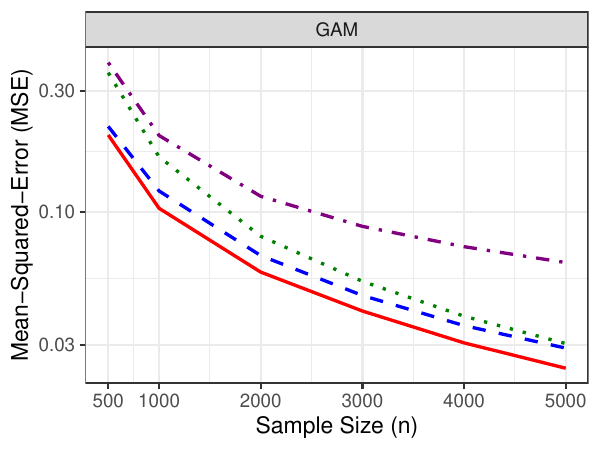}\includegraphics[width=0.5\textwidth]{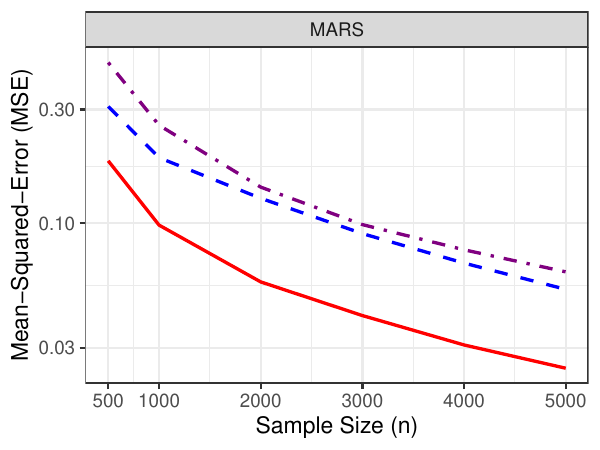}   
   \includegraphics[width=0.5\textwidth]{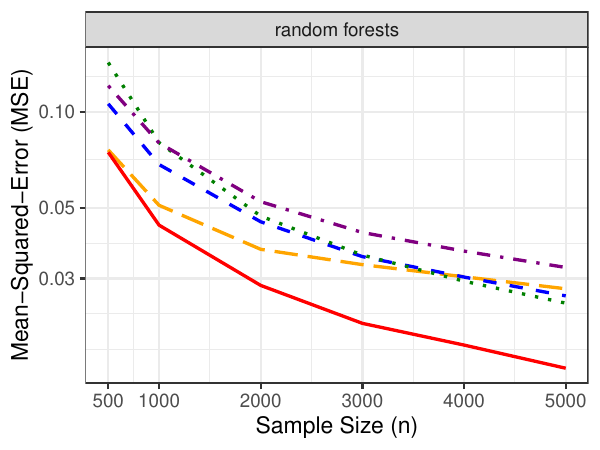}\includegraphics[width=0.5\textwidth]{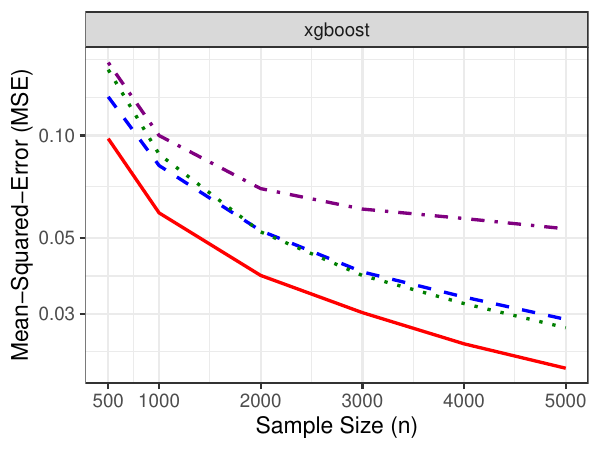}
   \subcaption{$d=20$ with $5$ active, moderate overlap}
    \end{subfigure} 

 \begin{subfigure}[b]{0.45\textwidth}
   \includegraphics[width=0.5\textwidth]{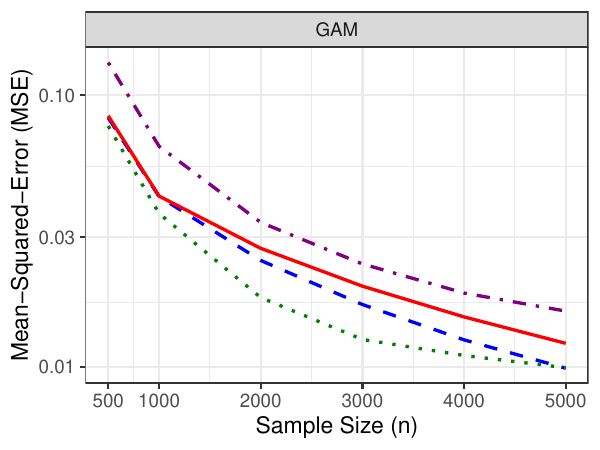}\includegraphics[width=0.5\textwidth]{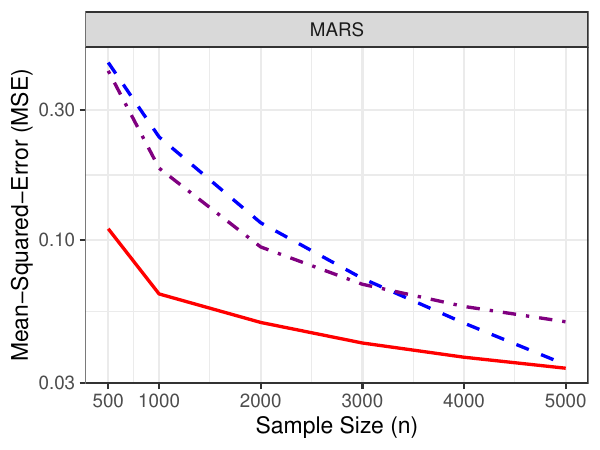}   
    \includegraphics[width=0.5\textwidth]{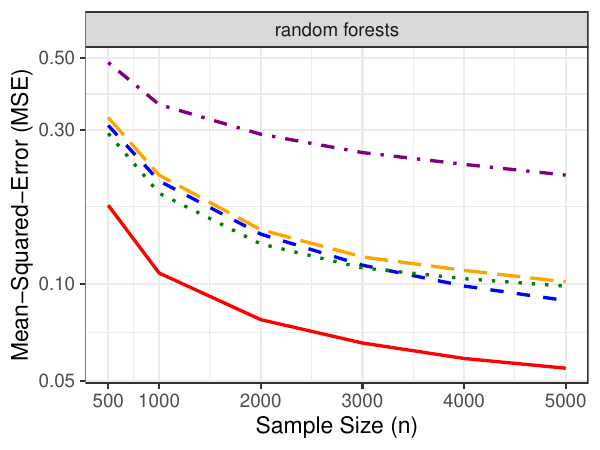}\includegraphics[width=0.5\textwidth]{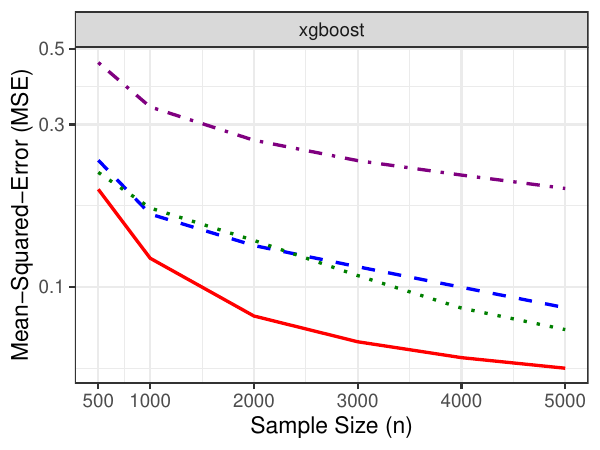}
    \subcaption{$d=3$, limited overlap}
    \end{subfigure}\hfill\begin{subfigure}[b]{0.45\textwidth}
\includegraphics[width=0.5\textwidth]{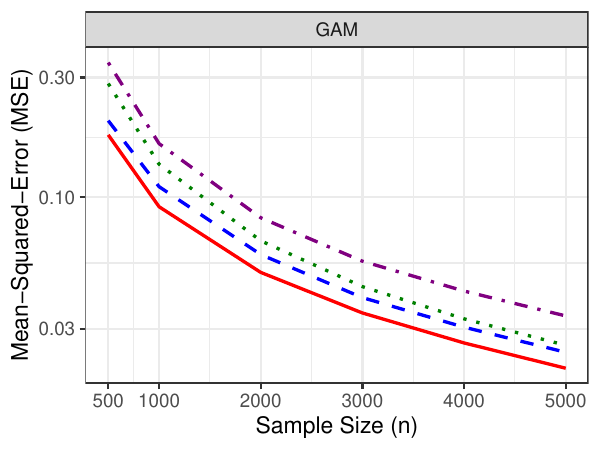}\includegraphics[width=0.5\textwidth]{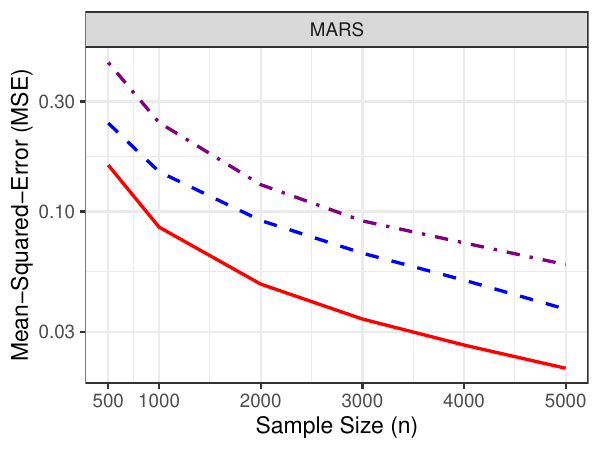}   
   \includegraphics[width=0.5\textwidth]{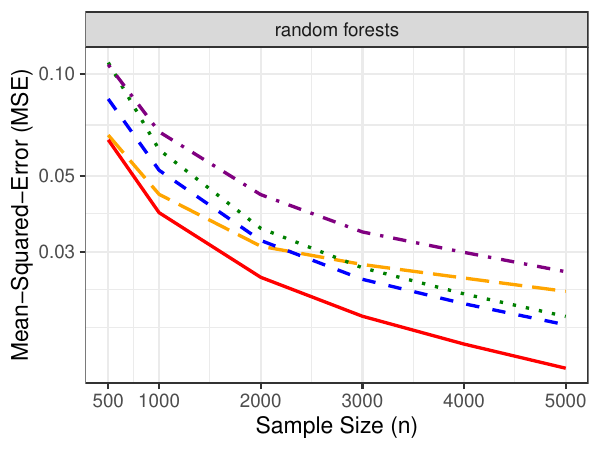}\includegraphics[width=0.5\textwidth]{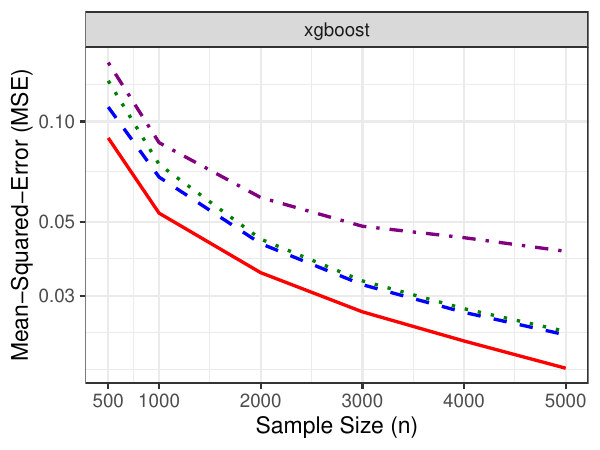}
   \subcaption{$d=20$ with $5$ active, limited overlap}
    \end{subfigure} 

\includegraphics[width=.8\textwidth]{newplots/legend_CATElow.pdf}  
      
      \caption{
   CATE experiments with a simple CATE and moderate treatment overlap: Mean-squared error for DR-learner and CV-EP-learner with supervised learning algorithm GAM, MARS, ranger, and xgboost. For the plots reporting the results of tree-based algorithms (ranger and xgboost), we also display the results of causal forests for comparison. As a common benchmark across all learners, we display a cross-validated T-learner obtained from an ensemble library including xgboost and GAM learners. 
    }  
 
    %  \label{figSim:CATELowDimMain}
\end{figure}

 \begin{figure}[htb]
    \centering
 \begin{subfigure}[b]{0.45\textwidth}
    \includegraphics[width=0.5\textwidth]{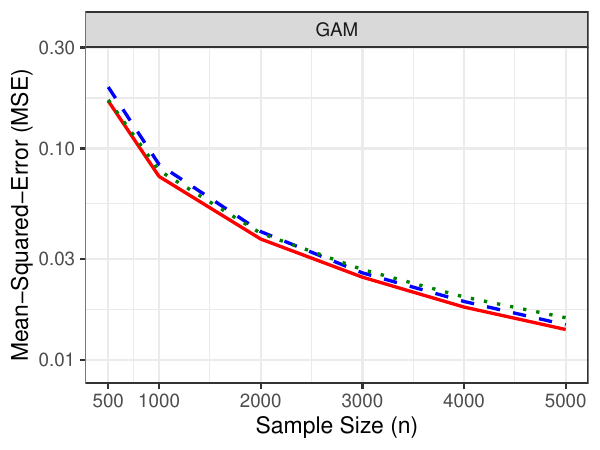}\includegraphics[width=0.5\textwidth]{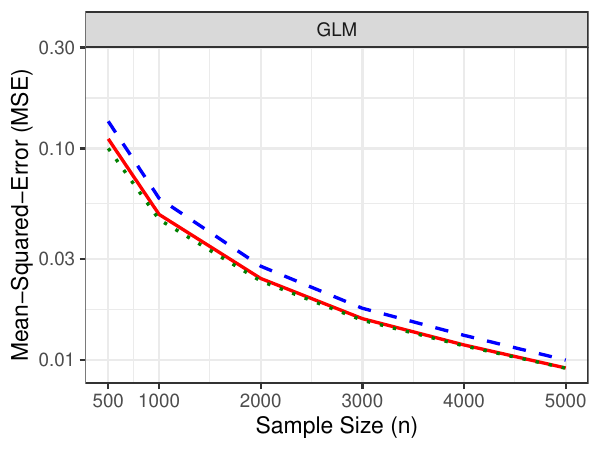}  
    \includegraphics[width=0.5\textwidth]{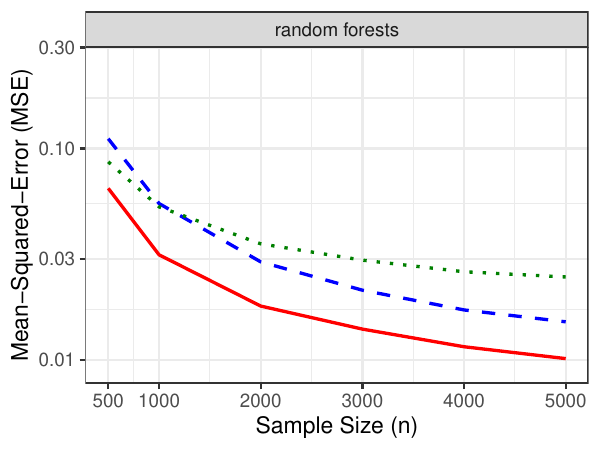}\includegraphics[width=0.5\textwidth]{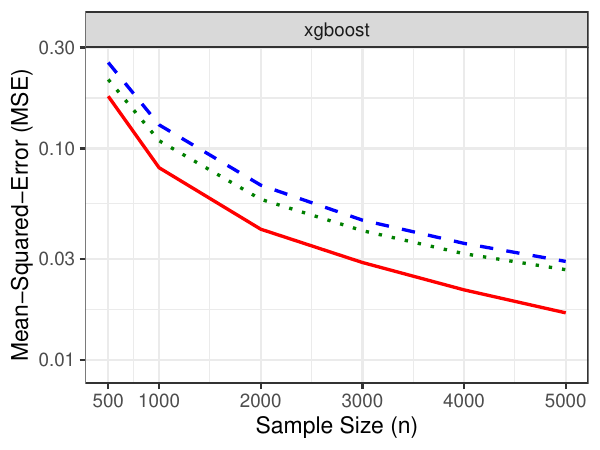}
    \subcaption{Moderate treatment overlap}
    \end{subfigure}\hfill\begin{subfigure}[b]{0.45\textwidth}
    \includegraphics[width=0.5\textwidth]{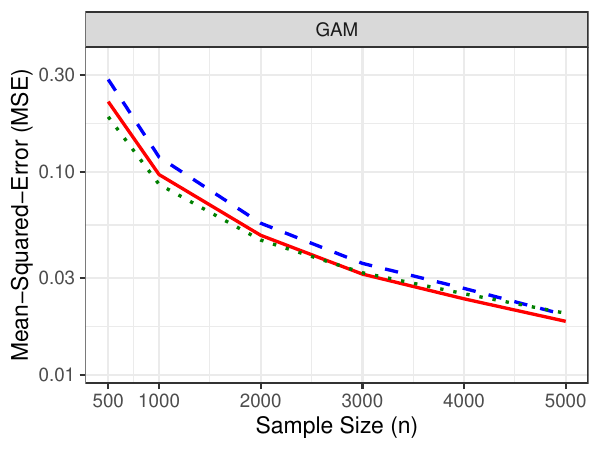}\includegraphics[width=0.5\textwidth]{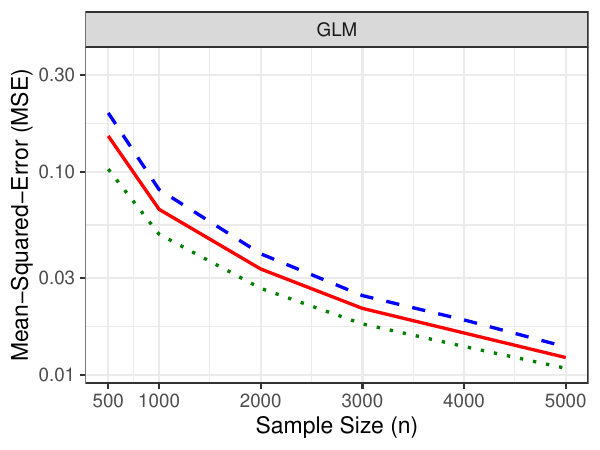}  
    \includegraphics[width=0.5\textwidth]{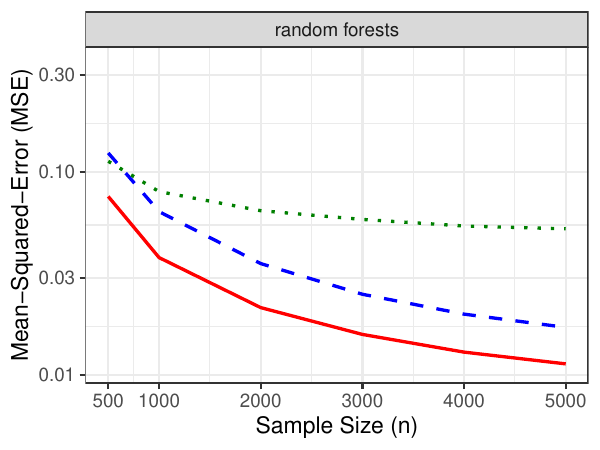}\includegraphics[width=0.5\textwidth]{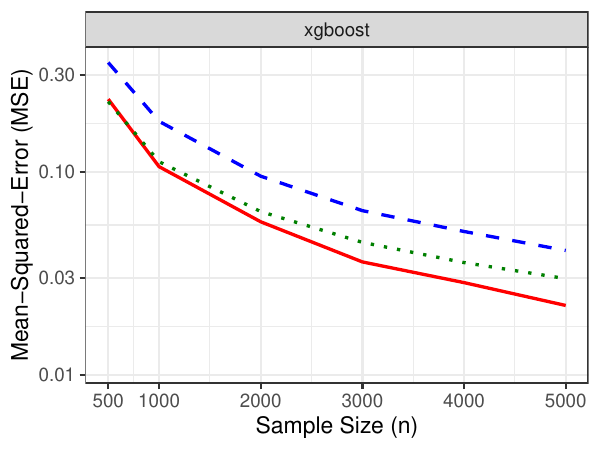}
    % \subcaption{CRR ($d=1$)}
    \subcaption{limited treatment overlap}
    \end{subfigure}
 \includegraphics[width=\textwidth]{newplots/legend_CRR.pdf}  
 \caption{
  CRR experiments with a simple CRR and moderate (left) and limited (right) treatment overlap: Mean-squared error for IPW-learner, T-learner, and CV-EP-learner with supervised learning algorithm GAM, random forests, and xgboost. The DR-learner algorithm for the CRR is not implemented due to nonconvexity of the loss.
    }

   %   \label{figSim:CRRMain}
\end{figure}

 \section{Additional details on EP-learner algorithm}
 \label{appendix:algoDebiasing}

 \subsection{Simplification of algorithm 1 for the CATE}
\label{appendix:algoDebiasing1}
When $h\circ \theta := h_1 \circ \theta = h_2 \circ \theta$ for all $\theta \in \mathcal{F}$, the debiasing term $\frac{1}{n}\sum_{i=1}^n \Delta_{\pi_n, \mu_n}(O_i; \theta)$ in the general case simplifies to $$ \frac{1}{n}\sum_{i=1}^n \frac{ 1}{\pi_{n,j(i)}(A_i\miid W_i)} \left\{\sum_{m \in \{1,2\}}    H_{m,\mu_{n,j(i)}}(A_i,W_i)  \right\} (h \circ \theta)(W_i) \left\{Y_i - \mu_{n,j(i)}(A_i,W_i) \right\} .$$
In this setting, \eqref{eqn::scoreEqnSolvedByAdjust} is more than sufficient to make the debiasing term small. We can instead take the data-dependent feature vector in Algorithm \ref{alg::debiasing} to be 
$$\widehat \varphi_{k, j(i)}(a,w) = \left\{\sum_{m \in \{1,2\}}    H_{m,\mu_{n,j(i)}}(A_i,W_i)  \right\} \varphi_k(w). $$
With this modification, the resulting estimator $\mu_{n,k(n)}^*$ will, for all $\psi \in \mathcal{H}_{k(n)}$, satisfy
$$ \frac{1}{n}\sum_{i=1}^n \frac{ 1}{\pi_{n,j(i)}(A_i\miid W_i)} \left\{\sum_{m \in \{1,2\}}    H_{m,\mu_{n,j(i)}}(A_i,W_i)  \right\} \psi(W_i)\left\{Y_i - \mu_{n,j(i)}^*(A_i,W_i) \right\}  = 0.$$
This modification halves the dimension of the sieve regressions of Algorithm 1 without any cost in the size of the debiasing term, and may lead to better finite-sample performance. In particular, for the CATE example, we recommend using $\widehat \varphi_{k, j(i)}(a,w) = (2a-1) \varphi_k(w) $, noting that $h_1 \circ \theta = h_2 \circ \theta = \theta$ and $ \sum_{m \in \{1,2\}}    H_{m,\mu_{n,j(i)}}(a,w)  = 2a-1$. 

  \subsection{Debiasing method for general outcomes}
 \label{appendix:algoDebiasing2}
 \textbf{Method 3: general outcomes (bound-preserving).}             $g(x) = \text{logit}\left((x - \widehat a)/(\widehat b - \widehat a) \right)$ where $\widehat a = \min\left\{Y_i, \mu_{n,j(i)}(A_i,X_i): i=1,\dots,n \right\}$ and $\widehat b = \max\left\{Y_i, \mu_{n,j(i)}(A_i,X_i): i=1,\dots,n \right\}$, and $\beta_n$ is the coefficient vector obtained from the
           the logistic regression of the $[0,1]$-transformed outcomes $((Y_i-\widehat a)/(\widehat b - \widehat a):i\in [n])$ on the feature vectors $(\widehat \varphi_{k, j(i)}(A_i,W_i):i\in [n])$ with offsets $\left(\text{logit}\left\{ (\mu_{n,j(i)}(A_i,W_i)-\widehat a)/(\widehat b - \widehat a)\right\} :i\in [n]\right)$ and weights $\left(1/\pi_{n,j(i)}(A_i \miid W_i):i\in [n])\right)$. By construction, the sieve-adjusted estimates $\mu_{n,j(i)}(A_i,W_i)$ lie in $[\widehat a, \widehat b]$.

\section{EP-learner uniform consistency with fixed $K$-nearest neighbors} \label{appendix::knn}

\begin{proof}[Proof of Theorem \ref{theorem:knn}]
Fix $w \in (0,1)^d$ and let $B_{K_n,n}(w)$ be the smallest closed ball centered at $w$ containing $\{W_i: i \in \mathcal{I}_{K_n,n}(w)\}$. Work on the probability-1 event that the set $\mathcal{I}_{K_n,n}(w)$ is uniquely defined (i.e., no ties). On this event, $W_i \in B_{K_n,n}(w)$ if and only if $i \in \mathcal{I}_{K_n,n}(w)$. Then
\begin{align*}
\big|\theta_{n,EP}(w) - \theta_0^-(w)\big|
&= \Bigg|\frac{1}{K_n}\sum_{i=1}^n 1\{W_i \in B_{K_n,n}(w)\}\Big[\mu_n^*(1,W_i) - \mu_n^*(0,W_i) - \{\mu_0(1,w) - \mu_0(0,w)\}\Big]\Bigg| \\
&\le \Bigg|\frac{1}{K_n}\sum_{i=1}^n 1\{W_i \in B_{K_n,n}(w)\}\Big[\mu_n^*(1,W_i) - \mu_n^*(0,W_i) - \{\mu_0(1,W_i) - \mu_0(0,W_i)\}\Big]\Bigg| \\
&\quad+ \Bigg|\frac{1}{K_n}\sum_{i=1}^n 1\{W_i \in B_{K_n,n}(w)\}\Big[\mu_0(1,W_i) - \mu_0(0,W_i) - \{\mu_0(1,w) - \mu_0(0,w)\}\Big]\Bigg| \\
&\lesssim \|\theta_{n,T}^* - \theta_0^-\|_\infty + \sup_{w' \in (0,1)^d} \sup_{x \in B_{K_n,n}(w')} L\|x - w'\|_2,
\end{align*}
where $L$ is the Lipschitz constant of $\mu_0$. Taking a supremum over $w \in (0,1)^d$ yields
\[
\sup_{w \in (0,1)^d} \big|\theta_{n,EP}(w) - \theta_0^-(w)\big|
\lesssim \|\theta_{n,T}^* - \theta_0^-\|_\infty + \sup_{w' \in (0,1)^d} \sup_{x \in B_{K_n,n}(w')} L\|x - w'\|_2.
\]
By Corollary 1.5 of \citet{chenavier2022limit}, the maximal volume of $B_{K_n,n}(w')$ over $w' \in (0,1)^d$ is $O_p(K_n \log n / n)$. This implies that the maximal radius of $B_{K_n,n}(w')$ is of order $O_p((K_n \log n / n)^{1/d})$ up to a constant depending on $d$. Therefore the bound simplifies to
\[
\sup_{w \in (0,1)^d} \big|\theta_{n,EP}(w) - \theta_0^-(w)\big|
\lesssim \|\theta_{n,T}^* - \theta_0^-\|_\infty + \mathcal{O}_p\big((K_n \log n / n)^{1/d}\big),
\]
which proves the theorem.
\end{proof}

\section{Preliminaries for proofs}

\subsection{Notation and conventions for proofs}

For two quantities $x$ and $y$, we use the expression  $x \lesssim y$ to mean that $x$ is upper bounded by $y$ times a universal constant that may only depend on global constants, including $\eta$, $M$, and $C$, that appear the conditions of Theorems \ref{theorem::EPriskEff}-\ref{theorem::EpLearnerOracleEff}.

 For a function class $\mathcal{F}$, we denote its norms by $\|\mathcal{F}\| := \sup_{f \in \mathcal{F}} \|f\|$ and $\| \mathcal{F}\|_{\infty} := \sup_{f \in \mathcal{G}} \|f\|_{\infty}$. For a uniformly bounded function class $\mathcal{F}$ and distribution $P$, let $N(\epsilon,\mathcal{F},L_2(P))$ denote the $\varepsilon-$covering number \citep{vanderVaartWellner} of $\mathcal{F}$ with respect to the $L_2(P)$-metric. We define the uniform entropy integral of $\mathcal{F}$ by  
\begin{equation*}
\mathcal{J}(\delta,\mathcal{F}):= \int_{0}^{\delta} \sup_{Q}\sqrt{\log N(\varepsilon,\mathcal{F},L_2(Q))}\,d\varepsilon\ ,
\end{equation*}
where the supremum is taken over all discrete probability distributions $Q$.

 In our proofs, we use the following notation. We let $\mathbb{I}_{g_1, g_2}$ be the indicator that takes the value $0$ if both $g_1$ and $g_0$ are the identity function. We denote the nuisance rates:
 \begin{align*}
     r_n^* &:=  n^{-\beta/(2\beta+1)}+ \sqrt{k(n) \log n / n};\\
     s_n ^* &:=    r_n^* + n^{-\gamma/(2\gamma+1)}. 
 \end{align*}
 We denote, for each $P \in \mathcal{M}$, the Neyman orthogonal loss function corresponding with $R_P$ by $L_{\pi_P, \mu_P} := L_{\mu_P} + \Delta_{\pi_P,\mu_P}$. For $\phi \in L^2(P_{0,W})$, $j \in [J]$, and $m \in \{1,2\}$, we define pointwise, for $o \in \mathcal{O}$, the following variants of the debiasing term:
\begin{align*}
&\overline{\Delta}^{(m)}_{\pi_{n,j}, \mu_{n,j}^*}  (o\,, \phi):=   \frac{ 1}{\pi_{n,j}(a\miid w)} \left\{   H_{ m,\mu_{n,j}^*}(a,w) \cdot \phi(w)  \right\}\left\{y - \mu_{n,j}^{*}(a,w) \right\};\\
&\overline{\Delta}^{*(m)}_{\pi_{n,j}, \mu_{n,j}}  (o\,, \phi) := \frac{ 1}{\pi_{n,j}(a\miid w)} \left\{   H_{ m,\mu_{n,j}}(a,w) \cdot \phi(w)  \right\}\left\{y - \mu_{n,j}^{*}(a,w) \right\}.
\end{align*}
The overline encodes dependence on the transformed action-space $\mathcal{H}$ as opposed to the original action-space $\mathcal{F}$. The superscript by $*$ encodes that the residual in the definition corresponds to the debiased outcome regression estimator.

Recall $\{\mathcal{D}_n^j : j \in [J]\}$ is the partitioning of $\mathcal{D}_n := \{O_i: i \in [n]\}$ used for cross-fitting the initial nuisance estimators, $\{\mu_{n,j}, \pi_{n,j}: j \in [J]\}$, as in Algorithm \ref{alg::CVEPlearner}. We define the $j$-th training set $\mathcal{T}_n^j :=  \mathcal{D}_n \backslash \mathcal{D}_{n}^j$. We let $P_{n,j}$ denote the empirical distribution of the $j$th data-fold $\mathcal{D}_n^j$ and define the fold-specific empirical process operator $f \mapsto P_{n,j} f := \int f(o) dP_{n,j}(o)$. For ease of presentation, we will use the following cross-fitting notation. For a collection of fold-specific functions $\nu_{n,\diamond} := \{\nu_{n,j}: j \in [J]\}$ that may depend on $n$, we define the fold-averaged empirical process operators: 
\begin{align*}
    \overline{P}_0\nu_{n,\diamond} &:= \frac{1}{J} \sum_{j=1}^J P_0 \nu_{n,j}; \\
\overline{P}_n \nu_{n,\diamond}  &:= \frac{1}{J} \sum_{j=1}^J P_{n,j} \nu_{n,j}.
\end{align*}
We define the fold-averaged norm of a collection of fold-specific functions $\nu_{\diamond}:= \{\nu_j: j \in [J]\}$ as $\|\nu_{\diamond} \|_{\overline{P}_0} := \sqrt{\frac{1}{J}\sum_{j=1}^J \|\nu_j\|^2_{P_0}}$. For a function $h: \mathbb{R} \rightarrow \mathbb{R}$, we also define the composition $h \circ \nu_{\diamond}:= \{h \circ \nu_j: j \in [J]\} $.
The above notation lets us treat the collection of debiased cross-fitted estimators $\mu_{n,\diamond}^* := \{\mu_{n,j}^*: j \in [J]\}$ as extended functions defined on $\mathcal{A} \times \mathcal{W} \times [J]$. For example, in our proofs, $\overline{P}_n \overline{\Delta}^{(m)}_{\pi_{n, \diamond}, \mu_{n, \diamond}^*}(\cdot, \phi)$ is notation for $ J^{-1} \sum_{j \in [J]} \int \overline{\Delta}^{(m)}_{\pi_{n,j}, \mu_{n,j}^*}(o,\phi) dP_{n,j}(o)$.

In our proofs, for some sufficiently large $M >0$, we will work on the following events:

\begin{enumerate}[label=\bf{(E\arabic*)}, ref = E\arabic*]
   % \item $\norm{\mu_{n,j}^* - \mu_0} \leq  ME_0 \norm{\mu_{n,j}^* - \mu_0}$ 
   %\item $ \max_{j \in [J]}\norm{\mu_{n,j} - \mu_0} \leq M n^{-\beta/(2\beta+1)}$ and $ \max_{j \in [J]} \norm{\pi_{n,j} - \pi_0} \leq Mn^{-\gamma/(2\gamma+1)}$.
    \item $ \max_{j \in [J]} \norm{\mu_{n,j} - \mu_0} \leq M n^{-\beta/(2\beta+1)}$ \label{event::mu_n}
    \item $ \max_{j \in [J]}\norm{\pi_{n,j} - \pi_0} \leq M n^{-\gamma/(2\gamma+1)}$ \label{event::pi_n}
    \item  $ \max_{j \in [J]} \norm{\mu_{n,j}^* - \mu_0} \leq M n^{-\beta/(2\beta+1)} + M \sqrt{k(n) \log n / n}$. \label{event::debias_mu_n_star}
    \item $\norm{\beta_n}_{\infty} \leq M$. \label{event::boundedCoef}
\end{enumerate}
Let $A_n$ be the union of events \ref{event::mu_n}-\ref{event::boundedCoef}, and let $\mathbb{I}_{A_n}$ denote the indicator that event $A_n$ occurs. In our proofs, we will choose $M > 0$ so that the event $A_n$ occurs asymptotically with probability at least $1 - \varepsilon$. We claim, under \ref{cond::A2Nuisance} and \ref{cond::boundedEStimators}, that we can always choose such an $M > 0$. To see this, note, under condition \ref{cond::A2Nuisance}, we can take $M$ large enough so that events \ref{event::mu_n}-\ref{event::debias_mu_n_star} occur asymptotically with probability arbitrarily close to one. Under Condition \ref{cond::boundedEStimators}, we claim that Event \ref{event::boundedCoef} occurs with probability tending to one. To show this, recall by definition that $\mu_{n,j(i)}^* = g^{-1}(g(\mu_{n,j(i)})+ \widehat{\varphi}_{k(n),i}^T \beta_n )$. By \ref{cond::boundedEStimators}, we have that $\mu_{n,j(i)}$ and $\mu_{n,j(i)}^*$ are bounded with probability tending to one and, since the feature mapping $\varphi_{k(n)}$ is bounded by definition, $\widehat{\varphi}_{k(n),i}$ is bounded with probability tending to one. Since $g$ is a continuous invertible function, we have that $\beta_n$ must also be bounded with probability tending to one.  The claim then follows taking $M >0$ large enough so that $\norm{\beta_n}_{\infty} \leq M$ with probability tending to one.

 \subsection{Supporting Lemmas and empirical process bounds}
\label{appendix::supportinglemmas}

 In this section, we present some technical lemmas used to bound empirical process remainders and second-order bias terms that appear in our proofs.

The following lemma due to \cite{BelloniSieveApprox} bounds the sup-norm sieve approximation error of the $L^2(P_0)$-projection $\Pi_{k(n)}$. For the following lemma, we define the rate $\rho_{n,\infty} := \{\log k(n)\}^{\nu} k(n)^{-\rho}$.

\begin{lemma}
    Under \ref{cond::sieveApproxthrm0} and \ref{cond::sieveApproxthrm1}, it holds that
    \begin{align*}
        \sup_{\phi \in \mathcal{H}} \|\phi - \Pi_{k(n)} \phi \|_{\infty} & \lesssim   \{\log k(n)\}^{\nu} \sup_{\phi \in \mathcal{H}} \inf_{f \in \mathcal{H}_{k(n)}} \| \phi - f \|_{\infty} \lesssim  \rho_{n,\infty}.
    \end{align*} 
        \label{lemma::sieveRateSup}
\end{lemma}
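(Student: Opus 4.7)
\textbf{Proof plan for Lemma~\ref{lemma::sieveRateSup}.} The plan is a short, standard approximation-theory argument that leverages the linearity of the $L^2(P_0)$-projection $\Pi_{k(n)}$ onto the finite-dimensional space $\mathcal{H}_{k(n)}$, together with the Lebesgue-constant control in~\ref{cond::sieveApproxthrm0}. The key identity is that $\Pi_{k(n)}$ fixes every element of $\mathcal{H}_{k(n)}$, so for any $\phi\in\mathcal{H}$ and any $\phi^*\in\mathcal{H}_{k(n)}$,
\[
\phi - \Pi_{k(n)}\phi \;=\; (\phi-\phi^*) \;-\; \Pi_{k(n)}(\phi - \phi^*).
\]
Applying the triangle inequality in supremum norm gives
\[
\|\phi - \Pi_{k(n)}\phi\|_\infty \;\le\; \|\phi-\phi^*\|_\infty + \|\Pi_{k(n)}(\phi-\phi^*)\|_\infty.
\]

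The second step is to bound the projection term by the Lebesgue-constant hypothesis. Writing $\phi - \phi^* = \phi + v$ with $v := -\phi^*\in\mathcal{H}_{k(n)}$ puts the argument in the exact form appearing in~\ref{cond::sieveApproxthrm0}. Hence
\[
\|\Pi_{k(n)}(\phi-\phi^*)\|_\infty \;\lesssim\; \{\log k(n)\}^{\nu}\,\|\phi-\phi^*\|_\infty,
\]
and combining with the previous display yields
\[
\|\phi - \Pi_{k(n)}\phi\|_\infty \;\lesssim\; \bigl(1+\{\log k(n)\}^{\nu}\bigr)\|\phi-\phi^*\|_\infty \;\lesssim\; \{\log k(n)\}^{\nu}\|\phi-\phi^*\|_\infty,
\]
uniformly in $\phi\in\mathcal{H}$ and $\phi^*\in\mathcal{H}_{k(n)}$.

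The final step is to optimize over $\phi^*$ and then over $\phi$. Since the infimum $\inf_{f\in\mathcal{H}_{k(n)}}\|\phi-f\|_\infty$ may not be attained, I would pass to a sequence $\phi^*_\ell\in\mathcal{H}_{k(n)}$ with $\|\phi-\phi^*_\ell\|_\infty\downarrow \inf_{f}\|\phi-f\|_\infty$, apply the bound above to each $\phi^*_\ell$, and take $\ell\to\infty$ to conclude
\[
\sup_{\phi\in\mathcal{H}}\|\phi-\Pi_{k(n)}\phi\|_\infty \;\lesssim\; \{\log k(n)\}^{\nu}\,\sup_{\phi\in\mathcal{H}}\inf_{f\in\mathcal{H}_{k(n)}}\|\phi-f\|_\infty,
\]
which is the first asserted inequality. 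Invoking~\ref{cond::sieveApproxthrm1} to bound the best-approximation error by $\mathcal{O}(k(n)^{-\rho})$ then gives the overall rate $\{\log k(n)\}^{\nu}k(n)^{-\rho}=\rho_{n,\infty}$.

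There is no serious obstacle here; the argument is purely algebraic once one recognizes that the Lebesgue-constant condition in~\ref{cond::sieveApproxthrm0} is phrased precisely so that it applies to the relevant difference $\phi-\phi^*$ (an element of $\mathcal{H}$ plus an element of $\mathcal{H}_{k(n)}$), rather than only to elements of $\mathcal{H}$ alone. The only mild subtlety is the passage from a near-best to an exact infimum, which is handled by the limiting argument sketched above.
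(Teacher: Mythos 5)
Your proof is correct and it is essentially the same argument the paper uses: the paper simply cites Proposition~3.2 of Belloni et al.\ (the standard Lebesgue-type inequality for series projections) together with~\ref{cond::sieveApproxthrm1}, whereas you unpack that proposition into its elementary form, using linearity and idempotence of the $L^2(P_0)$-projection together with the Lebesgue-constant bound of~\ref{cond::sieveApproxthrm0}. No gap; the two are the same proof at different levels of detail.
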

\begin{proof}
    Condition \ref{cond::sieveApproxthrm0} and Proposition 3.2. of \cite{BelloniSieveApprox} imply that $\sup_{\phi \in \mathcal{H}} \|\phi - \Pi_{k(n)} \phi \|_{\infty}  \lesssim  (\log k(n))^\nu \sup_{\phi \in \mathcal{H}} \inf_{f \in \mathcal{H}_{k(n)}} \| \phi - f \|_{\infty} $. The second bound then follows from \ref{cond::sieveApproxthrm1}, noting that $\sup_{\phi \in \mathcal{H}} \inf_{f \in \mathcal{H}_{k(n)}} \| \phi - f \|_{\infty} \lesssim  k(n)^{-\rho}$.  
\end{proof}

 \begin{lemma}
 Let $\mathcal{F}_1, \dots, \mathcal{F}_k$ be given uniformly bounded function classes with $\mathcal{J}_{\infty}(1, \mathcal{F}_j) < \infty$ for each $j \in [k]$. Let $\varphi: \mathbb{R}^k \rightarrow \mathbb{R}$ by a Lipschitz-continuous map. Then, the function class $\mathcal{G} := \left\{\varphi(f_1, \dots, f_k): (f_1, \dots, f_k) \in \mathcal{F}_1 \times \dots \times \mathcal{F}_k \right\}$ satisfies $\log N_{\infty}(\varepsilon, \mathcal{G}) \lesssim \sum_{j=1}^k \log N_{\infty}(\varepsilon, \mathcal{F}_j)$ and, hence,
 $\mathcal{J}_{\infty}(\delta, \mathcal{G}) \lesssim \sum_{j=1}^k  \mathcal{J}_{\infty}(\delta, \mathcal{F}_j)$.
     \label{lemma::lipschitzPreservation}
 \end{lemma}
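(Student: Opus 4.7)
The plan is to build a sup-norm cover of $\mathcal{G}$ out of sup-norm covers of the component classes $\mathcal{F}_j$, pushed forward through $\varphi$, using its Lipschitz property to control how the cover radius shrinks. Fix $\varepsilon > 0$, let $L$ denote the Lipschitz constant of $\varphi$ with respect to the $\ell^1$ norm on $\mathbb{R}^k$, and for each $j$ let $\mathcal{C}_j \subseteq \mathcal{F}_j$ be a minimal sup-norm $\varepsilon/(kL)$--cover, so $|\mathcal{C}_j| = N_\infty(\varepsilon/(kL), \mathcal{F}_j)$. I would then show that the image
\[
\widetilde{\mathcal{C}} := \bigl\{\varphi(f_1, \ldots, f_k) : (f_1, \ldots, f_k) \in \mathcal{C}_1 \times \cdots \times \mathcal{C}_k\bigr\}
\]
is an $\varepsilon$--cover of $\mathcal{G}$ in sup-norm. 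Given any $(f_1, \ldots, f_k) \in \mathcal{F}_1 \times \cdots \times \mathcal{F}_k$, selecting $\widetilde{f}_j \in \mathcal{C}_j$ with $\|f_j - \widetilde{f}_j\|_\infty \leq \varepsilon/(kL)$ gives, pointwise for every argument $x$,
\[
\bigl|\varphi(f_1(x), \ldots, f_k(x)) - \varphi(\widetilde{f}_1(x), \ldots, \widetilde{f}_k(x))\bigr| \leq L \sum_{j=1}^k \|f_j - \widetilde{f}_j\|_\infty \leq \varepsilon .
\]
Taking logarithms of the resulting product bound on cardinalities yields $\log N_\infty(\varepsilon, \mathcal{G}) \leq \sum_{j=1}^k \log N_\infty(\varepsilon/(kL), \mathcal{F}_j)$, and since $k$ and $L$ are fixed constants, the rescaling of $\varepsilon$ is absorbed into the implicit constants of $\lesssim$, giving the first assertion.

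For the entropy integral, applying the subadditivity $\sqrt{\sum_j a_j} \leq \sum_j \sqrt{a_j}$ in the integrand gives
\[
\mathcal{J}_\infty(\delta, \mathcal{G}) \leq \sum_{j=1}^k \int_0^\delta \bigl\{\log N_\infty(\varepsilon/(kL), \mathcal{F}_j)\bigr\}^{1/2} d\varepsilon .
\]
A change of variable $u = \varepsilon/(kL)$ converts the right-hand side into $kL \sum_{j=1}^k \mathcal{J}_\infty(\delta/(kL), \mathcal{F}_j)$, and monotonicity of $\delta \mapsto \mathcal{J}_\infty(\delta, \mathcal{F}_j)$ bounds this in turn by $kL \sum_{j=1}^k \mathcal{J}_\infty(\delta, \mathcal{F}_j)$, which is the desired conclusion up to the fixed factor $kL$.

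There is no substantive obstacle in this proof; the argument is a standard product-cover construction. The only bookkeeping concern is tracking the Lipschitz-induced factor $1/(kL)$ in the cover radius and confirming that absorbing the constant $kL$ into $\lesssim$ is consistent with the paper's notational convention. Choosing the $\ell^1$ norm on $\mathbb{R}^k$ is merely a matter of convenience; the Lipschitz constant with respect to any fixed norm on $\mathbb{R}^k$ differs by a constant only, and the same argument goes through unchanged.
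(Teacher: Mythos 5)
Your proof is correct and follows essentially the same route as the paper, which simply defers to the proof of Theorem 2.10.20 in van der Vaart and Wellner; that proof is precisely the product-cover construction pushed through the Lipschitz map that you spell out. Two small bookkeeping points: in the last display you need $kL \geq 1$ for the monotonicity step $\mathcal{J}_\infty(\delta/(kL),\mathcal{F}_j) \leq \mathcal{J}_\infty(\delta,\mathcal{F}_j)$, which is harmless since one may take $L \geq 1$ without loss of generality; and the passage from $\log N_\infty(\varepsilon,\mathcal{G}) \leq \sum_j \log N_\infty(\varepsilon/(kL),\mathcal{F}_j)$ to the displayed first assertion by ``absorbing the rescaling'' is not automatic for arbitrary classes but is valid whenever the logarithm of the covering number grows at most polynomially in $1/\varepsilon$, as it does for every class used in this paper; in any case the entropy-integral bound, which is what the paper actually invokes downstream, follows cleanly from your change of variables without needing that first assertion.
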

 \begin{proof}
 This result follows from the proof of Theorem 2.10.20 in \cite{vanderVaartWellner}.
 \end{proof}

  The following lemma relates the sup-norm entropy integral of $\mathcal{H}$ to those of the function classes $\mathcal{H} - \Pi_k \mathcal{H} := \{\phi - \Pi_k \phi: \phi \in \mathcal{H}\}$ and  $\Pi_k \mathcal{H} := \{ \Pi_k \phi: \phi \in \mathcal{H}\}$.

 \begin{lemma}
 Under \ref{cond::sieveApproxthrm} and \ref{cond::regularityOnActionSpace}, we have, for all $\delta > 0$, that $\mathcal{J}_{\infty}( \delta,  \mathcal{H})  \lesssim \mathcal{J}_{\infty}( \delta,  \mathcal{F})$, $\mathcal{J}_{\infty}( \delta,  \Pi_k \mathcal{H}) \lesssim  \mathcal{J}_{\infty}( \{\log k(n)\}^{\nu} \delta,  \mathcal{F} )$, and $\mathcal{J}_{\infty}( \delta,  \mathcal{H} - \Pi_k \mathcal{H}) \lesssim  \mathcal{J}_{\infty}( \{\log k(n)\}^{\nu} \delta,  \mathcal{F} )$
    \label{lemma::metricentropybounds}
 \end{lemma}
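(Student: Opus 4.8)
The plan is to bound each of the three entropy integrals by a logarithmically inflated version of $\mathcal{J}_{\infty}(\cdot,\mathcal{F})$, treating the classes in the order $\mathcal{H}$, $\Pi_{k}\mathcal{H}$, $\mathcal{H}-\Pi_{k}\mathcal{H}$ (with $k=k(n)$); only the second needs genuine work. For $\mathcal{H}$ itself I would write $\mathcal{H}=(h_1\circ\mathcal{F})\cup(h_2\circ\mathcal{F})$. Each $h_m$ is Lipschitz by the standing assumption on the loss in \eqref{eqn::GeneralClassRisk}, so Lemma~\ref{lemma::lipschitzPreservation} (applied with a single function class) gives $\mathcal{J}_{\infty}(\delta,h_m\circ\mathcal{F})\lesssim\mathcal{J}_{\infty}(\delta,\mathcal{F})$. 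A union of an $\varepsilon$-cover of $h_1\circ\mathcal{F}$ with one of $h_2\circ\mathcal{F}$ covers $\mathcal{H}$, so $\log N_{\infty}(\varepsilon,\mathcal{H})\leq\log 2+\sum_{m}\log N_{\infty}(\varepsilon,h_m\circ\mathcal{F})$; integrating and absorbing the $\delta\sqrt{\log 2}$ term (harmless since $\mathcal{F}$ is nonparametric, so $\mathcal{J}_{\infty}(\delta,\mathcal{F})\gtrsim\delta$) yields $\mathcal{J}_{\infty}(\delta,\mathcal{H})\lesssim\mathcal{J}_{\infty}(\delta,\mathcal{F})$.

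For $\Pi_{k}\mathcal{H}$, abbreviate $\Lambda:=\{\log k\}^{\nu}$, so that by \ref{cond::sieveApproxthrm0} the sup-norm operator norm of $\Pi_{k}$ on $\{\phi+v:\phi\in\mathcal{H},\,v\in\mathcal{H}_{k}\}$ is $\lesssim\Lambda$. Take a cover $\{\phi_i\}_{i}\subseteq\mathcal{H}$ of $\mathcal{H}$ of sup-norm radius $\varepsilon$ with centers in $\mathcal{H}$ (up to the usual factor of two in radius, this costs nothing in cardinality). For $\phi\in\mathcal{H}$ with $\|\phi-\phi_i\|_{\infty}\leq\varepsilon$, I would use that $\Pi_{k}$ is linear and idempotent and that $\Pi_{k}\phi_i\in\mathcal{H}_{k}$ to write $\Pi_{k}\phi-\Pi_{k}\phi_i=\Pi_{k}(\phi-\Pi_{k}\phi_i)$, where now $\phi-\Pi_{k}\phi_i\in\mathcal{H}+\mathcal{H}_{k}$ because $\mathcal{H}_{k}$ is a linear space. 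The operator-norm bound together with Lemma~\ref{lemma::sieveRateSup} then gives $\|\Pi_{k}\phi-\Pi_{k}\phi_i\|_{\infty}\lesssim\Lambda(\varepsilon+\rho_{n,\infty})$ with $\rho_{n,\infty}:=\{\log k\}^{\nu}k^{-\rho}$, so that $N_{\infty}(\varepsilon',\Pi_{k}\mathcal{H})\lesssim N_{\infty}(\varepsilon'/(C\Lambda),\mathcal{H})$ for $\varepsilon'\gtrsim\Lambda\rho_{n,\infty}$. For $\varepsilon'\lesssim\Lambda\rho_{n,\infty}$ I would instead use that $\Pi_{k}\mathcal{H}$ is a bounded subset of the $k$-dimensional space $\mathcal{H}_{k}$, whence $\log N_{\infty}(\varepsilon',\Pi_{k}\mathcal{H})\lesssim k\log(1/\varepsilon')$. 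Splitting $\mathcal{J}_{\infty}(\delta,\Pi_{k}\mathcal{H})$ at $\Lambda\rho_{n,\infty}$: the small-scale contribution is $\lesssim\Lambda\rho_{n,\infty}\sqrt{k\log(1/\rho_{n,\infty})}$, which vanishes since $\rho>1/2$ under \ref{cond::sieveApproxthrm}; the large-scale contribution, after the change of variables $u=\varepsilon'/(C\Lambda)$, is $\lesssim\Lambda\,\mathcal{J}_{\infty}(\delta/(C\Lambda),\mathcal{H})\lesssim\Lambda\,\mathcal{J}_{\infty}(\delta/\Lambda,\mathcal{F})$, which by the polynomial entropy bound in \ref{cond::regularityOnActionSpace} is of the stated order $\mathcal{J}_{\infty}(\{\log k\}^{\nu}\delta,\mathcal{F})$.

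Finally, $\mathcal{H}-\Pi_{k}\mathcal{H}=\{\phi-\Pi_{k}\phi:\phi\in\mathcal{H}\}$ is contained in the Minkowski difference $\mathcal{H}\ominus\Pi_{k}\mathcal{H}=\{a-b:a\in\mathcal{H},\,b\in\Pi_{k}\mathcal{H}\}$, whose covering numbers obey $N_{\infty}(2\varepsilon,\mathcal{H}\ominus\Pi_{k}\mathcal{H})\leq N_{\infty}(\varepsilon,\mathcal{H})\,N_{\infty}(\varepsilon,\Pi_{k}\mathcal{H})$; hence $\mathcal{J}_{\infty}(\delta,\mathcal{H}-\Pi_{k}\mathcal{H})\lesssim\mathcal{J}_{\infty}(\delta,\mathcal{H})+\mathcal{J}_{\infty}(\delta,\Pi_{k}\mathcal{H})$, and the two preceding bounds close the argument.

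I expect the main obstacle to be the $\Pi_{k}\mathcal{H}$ bound, for a delicate reason: condition \ref{cond::sieveApproxthrm0} controls $\Pi_{k}$ only on functions of the form $\phi+v$ with $\phi\in\mathcal{H}$ and $v\in\mathcal{H}_{k}$, which does not directly make $\phi\mapsto\Pi_{k}\phi$ ``Lipschitz'' on $\mathcal{H}$, since differences of two elements of $\mathcal{H}$ need not lie in $\mathcal{H}+\mathcal{H}_{k}$. The resolution is the identity $\Pi_{k}\phi-\Pi_{k}\phi_i=\Pi_{k}(\phi-\Pi_{k}\phi_i)$, valid by idempotency, which puts the relevant increment back into the scope of \ref{cond::sieveApproxthrm0} at the price of a residual $\|\phi_i-\Pi_{k}\phi_i\|_{\infty}$ absorbed via Lemma~\ref{lemma::sieveRateSup}; one then handles the fine-scale regime separately, where $\Pi_{k}\mathcal{H}$ inherits the finite-dimensional metric entropy of $\mathcal{H}_{k}$. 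The remaining work --- keeping track of where the $\{\log k\}^{\nu}$ factors land under the change of variables, with the help of \ref{cond::regularityOnActionSpace} and \ref{cond::sieveApproxthrm} --- is routine bookkeeping.
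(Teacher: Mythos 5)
Your plan coincides with the paper's for $\mathcal{H}$ and for $\mathcal{H}-\Pi_{k}\mathcal{H}$: Lipschitz preservation plus a union of covers for the first, Minkowski-difference covering numbers for the third; those two parts are essentially identical. On $\Pi_{k}\mathcal{H}$ you are more careful than the published argument in two respects. First, the paper writes $\|\Pi_{k}\phi_1-\Pi_{k}\phi_2\|_\infty=\|\Pi_{k}(\phi_1-\phi_2)\|_\infty\lesssim\{\log k\}^\nu\|\phi_1-\phi_2\|_\infty$ directly from \ref{cond::sieveApproxthrm0}, then treats $\Pi_{k}$ as a Lipschitz map on $\mathcal{H}$; you correctly point out that \ref{cond::sieveApproxthrm0} as written only controls $\Pi_{k}$ on $\mathcal{H}+\mathcal{H}_{k(n)}$, and $\phi_1-\phi_2$ need not belong to that set. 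Your idempotency rewriting $\Pi_{k}\phi-\Pi_{k}\phi_i=\Pi_{k}(\phi-\Pi_{k}\phi_i)$ puts the increment back in scope and imports an approximation residual $\|\phi_i-\Pi_{k}\phi_i\|_\infty\lesssim\rho_{n,\infty}$ via Lemma~\ref{lemma::sieveRateSup}, which is exactly the right repair and is what makes your two-scale split at $\varepsilon'\asymp\{\log k\}^\nu\rho_{n,\infty}$ necessary. Second, you get the direction of the Lipschitz--covering inequality right ($N_\infty(\varepsilon',\Pi_{k}\mathcal{H})\le N_\infty(\varepsilon'/(C\Lambda),\mathcal{H})$) and then convert $\Lambda\,\mathcal{J}_\infty(\delta/\Lambda,\mathcal{F})$ into $\mathcal{J}_\infty(\Lambda\delta,\mathcal{F})$ by explicitly invoking the polynomial entropy upper bound of \ref{cond::regularityOnActionSpace} together with $\alpha>1/2$ and $\Lambda\ge 1$; the paper's displayed chain has the dilation factor on the wrong side of the covering number and recovers the stated form only through that same appeal to the polynomial bound.

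The one place where your argument (like the paper's) does not quite deliver the statement as literally written is the small-scale regime. You bound the contribution from $\varepsilon'\lesssim\Lambda\rho_{n,\infty}$ by a quantity that is $o_n(1)$ under $\rho>1/2$, but the lemma claims a pointwise-in-$\delta$ inequality for all $\delta>0$. For $\delta\ll\Lambda\rho_{n,\infty}$, the finite-dimensional cover gives $\mathcal{J}_\infty(\delta,\Pi_{k}\mathcal{H})\lesssim\delta\sqrt{k\log(1/\delta)}$, which is not dominated by $(\Lambda\delta)^{1-1/(2\alpha)}$ unless $\rho\ge\alpha$, and \ref{cond::sieveApproxthrm2} does not force that. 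This is harmless for the paper's downstream uses, where the lemma is always invoked at scales $\delta\gtrsim\rho_{n,\infty}+n^{-1/2}$; but if you want the clean verbatim statement you should either restrict to $\delta$ above the sieve resolution, or carry the small-scale residual as an explicit additive $o_n(1)$ in the conclusion.
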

 \begin{proof}
Recall, by definition, that $\mathcal{H} := \{h_i \circ \theta: \theta \in \mathcal{F}, i \in \{1,2\}\}$ where $h_1$ and $h_2$ are Lipschitz-continuous functions. Thus, by Lemma \ref{lemma::lipschitzPreservation}, we have $\mathcal{J}_{\infty}(\delta, \mathcal{H}) \lesssim \mathcal{J}_{\infty}(\delta, \mathcal{F})$. Now, for $i \in \{1,2\}$, let $(\phi_i - \Pi_k \phi_i)$ be an element of $\mathcal{H} - \Pi_k \mathcal{H}$.
  Using linearity of the $L^2(P_0)$-projection and the Lebesgue constant bound of \ref{cond::sieveApproxthrm0}, we have 
  \begin{align*}
    \|(\Pi_k \phi_1) -   \Pi_k \phi_2\|_{\infty} &=  \|\Pi_k(\phi_1 -  \phi_2)\|_{\infty}\\
    & \lesssim    \{\log k(n)\}^{\nu} \| \phi_1 -  \phi_2\|_{\infty}.
\end{align*} 
Thus, $\Pi_k \mathcal{H} =  \{ \Pi_k \phi: \phi \in \mathcal{H}\}$ is, relative to the sup-norm, a Lipschitz transformation of the function class $\mathcal{H}$ with Lipschitz constant equal to $C\{\log k(n)\}^{\nu}$ for some constant $C > 0$. Hence, we have $\log N_{\infty}(\varepsilon, \mathcal{H}_k) \lesssim \log N_{\infty}( \{\log k(n)\}^{\nu} \varepsilon, \mathcal{H})$ and, therefore, by Lemma \ref{lemma::lipschitzPreservation}, $\log N_{\infty}(\varepsilon, \mathcal{H}_k)  \lesssim \log N_{\infty}( \{\log k(n)\}^{\nu} \varepsilon, \mathcal{F})$. It then follows, from a change of variables, that
\begin{align*}
    \mathcal{J}_{\infty}(\delta, \Pi_k \mathcal{H} ) & = \int_0^{\delta} \sqrt{\log N_{\infty}(\varepsilon, \mathcal{H}_k) } d\varepsilon\\
    & \lesssim \int_0^{\delta} \sqrt{\log N_{\infty}( \{\log k(n)\}^{\nu}  \varepsilon, \mathcal{F}) } d\varepsilon \\
      & =  \{\log k(n)\}^{-\nu}\int_0^{ \{\log k(n)\}^{\nu}  \delta} \sqrt{\log N_{\infty}( \varepsilon, \mathcal{F}) } d\varepsilon\\
       & = \{\log k(n)\}^{-\nu}  \mathcal{J}_{\infty}( \{\log k(n)\}^{\nu} \delta,  \mathcal{F} ).
\end{align*}
Since $\{\log k(n)\}^{-\nu} = O(1)$ as $\nu \geq 0$, we have $  \mathcal{J}_{\infty}(\delta, \Pi_k \mathcal{H} ) \lesssim \mathcal{J}_{\infty}( \{\log k(n)\}^{\nu} \delta,  \mathcal{F} )$, as desired. Finally, observe that $\mathcal{H} - \mathcal{H}_k$ is a Lipschitz transformation of $\mathcal{H}$ and $\mathcal{H}_k$. Thus, by Lemma \ref{lemma::lipschitzPreservation}, we have $\mathcal{J}_{\infty}(\delta, \mathcal{H} - \Pi_k \mathcal{H} ) \lesssim \mathcal{J}_{\infty}(\delta, \mathcal{H} ) + \mathcal{J}_{\infty}(\delta, \Pi_k \mathcal{H} ) \lesssim \mathcal{J}_{\infty}( \{\log k(n)\}^{\nu} \delta,  \mathcal{F} )$, where the final inequality follows from our earlier claims.
 
\end{proof}

 The next lemmas provide local maximal inequalities for empirical processes in terms of the sup-norm metric entropy integral. The following lemma is a restatement of Theorem 2.1. in \cite{van2014uniform}.

\begin{lemma}
    
Suppose $\mathcal{J}_{\infty}(\infty, \mathcal{F}) < \infty$. Then, $\sqrt{E\norm{\mathcal{F}}_{P_n}^2} \lesssim \norm{\mathcal{F}} + \frac{\mathcal{J}_\infty\left(\norm{\mathcal{F}}_{\infty} , \mathcal{F}\right)}{\sqrt{n}} $ and, as such, 
 \begin{align*}
      E \left[ \sup_{f \in \mathcal{F}} \left|(P_n - P)f\right|\right] 
       \lesssim    n^{-1/2} \mathcal{J}_{\infty}\left(\delta, \mathcal{F}\right)  ,
 \end{align*}
 for any $\delta \geq \norm{\mathcal{F}} + n^{-1/2} $ .
  \label{lemma::maximalineq::supremumEntropy::oneclass}
\end{lemma}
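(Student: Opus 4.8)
The cleanest route is to observe that this is a specialization of the local maximal inequality of \cite{van2014uniform}: their Theorem 2.1 controls $E\sup_{f\in\mathcal F}|(P_n-P)f|$ by the \emph{uniform} $L_2$-entropy integral $\int_0^\delta\sup_Q\sqrt{\log N(\varepsilon,\mathcal F,L_2(Q))}\,d\varepsilon$, and since every sup-norm $\varepsilon$-ball is an $L_2(Q)$-ball for each probability measure $Q$, one has $\sup_Q\log N(\varepsilon,\mathcal F,L_2(Q))\le\log N_\infty(\varepsilon,\mathcal F)$; thus $\mathcal J_\infty$ is a valid (if possibly loose) upper bound and the statement follows. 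For completeness I would also reconstruct the argument directly via the standard symmetrization-plus-chaining machinery, as sketched below.

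First I would symmetrize: by the symmetrization inequality (Lemma 2.3.1 of \cite{vanderVaartWellner}), $E\sup_{f\in\mathcal F}|(P_n-P)f|\le 2\,E\,E_\varepsilon\sup_{f\in\mathcal F}\big|\tfrac1n\sum_{i=1}^n\varepsilon_i f(X_i)\big|$ with $(\varepsilon_i)$ independent Rademacher signs. Conditionally on $(X_i)$ the process $f\mapsto n^{-1/2}\sum_i\varepsilon_i f(X_i)$ is sub-Gaussian for the random semimetric $L_2(P_n)$, so Dudley's entropy bound (which converges at $0$ precisely because $\mathcal J_\infty(\infty,\mathcal F)<\infty$) gives
\[
E_\varepsilon\sup_{f\in\mathcal F}\Big|\tfrac1{\sqrt n}\sum_{i=1}^n\varepsilon_i f(X_i)\Big|\lesssim\int_0^{\norm{\mathcal F}_{P_n}}\sqrt{\log N(\varepsilon,\mathcal F,L_2(P_n))}\,d\varepsilon\le\mathcal J_\infty\big(\norm{\mathcal F}_{P_n},\mathcal F\big),
\]
using $N(\varepsilon,\mathcal F,L_2(P_n))\le N_\infty(\varepsilon,\mathcal F)$ and that $\norm{\mathcal F}_{P_n}$ bounds the $L_2(P_n)$-diameter of $\mathcal F$. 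Taking expectations over the sample, $E\sup_{f\in\mathcal F}|(P_n-P)f|\lesssim n^{-1/2}\,E\,\mathcal J_\infty(\norm{\mathcal F}_{P_n},\mathcal F)$.

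The remaining --- and main --- obstacle is to remove the data-dependent radius $\norm{\mathcal F}_{P_n}=(\sup_f P_nf^2)^{1/2}$ and express everything in terms of the $L_2(P_0)$-radius $\norm{\mathcal F}$. Since $\varepsilon\mapsto\sqrt{\log N_\infty(\varepsilon,\mathcal F)}$ is nonincreasing, $\delta\mapsto\mathcal J_\infty(\delta,\mathcal F)$ is concave, so Jensen gives $E\,\mathcal J_\infty(\norm{\mathcal F}_{P_n},\mathcal F)\le\mathcal J_\infty\big((E\norm{\mathcal F}_{P_n}^2)^{1/2},\mathcal F\big)$. I would then bound $E\norm{\mathcal F}_{P_n}^2\le\norm{\mathcal F}^2+E\sup_{f\in\mathcal F}|(P_n-P)f^2|$ and apply the same symmetrization/chaining estimate to the squared class $\{f^2:f\in\mathcal F\}$, whose entropy integral is controlled by $\norm{\mathcal F}_\infty\cdot\mathcal J_\infty(\cdot,\mathcal F)$ because $x\mapsto x^2$ is $\lesssim\norm{\mathcal F}_\infty$-Lipschitz on the range of $\mathcal F$ and that class has envelope $\norm{\mathcal F}_\infty^2$. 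This produces a self-referential inequality for $S:=(E\norm{\mathcal F}_{P_n}^2)^{1/2}$; using the deterministic bound $\norm{\mathcal F}_{P_n}\le\norm{\mathcal F}_\infty$ to replace $\mathcal J_\infty(S,\mathcal F)$ by the finite constant $\mathcal J_\infty(\norm{\mathcal F}_\infty,\mathcal F)\le\mathcal J_\infty(\infty,\mathcal F)$ on the right, and solving the resulting quadratic inequality, yields $S\lesssim\norm{\mathcal F}+\mathcal J_\infty(\norm{\mathcal F}_\infty,\mathcal F)/\sqrt n$, which is the first displayed claim. Finally, since $\mathcal J_\infty(\cdot,\mathcal F)$ is nondecreasing and $S\lesssim\norm{\mathcal F}+n^{-1/2}$, for any $\delta\ge\norm{\mathcal F}+n^{-1/2}$ we have $\mathcal J_\infty(S,\mathcal F)\lesssim\mathcal J_\infty(\delta,\mathcal F)$, and substituting this back into $E\sup_{f\in\mathcal F}|(P_n-P)f|\lesssim n^{-1/2}\mathcal J_\infty(S,\mathcal F)$ gives the second claim. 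Outer expectations should be used throughout if $\mathcal F$ is not assumed pointwise measurable, but this does not change the argument.
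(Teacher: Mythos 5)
Your primary route---direct citation of Theorem 2.1 of van der Vaart and Wellner (2014)---is exactly the paper's proof, which consists of that one-line restatement. The ``for completeness'' sketch you append, however, does not quite deliver the stated bound as written: replacing $\mathcal{J}_{\infty}(S,\mathcal{F})$ by the constant $\mathcal{J}_{\infty}(\norm{\mathcal{F}}_{\infty},\mathcal{F})$ in the self-referential inequality $S^2 \lesssim \norm{\mathcal{F}}^2 + n^{-1/2}\norm{\mathcal{F}}_{\infty}\,\mathcal{J}_{\infty}(S,\mathcal{F})$ and then taking square roots gives $S\lesssim\norm{\mathcal{F}}+n^{-1/4}\sqrt{\norm{\mathcal{F}}_{\infty}\mathcal{J}_{\infty}(\norm{\mathcal{F}}_{\infty},\mathcal{F})}$, and by AM--GM this dominates $\norm{\mathcal{F}}+n^{-1/2}\mathcal{J}_{\infty}(\norm{\mathcal{F}}_{\infty},\mathcal{F})$ only if $\norm{\mathcal{F}}_{\infty}\lesssim\norm{\mathcal{F}}+n^{-1/2}\mathcal{J}_{\infty}(\norm{\mathcal{F}}_{\infty},\mathcal{F})$, which is false in general. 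The device that actually works (and the one van der Vaart and Wellner use) is to exploit that $\delta\mapsto\mathcal{J}_{\infty}(\delta,\mathcal{F})/\delta$ is nonincreasing---a consequence of concavity with $\mathcal{J}_{\infty}(0,\mathcal{F})=0$---so that $\mathcal{J}_{\infty}(S,\mathcal{F})$ can be replaced by $S\cdot\mathcal{J}_{\infty}(\tau,\mathcal{F})/\tau$ for a suitable deterministic $\tau\le S$; this linearizes the self-reference and produces the advertised $\norm{\mathcal{F}}+\mathcal{J}_{\infty}(\norm{\mathcal{F}}_{\infty},\mathcal{F})/\sqrt{n}$. Since the lemma is discharged by the citation alone this does not affect the verdict, but you should not rely on the supplementary sketch as stated.
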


The following lemmas provide local maximal inequalities for function classes obtained by taking the pointwise product of elements of two function classes. The first lemma follows from a modification of the proof of Theorem 2.1 in \cite{van2011local} where, in Equation (2.2) of their proof, we argue as in the proof of Theorem 3.1 of \cite{van2014uniform}. In the following, let $\mathcal{H}$ and $\mathcal{G}$ be two uniformly bounded function classes. Denote the class obtained by taking pointwise products of their respective elements as $\mathcal{H}\mathcal{G}:= \{hg: h \in \mathcal{H}, g \in \mathcal{G}\}$. %We note that Theorem 3.1 of \citep{van2014uniform} technically requires both function classes to have finite sup-norm entropy integrals. However, a slight modification of their arguments gives the following result. I will go ahead and proof this later.

 \begin{lemma}
\label{lemma::maximalineq::supremumEntropy}
Suppose $\mathcal{J}_{\infty}(1,\mathcal{H}) < \infty$ and $\mathcal{J}(1,\mathcal{G}) < \infty$. Then
\[
E\bigl\|\mathbb{G}_n\bigr\|_{\mathcal{G}\mathcal{H}}
\ \lesssim\
\sqrt{E\bigl\|\mathcal{G}\bigr\|_{P_n}^2}\,
\mathcal{J}_{\infty}\!\left(
\frac{\sqrt{E\bigl\|\mathcal{H}\mathcal{G}\bigr\|_{P_n}^2}}{\sqrt{E\bigl\|\mathcal{G}\bigr\|_{P_n}^2}},
\ \mathcal{H}
\right)
\;+\;
\|\mathcal{H}\|_{\infty}\,
\mathcal{J}\!\left(
\frac{\sqrt{E\bigl\|\mathcal{H}\mathcal{G}\bigr\|_{P_n}^2}}{\|\mathcal{H}\|_{\infty}},
\ \mathcal{G}
\right).
\]
Furthermore,
\[
\sqrt{E\bigl\|\mathcal{H}\mathcal{G}\bigr\|_{P_n}^2}
\ \lesssim\
\|\mathcal{G}\|_{\infty}\,
\max\!\left(
\|\mathcal{H}\|_{P_0},\
\frac{\mathcal{J}_{\infty}\!\bigl(\|\mathcal{H}\|_{\infty},\mathcal{H}\bigr)}{\sqrt{n}}
\right),
\]
and, for any $\delta_{\mathrm{crit},\mathcal{G}} > 0$ satisfying
\[
\sqrt{n}\,\delta_{\mathrm{crit},\mathcal{G}}^2
\ \ge\
\|\mathcal{G}\|_{\infty}\,
\mathcal{J}\!\left(\frac{\delta_{\mathrm{crit},\mathcal{G}}}{\|\mathcal{G}\|_{\infty}},\ \mathcal{G}\right),
\]
it holds that
\[
\sqrt{E\bigl\|\mathcal{H}\mathcal{G}\bigr\|_{P_n}^2}
\ \lesssim\
\|\mathcal{H}\|_{\infty}\,
\max\!\left(\|\mathcal{G}\|_{P_0},\ \delta_{\mathrm{crit},\mathcal{G}}\right).
\]
\end{lemma}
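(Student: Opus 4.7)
The plan is to prove the three claims in sequence -- first the main chaining inequality, then the two envelope/variance bounds that are used in conjunction with it. I would adapt the two-class chaining scheme in the proof of Theorem~2.1 of \cite{van2011local}, modified in the spirit of the argument behind Theorem~3.1 of \cite{van2014uniform} so that sup-norm entropy is used for $\mathcal{H}$ while uniform $L^2$-entropy is used for $\mathcal{G}$. The algebraic starting point is the identity $hg - h_0 g_0 = (h - h_0)g + h_0(g - g_0)$, which yields the two increment bounds
\begin{align*}
\|(h - h_0)g\|_{P_n} &\leq \|h - h_0\|_\infty \,\|g\|_{P_n},\\
\|h_0(g - g_0)\|_{P_n} &\leq \|\mathcal{H}\|_\infty \,\|g - g_0\|_{P_n}.
\end{align*}
Consequently, the product of a sup-norm $\varepsilon_1$-cover of $\mathcal{H}$ with an $L^2(P_n)$ $\varepsilon_2$-cover of $\mathcal{G}$ generates an $L^2(P_n)$-cover of $\mathcal{H}\mathcal{G}$ of radius at most a constant multiple of $\sqrt{E\|\mathcal{G}\|_{P_n}^2}\,\varepsilon_1 + \|\mathcal{H}\|_\infty\,\varepsilon_2$. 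Dudley's bound applied to $E\|\mathbb{G}_n\|_{\mathcal{H}\mathcal{G}}$ with envelope $\sqrt{E\|\mathcal{H}\mathcal{G}\|_{P_n}^2}$, combined with this covering decomposition and a change of variables, splits the chaining integral into precisely the two terms stated.

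For the second claim, I would start from the pointwise inequality $\|\mathcal{H}\mathcal{G}\|_{P_n}^2 \leq \|\mathcal{G}\|_\infty^2 \|\mathcal{H}\|_{P_n}^2$, which follows immediately from $\|g\|_\infty \leq \|\mathcal{G}\|_\infty$, take square roots and expectations, and invoke Lemma~\ref{lemma::maximalineq::supremumEntropy::oneclass} applied to $\mathcal{H}$. The elementary bound $a + b \lesssim \max(a,b)$ then delivers the asserted maximum. The third claim is entirely symmetric: $\|\mathcal{H}\mathcal{G}\|_{P_n} \leq \|\mathcal{H}\|_\infty\,\|\mathcal{G}\|_{P_n}$, so it suffices to bound $\sqrt{E\|\mathcal{G}\|_{P_n}^2}$ by a standard local maximal inequality driven by the uniform $L^2$-entropy integral $\mathcal{J}(\cdot,\mathcal{G})$ (e.g., Lemma~3.4.2 of \cite{vanderVaartWellner} combined with Talagrand's inequality). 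The fixed-point condition imposed on $\delta_{crit,\mathcal{G}}$ is precisely the self-consistency condition that balances the chaining complexity against the sample-variance term in that inequality, yielding $\sqrt{E\|\mathcal{G}\|_{P_n}^2} \lesssim \max(\|\mathcal{G}\|_{P_0},\delta_{crit,\mathcal{G}})$.

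The main obstacle is the chaining step: carefully tracking the envelope and the scaling of each factor's cover so that the final bound cleanly factors as asserted, with prefactor $\sqrt{E\|\mathcal{G}\|_{P_n}^2}$ in front of $\mathcal{J}_\infty(\cdot,\mathcal{H})$ and prefactor $\|\mathcal{H}\|_\infty$ in front of $\mathcal{J}(\cdot,\mathcal{G})$. The all-$L^2$ version of this argument is already worked out in \cite{van2011local}; the modification needed here is to replace $L^2(P_n)$-covers of $\mathcal{H}$ of radius $\varepsilon/\sqrt{E\|\mathcal{G}\|_{P_n}^2}$ with sup-norm covers of the same radius -- these trivially dominate the corresponding $L^2(P_n)$-covers uniformly in $n$ and therefore preserve the chaining bound while replacing the uniform $L^2$-entropy by the sup-norm entropy in the first factor.
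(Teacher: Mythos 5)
Your proposal matches the paper's proof: the same Dudley-type chaining bound with a product $L^2(P_n)$-cover of $\mathcal{H}\mathcal{G}$ built from a sup-norm cover of $\mathcal{H}$ and an $L^2(P_n)$-cover of $\mathcal{G}$ via the increment inequality $\|h_1 g_1 - h_2 g_2\|_{P_n}\leq \|h_1-h_2\|_\infty\|g_1\|_{P_n}+\|\mathcal{H}\|_\infty\|g_1-g_2\|_{P_n}$, plus the same envelope inequalities $\|\mathcal{H}\mathcal{G}\|_{P_n}\leq\|\mathcal{G}\|_\infty\|\mathcal{H}\|_{P_n}$ and $\|\mathcal{H}\mathcal{G}\|_{P_n}\leq\|\mathcal{H}\|_\infty\|\mathcal{G}\|_{P_n}$ combined with one-class local maximal inequalities for the last two claims. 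The one step you leave implicit is the Jensen/concavity argument (in $(x,y)\mapsto\sqrt{y}\,\mathcal{J}_\infty(\sqrt{x/y},\mathcal{H})$) that replaces the random $\|\mathcal{G}\|_{P_n}$ by $\sqrt{E\|\mathcal{G}\|_{P_n}^2}$ in both the prefactor and the argument of $\mathcal{J}_\infty$; the paper spells this out following \cite{van2011local}.
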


\begin{proof}
For a function class $\mathcal{F}$, define
\[
\mathcal{J}_n(\delta,\mathcal{F})
:=
\int_0^{\delta}
\sqrt{\log N\!\left(\varepsilon,\mathcal{F},\|\cdot\|_{P_n}\right)}\,d\varepsilon.
\]
By Dudley's inequality (Equation 2.1 of Theorem 2.1 in \cite{van2011local}), with
$\delta_n := \|\mathcal{G}\mathcal{H}\|_{P_n}$, we have
\[
E\bigl\|\mathbb{G}_n\bigr\|_{\mathcal{G}\mathcal{H}}
\ \lesssim\
E\!\left[\mathcal{J}_n(\delta_n,\mathcal{G}\mathcal{H})\right].
\]
Moreover, for $g_1,g_2\in\mathcal{G}$ and $h_1,h_2\in\mathcal{H}$,
\[
\|g_1h_1-g_2h_2\|_{P_n}
\ \le\
\|h_1-h_2\|_{\infty}\,\|g_1\|_{P_n}
\;+\;
\|g_1-g_2\|_{P_n}\,\|h_2\|_{\infty}.
\]
It follows that
\[
\log N\!\left(\varepsilon,\mathcal{G}\mathcal{H},\|\cdot\|_{P_n}\right)
\ \lesssim\
\log N\!\left(\frac{\varepsilon}{\|\mathcal{H}\|_{\infty}},\mathcal{G},\|\cdot\|_{P_n}\right)
\;+\;
\log N_{\infty}\!\left(\frac{\varepsilon}{\|\mathcal{G}\|_{P_n}},\mathcal{H}\right),
\]
and hence
\begin{align*}
\mathcal{J}_n(\delta,\mathcal{G}\mathcal{H})
&=
\int_0^{\delta}
\sqrt{\log N\!\left(\varepsilon,\mathcal{G}\mathcal{H},\|\cdot\|_{P_n}\right)}\,d\varepsilon \\
&\lesssim
\int_0^{\delta}
\sqrt{\log N\!\left(\frac{\varepsilon}{\|\mathcal{H}\|_{\infty}},\mathcal{G},\|\cdot\|_{P_n}\right)}\,d\varepsilon
\;+\;
\int_0^{\delta}
\sqrt{\log N_{\infty}\!\left(\frac{\varepsilon}{\|\mathcal{G}\|_{P_n}},\mathcal{H}\right)}\,d\varepsilon \\
&\lesssim
\|\mathcal{H}\|_{\infty}\,
\mathcal{J}\!\left(\frac{\delta}{\|\mathcal{H}\|_{\infty}},\mathcal{G}\right)
\;+\;
\|\mathcal{G}\|_{P_n}\,
\mathcal{J}_{\infty}\!\left(\frac{\delta}{\|\mathcal{G}\|_{P_n}},\mathcal{H}\right).
\end{align*}
Therefore,
\[
E\bigl\|\mathbb{G}_n\bigr\|_{\mathcal{G}\mathcal{H}}
\ \lesssim\
E\!\left[
\|\mathcal{H}\|_{\infty}\,
\mathcal{J}\!\left(\frac{\delta_n}{\|\mathcal{H}\|_{\infty}},\mathcal{G}\right)
\right]
\;+\;
E\!\left[
\|\mathcal{G}\|_{P_n}\,
\mathcal{J}_{\infty}\!\left(\frac{\delta_n}{\|\mathcal{G}\|_{P_n}},\mathcal{H}\right)
\right].
\]
As in the proof of Theorem 2.1 of \cite{van2011local}, concavity of the map
$(x,y)\mapsto \sqrt{y}\,\mathcal{J}_{\infty}(\sqrt{x/y},\mathcal{H})$ and Jensen's inequality yield
\[
E\bigl\|\mathbb{G}_n\bigr\|_{\mathcal{G}\mathcal{H}}
\ \lesssim\
\|\mathcal{H}\|_{\infty}\,
\mathcal{J}\!\left(\frac{\sqrt{E\delta_n^2}}{\|\mathcal{H}\|_{\infty}},\mathcal{G}\right)
\;+\;
\sqrt{E\|\mathcal{G}\|_{P_n}^2}\,
\mathcal{J}_{\infty}\!\left(
\frac{\sqrt{E\delta_n^2}}{\sqrt{E\|\mathcal{G}\|_{P_n}^2}},
\mathcal{H}
\right).
\]
The first display follows from the definition of $\delta_n$. The remaining norm bounds follow from
Theorems 2.1 and 2.2 of \cite{van2014uniform}.
\end{proof}

The following lemma is an immediate corollary of the above lemma and is used directly in our proofs.

\begin{lemma}
Let ${\mathcal{H}}$ be a uniformly bounded function class satisfying $\mathcal{J}_{\infty}(\infty,{\mathcal{H}}) < \infty$ and let $\mathcal{G}$ be a function class with $\mathcal{J}(\delta,\mathcal{G}) \lesssim \delta \sqrt{k(n) \log(1/\delta)}   $ where $\log(1/\norm{\mathcal{G}}) + \log(1/\norm{\mathcal{H}}) = O(1/\log n)$. Then,
\begin{align*}
E\norm{\mathbb{G}_n}_{{\mathcal{H}}\mathcal{G}} \lesssim   &\norm{\mathcal{G}}_{P_0}\mathcal{J}_{\infty}\left(\max\left\{\norm{\mathcal{H}}_{P_0}, n^{-1/2} \right\}/\norm{\mathcal{G}}_{P_0}, \mathcal{H}\right) \\
& \quad + \norm{\mathcal{H}}_{\infty} \sqrt{k(n) \log n}  \cdot \max\left\{   \norm{\mathcal{G}}_{P_0}, \sqrt{k(n) \log n/ n} \right\}. 
\end{align*}
\label{lemma::maximalineq::supremumEntropy::cor}
\end{lemma}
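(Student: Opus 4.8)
The plan is to obtain this as a direct specialization of Lemma~\ref{lemma::maximalineq::supremumEntropy}, substituting the norm bounds provided there and then simplifying the two entropy integrals. First I would apply that lemma to the pair $\mathcal{H},\mathcal{G}$, writing $A := \sqrt{E\norm{\mathcal{G}}_{P_n}^2}$ and $B := \sqrt{E\norm{\mathcal{H}\mathcal{G}}_{P_n}^2}$, which gives
\[E\norm{\mathbb{G}_n}_{\mathcal{H}\mathcal{G}}\ \lesssim\ A\,\mathcal{J}_{\infty}\!\left(B/A,\ \mathcal{H}\right) + \norm{\mathcal{H}}_{\infty}\,\mathcal{J}\!\left(B/\norm{\mathcal{H}}_{\infty},\ \mathcal{G}\right).\]
The key preparatory step is to set $\delta_{\mathcal{G}} := \sqrt{k(n)\log n/n}$ and verify that this is a valid choice of the quantity $\delta_{crit,\mathcal{G}}$ in Lemma~\ref{lemma::maximalineq::supremumEntropy}, i.e.\ that $\sqrt{n}\,\delta_{\mathcal{G}}^2 \gtrsim \norm{\mathcal{G}}_{\infty}\mathcal{J}(\delta_{\mathcal{G}}/\norm{\mathcal{G}}_{\infty},\mathcal{G})$. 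Plugging in the hypothesised bound $\mathcal{J}(\delta,\mathcal{G})\lesssim \delta\sqrt{k(n)\log(1/\delta)}$, this reduces to $\delta_{\mathcal{G}}\gtrsim \sqrt{k(n)\log(1/\delta_{\mathcal{G}})/n}$, which holds since $\delta_{\mathcal{G}}\geq n^{-1/2}$ forces $\log(1/\delta_{\mathcal{G}})\lesssim \log n$. With this choice, the norm estimates in Lemma~\ref{lemma::maximalineq::supremumEntropy} yield $A\lesssim \norm{\mathcal{G}}_{P_0}+\delta_{\mathcal{G}}$, together with $B\lesssim \max\{\norm{\mathcal{H}}_{P_0},n^{-1/2}\}$ (absorbing the constant factors $\norm{\mathcal{G}}_{\infty}$ and $\mathcal{J}_{\infty}(\norm{\mathcal{H}}_{\infty},\mathcal{H})$ into $\lesssim$) and $B\lesssim \norm{\mathcal{H}}_{\infty}\max\{\norm{\mathcal{G}}_{P_0},\delta_{\mathcal{G}}\}$.

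Next I would handle the second term. Since $B/\norm{\mathcal{H}}_{\infty}\lesssim \max\{\norm{\mathcal{G}}_{P_0},\delta_{\mathcal{G}}\}$ and $\delta\mapsto\mathcal{J}(\delta,\mathcal{G})$ is nondecreasing and sublinear, the hypothesised entropy bound gives $\mathcal{J}(B/\norm{\mathcal{H}}_{\infty},\mathcal{G})\lesssim \max\{\norm{\mathcal{G}}_{P_0},\delta_{\mathcal{G}}\}\,\sqrt{k(n)\log(1/\max\{\norm{\mathcal{G}}_{P_0},\delta_{\mathcal{G}}\})}$, and because $\log(1/\max\{\norm{\mathcal{G}}_{P_0},\delta_{\mathcal{G}}\})\leq \log(1/\delta_{\mathcal{G}})\lesssim\log n$ this is $\max\{\norm{\mathcal{G}}_{P_0},\sqrt{k(n)\log n/n}\}\,\sqrt{k(n)\log n}$, giving the second term of the claim after multiplying by $\norm{\mathcal{H}}_{\infty}$. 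For the first term I would use Jensen's inequality $A\geq\norm{\mathcal{G}}_{P_0}$ to bound $B/A\leq B/\norm{\mathcal{G}}_{P_0}$ and hence, by monotonicity of $\mathcal{J}_{\infty}(\cdot,\mathcal{H})$, $\mathcal{J}_{\infty}(B/A,\mathcal{H})\lesssim \mathcal{J}_{\infty}(\max\{\norm{\mathcal{H}}_{P_0},n^{-1/2}\}/\norm{\mathcal{G}}_{P_0},\mathcal{H})$; then split $A\lesssim\norm{\mathcal{G}}_{P_0}+\delta_{\mathcal{G}}$. The $\norm{\mathcal{G}}_{P_0}$ piece is precisely the first term of the claim, and the residual $\delta_{\mathcal{G}}\,\mathcal{J}_{\infty}(\cdots,\mathcal{H})\lesssim \delta_{\mathcal{G}}\,\mathcal{J}_{\infty}(\infty,\mathcal{H})\lesssim \sqrt{k(n)\log n/n}$ is dominated by the second term of the claim, since $\sqrt{k(n)\log n}\,\max\{\norm{\mathcal{G}}_{P_0},\sqrt{k(n)\log n/n}\}\geq \sqrt{k(n)\log n}\cdot\sqrt{k(n)\log n/n}=k(n)\log n/\sqrt{n}\geq\sqrt{k(n)\log n/n}$.

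The argument is essentially bookkeeping once Lemma~\ref{lemma::maximalineq::supremumEntropy} is in hand; the only delicate ingredient is the logarithmic accounting. This is where the hypothesis $\log(1/\norm{\mathcal{G}})+\log(1/\norm{\mathcal{H}})=O(1/\log n)$ is used: it guarantees that every factor of the form $\log(1/\delta)$ produced by the $\mathcal{G}$-entropy integral at the scales relevant here --- $\delta$ ranging from a product of the (possibly small) $L^2(P_0)$ norms up to $\delta_{\mathcal{G}}$ --- is $O(\log n)$, so that it may be replaced by $\log n$ without loss. A secondary point to verify carefully is that the $\delta_{\mathcal{G}}$-scale leftover from the first term, together with the constant factors $\norm{\mathcal{G}}_{\infty}$ and $\mathcal{J}_{\infty}(\infty,\mathcal{H})$ absorbed into $\lesssim$, do not spoil the stated bound --- which is exactly the comparison carried out at the end of the previous paragraph.
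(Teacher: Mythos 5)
Your proof is correct and takes the same route the paper intends: the paper gives no proof beyond declaring the result ``an immediate corollary'' of Lemma~\ref{lemma::maximalineq::supremumEntropy}, and your argument supplies exactly the missing bookkeeping — verifying that $\delta_{crit,\mathcal{G}}=\sqrt{k(n)\log n/n}$ satisfies the critical inequality, substituting the two bounds on $\sqrt{E\norm{\mathcal{H}\mathcal{G}}_{P_n}^2}$, using $\sqrt{E\norm{\mathcal{G}}_{P_n}^2}\geq\norm{\mathcal{G}}_{P_0}$ (which is $E\sup\geq\sup E$ rather than Jensen's, a harmless mislabeling), and showing the $\delta_{\mathcal{G}}$ residual from the first term is absorbed by the second. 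Note the stated hypothesis $\log(1/\norm{\mathcal{G}})+\log(1/\norm{\mathcal{H}})=O(1/\log n)$ is almost surely a typo for $O(\log n)$, which is how you have (correctly) read it.
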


\section{Statement and proofs of technical lemmas}
\label{appendix::techLemmas}
 \subsection{Lemmas bounding key excess risk remainder terms}

In this section, we use the supporting lemmas of Appendix \ref{appendix::supportinglemmas} to obtain bounds for various remainder terms that appear in our proofs.

The following lemma demonstrates that Lipschitz transformations of the debiased outcome regression estimator $\mu_{n,j}^*$, where $j \in [J]$, falls in function class that is, deterministic conditional on the training set $\mathcal{D}_n \backslash \mathcal{D}_n^j$, and has uniform entropy integral at $\delta > 0$ scaling as $\delta  \sqrt{k(n)  \log (1/\delta)}$.

\begin{lemma}
Assume Condition \ref{cond::boundPos} and work on the \textit{good} events \ref{event::mu_n}, \ref{event::debias_mu_n_star}, and \ref{event::boundedCoef}. Let $T_{n,j}: \mathbb{R} \rightarrow \mathbb{R}$ be a uniformly bounded $L$-Lipschitz continuous function that is deterministic conditional on the data-fold $\mathcal{D}_n \backslash \mathcal{D}_{n}^j$. There exists a uniformly bounded random function class $\mathcal{E}_{n,j}$ with the following properties on the good events. (i) $T_{n,j}(\mu_{n,j^*}) - T_{n,j}(\mu_0)$ and $T_{n,j}(\mu_{n,j^*}) - T_{n,j}(\mu_{n,j})$ fall in $\mathcal{E}_{n,j}$ almost surely; (ii) $\mathcal{E}_{n,j}$ is deterministic conditional on $\mathcal{D}_n \backslash \mathcal{D}_{n}^j$; (iii) $\norm{\mathcal{E}_{n,j}} \lesssim LM   n^{- \beta/(2\beta+1)} + LM \sqrt{k(n) \log n / n}$; and (iv) $\mathcal{J}(\delta, \mathcal{E}_{n,j}) \lesssim \delta \sqrt{k(n)\log (1/\delta)}$.

\label{lemma::HighProbSetmunstar}
\label{lemma::mustarVCDim}
\end{lemma}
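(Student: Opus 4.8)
The plan is to write down $\mathcal{E}_{n,j}$ explicitly by isolating the one ingredient of $\mu_{n,j}^*$ that genuinely depends on the in-fold data --- the estimated sieve coefficient $\beta_n$ --- and replacing it by a fixed $\ell^\infty$-ball. Conditioning on the training fold $\mathcal{T}_n^j := \mathcal{D}_n\backslash\mathcal{D}_n^j$, the objects $\mu_{n,j},\pi_{n,j},H_{1,n,j},H_{2,n,j},\widehat\varphi_{k,j}$ and $T_{n,j}$ are all deterministic, and by construction of Algorithm~\ref{alg::debiasing} one has $\mu_{n,j}^* = \bar g\bigl(g(\mu_{n,j}) + \widehat\varphi_{k,j}^\top\beta_n\bigr)$. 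On the good event \ref{event::boundedCoef}, $\|\beta_n\|_\infty\le M$, so I would set
\[
\mathcal{M}_{n,j} := \bigl\{(a,w)\mapsto\bar g\bigl(g(\mu_{n,j}(a,w)) + \widehat\varphi_{k,j}(a,w)^\top\beta\bigr):\ \beta\in\mathbb{R}^{2k(n)},\ \|\beta\|_\infty\le M\bigr\}
\]
and then, for the universal constant $C_0 = 2$,
\[
\mathcal{E}_{n,j} := \bigl\{T_{n,j}(\tilde\mu)-T_{n,j}(f):\ \tilde\mu\in\mathcal{M}_{n,j},\ f\in\{\mu_0,\mu_{n,j}\},\ \|T_{n,j}(\tilde\mu)-T_{n,j}(f)\|\le C_0LM\,r_n^*\bigr\}.
\]
Property (ii) is then immediate, as $\mathcal{E}_{n,j}$ is built only from the population constant $\mu_0$, from objects measurable with respect to $\mathcal{T}_n^j$, and from the fixed ball; property (iii) is enforced by the truncation; and uniform boundedness follows from that of $T_{n,j}$. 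For (i), on events \ref{event::mu_n}, \ref{event::debias_mu_n_star}, \ref{event::boundedCoef} one has $\mu_{n,j}^*\in\mathcal{M}_{n,j}$, $\|\mu_{n,j}^*-\mu_0\|\le Mr_n^*$, and $\|\mu_{n,j}^*-\mu_{n,j}\|\le\|\mu_{n,j}^*-\mu_0\|+\|\mu_{n,j}-\mu_0\|\le 2Mr_n^*$; since $T_{n,j}$ is $L$-Lipschitz, both $T_{n,j}(\mu_{n,j}^*)-T_{n,j}(\mu_0)$ and $T_{n,j}(\mu_{n,j}^*)-T_{n,j}(\mu_{n,j})$ then lie in $\mathcal{E}_{n,j}$.

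For the entropy bound (iv), the key observation is that, conditionally on $\mathcal{T}_n^j$, the linear part $\{(a,w)\mapsto\widehat\varphi_{k,j}(a,w)^\top\beta:\|\beta\|_\infty\le M\}$ is a bounded subset of a $2k(n)$-dimensional vector space of functions with sup-norm envelope $\lesssim k(n)$ (by boundedness of $\varphi_{k}$ and of the $H_{m,n,j}$), so by the standard uniform-entropy bound for finite-dimensional classes \citep[Lemma 2.6.15 and Theorem 2.6.7 of][]{vanderVaartWellner} its covering numbers satisfy $\sup_Q\log N(\varepsilon,\cdot,L^2(Q))\lesssim k(n)\log(Ck(n)/\varepsilon)$. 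Composition with the links $g,\bar g$ (Lipschitz on the compact ranges provided by \ref{cond::boundedEStimators}; note we only translate by the fixed function $g(\mu_{n,j})$), composition with the $L$-Lipschitz map $T_{n,j}$, subtraction of one of the two fixed functions $T_{n,j}(\mu_0),T_{n,j}(\mu_{n,j})$, and the resulting two-element union each inflate the $L^2(Q)$-covering numbers by at most a universal factor --- the $L^2(Q)$ analogue of Lemma~\ref{lemma::lipschitzPreservation}, via \citep[Theorem 2.10.20 of][]{vanderVaartWellner} --- while intersecting with the $L^2(P_0)$-ball only decreases them. Hence $\sup_Q\log N(\varepsilon,\mathcal{E}_{n,j},L^2(Q))\lesssim k(n)\log(Ck(n)/\varepsilon)$, so
\[
\mathcal{J}(\delta,\mathcal{E}_{n,j}) = \int_0^\delta\sup_Q\sqrt{\log N(\varepsilon,\mathcal{E}_{n,j},L^2(Q))}\,d\varepsilon\ \lesssim\ \sqrt{k(n)}\int_0^\delta\sqrt{\log(Ck(n)/\varepsilon)}\,d\varepsilon\ \lesssim\ \delta\sqrt{k(n)\log(1/\delta)},
\]
the last step using that in the regime of interest $\delta$ is at most polynomially small in $n$ and $k(n)$ at most polynomial, so $\log k(n)=O(\log(1/\delta))$.

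The crux of the argument --- and the reason the lemma is needed at all --- is that $\beta_n$ is fit using the \emph{full} sample, so $\mu_{n,j}^*$ fails to be $\mathcal{T}_n^j$-measurable and one cannot directly invoke a covering-number bound for a training-fold-fixed nuisance class (exactly the difficulty the paper identifies as precluding the sample-split learning bounds of \citealp{orthogonalLearning}). Absorbing $\beta_n$ into the deterministic ball $\{\|\beta\|_\infty\le M\}$ --- legitimate only on the event \ref{event::boundedCoef}, which holds with probability tending to one under \ref{cond::boundedEStimators} --- is precisely what makes $\mathcal{E}_{n,j}$ conditionally deterministic while capping its entropy at that of a $2k(n)$-dimensional class; once this is done, the rest is routine Lipschitz bookkeeping together with the three radius bounds supplied by events \ref{event::mu_n}--\ref{event::boundedCoef}. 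I expect the only other point needing mild care to be confirming that $\bar g$ is Lipschitz on the relevant range (trivial for the identity link, and $\text{expit}$ is globally Lipschitz for the logit link), which is where \ref{cond::boundedEStimators} enters.
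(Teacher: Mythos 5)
Your proposal is correct and takes essentially the same route as the paper's own proof: express $\mu_{n,j}^*$ as a monotone link applied to a $\mathcal{T}_n^j$-measurable offset plus $\widehat\varphi_{k,j}^\top\beta_n$, replace $\beta_n$ (which spoils fold-independence) with a deterministic $\ell^\infty$-ball that is valid on Event~\ref{event::boundedCoef}, control uniform covering numbers of the resulting $2k(n)$-dimensional class via Lemma~2.6.15 and Theorem~2.10.20 of van der Vaart and Wellner, and intersect with an $L^2(P_0)$-ball of radius $\asymp LM\,r_n^*$ justified by Events~\ref{event::mu_n} and~\ref{event::debias_mu_n_star}. The only cosmetic difference is that you spell out the containing class $\mathcal{M}_{n,j}$ explicitly and carry out the entropy-integral computation rather than citing it.
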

\begin{proof}[Proof of Lemma \ref{lemma::HighProbSetmunstar}]
We work on Events \ref{event::mu_n}, \ref{event::debias_mu_n_star}, and \ref{event::boundedCoef}. By construction, as shown in Algorithm \ref{alg::debiasing}, we have the expression $\mu_{n,j}^* = g^{-1}(g \circ \mu_{n,j} + \widehat{\varphi}_{k(n), j}^{\top} \beta_n)$, where $\mu_{n,j}^*$ depends randomly only through $\beta_n \in \mathbb{R}^{2k(n)}$, conditional on $\mathcal{D}_n \backslash \mathcal{D}_{n}^{j}$. Assuming \ref{cond::boundPos} and on Event \ref{event::boundedCoef}, the mapping $(a,w) \mapsto \widehat{\varphi}_{k(n), j}^{\top}(a,w)\beta_n$ belongs to a uniformly bounded subset of a random linear space of dimension $2k(n)$. This space has a VC subgraph dimension of $O(k(n))$, as proven in Lemma 2.6.15 of \cite{vanderVaartWellner}. Additionally, this function class is deterministic conditional on $\mathcal{D}_n \backslash \mathcal{D}_{n}^{j}$.

Considering that $T_{n,j}$ and $g$ are Lipschitz-continuous functions, and $\mu_{n,j}$ and $\mu_0$ are uniformly bounded by \ref{cond::boundedEStimators}, the existence of the above function class and Theorem 2.10.20 of \cite{vanderVaartWellner} imply that $T_{n,j}(\mu_{n,j}^*) - T_{n,j}(\mu_0)$ and $T_{n,j}(\mu_{n,j}^*) - T_{n,j}(\mu_{n,j})$ fall with probability 1 in a single uniformly bounded function space $\mathcal{W}_{n,j}$ with uniform entropy integral $\mathcal{J}(\delta, \mathcal{W}_{n,j}) \lesssim \delta \sqrt{k(n)\log (1/\delta)}$. Again, this function class is deterministic conditional on $\mathcal{D}_n \backslash \mathcal{D}_{n}^j$.

Let $\mathcal{E}_{n,j}$ be the subset of $\mathcal{W}_{n,j}$ consisting of elements with $L^2(P_0)$-norm less than $2LM n^{-\beta/(2\beta+1)} + 2LM \sqrt{k(n) \log n / n}$, where $L$ is the Lipschitz constant of $T_{n,j}$. In view of Event \ref{event::debias_mu_n_star}, using the Lipschitz-continuity of $T_{n,j}$, we can deduce that the event ${T_{n,j}(\mu_{n,j}^*) - T_{n,j}(\mu_0) \in \mathcal{E}_{n,j}}$ occurs almost surely on the good events. Moreover, on the good events, we also have ${T_{n,j}(\mu_{n,j}^*) - T_{n,j}(\mu_{n,j}) \in \mathcal{E}_{n,j}}$, noting that
\begin{align*}
    \|T_{n,j}(\mu_{n,j}) - T_{n,j}(\mu_{n,j}^*)\|& \leq \|T_{n,j}(\mu_{n,j}) - T_{n,j}(\mu_{0})\| + \|T_{n,j}(\mu_{0}) - T_{n,j}(\mu_{n,j}^*)\|\\
    &\leq  2LM    n^{- \beta/(2\beta+1)}  + 2LM \sqrt{k(n) \log n / n}.
\end{align*}

\end{proof}

The next few lemmas bound the localized suprema of various empirical process remainders that appear in our proofs. For this, we recall the sup-norm coupling exponent $\alpha > 0$ of \ref{cond::theorem2::couplings} and the notation $r_n^* :=  n^{-\beta/(2\beta+1)}+ \sqrt{k(n) \log n / n}$ and $s_n ^* :=  n^{-\gamma/(2\gamma+1)} + r_n^* $. We also recall that $\mathbb{I}_{A_n}$ is the indicator that events \ref{event::mu_n}- \ref{event::boundedCoef} occur.

\begin{lemma}
Let $\mathcal{G}$ be any uniformly bounded function class with $\mathcal{J}_{\infty}(\infty, \mathcal{G}) < \infty$ and $m \in \{1,2\}$. Under the conditions of Theorem \ref{theorem::EpLearnerRate}, it holds, for all $\delta \geq \norm{\mathcal{G}} + n^{-1/2}$ and $\delta_{\infty} \geq  \norm{\mathcal{G}}_{\infty}$, that
\begin{align*}
 &E_0^n \left[\mathbb{I}_{A_n} \sup_{\phi \in \mathcal{G}} \left|  \left(\overline{P}_n - \overline{P}_0 \right) \left\{ \overline{\Delta}_{\pi_{n, \diamond} \mu_{n,\diamond}^*}^{(m)}(\cdot\,, \phi) - \overline{\Delta}_{\pi_{n, \diamond} \mu_{n,\diamond}}^{*(m)} (\cdot\,, \phi)\right\} \right| \right] \\
 &\quad\lesssim  \, \frac{r_n^* \mathcal{J}_{\infty} \left( \delta / r_n^*  , {\mathcal{G}}\right)}{\sqrt{n}} + \delta_{\infty}  r_n^*   \sqrt{k(n) \log n / n} .
 \end{align*}
\label{lemma::empProcDeltastar}
\end{lemma}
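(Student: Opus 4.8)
The plan is to write the integrand as $\phi$ times a ``small'' function that, on the good events $A_n$, ranges over a single function class which is deterministic conditional on the corresponding training fold, and then to invoke the product maximal inequality of Lemma~\ref{lemma::maximalineq::supremumEntropy::cor} fold by fold.

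Using $H_{m,\mu}(a,w)=c_{a,m}\dot{g}_m(\mu(a,w))$, the difference in the statement equals, for each $j$ and $\phi$,
\[
\overline{\Delta}^{(m)}_{\pi_{n,j},\mu_{n,j}^*}(o,\phi)-\overline{\Delta}^{*(m)}_{\pi_{n,j},\mu_{n,j}}(o,\phi)
=\frac{c_{a,m}}{\pi_{n,j}(a\miid w)}\bigl\{\dot{g}_m(\mu_{n,j}^*(a,w))-\dot{g}_m(\mu_{n,j}(a,w))\bigr\}\,\phi(w)\,\bigl\{y-\mu_{n,j}^*(a,w)\bigr\}.
\]
I would then apply Lemma~\ref{lemma::HighProbSetmunstar} twice: with the Lipschitz map $T=\dot{g}_m$ (Lipschitz since $g_m$ has a Lipschitz derivative) to place $\dot{g}_m(\mu_{n,j}^*)-\dot{g}_m(\mu_{n,j})$ in a class $\mathcal{E}^{(a)}_{n,j}$, and with $T=\mathrm{id}$ to place $\mu_{n,j}^*-\mu_0$ in a class $\mathcal{E}^{(b)}_{n,j}$; on $A_n$ both classes are deterministic given $\mathcal{D}_n\setminus\mathcal{D}_n^j$, uniformly bounded, of $L^2(P_0)$-radius of order $r_n^*$, and satisfy $\mathcal{J}(\delta,\cdot)\lesssim\delta\sqrt{k(n)\log(1/\delta)}$. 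Since $y-\mu_{n,j}^*=(y-\mu_0)-(\mu_{n,j}^*-\mu_0)$ ranges over a conditionally deterministic, uniformly bounded translate of $\mathcal{E}^{(b)}_{n,j}$, and $c_{a,m}/\pi_{n,j}(a\miid w)$ is a conditionally fixed, uniformly bounded factor by~\ref{cond::boundPos}, the whole difference term can be written, on $A_n$, as $\phi\cdot g$ with $g$ ranging over a class $\widetilde{\mathcal{E}}_{n,j}$ that is deterministic given $\mathcal{D}_n\setminus\mathcal{D}_n^j$, uniformly bounded, has $\|\widetilde{\mathcal{E}}_{n,j}\|_{P_0}\asymp r_n^*$ (using $\|fg\|_{P_0}\le\|f\|_\infty\|g\|_{P_0}$ together with the explicit $L^2$-radius of $\mathcal{E}^{(a)}_{n,j}$ from Lemma~\ref{lemma::HighProbSetmunstar}), and, by the product-covering-number arguments underlying Lemma~\ref{lemma::lipschitzPreservation}, still satisfies $\mathcal{J}(\delta,\widetilde{\mathcal{E}}_{n,j})\lesssim\delta\sqrt{k(n)\log(1/\delta)}$.

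It then remains to handle the cross-fitting. On $A_n$,
\[
\sup_{\phi\in\mathcal{G}}\Bigl|(\overline{P}_n-\overline{P}_0)\bigl\{\overline{\Delta}^{(m)}_{\pi_{n,\diamond},\mu_{n,\diamond}^*}(\cdot,\phi)-\overline{\Delta}^{*(m)}_{\pi_{n,\diamond},\mu_{n,\diamond}}(\cdot,\phi)\bigr\}\Bigr|
\le\frac{1}{J}\sum_{j=1}^{J}\sup_{\phi\in\mathcal{G},\,g\in\widetilde{\mathcal{E}}_{n,j}}\bigl|(P_{n,j}-P_0)(\phi g)\bigr|,
\]
and, since $\mathcal{D}_n^j$ is independent of $\mathcal{D}_n\setminus\mathcal{D}_n^j$ and $\widetilde{\mathcal{E}}_{n,j}$ is deterministic given the latter, I would apply Lemma~\ref{lemma::maximalineq::supremumEntropy::cor} conditionally within each fold, with the role of its ``$\mathcal{H}$'' played by $\mathcal{G}$ (for which $\|\mathcal{G}\|\le\delta$, $\|\mathcal{G}\|_\infty\le\delta_\infty$, $\mathcal{J}_\infty(\infty,\mathcal{G})<\infty$) and that of its ``$\mathcal{G}$'' played by $\widetilde{\mathcal{E}}_{n,j}$. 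Using $\|\widetilde{\mathcal{E}}_{n,j}\|_{P_0}\asymp r_n^*$, the bound $r_n^*\ge\sqrt{k(n)\log n/n}$, the hypothesis $\delta\ge n^{-1/2}$, monotonicity of $x\mapsto\mathcal{J}_\infty(x,\mathcal{G})$, and that the fold sizes are of order $n$, this yields a per-fold bound of order $n^{-1/2}r_n^*\mathcal{J}_\infty(\delta/r_n^*,\mathcal{G})+\delta_\infty r_n^*\sqrt{k(n)\log n/n}$; averaging over the finitely many folds and taking the expectation over the training data gives the claim.

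The delicate step is the first one: because $\mu_{n,j}^*$ is built from the sieve coefficient $\beta_n$ fitted on the whole sample, the difference term is not a fixed function of the held-out fold and a naive cross-fitting argument fails. This is precisely what Lemma~\ref{lemma::HighProbSetmunstar} circumvents, by embedding (on $A_n$) $\dot{g}_m(\mu_{n,j}^*)$ and $\mu_{n,j}^*$ into conditionally deterministic VC-type classes of dimension $O(k(n))$ indexed by all bounded $\beta$; the remaining work is the bookkeeping needed to verify that the resulting product class $\widetilde{\mathcal{E}}_{n,j}$ keeps $L^2(P_0)$-radius of order $r_n^*$ (rather than $O(1)$) and entropy integral of shape $\delta\sqrt{k(n)\log(1/\delta)}$ after multiplication by $\phi$ and by the bounded factors --- exactly the inputs required by Lemma~\ref{lemma::maximalineq::supremumEntropy::cor}.
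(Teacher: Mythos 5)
Your proposal is correct and follows essentially the same route as the paper: after reducing to a fold-specific term via the triangle inequality, you use Lemma~\ref{lemma::HighProbSetmunstar} to place the $\mu_{n,j}^*$-dependent factors in a conditionally deterministic class of $L^2(P_0)$-radius $\lesssim r_n^*$ with $\mathcal{J}(\delta,\cdot)\lesssim\delta\sqrt{k(n)\log(1/\delta)}$, and then invoke Lemma~\ref{lemma::maximalineq::supremumEntropy::cor} conditionally within each fold. Your explicit splitting $y-\mu_{n,j}^*=(y-\mu_0)-(\mu_{n,j}^*-\mu_0)$ is a slightly more careful version of what the paper does implicitly through its definition of $\mathcal{W}_{n,j}$, but is the same idea.
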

\begin{proof}[Proof of Lemma \ref{lemma::empProcDeltastar}]
 By our notation, we have, for $\phi \in \mathcal{G}$, that
  \begin{align}
   &  \hspace{-2cm} (\overline{P}_n - \overline{P}_0) \left\{ \overline{\Delta}_{\pi_{n, \diamond} \mu_{n,\diamond}^*}^{(m)}(\cdot\,, \phi) - \overline{\Delta}_{\pi_{n, \diamond} \mu_{n,\diamond}}^{*(m)} (\cdot\,, \phi)\right\} \nonumber \\
     & = \frac{1}{J} \sum_{j \in [J]} ({P}_{n,j} - {P}_0)\left\{ \overline{\Delta}_{\pi_{n,j}, \mu_{n,j}^*}^{(m)}(\cdot\,, \phi) - \overline{\Delta}_{\pi_{n,j}, \mu_{n,j}}^{*(m)} (\cdot\,, \phi)\right\} .  \label{append::eqn::emp1} 
  \end{align}
By the triangle inequality for a given $j \in [J]$, it suffices to bound 
$$E_0^n \left[\mathbb{I}_{A_n}\sup_{\phi \in \mathcal{G}}\left|({P}_{n,j} - {P}_0)\left\{ \overline{\Delta}_{\pi_{n,j}, \mu_{n,j}^*}^{(m)}(\cdot\,, \phi) - \overline{\Delta}_{\pi_{n,j}, \mu_{n,j}}^{*(m)} (\cdot\,, \phi)\right\} \right| \right].$$
To do so, we apply, conditionally on $\mathcal{D}_n \backslash \mathcal{D}_{n}^j$, the maximal inequality of Lemma \ref{lemma::maximalineq::supremumEntropy::cor}, where we use Lemma \ref{lemma::mustarVCDim} to control the randomness of $\mu_{n,j}^*$. To this end, for $\phi \in \mathcal{G}$, $j \in [J]$, and $o \in \mathcal{O}$, we have, by definition, that
 $$\overline{\Delta}_{\pi_{n,j}, \mu_{n,j}^*}^{(m)} (o\,, \phi) - \overline{\Delta}_{\pi_{n,j}, \mu_{n,j}}^{*(m)} (o\,, \phi)   = \frac{ y - \mu_{n,j}^{*}(a,w)  }{\pi_{n,j}(a\miid w)}   \left\{  H_{m,\mu_{n,j}^*}(a,w) - H_{m,\mu_{n,j}}(a,w) \right\}   \left\{\phi(w)\right\} ,$$
 where we recall that $H_{m,\mu}(a,w) :=   c_{a,m}(\dot{g}_m\circ \mu)(a,w)$.
 By Lipschitz continuity of the functions $\dot{g}_1$ and $\dot{g}_2$ in the definition of $H_{m, \mu}$, we have, for some constant $L > 0$, each $o \in \mathcal{O}$, and  any map $(a,w) \mapsto \mu(a,w)$, that
 $$\left|H_{m,\mu}(a,w) - H_{m,\mu_{n,j}}(a,w)\right| \lesssim L |\mu(a,w) - \mu_{n,j}(a,w)|.$$ 
As a consequence, defining the Lipschitz continuous transformation $\mu \mapsto T_{n,j}(\mu) := H_{m, \mu}$, we can write $H_{ m,\mu}  - H_{ m,\mu_{n,j}} = T_{n,j}(\mu) - T_{n,j}(\mu_{n,j})$ where $T_{n,j}$ is deterministic given $\mathcal{D}_n \backslash \mathcal{D}_{n}^j$. Hence, by Lemma \ref{lemma::HighProbSetmunstar}, on the event $A_n$, there exists a function class $\mathcal{E}_{n,j}$ that is deterministic conditional on $\mathcal{D}_n \backslash \mathcal{D}_{n}^j$ with $H_{m,\mu_{n,j}^*}  - H_{m,\mu_{n,j}}  \in \mathcal{E}_{n,j}$ almost surely. Moreover, on the event $A_n$, this function class satisfies $\mathcal{J}(\delta, \mathcal{E}_{n,j}) \lesssim \delta \sqrt{k(n) \log(1/\delta)}   $ and $\norm{\mathcal{E}_{n,j}} \lesssim r_n^*$.

In the following, we work on the event $A_n$. Define the function class,
$$\mathcal{W}_{n,j} := \left\{o \mapsto (y-\mu(a,w))/\pi_{n,j}(a\miid w) \cdot   \left\{  H_{m,\mu}(a,w) - H_{m,\mu_{n,j}}(a,w) \right\} : \mu \in \mathcal{E}_{n,j}  \right\},$$
which is deterministic conditional on $\mathcal{D}_n \backslash \mathcal{D}_{n}^j$. Note, by \ref{cond::positivity} and boundedness of $\mathcal{E}_{n,j}$, that $\|\mathcal{W}_{n,j}\| \lesssim \|\mathcal{E}_{n,j}\| \lesssim r_n^*$. Moreover, for each $\phi \in \mathcal{G}$, the map $o \mapsto \Delta_{\pi_{n,j}, \mu_{n,j}^*}^{(m)}(o\,, \phi) - \overline{\Delta}^{*(m)}_{\pi_{n,j}, \mu_{n,j}}(o\,, \phi)$ falls almost surely in the function class ${\mathcal{G}}\mathcal{W}_{n,j}:=\{\phi g: \phi \in {\mathcal{G}}, g \in \mathcal{W}_{n,j}\}$.
 Conditions \ref{cond::positivity} and \ref{cond::boundedEStimators} along with Theorem 2.10.20. of \cite{vanderVaartWellner} imply, conditional on $\mathcal{D}_n \backslash \mathcal{D}_{n}^j$, that $\mathcal{J}(\delta, \mathcal{W}_{n,j}) \lesssim \mathcal{J}(\delta, \mathcal{E}_{n,j}) \lesssim  \delta \sqrt{k(n) \log n} $ for any $\delta > 1/\sqrt{n}$.% Moreover, we have $\sup_{g \in \mathcal{W}_{n,j}} \norm{g} \lesssim  r_n^*$ since $o \mapsto (y-\mu(a,w))/\pi_{n,j}(a\miid w)$ is, over $\mu \in \mathcal{E}_{n,j}$, uniformly bounded $P_0$-almost everywhere. 

 We are now in a position to bound the empirical process term of the right-hand side of \eqref{append::eqn::emp1}. By the above construction, we have that
 \begin{align*}
 & \hspace{-1cm} E_0^n \left[\mathbb{I}_{A_n}  \sup_{\phi \in \mathcal{G}} \left|  \left({P}_{n,j} - {P}_0 \right) \left\{ \overline{\Delta}_{\pi_{n,j}, \mu_{n,j}^*}^{(m)} (\cdot\,, \phi) - \overline{\Delta}_{\pi_{n,j}, \mu_{n,j}}^{*(m)}  (\cdot\,, \phi)  \right\} \right|    \miid\mathcal{T}_n^j   \right] \\
 & \leq E_0^n \left[\mathbb{I}_{A_n}  \sup_{hg \in \mathcal{G} \mathcal{W}_{n,j}}  \left|  \left({P}_{n,j} - {P}_0 \right)  hg   \right| \miid\mathcal{T}_n^j   \right]
 \end{align*}
Applying Lemma \ref{lemma::maximalineq::supremumEntropy::cor} conditional on $\mathcal{D}_n \backslash \mathcal{D}_{n}^j$ and on the event $A_n$ with $\mathcal{G} := \mathcal{G}$ and $\mathcal{W} := \mathcal{W}_{n,j}$, we find
\begin{align*}
& \hspace{-1 cm} E_0^n\left[\mathbb{I}_{A_n} \sup_{hg \in \mathcal{G} \mathcal{W}_{n,j}}  \left|  \left({P}_{n,j} - {P}_0 \right)  hg   \right|\miid\mathcal{T}_n^j  \right] \\
&\lesssim E_0^n\left[\mathbb{I}_{A_n} \left\{ n^{-1/2}\norm{\mathcal{W}_{n,j}} \mathcal{J}_{\infty}\left(\max\left\{\norm{\mathcal{G}} , n^{-1/2} \right\}/\norm{\mathcal{W}_{n,j}} , \mathcal{G}\right)  \miid\mathcal{T}_n^j \right\}\right] \\
& \quad + E_0^n\left[\mathbb{I}_{A_n} \left\{ \norm{{\mathcal{G}}}_{\infty} \sqrt{k(n) \log n/ n}  \cdot \max\left\{   \norm{\mathcal{W}_{n,j}}_{P_0}, \sqrt{k(n) \log n/ n} \right\}  \right\}  \miid\mathcal{T}_n^j  \right]. 
\end{align*}
Next, using that $\|\mathcal{W}_{n,j}\| \lesssim r_n^*$ on the event $A_n$ and taking expectations of both sides of the previous inequality, we find 
\begin{align*}
E_0^n\left[\mathbb{I}_{A_n}\sup_{hg \in \mathcal{G} \mathcal{W}_{n,j}}  \left|   \left({P}_{n,j} - {P}_0 \right)  hg   \right| \right]  &\lesssim n^{-1/2} r_n^* \mathcal{J}_{\infty} \left( \max\{\norm{\mathcal{G}} , n^{-1/2}\}  / r_n^*  , \mathcal{G}\right) \\
&\quad+ r_n^* \norm{{\mathcal{G}}}_{\infty} \sqrt{k(n) \log n / n}  .
 \end{align*}
 Summing the above over $j \in [J]$ and applying the reverse triangle inequality, we obtain the desired result.

\end{proof}

\begin{lemma}
Let $\mathcal{G}$ be any uniformly bounded function class with $\mathcal{J}_{\infty}(\infty, \mathcal{G}) < \infty$ and $m \in \{1,2\}$. Assume the conditions of Theorem \ref{theorem::EpLearnerRate} and Events \ref{event::mu_n}- \ref{event::boundedCoef}. For all $\delta \geq \norm{\mathcal{G}} + n^{-1/2}$ and $\delta_{\infty} \geq  \norm{\mathcal{G}}_{\infty}$, it holds that
\begin{align*}
 &E_0^n  \left[ \mathbb{I}_{A_n} \sup_{\phi \in \mathcal{G} } \left|(\overline{P}_n - \overline{P}_0)  \left\{ \overline{\Delta}^{(m)}_{\pi_{n,\diamond}, \mu_{n,\diamond}^*}(\cdot\,;  \phi  ) -   \overline{\Delta}_{\pi_{n,\diamond}, \mu_0}^{(m)}(\cdot\,;  \phi) \right\}\right| \right] \\
 &  \lesssim   \frac{r_n^* \mathcal{J}_{\infty} \left( \delta / r_n^*  , {\mathcal{G}}\right)}{\sqrt{n}}  +  \delta_{\infty} r_n^*  \sqrt{k(n) \log n / n}.
\end{align*}\label{lemma::empProcDeltastar2}
  \end{lemma}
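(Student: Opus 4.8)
The plan is to follow the proof of Lemma~\ref{lemma::empProcDeltastar} essentially verbatim, the only genuinely new ingredient being an algebraic decomposition of the integrand. First I would split the cross-fitted empirical process across the $J$ folds, writing
\begin{align*}
&(\overline{P}_n-\overline{P}_0)\big\{\overline{\Delta}^{(m)}_{\pi_{n,\diamond},\mu_{n,\diamond}^*}(\cdot\,, \phi)-\overline{\Delta}^{(m)}_{\pi_{n,\diamond},\mu_0}(\cdot\,, \phi)\big\}\\
&\quad=\frac{1}{J}\sum_{j\in[J]}(P_{n,j}-P_0)\big\{\overline{\Delta}^{(m)}_{\pi_{n,j},\mu_{n,j}^*}(\cdot\,, \phi)-\overline{\Delta}^{(m)}_{\pi_{n,j},\mu_0}(\cdot\,, \phi)\big\},
\end{align*}
so that by the triangle inequality it suffices to bound, for each fixed $j\in[J]$, the expected supremum over $\phi\in\mathcal{G}$ of the $j$th summand; I would do this conditionally on the training fold $\mathcal{T}_n^j=\mathcal{D}_n\backslash\mathcal{D}_n^j$ and then take expectations, exactly as in Lemma~\ref{lemma::empProcDeltastar}.

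The key algebraic step is to add and subtract $H_{m,\mu_0}(a,w)\{y-\mu^*_{n,j}(a,w)\}$ inside the weighted residual, which yields, for $o=(w,a,y)$,
\begin{align*}
&\overline{\Delta}^{(m)}_{\pi_{n,j},\mu_{n,j}^*}(o,\phi)-\overline{\Delta}^{(m)}_{\pi_{n,j},\mu_0}(o,\phi)\\
&\quad=\frac{\phi(w)\{H_{m,\mu_{n,j}^*}-H_{m,\mu_0}\}(a,w)\{y-\mu^*_{n,j}(a,w)\}}{\pi_{n,j}(a\miid w)}+\frac{\phi(w)\,H_{m,\mu_0}(a,w)\{\mu_0-\mu^*_{n,j}\}(a,w)}{\pi_{n,j}(a\miid w)}.
\end{align*}
Both summands are products of $\phi\in\mathcal{G}$ with a ``small'' random function that vanishes with $\mu^*_{n,j}-\mu_0$, and the first summand has exactly the same structure as the difference treated in Lemma~\ref{lemma::empProcDeltastar}, with $H_{m,\mu_{n,j}}$ replaced by $H_{m,\mu_0}$.

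To control the first summand I would use that $\dot g_m$ is Lipschitz, so $\mu\mapsto H_{m,\mu}$ is a bounded Lipschitz transformation that is deterministic given $\mathcal{T}_n^j$; property (i) of Lemma~\ref{lemma::mustarVCDim} (the $T_{n,j}(\mu_{n,j}^*)-T_{n,j}(\mu_0)$ case with $T_{n,j}:\mu\mapsto H_{m,\mu}$) then places $H_{m,\mu_{n,j}^*}-H_{m,\mu_0}$, almost surely on the good events $A_n$, in a class $\mathcal{E}_{n,j}$ that is deterministic given $\mathcal{T}_n^j$ with $\norm{\mathcal{E}_{n,j}}\lesssim r_n^*$ and $\mathcal{J}(\delta,\mathcal{E}_{n,j})\lesssim\delta\sqrt{k(n)\log(1/\delta)}$; by \ref{cond::positivity} and \ref{cond::boundedEStimators}, multiplying by the bounded multiplier $(y-\mu_{n,j}^*)/\pi_{n,j}$ preserves these norm and entropy bounds up to constants via Theorem~2.10.20 of \cite{vanderVaartWellner}. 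For the second summand I would invoke property (i) of Lemma~\ref{lemma::mustarVCDim} with $T_{n,j}$ the identity map to place $\mu_0-\mu^*_{n,j}$ in a class of the same type, noting that $(a,w)\mapsto H_{m,\mu_0}(a,w)/\pi_{n,j}(a\miid w)$ is a bounded function deterministic given $\mathcal{T}_n^j$. Combining the two summands, for each $\phi\in\mathcal{G}$ the $j$th integrand lies, almost surely on $A_n$, in a product class $\mathcal{G}\mathcal{W}_{n,j}$ with $\mathcal{W}_{n,j}$ deterministic given $\mathcal{T}_n^j$, $\norm{\mathcal{W}_{n,j}}\lesssim r_n^*$, and $\mathcal{J}(\delta,\mathcal{W}_{n,j})\lesssim\delta\sqrt{k(n)\log n}$ for $\delta\geq n^{-1/2}$. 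Applying the maximal inequality of Lemma~\ref{lemma::maximalineq::supremumEntropy::cor} conditionally on $\mathcal{T}_n^j$ with $\mathcal{H}:=\mathcal{G}$ and $\mathcal{W}:=\mathcal{W}_{n,j}$, taking expectations, inserting $\norm{\mathcal{W}_{n,j}}\lesssim r_n^*$, summing over $j\in[J]$, and using the reverse triangle inequality yields the claimed bound $r_n^*\mathcal{J}_\infty(\delta/r_n^*,\mathcal{G})/\sqrt{n}+\delta_\infty r_n^*\sqrt{k(n)\log n/n}$.

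I do not anticipate a serious obstacle: the architecture is inherited wholesale from Lemma~\ref{lemma::empProcDeltastar}, and the only genuinely new term is the residual-difference piece $\phi(w)H_{m,\mu_0}(a,w)\{\mu_0-\mu^*_{n,j}\}(a,w)/\pi_{n,j}(a\miid w)$. The mild care required is to check that this term has the same function-class envelope ($\lesssim r_n^*$ in $L^2(P_0)$) and entropy scaling ($\delta\sqrt{k(n)\log n}$ on $[n^{-1/2},\infty)$) as the first term --- which is immediate because $\mu_0$ is a fixed function and $\pi_{n,j}$ is bounded below by \ref{cond::positivity} and deterministic given $\mathcal{T}_n^j$ --- and to track constants so that, after multiplying the $\mu^*_{n,j}-\mu_0$-type class by the bounded residual, these bounds survive, which is precisely the regime in which Lemma~\ref{lemma::maximalineq::supremumEntropy::cor} applies.
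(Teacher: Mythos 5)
Your proposal is correct and follows essentially the same route as the paper's proof: fold splitting, construction of a product class $\mathcal{G}\,\mathcal{W}_{n,j}$ with $\norm{\mathcal{W}_{n,j}}\lesssim r_n^*$ and $\mathcal{J}(\delta,\mathcal{W}_{n,j})\lesssim\delta\sqrt{k(n)\log n}$ via Lemma~\ref{lemma::mustarVCDim}, and an application of Lemma~\ref{lemma::maximalineq::supremumEntropy::cor}. The only cosmetic difference is that you pass through an explicit two-term add-and-subtract decomposition to isolate the small factors, whereas the paper absorbs both pieces at once by noting that the composite map $\mu\mapsto H_{m,\mu}\{y-\mu\}/\pi_{n,j}$ is pointwise Lipschitz, yielding directly the bound $\big|\overline{\Delta}^{(m)}_{\pi_{n,j},\mu_{n,j}^*}(o,\phi)-\overline{\Delta}^{(m)}_{\pi_{n,j},\mu_0}(o,\phi)\big|\lesssim\big|\{\mu_{n,j}^*(a,w)-\mu_0(a,w)\}\phi(w)\big|$.
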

  \begin{proof}[Proof of Lemma \ref{lemma::empProcDeltastar2}]
  This proof follows from minor modifications to the proof of Lemma \ref{lemma::empProcDeltastar}. As before, by the triangle inequality, it suffices to bound, for $j \in [J]$, the term 
  $$ E_0^n  \left[ \mathbb{I}_{A_n}\sup_{\phi \in \mathcal{G} } \left|(P_{n,j} - P_0)  \left\{ \overline{\Delta}^{(m)}_{\pi_{n,j}, \mu_{n,j}^*}(\cdot\,;  \phi  ) -   \overline{\Delta}_{\pi_{n,j}, \mu_0}^{(m)}(\cdot\,;  \phi) \right\}\right|\right].$$
  Arguing as in the proof of Lemma \ref{lemma::empProcDeltastar} and applying \ref{cond::positivity} and \ref{cond::boundedEStimators}, we can show, for each $o \in \mathcal{O}$, that
\begin{align*}
  & \left| \overline{\Delta}^{(m)}_{\pi_{n,j}, \mu_{n,j}^*}(o\,;  \phi  ) -   \overline{\Delta}^{(m)}_{\pi_{n,j}, \mu_0}(o\,, \phi)\right| \lesssim \left|\{\mu_{n,j}^*(a,w) - \mu_0(a,w)\}\phi(w)  \right|.
\end{align*}
On Event \ref{event::debias_mu_n_star} and a similar argument as in the proof of Lemma \ref{lemma::empProcDeltastar}, we can find a product function class $\mathcal{G} \mathcal{W}_{n,j} = \{hw: \phi \in \mathcal{G} , w \in \mathcal{W}_{n,j}\}$ that contains $\overline{\Delta}_{\pi_{n,j}, \mu_{n,j}^*}(\cdot\,;  \phi  ) -   \overline{\Delta}_{\pi_{n,j}, \mu_0}(\cdot\,;  \phi  )$ with probability 1. Moreover, on event $A_n$ and by Lemma \ref{lemma::HighProbSetmunstar}, the function class $\mathcal{W}_{n,j}$ can be chosen deterministic conditional on $\mathcal{D}_n \backslash \mathcal{D}_{n}^j$ and to satisfy $\norm{\mathcal{W}_{n,j}} \lesssim  r_n^*$ and $\mathcal{J}(\delta, \mathcal{W}_{n,j}) \lesssim \delta \sqrt{k(n) \log (1/\delta) }  \lesssim \delta \sqrt{k(n) \log n }   $ for any $\delta > n^{-1/2}$. Arguing as in the proof of Lemma \ref{lemma::empProcDeltastar}, we find
\begin{align*}
 \hspace{-1cm} E_0^n  \left[ \mathbb{I}_{A_n}\sup_{\phi \in \mathcal{G} } \left|(P_{n,j} - P_0)  \left\{ \overline{\Delta}^{(m)}_{\pi_{n,j}, \mu_{n,j}^*}(\cdot\,;  \phi  )  -   \overline{\Delta}_{\pi_{n,j}, \mu_0}^{(m)}(\cdot\,;  \phi) \right\}\right| \right]&\leq  E_0^n \left[ \mathbb{I}_{A_n} \sup_{hg \in  \mathcal{G}  \mathcal{W}_{n,j}} \left|   \left({P}_{n,j} - {P}_0 \right)  hg   \right| \right]\\
&\lesssim   \frac{r_n^* \mathcal{J}_{\infty} \left( \delta / r_n^*  , {\mathcal{G}}\right)}{\sqrt{n}}  +  \delta_{\infty} r_n^*  \sqrt{k(n) \log n / n}
 \end{align*}
The result follows, noting that the final bound of the above display does not depend on $j$.

  \end{proof}

  \begin{lemma}
Let $\mathcal{G}$ be any uniformly bounded function class with $\mathcal{J}_{\infty}(\infty, \mathcal{G}) < \infty$ and $m \in \{1,2\}$. Assume the conditions of Theorem \ref{theorem::EpLearnerRate} and Events \ref{event::mu_n}- \ref{event::boundedCoef}. For any $\delta \geq \norm{\mathcal{G}} + n^{-1/2}$, it holds that
      \begin{align*}
 E_0^n \left[ \mathbb{I}_{A_n} \sup_{\phi \in \mathcal{G} } \left| (\overline{P}_n - \overline{P}_0)  \left\{  \overline{\Delta}_{\pi_{n,\diamond}, \mu_0}^{(m)}(\cdot\,;  \phi  ) \right\} \right| \right] \lesssim n^{-1/2} \mathcal{J}_{\infty}\left( \delta, \mathcal{G}\right)
\end{align*}\label{lemma::empProcDeltastar3}
  \end{lemma}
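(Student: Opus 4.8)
The plan is to peel off the cross-fitting folds with the triangle inequality and then apply the uniform maximal inequality of Lemma~\ref{lemma::maximalineq::supremumEntropy::oneclass} to a single-fold empirical process indexed only by $\mathcal{G}$. Writing $\overline{P}_n-\overline{P}_0=\tfrac1J\sum_{j\in[J]}(P_{n,j}-P_0)$ and using $\mathbb{I}_{A_n}\le 1$ together with the triangle inequality, it suffices to bound, for each fixed $j\in[J]$, the quantity $E_0^n\bigl[\sup_{\phi\in\mathcal{G}}\lvert(P_{n,j}-P_0)\,\overline{\Delta}^{(m)}_{\pi_{n,j},\mu_0}(\cdot\,;\phi)\rvert\bigr]$. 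Conditionally on the training fold $\mathcal{T}_n^j=\mathcal{D}_n\backslash\mathcal{D}_n^j$, the propensity estimator $\pi_{n,j}$ is deterministic and $\mu_0$ is fixed, and $\phi\mapsto\overline{\Delta}^{(m)}_{\pi_{n,j},\mu_0}(\cdot\,;\phi)$ is linear: explicitly $\overline{\Delta}^{(m)}_{\pi_{n,j},\mu_0}(o\,;\phi)=\phi(w)\,g_{n,j}(o)$, where $g_{n,j}:o\mapsto H_{m,\mu_0}(a,w)\{y-\mu_0(a,w)\}/\pi_{n,j}(a\miid w)$. Hence the relevant conditionally-deterministic function class is the reweighted class $\mathcal{G}g_{n,j}:=\{\phi g_{n,j}:\phi\in\mathcal{G}\}$.

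Next I would check that $g_{n,j}$ is bounded by a universal constant. By Condition~\ref{cond::boundPos} one has $\pi_{n,j}\ge\eta$ and $\lvert\mu_0\rvert<M$ (on a $\mathcal{T}_n^j$-measurable event of probability tending to one), $Y$ is bounded, and $\dot g_m$ is Lipschitz, hence bounded on the range of $\mu_0$; combining these gives $\norm{g_{n,j}}_\infty\lesssim 1$, with the implicit constant depending only on $\eta$, $M$, the Lipschitz constants of $\dot g_1,\dot g_2$, and the $c_{a,m}$. Since multiplication by a fixed bounded function is Lipschitz with respect to the supremum metric, $\norm{\mathcal{G}g_{n,j}}\lesssim\norm{\mathcal{G}}$ and $\log N_\infty(\varepsilon,\mathcal{G}g_{n,j})\lesssim\log N_\infty(\varepsilon,\mathcal{G})$, so $\mathcal{J}_\infty(\delta,\mathcal{G}g_{n,j})\lesssim\mathcal{J}_\infty(\delta,\mathcal{G})$ for every $\delta>0$. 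Applying Lemma~\ref{lemma::maximalineq::supremumEntropy::oneclass} conditionally on $\mathcal{T}_n^j$ to $\mathcal{G}g_{n,j}$, whose $L^2(P_0)$-radius is $\lesssim\norm{\mathcal{G}}$ so that any $\delta\ge\norm{\mathcal{G}}+n^{-1/2}$ dominates it up to a constant, yields $E_0^n\bigl[\sup_{\phi\in\mathcal{G}}\lvert(P_{n,j}-P_0)\,\overline{\Delta}^{(m)}_{\pi_{n,j},\mu_0}(\cdot\,;\phi)\rvert\mid\mathcal{T}_n^j\bigr]\lesssim n^{-1/2}\mathcal{J}_\infty(\delta,\mathcal{G})$ on that event. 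Taking expectation over $\mathcal{T}_n^j$, summing over the fixed number of folds, and applying the reverse triangle inequality gives the stated bound; the exceptional event on which positivity of $\pi_{n,j}$ fails has probability $\smallO(1)$ and is handled as elsewhere in the paper.

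I do not expect a serious obstacle here: unlike Lemmas~\ref{lemma::empProcDeltastar} and~\ref{lemma::empProcDeltastar2}, there is no second source of stochastic complexity --- no debiased estimator $\mu_{n,j}^*$ and no product of two rich function classes --- so the entropy reduces to that of $\mathcal{G}$ alone and the rate is simply $n^{-1/2}\mathcal{J}_\infty(\delta,\mathcal{G})$. The only care needed is the cross-fitting bookkeeping, namely justifying that, after conditioning on $\mathcal{T}_n^j$, the data-dependent weight $1/\pi_{n,j}$ may be treated as a fixed uniformly bounded multiplier so that Lemma~\ref{lemma::maximalineq::supremumEntropy::oneclass} applies essentially verbatim.
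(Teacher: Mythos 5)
Your argument is essentially the paper's: condition on the training fold $\mathcal{T}_n^j$, observe that $\phi\mapsto\overline{\Delta}^{(m)}_{\pi_{n,j},\mu_0}(\cdot\,;\phi)$ is a fixed bounded Lipschitz (in fact linear) transformation of $\phi$ on the good event, use entropy-preservation under Lipschitz maps, and apply Lemma~\ref{lemma::maximalineq::supremumEntropy::oneclass} conditionally before averaging over folds. Your explicit factorization $\overline{\Delta}^{(m)}_{\pi_{n,j},\mu_0}(o;\phi)=\phi(w)\,g_{n,j}(o)$ just makes concrete what the paper states abstractly; be sure to keep $\mathbb{I}_{A_n}$ (rather than discarding it via $\mathbb{I}_{A_n}\le 1$) when invoking the bound on $\pi_{n,j}^{-1}$, since that is exactly what the good event supplies.
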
 
  \begin{proof}[Proof of Lemma \ref{lemma::empProcDeltastar3}]
   As in the proof of Lemma \ref{lemma::empProcDeltastar2}, it suffices to bound 
   $$E_0^n \left[ \mathbb{I}_{A_n}  \sup_{\phi \in \mathcal{G} }\left| (P_{n,j} - P_0)  \left\{  \overline{\Delta}_{\pi_{n,j}, \mu_0}^{(m)}(\cdot\,;  \phi  ) \right\}\right|\right] .$$ From the proof of Lemma \ref{lemma::empProcDeltastar2}, we know that $\phi(\,\cdot\,) \mapsto \left\{  \overline{\Delta}_{\pi_{n,j}, \mu_0}^{(m)}(\cdot\,;  \phi  ) \right\}$ is a Lipschitz transformation of $\phi(\,\cdot\,)$. Moreover, this Lipschitz map is fixed conditional on the training set $\mathcal{D}_n \backslash \mathcal{D}_{n}^j$. Also, on event $A_n$, the Lipschitz map and its constant are uniformly bounded since $\mu_0$ and $\pi_{n,j}^{-1}$ are uniformly bounded by \ref{cond::boundedEStimators} and \ref{cond::positivity}. Thus, on $A_n$, we have that $ \left\{  \overline{\Delta}_{\pi_{n,j}, \mu_0}^{(m)}(\cdot\,;  \phi  ): \phi \in \mathcal{H} \right\}$ falls in a random bounded function class that is deterministic conditional on $\mathcal{D}_n \backslash \mathcal{D}_{n}^j$ and, by Lemma \ref{lemma::lipschitzPreservation}, has sup-norm entropy integral bounded by $C\mathcal{J}_{\infty}(\delta, \mathcal{G})$, up to a constant. The result then follows from a direct application of Lemma \ref{lemma::maximalineq::supremumEntropy::oneclass}, applied on event $A_n$ and conditionally on $\mathcal{D}_n \backslash \mathcal{D}_{n}^j$ as in Lemma \ref{lemma::empProcDeltastar}.

   \end{proof}

In the following lemmas, we denote as shorthand $\left[F - \widetilde{F} \right]  \left[ \theta_1 - \theta_2\right] := \left[F(\theta_1) - \widetilde{F}(\theta_1)\right] - \left[F(\theta_2) - \widetilde{F}(\theta_2)\right]$ for any two functionals $F, \widetilde{F} \in \ell^{\infty}(\mathcal{F})$ and $\theta_1, \theta_2 \in \mathcal{F}$. We recall that $s_n ^* := n^{-\gamma/(2\gamma+1)} + n^{-\beta/(2\beta+1)} + \sqrt{k(n) \log n / n} $.
\begin{lemma}
Assume the conditions of Theorem \ref{theorem::EpLearnerRate} and Events \ref{event::mu_n}- \ref{event::boundedCoef}.  For $\delta >  n^{-1/2}$, we have
\begin{align*}
  \hspace{-3cm} E_0^n   \Big[ \mathbb{I}_{A_n} \sup_{\theta_1, \theta_2 \in \mathcal{F}: \norm{\theta_1 - \theta_2} \leq \delta } &\left|   \left[ (\overline{P}_n - \overline{P}_0)\left(L_{\pi_{n,\diamond}, \mu_{n,\diamond}^*} -  L_{\pi_0, \mu_0} \right)\right]\left[\theta_1 - \theta_2 \right] \right| \Big] \\
  & \lesssim   n^{-1/2} s_n^* \mathcal{J}_{\infty} \left(  \delta  / s_n^*  , \mathcal{F}\right) + r_n^*\sup_{\theta_1, \theta_2  \in \mathcal{F}: \norm{\theta_1 - \theta_2 } \leq \delta} \norm{\theta }_{\infty} \sqrt{k(n) \log n / n}. 
\end{align*}
Similarly, we have
\begin{align*}
  \hspace{-3cm} E_0^n \Big[ \mathbb{I}_{A_n}  \sup_{\theta \in \mathcal{F}: \norm{\theta } \leq \delta } &\left|  (\overline{P}_n - \overline{P}_0) \left\{L_{\pi_{n,\diamond}, \mu_{n,\diamond}^*}(\theta) -  L_{\pi_0, \mu_0}(\cdot\,,\theta) \right\} \right| \Big]\\
  & \lesssim   n^{-1/2} s_n^* \mathcal{J}_{\infty} \left(  \delta  / s_n^*  , \mathcal{F}\right) + r_n^* \sup_{\theta  \in \mathcal{F}: \norm{\theta } \leq \delta} \norm{\theta }_{\infty} \sqrt{k(n) \log n / n}. 
 % & \quad + n^{-1/2} \max_{j \in [J]}E\|\pi_{n,j}  - \pi_0 \|  \mathcal{J}_{\infty}\left( \delta / \max_{j \in [J]}E\|\pi_{n,j}  - \pi_0 \|  , \mathcal{F}\right) 
\end{align*}
 \label{lemma::empProcDeltastar4}
\end{lemma}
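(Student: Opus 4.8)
The plan is to split the centered loss difference $L_{\pi_{n,\diamond},\mu_{n,\diamond}^*}-L_{\pi_0,\mu_0}$ according to the structure $L_{\pi,\mu}=L_\mu+\Delta_{\pi,\mu}$ and the identity $\Delta_{\pi,\mu}(\cdot\,;\theta)=\sum_{m\in\{1,2\}}\overline{\Delta}^{(m)}_{\pi,\mu}(\cdot\,,h_m\circ\theta)$, which is immediate from the definitions of $\Delta$ and $\overline{\Delta}^{(m)}$. Throughout I would work on the good event $A_n$ (so that Lemma~\ref{lemma::HighProbSetmunstar} and the positivity bound~\ref{cond::positivity} are available) and condition on the training folds $\mathcal{D}_n\setminus\mathcal{D}_n^j$ (so that $\mu_{n,j}$ and $\pi_{n,j}$ are fixed). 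Using linearity of $\phi\mapsto\overline{\Delta}^{(m)}_{\pi,\mu}(\cdot\,,\phi)$ and the bilinearity of $L_\mu$ in~\eqref{eqn::GeneralClassRisk}, I would write $[(\overline{P}_n-\overline{P}_0)(L_{\pi_{n,\diamond},\mu_{n,\diamond}^*}-L_{\pi_0,\mu_0})][\theta_1-\theta_2]$ as a sum over $m$ of three pieces: a plug-in piece $(\overline{P}_n-\overline{P}_0)\{(h_m\circ\theta_1-h_m\circ\theta_2)\,G_{m,\diamond}\}$ with $G_{m,\diamond}:=\sum_{s\in\mathcal{A}}c_{s,m}(g_m\circ\mu_{n,\diamond}^*-g_m\circ\mu_0)$; an outcome-regression debiasing piece $(\overline{P}_n-\overline{P}_0)(\overline{\Delta}^{(m)}_{\pi_{n,\diamond},\mu_{n,\diamond}^*}-\overline{\Delta}^{(m)}_{\pi_{n,\diamond},\mu_0})(\cdot\,,\phi_m)$; and a propensity debiasing piece $(\overline{P}_n-\overline{P}_0)(\overline{\Delta}^{(m)}_{\pi_{n,\diamond},\mu_0}-\overline{\Delta}^{(m)}_{\pi_0,\mu_0})(\cdot\,,\phi_m)$, where $\phi_m:=h_m\circ\theta_1-h_m\circ\theta_2$ ranges over the class $\mathcal{G}_m:=\{h_m\circ\theta_1-h_m\circ\theta_2:\theta_1,\theta_2\in\mathcal{F},\,\norm{\theta_1-\theta_2}\le\delta\}$. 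By Lipschitz-continuity of $h_m$ together with Lemmas~\ref{lemma::lipschitzPreservation} and~\ref{lemma::metricentropybounds}, this class satisfies $\mathcal{J}_\infty(\cdot\,,\mathcal{G}_m)\lesssim\mathcal{J}_\infty(\cdot\,,\mathcal{F})$, $\norm{\mathcal{G}_m}\lesssim\delta$, and $\norm{\mathcal{G}_m}_\infty\lesssim\sup_{\norm{\theta_1-\theta_2}\le\delta}\norm{\theta_1-\theta_2}_\infty$.

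The outcome-regression debiasing piece is exactly the quantity controlled by Lemma~\ref{lemma::empProcDeltastar2} with $\mathcal{G}:=\mathcal{G}_m$; it therefore contributes at most $n^{-1/2}r_n^*\mathcal{J}_\infty(\delta/r_n^*,\mathcal{F})+r_n^*\sup_{\norm{\theta_1-\theta_2}\le\delta}\norm{\theta_1-\theta_2}_\infty\sqrt{k(n)\log n/n}$, which is dominated by the asserted bound since $r_n^*\le s_n^*$ and $r\mapsto r\,\mathcal{J}_\infty(\delta/r,\mathcal{F})$ is nondecreasing (as $\mathcal{J}_\infty(u,\mathcal{F})\ge u\sqrt{\log N_\infty(u,\mathcal{F})}$). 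The plug-in piece is not covered by the earlier empirical-process lemmas and I would handle it directly: on $A_n$, Lemma~\ref{lemma::HighProbSetmunstar}, applied with the uniformly bounded Lipschitz map $\mu\mapsto\sum_{s}c_{s,m}g_m(\mu(s,\cdot))$, shows that $G_{m,j}$ lies in a class $\mathcal{E}_{n,j}$ which is deterministic conditional on $\mathcal{D}_n\setminus\mathcal{D}_n^j$, has $\norm{\mathcal{E}_{n,j}}\lesssim r_n^*$, and has uniform entropy integral $\mathcal{J}(\delta',\mathcal{E}_{n,j})\lesssim\delta'\sqrt{k(n)\log(1/\delta')}$; thus $(h_m\circ\theta_1-h_m\circ\theta_2)\,G_{m,j}$ ranges over the product class $\mathcal{G}_m\mathcal{E}_{n,j}$. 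Applying the product maximal inequality of Lemma~\ref{lemma::maximalineq::supremumEntropy::cor} conditionally on the training fold (with $\mathcal{G}_m$ playing the role of ``$\mathcal{H}$'' and $\mathcal{E}_{n,j}$ the role of ``$\mathcal{G}$''), summing the fold-wise empirical processes, and taking expectations gives a bound of the same order as for the previous piece.

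For the propensity debiasing piece no sieve structure is available, since $\pi_{n,\diamond}$ is a generic nonparametric estimator; here cross-fitting is what makes the argument go through. Conditionally on $\mathcal{D}_n\setminus\mathcal{D}_n^j$ and on $A_n$, the map $o\mapsto\{1/\pi_{n,j}(a\miid w)-1/\pi_0(a\miid w)\}\,H_{m,\mu_0}(a,w)\,\{y-\mu_0(a,w)\}$ is a single, uniformly bounded, $P_0$-square-integrable function whose $L^2(P_0)$-norm is, by~\ref{cond::positivity} and boundedness of $H_{m,\mu_0}$ and of $Y$, of order $\norm{\pi_{n,j}-\pi_0}=\mathcal{O}_p(n^{-\gamma/(2\gamma+1)})\le s_n^*$; the piece in question is the fluctuation of this function times $\phi_m\in\mathcal{G}_m$. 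Applying Lemma~\ref{lemma::maximalineq::supremumEntropy} with $\mathcal{G}_m$ in the slot carrying the sup-norm entropy integral and the above singleton in the other slot (whose entropy integral vanishes), summing over $j$, and invoking Lemma~\ref{lemma::metricentropybounds}, bounds this piece by $n^{-1/2}s_n^*\mathcal{J}_\infty(\delta/s_n^*,\mathcal{F})$, with no second term. Adding the three contributions over $m$ yields the first display, and the second (non-difference) display follows mutatis mutandis, with $\mathcal{F}-\mathcal{F}$ replaced by $\mathcal{F}$, $\phi_m:=h_m\circ\theta-h_m(0)$, and the finitely many $\theta$-free constant contributions (present because $h_m(0)$ need not vanish) absorbed via a one-class maximal inequality (Lemma~\ref{lemma::maximalineq::supremumEntropy::oneclass}).

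I expect the main obstacle to be organizational rather than deep: one must guarantee that the sieve dimension $k(n)$ enters the final bound only through the nuisance class $\mathcal{E}_{n,\diamond}$ --- via its $\delta'\sqrt{k(n)\log(1/\delta')}$ entropy --- and never inflates the $\mathcal{F}$-complexity, which is only permitted to contribute the factor $\mathcal{J}_\infty(\delta/s_n^*,\mathcal{F})$. This forces every localization and conditioning step to be performed on the event $A_n$ where Lemma~\ref{lemma::HighProbSetmunstar} supplies such a class for $\mu_{n,\diamond}^*$, and it requires care in assigning the two slots of the product maximal inequalities so that the ``small'' nuisance rates $r_n^*$ and $s_n^*$ appear only as multiplicative envelopes while the metric-entropy factor is evaluated at the rescaled radius $\delta/s_n^*$.
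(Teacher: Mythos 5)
Your decomposition into a plug-in piece, an outcome-regression debiasing piece, and a propensity debiasing piece is precisely the decomposition the paper uses (into its terms (I), (II), (III)), and your treatment of the first two matches the paper's: the outcome-regression piece via Lemma~\ref{lemma::empProcDeltastar2} applied to the Lipschitz image of a $\delta$-ball in $\mathcal{F}$, and the plug-in piece by placing $g_m\circ\mu_{n,j}^*-g_m\circ\mu_0$ inside the random class $\mathcal{E}_{n,j}$ of Lemma~\ref{lemma::HighProbSetmunstar} and invoking the product maximal inequality of Lemma~\ref{lemma::maximalineq::supremumEntropy::cor} conditionally on $\mathcal{T}_n^j$.

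The only material deviation is in the propensity debiasing piece. The paper views $(\pi_{n,j}-\pi_0)(\phi_1-\phi_2)$ as a class $\mathcal{W}_{n,j}$ to which it applies the one-class maximal inequality (Lemma~\ref{lemma::maximalineq::supremumEntropy::oneclass}) after asserting $N_\infty(\varepsilon,\mathcal{W}_{n,j})\lesssim N_\infty(\varepsilon/\varepsilon_{n,j},\mathcal{F})$ and performing a change of variables. You instead treat the $\pi$-residual as a singleton $L^2$-slot in Lemma~\ref{lemma::maximalineq::supremumEntropy} and reserve the sup-norm entropy slot for $\mathcal{G}_m$. This is arguably the cleaner route: the paper's $N_\infty$ rescaling requires a $\sup$-norm contraction factor $\|\pi_{n,j}-\pi_0\|_\infty\lesssim\varepsilon_{n,j}$, but Event~\ref{event::pi_n} only supplies an $L^2$ bound; your product-inequality formulation never needs a sup-norm rate for $\pi_{n,j}-\pi_0$ because the small scale $\|\pi_{n,j}-\pi_0\|_{P_0}\lesssim s_n^*$ enters solely through the singleton's $L^2$ envelope. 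Both yield $n^{-1/2}s_n^*\mathcal{J}_\infty(\delta/s_n^*,\mathcal{F})$ once one invokes, as you do, monotonicity of $r\mapsto r\,\mathcal{J}_\infty(\delta/r,\mathcal{F})$. Your remark on handling the $h_m(0)$ offsets in the second display is a minor refinement of the paper's terser "take $\phi_2:=0$" step and does no harm.
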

 
\begin{proof}[Proof of Lemma \ref{lemma::empProcDeltastar4}]
For $\phi \in \mathcal{H}$ and $m \in \{1,2\}$, we denote
\begin{align*}
    \overline{L}_{\mu}^{(m)}(\cdot, \phi) &:= \phi(\cdot)  \cdot \sum_{a \in \mathcal{A}} c_{a,m} (g_m \circ \mu)(a, \cdot)\\
    \overline{L}_{\pi, \mu}^{(m)}(\cdot, \phi)  &:=  \overline{L}_{\mu}^{(m)}(\cdot, \phi) + \overline{\Delta}^{(m)}_{\pi, \mu}(\cdot, \phi). 
\end{align*} 
Observe that the function $h_m$ in the definition of the loss $L_{\pi ,\mu}$ and $\mathcal{H}$ are $L$-Lipschitz continuous for some $L >0$. Hence, by the triangle inequality and linearity of $L_{\pi, \mu}(\cdot, \theta)$ viewed as a mapping in $h_1 \circ \theta$ and $h_2 \circ \theta$, we find
\begin{align*}
   &  E_0^n \left[ \mathbb{I}_{A_n}  \sup_{\theta_1, \theta_2 \in \mathcal{F}: \norm{\theta_1 - \theta_2} \leq \delta } \left|   \left[ (\overline{P}_n - \overline{P}_0)\left(L_{\pi_{n,\diamond}, \mu_{n,\diamond}^*} -  L_{\pi_0, \mu_0} \right)\right]\left[\theta_1 - \theta_2 \right] \right| \right] \\
   & \hspace{1cm} \lesssim \max_{j \in [J], m \in \{1,2\}}   E_0^n\left[ \mathbb{I}_{A_n} \sup_{\phi_1, \phi_2 \in \mathcal{H}: \norm{\phi_1 - \phi_2} \leq L\delta } \left|   \left[ (P_{n,j} - P_0) \left(\overline{L}^{(m)}_{\pi_{n,j}, \mu_{n,j}^*} -  \overline{L}_{\pi_0, \mu_0}^{(m)} \right)(\cdot, \phi_1 - \phi_2)\right]  \right| \right].
\end{align*}
To bound the right-hand side, it suffices to bound the following for a given $j \in [J]$ and $m \in \{1,2\}$:
% For a given $j \in [J]$ and $m \in \{1,2\}$, it suffices to bound the term
$$E_0^n \left[ \mathbb{I}_{A_n}\sup_{\phi_1, \phi_2 \in \mathcal{H}: \norm{\phi_1 - \phi_2} \leq L\delta } \left|    (P_{n,j} - P_0) \left(\overline{L}^{(m)}_{\pi_{n,j}, \mu_{n,j}^*} -  \overline{L}_{\pi_0, \mu_0}^{(m)} \right)(\cdot, \phi_1 - \phi_2)  \right| \right].$$
Towards this goal, we have the expansion:
\begin{align*}
    & \hspace{-2cm} (P_{n,j} - P_0)  \left\{ \overline{L}_{\pi_{n,j}, \mu_{n,j}^*}^{(m)}(\cdot\,,\phi_1 - \phi_2)-  \overline{L}_{\pi_0, \mu_0}^{(m)}(\cdot\,,\phi_1 - \phi_2) \right\}  \\
     &=    (P_{n,j} - P_0)  \left\{\overline{L}^{(m)}_{\pi_{n,j}, \mu_{n,j}^*}(\cdot\,,\phi_1 - \phi_2)-  \overline{L}^{(m)}_{\pi_{n,j}, \mu_0}(\cdot\,, \phi_1 - \phi_2) \right\} \\
     & \quad   +   (P_{n,j}-P_0)  \left\{\overline{L}_{\pi_{n,j}, \mu_0}^{(m)}(\cdot\,, \phi_1 - \phi_2)   - \overline{L}_{\pi_0, \mu_0}^{(m)}(\cdot\,,\phi_1 - \phi_2) \right\} \\
     &\leq  \text{(I)} +  \text{(II)} +  \text{(III)},
\end{align*}   
where, noting that $\overline{L}_{\pi, \mu}^{(m)}(\cdot, \phi)  =  \overline{L}_{\mu}^{(m)}(\cdot, \phi) + \overline{\Delta}^{(m)}_{\pi, \mu}(\cdot, \phi)$, we define:
\begin{align*}
  &   \text{(I)}  := \sup_{\phi_1, \phi_2 \in \mathcal{H}: \|\phi_1 - \phi_2\| \leq L \delta} \left| (P_{n,j} - P_0) \left\{\overline{L}_{\mu_{n,j}^*}^{(m)}(\cdot\,,\phi_1 - \phi_2) - \overline{L}_{\mu_0}^{(m)}(\cdot\,,\phi_1 - \phi_2) \right\} \right|;\\
  & \text{(II)}  := \sup_{\phi_1, \phi_2 \in \mathcal{H}: \|\phi_1 - \phi_2\| \leq L \delta} \left|  (P_{n,j} - P_0) \left\{\overline{\Delta}_{\pi_{n,j}, \mu_{n,j}^*}^{(m)}(\phi_1 - \phi_2) - \overline{\Delta}_{\pi_{n,j}, \mu_0}^{(m)}(\phi_1 - \phi_2) \right\} \right|;\\
  &  \text{(III)}   :=  \sup_{\phi_1, \phi_2 \in \mathcal{H}: \|\phi_1 - \phi_2\| \leq L \delta} \left| (P_{n,j} - P_0) \left\{\overline{L}^{(m)}_{\pi_{n,j}, \mu_0}(\cdot\,,\phi_1 - \phi_2)   - \overline{L}_{\pi_0, \mu_0}^{(m)}(\cdot\,,\phi_1 - \phi_2) \right\} \right|.
\end{align*}
We proceed by bounding in expectation each of the above terms on event $A_n$. To bound (I), take $\phi_1, \phi_2 \in \mathcal{H}$ with $ \| \phi_1 - \phi_2\| \leq L \delta$ and observe that
\begin{align*}
    \overline{L}_{\mu_{n,j}^*}^{(m)}(\cdot\,, \phi_1 - \phi_2) - \overline{L}_{\mu_0}^{(m)}(\cdot\,,\phi_1 - \phi_2) =  (\phi_1 - \phi_2) \sum_{a \in \mathcal{A}} c_{a,m} \left\{g_m \circ \mu_{n,j}^*(a, \cdot)  - g_m \circ \mu_{0}(a, \cdot) \right\}.
\end{align*}
Arguing as in the proof of Lemma \ref{lemma::empProcDeltastar}, using \ref{cond::boundPos}, and applying Lemma \ref{lemma::HighProbSetmunstar}, on event $A_n$, we can almost surely find a function class $\mathcal{E}_{n,j}$ containing $\{g_m \circ \mu_{n,j}^*(a, \cdot)  - g_m \circ \mu_{0}(a, \cdot):  a \in \mathcal{A}\}$ that is deterministic conditional on $\mathcal{D}_n \backslash \mathcal{D}_{n}^j$. Moreover, the class $\mathcal{E}_{n,j}$ can be chosen such that $\mathcal{J}(\delta, \mathcal{E}_{n,j}) \lesssim  \delta \sqrt{\log(1/\delta) k(n)} \lesssim \delta \sqrt{k(n) \log n}$ and $\norm{\mathcal{E}_{n,j}} \lesssim r_n^*$. Hence, on the event $A_n$, $ \overline{L}_{\mu_{n,j}^*}^{(m)}(\cdot\,, \phi_1 - \phi_2) - \overline{L}_{\mu_0}^{(m)}(\cdot\,,\phi_1 - \phi_2)$ is contained in the product class $\mathcal{G}\mathcal{E}_{n,j} := \{gh: g\in \mathcal{G}, h \in \mathcal{E}_{n,j}\}$ where $\mathcal{G} := \{\phi_1 - \phi_2: \phi_1, \phi_2 \in \mathcal{H}, \| \phi_1 - \phi_2\| \leq L \delta \}$.  Arguing as in Lemma \ref{lemma::empProcDeltastar2} and working on event $A_n$, we can apply Theorem 2.10.20 in \cite{vanderVaartWellner} and Lemma \ref{lemma::maximalineq::supremumEntropy::cor} to bound \text{(I)} in expectation as
\begin{align}
   E_0^n \left[ \mathbb{I}_{A_n}\cdot \text{(I)} \right]& \lesssim   n^{-1/2} r_n^* \mathcal{J}_{\infty} \left( \delta / r_n^*  ,  \mathcal{F} \right) \nonumber  + r_n^* \delta_{\infty} \sqrt{k(n) \log n / n}.  \label{eqn::lemma::eqnA}
\end{align}
Next, to bound (II), we have, as a direct consequence of Lemma \ref{lemma::empProcDeltastar2} applied to the function class $\{\phi_1, \phi_2 \in \mathcal{H}: \|\phi_1 - \phi_2\| \leq L \delta\}$, that
\begin{align*}
 E_0^n \left[ \mathbb{I}_{A_n}\cdot \text{(II)}\right] \lesssim    n^{-1/2} r_n^* \mathcal{J}_{\infty} \left(  \delta  / r_n^*  , \mathcal{F}\right)   + r_n^* \delta_{\infty} \sqrt{k(n) \log n / n}.   
\end{align*} 
Lastly, to bound (III), we use that
\begin{align*}
  & \hspace{-1cm} \sup_{\phi_1, \phi_2 \in \mathcal{H}: \|\phi_1 - \phi_2\| \leq L \delta}  \left| (P_{n,j} - P_0)  \left\{\overline{L}^{(m)}_{\pi_{n,j}, \mu_0}(\cdot\,,\phi_1- \phi_2)   - \overline{L}^{(m)}_{\pi_0, \mu_0}(\cdot\,,\phi_1 - \phi_2) \right\}\right|\\
   & = \sup_{\phi_1, \phi_2 \in \mathcal{H}: \|\phi_1 - \phi_2\| \leq L \delta}  \left|(P_{n,j} - P_0) \left\{\overline{\Delta}^{(m)}_{\pi_{n,j}, \mu_0}(\cdot\,,\phi_1 - \phi_2) - \overline{\Delta}^{(m)}_{\pi_0, \mu_0}(\cdot\,,\phi_1 - \phi_2)\right\} \right|.
\end{align*}
We will argue along similar lines as the proof of Lemma \ref{lemma::empProcDeltastar2}. We have $\overline{\Delta}_{\pi_{n,j}, \mu_0}^{(m)}(\cdot\,, \phi_1 - \phi_2) - \overline{\Delta}_{\pi_0, \mu_0}^{(m)}(\cdot\,,\phi_1 - \phi_2)$ is fixed conditional on $\mathcal{D}_n \backslash \mathcal{D}_{n}^j$. Moreover, by \ref{cond::positivity} and \ref{cond::boundedEStimators}, this function can be expressed as a Lipschitz transformation of the function class $\mathcal{W}_{n,j} := \{(\pi_{n,j} - \pi_0)(a \miid \cdot)(\phi_1 - \phi_2): a \in \mathcal{A}: \phi_1, \phi_2 \in \mathcal{H}\}$. Denote the rate of \ref{cond::outcomerateFirst} for $\|\pi_{n,j} - \pi_0\|$ as $\varepsilon_{n,j} := n^{-\gamma/(2\gamma+1)}$. Working on Event \ref{event::pi_n} and applying Lemma \ref{lemma::maximalineq::supremumEntropy::oneclass} conditionally on $\mathcal{D}_n \backslash \mathcal{D}_{n}^j$ to $\mathcal{W}_{n,j}$, we obtain the bound:
\begin{align*}
    E_0^n\left[ \mathbb{I}_{A_n} \cdot \text{(III)} \right] &\lesssim  n^{-1/2} \int_0^{\varepsilon_{n,j}} \sqrt{\log N_{\infty}(\varepsilon,\mathcal{W}_{n,j} )} d\varepsilon \nonumber \\
    & \lesssim n^{-1/2}\varepsilon_{n,j} \mathcal{J}_{\infty} \left(\delta/\varepsilon_{n,j}, \mathcal{F}\right)\\
     & \lesssim n^{-1/2}s_n^* \mathcal{J}_{\infty} \left(\delta/s_n^*, \mathcal{F}\right),
\end{align*}  
where the second inequality follows from $N_{\infty}(\varepsilon,\mathcal{W}_{n,j} ) \lesssim N_{\infty}(\varepsilon / \varepsilon_{n,j},\mathcal{F})$ and a change of variables -- see the proof of Lemma \ref{lemma::maximalineq::supremumEntropy::cor} for a related argument.

Now, we note that $n^{-1/2} r_n^* \mathcal{J}_{\infty} \left(\delta / r_n^*  , \mathcal{F}\right) \lesssim n^{-1/2} s_n^* \mathcal{J}_{\infty} \left(\delta / s_n^*  , \mathcal{F}\right) $ since $r_n^* \leq s_n^*$. Using this, combining the above bounds for (I), (II), and (III), and applying the triangle inequality, we finally obtain:
\begin{align*}
    E_0^n \left[ \mathbb{I}_{A_n}\cdot \text{(I)} \right] +  E_0^n \left[ \mathbb{I}_{A_n}\cdot \text{(II)} \right] +  E_0^n  \left[ \mathbb{I}_{A_n}\cdot \text{(III)} \right] & \lesssim  n^{-1/2}s_n^* \mathcal{J}_{\infty} \left(\delta/s_n^*, \mathcal{F}\right)  + r_n^* \delta_{\infty} \sqrt{k(n) \log n / n},
\end{align*}
as desired.

\end{proof}

 The next lemmas bound several second-order remainders that appear in our proofs. Recall that $\mathbb{I}_{g_1, g_2}$ is the indicator that takes the value $0$ if both $g_1$ and $g_0$ are the identity function. % For any functional $R:\mathcal{F} \rightarrow \mathbb{R}$, we use the notation $\left[ R\right] \left[\theta_1 - \theta_2 \right] := R(\theta_1) - R(\theta_2)$ for $\theta_1, \theta_2 \in \mathcal{F}$.

\begin{lemma}
%Let $\mathcal{F}$ be any function class and let $\mathcal{H}:= \{h_1 \circ \theta, h_2 \circ \theta: \theta \in \mathcal{F}\}$
Assume the conditions of Theorem \ref{theorem::EpLearnerRate}. On event $A_n$, for any $\theta_1, \theta_2 \in \mathcal{F}$ and $\delta > n^{-1/2}$, we have
    \begin{align*}
        &\hspace{-2cm} \sup_{\theta_1, \theta_2 \in \mathcal{F}: \|\theta_1 - \theta_2\| \leq \delta}\left|  R_{\pi_{n, \diamond} \mu_{n,\diamond}^*}(\theta_2) -  R_{\pi_0, \mu_0}(\theta_2)   - R_{\pi_{n, \diamond} \mu_{n,\diamond}^*}(\theta_1)  +  R_{\pi_0, \mu_0}(\theta_1) \right|   \\
        % & \lesssim  \max_{m \in \{1,2\}}\max_{j \in [J]}  \Big\{P_0 \left| (h_m \circ \theta_1 - h_m \circ \theta_2)  (\mu_0  - \mu_{n,j}^* ) (\pi_0  - \pi_{n,j} ) \right|\\
        % & \hspace{3cm}  + \mathbb{I}_{g_1, g_2}   P_0 \left|(h_m \circ \theta_1 - h_m \circ \theta_2)(\mu_{n,j}^*  - \mu_{0} )^2 \right|\Big\} .\\
           & \lesssim \sup_{\theta_1, \theta_2 \in \mathcal{F}: \norm{\theta_1 - \theta_2} \leq \delta} \left\{r_n^* n^{-\gamma/(2\gamma+1)}  +  \mathbb{I}_{g_1, g_2} \cdot  (r_n^*)^2 \right\}.
 \end{align*}
  Similarly, on event $A_n$, we have
  \begin{align*}
        &\hspace{-2cm} \sup_{\theta \in \mathcal{F}: \|\theta | \leq \delta}\left|  R_{\pi_{n, \diamond} \mu_{n,\diamond}^*}(\theta) -  R_{\pi_0, \mu_0}(\theta)  \right|    \lesssim  \sup_{\theta \in \mathcal{F}: \norm{\theta} \leq \delta} \norm{\theta}_{\infty}\left\{r_n^* n^{-\gamma/(2\gamma+1)}  +  \mathbb{I}_{g_1, g_2} \cdot  (r_n^*)^2 \right\}.
 \end{align*}
 \label{lemma::P0boundDR}
\end{lemma}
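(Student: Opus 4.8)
The plan is to obtain an exact closed-form expression for the population bias $R_{\pi_{n,\diamond},\mu_{n,\diamond}^*}(\theta)-R_0(\theta)$ of the (fold-averaged) orthogonalized population risk $R_{\pi,\mu}(\theta):=P_0\{L_\mu(\cdot,\theta)+\Delta_{\pi,\mu}(\cdot;\theta)\}=P_0L_{\pi,\mu}(\cdot,\theta)$, and to read off from it that every term is second order in the nuisance errors. Fix a fold $j\in[J]$ and abbreviate $\pi:=\pi_{n,j}$, $\mu:=\mu_{n,j}^*$; the fold-averaged bound then follows from the triangle inequality. Since $R_{\pi_0,\mu_0}(\theta)=P_0L_{\mu_0}(\cdot,\theta)+P_0\Delta_{\pi_0,\mu_0}(\cdot;\theta)=R_0(\theta)$, the second equality holding because $D_{P_0,\theta}$ is the efficient influence function (Theorem~\ref{theorem::generalEIF}) and hence has $P_0$-mean zero (equivalently, because $E_0[Y-\mu_0(A,W)\mid A,W]=0$), I would condition the $\Delta$-term on $W$ and integrate out $(A,Y)$ to get
\begin{equation*}
R_{\pi,\mu}(\theta)-R_0(\theta)=\sum_{m\in\{1,2\}}E_0\!\Big[h_m(\theta(W))\sum_{a\in\mathcal{A}}c_{a,m}\,\rho_m(a,W)\Big],
\end{equation*}
with $\rho_m(a,w):=g_m(\mu(a,w))-g_m(\mu_0(a,w))+\frac{\pi_0(a\miid w)}{\pi(a,w)}\dot g_m(\mu(a,w))\{\mu_0(a,w)-\mu(a,w)\}$.

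The crux is the elementary splitting $\rho_m(a,w)=\int_{\mu(a,w)}^{\mu_0(a,w)}\{\dot g_m(t)-\dot g_m(\mu(a,w))\}\,dt+\frac{(\pi_0-\pi)(a\miid w)}{\pi(a,w)}\dot g_m(\mu(a,w))\{\mu_0(a,w)-\mu(a,w)\}$, obtained by adding and subtracting $\dot g_m(\mu(a,w))\{\mu_0(a,w)-\mu(a,w)\}$. Because $\dot g_m$ is Lipschitz, the first (Taylor-remainder) piece is $O(\{\mu_0(a,w)-\mu(a,w)\}^2)$ pointwise, and it vanishes identically when $g_m$ is affine --- in particular whenever both $g_1$ and $g_2$ are the identity, which is precisely the situation encoded by $\mathbb{I}_{g_1,g_2}$. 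The second piece is a pointwise product of a propensity-score error and an outcome-regression error. Using \ref{cond::positivity} to bound $1/\pi_{n,j}$ and $1/\pi_0$, \ref{cond::boundedEStimators} to bound $\mu_{n,j}^*$ and $\mu_0$ (hence $\dot g_m(\mu)$, by continuity, on a compact range), and \ref{cond::regularityOnActionSpace} with continuity of $h_m$ to control $h_m\circ\theta$, the Cauchy--Schwarz inequality then bounds $|R_{\pi,\mu}(\theta)-R_0(\theta)|$ by a constant multiple of $(\norm{h_1\circ\theta}_\infty+\norm{h_2\circ\theta}_\infty)\{\norm{\pi_{n,j}-\pi_0}\,\norm{\mu_{n,j}^*-\mu_0}+\mathbb{I}_{g_1,g_2}\norm{\mu_{n,j}^*-\mu_0}^2\}$, after summing the finitely many terms over $a$ and $m$.

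It remains to substitute the rates available on the event $A_n$: Events~\ref{event::pi_n} and~\ref{event::debias_mu_n_star} give $\norm{\pi_{n,j}-\pi_0}\lesssim n^{-\gamma/(2\gamma+1)}$ and $\norm{\mu_{n,j}^*-\mu_0}\lesssim r_n^*$ uniformly in $j$, so the braced factor is $\lesssim r_n^*n^{-\gamma/(2\gamma+1)}+\mathbb{I}_{g_1,g_2}(r_n^*)^2$. For the second display of the lemma one then bounds $\norm{h_m\circ\theta}_\infty$ using that $h_m$ is Lipschitz and $\mathcal{F}$ is uniformly bounded (for the CATE-type losses $h_m(0)=0$, which yields the sharper factor $\norm{\theta}_\infty$ appearing on the right-hand side), and takes the supremum over $\norm{\theta}\le\delta$. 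For the first, difference-form display one applies the expansion above to $\theta_1$ and to $\theta_2$ and subtracts, so that $h_m\circ\theta$ is replaced by $h_m\circ\theta_2-h_m\circ\theta_1$, which is bounded by a fixed constant since $h_m$ is continuous and $\mathcal{F}$ is uniformly bounded; this is why the right-hand side does not genuinely depend on $\delta$. I expect the only step requiring real care to be the second paragraph: correctly reading off from the efficient influence function that the bias reduces to a $\norm{\pi-\pi_0}\norm{\mu^*-\mu_0}$ cross term plus --- only when some $g_m$ is nonlinear --- a $\norm{\mu^*-\mu_0}^2$ term, and in particular verifying that the latter is genuinely absent exactly when $g_1,g_2$ are both the identity, so the $\mathbb{I}_{g_1,g_2}$ bookkeeping is tight; the remaining Cauchy--Schwarz and boundedness arguments, which also invoke \ref{cond::boundPos} and Events~\ref{event::mu_n}--\ref{event::boundedCoef}, are routine.
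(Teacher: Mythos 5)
Your proposal is correct and uses essentially the same key idea as the paper: after conditioning on $W$ to integrate out $(A,Y)$, the plug-in and debiasing contributions combine so that the first-order term in $\mu_{n,j}^{*}-\mu_0$ cancels, leaving exactly a Taylor remainder in $g_m$ (quadratic in $\mu_{n,j}^{*}-\mu_0$, vanishing when $g_m$ is affine) plus a cross term $\propto\|\pi_{n,j}-\pi_0\|\,\|\mu_{n,j}^{*}-\mu_0\|$. The paper organizes this as a direct split of $P_0\{\overline{L}^{(m)}_{\pi_{n,j},\mu_{n,j}^*}-\overline{L}^{(m)}_{\mu_0}\}$ into its (I)+(II) terms; you instead compute the closed-form bias $\rho_m(a,w)$ and split it by adding and subtracting $\dot g_m(\mu)(\mu_0-\mu)$. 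These are the same calculation. Your integral formula for $\rho_m$ has a dropped minus sign on the Taylor-remainder piece, but that is immaterial since only $|\rho_m|$ is used. The Cauchy--Schwarz bound, the invocation of \ref{cond::positivity}, \ref{cond::boundedEStimators}, and Events \ref{event::mu_n}, \ref{event::pi_n}, \ref{event::debias_mu_n_star}, and the bookkeeping that makes the quadratic term carry the factor $\mathbb{I}_{g_1,g_2}$ all match the paper.

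One point you raise deserves emphasis rather than being a parenthetical: the $\|\theta\|_\infty$ factor in the lemma's second display genuinely requires something like $h_m(0)=0$. In the CRR example, $h_1(\theta)=\log(1+e^\theta)$ has $h_1(0)=\log 2\neq 0$, so the bound naively produces $\|h_m\circ\theta\|_\infty$ which does not reduce to $\|\theta\|_\infty$. The paper's proof does not address this (it just says ``identical proof with $\phi_2:=0$''), and your reading that this is a subtlety is correct --- although your claim that $h_m(0)=0$ ``for the CATE-type losses'' should be stated as a restriction, not a general fact. In practice this is harmless: wherever the lemma is used, the second display is either applied with $\|\theta\|_\infty$ absorbed into a constant (Theorem \ref{theorem::EPriskEff}) or replaced by the difference form with $\|\theta_1-\theta_2\|_\infty$ (Lemma \ref{lemma::modulusOfCont}), which requires only Lipschitz continuity of $h_m$ and no constraint on $h_m(0)$.
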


\begin{proof}[Proof of Lemma \ref{lemma::P0boundDR}]

   We work on the event $A_n$, on which, by definition, events \ref{event::mu_n}-\ref{event::boundedCoef} also occur. As in the proof of Lemma \ref{lemma::empProcDeltastar4},  for $\phi \in \mathcal{H}$ and $m \in \{1,2\}$, we denote
$\overline{L}_{\pi, \mu}^{(m)}(\cdot, \phi)  :=  \phi  \cdot \sum_{a \in \mathcal{A}} c_{a,m} (g_m \circ \mu)(a, \cdot) + \overline{\Delta}^{(m)}_{\pi, \mu}(\cdot, \phi) $. To begin, observe that 
   \begin{align*}
    R_{\pi_{n, \diamond} \mu_{n,\diamond}^*}(\theta) -  R_{\pi_0, \mu_0}(\theta)   &=  J^{-1} \sum_{j \in [J]} P_0 \left\{L_{\pi_{n,j}, \mu_{n,j}^*}(\cdot\,,\theta)-  L_{\pi_0, \mu_0}(\cdot\,,\theta) \right\}\\
      &= J^{-1}\sum_{j \in [J]}  \sum_{m \in \{1,2\}} P_0 \left\{\overline{L}_{\pi_{n,j}, \mu_{n,j}^*}^{(m)}(\cdot\,,h_m \circ \theta)-  \overline{L}_{\mu_0}^{(m)}(\cdot\,,h_m \circ \theta) \right\},
   \end{align*} 
   where, for each $m \in \{1,2\}$, the right-hand side is linear in $h_m \circ \theta$. Moreover, since each $h_m$ is Lipschitz continuous, we have $\|\theta_1 - \theta_2\| \leq \delta$ for $\theta_1 , \theta_2 \in \mathcal{F}$ implies, for some $L >0$, that $\|h_m \circ \theta_1 - h_m \circ \theta_2\| \leq L \delta$. Hence, taking the supremum over the previous display, we find  that
   \begin{align*}
      & \sup_{\theta_1, \theta_2 \in \mathcal{F}: \|\theta_1 - \theta_2\| \leq \delta}\left|  R_{\pi_{n, \diamond} \mu_{n,\diamond}^*} -  R_{\pi_0, \mu_0}   - R_{\pi_{n, \diamond} \mu_{n,\diamond}^*}  +  R_{\pi_0, \mu_0} \right|\left[\theta_1- \theta_2 \right] \\
      & \hspace{1cm} \lesssim  \max_{j \in [J], m \in \{1,2\}}\sup_{\phi_1, \phi_2 \in \mathcal{H}: \|\phi_1 - \phi_2\| \leq L\delta} \left|P_0 \left\{\overline{L}_{\pi_{n,j}, \mu_{n,j}^*}^{(m)}(\cdot\,, \phi_1 - \phi_2)-  \overline{L}_{ \mu_0}^{(m)}(\cdot\,, \phi_1 - \phi_2) \right\} \right|,
   \end{align*}
   It suffices to bound, for a given $j \in [J]$ and $m \in \{1,2\}$,
   $$P_0 \left\{\overline{L}_{\pi_{n,j}, \mu_{n,j}^*}^{(m)}(\cdot\,, \phi_1 - \phi_2)-  \overline{L}_{ \mu_0}^{(m)}(\cdot\,, \phi_1 - \phi_2) \right\}$$
   where $\phi_1, \phi_2 \in \mathcal{H}$ are such that $\| \phi_1 - \phi_2\| \leq L \delta$. 
   
   %In the following, we will abuse notation and implicitly condition on the data $\{O_1, O_2, \dots, O_n\}$ when applying the expectation operator $E_0$. 
   Plugging in the definition of the loss functions, we find for $\phi := \phi_1 - \phi_2$ that
\begin{align*}
   & \hspace{-0.5cm} P_0 \left\{\overline{L}^{(m)}_{\pi_{n,j}, \mu_{n,j}^*}(\cdot\,,\phi)-  \overline{L}^{(m)}_{\mu_0}(\cdot\,,\phi) \right\} \\
     & = E_0\left[ \overline{L}^{(m)}_{\mu_{n,j}^*}(O,\, \phi) - \overline{L}^{(m)}_{\mu_0}(O\,,\phi) \miid \mathcal{D}_n \right]   \\
     &\quad+  \sum_{a \in \mathcal{A}} E_0 \left[ \frac{1(A=a)}{\pi_{n,j}(A \miid W)}  \phi(W)  c_{a,m}(\dot{g}_m\circ \mu_{n,j}^*)(a,W)     \left\{ Y - \mu_{n,j}^*(A, W)\right\}  \miid \mathcal{D}_n \right]\\
      & = E_0\left[ \overline{L}^{(m)}_{\mu_{n,j}^*}(O\,,\phi) - \overline{L}^{(m)}_{\mu_0}(O\,,\phi) \miid \mathcal{D}_n \right]  \\
      & \quad\quad + \sum_{a \in \mathcal{A}} E_0 \left[ \frac{\pi_0(a \miid W)}{\pi_{n,j}(a \miid W)} \phi(W)    c_{a,m}(\dot{g}_m\circ \mu_{n,j}^*)(a,W)    \left\{ \mu_0(a,W) - \mu_{n,j}^*(a, W)\right\} \miid \mathcal{D}_n  \right]\\
      & = \text{(I)} + \text{(II)},
\end{align*}
where we denote:
\begin{align*}
    \text{(I)} &:= E_0\left[ \overline{L}^{(m)}_{\mu_{n,j}^*}(O,\phi) - \overline{L}^{(m)}_{\mu_0}(O\,,\phi) \miid \mathcal{D}_n\right]   \\
    &\quad\;+ \sum_{a \in \mathcal{A}} E_0 \left[      \phi(W)  c_{a,m}(\dot{g}_m\circ \mu_{n,j}^*)(a,W)   \left\{ \mu_0(a,W) - \mu_{n,j}^*(a, W)\right\} \miid \mathcal{D}_n  \right],\\
     \text{(II)} &:=     \sum_{a \in \mathcal{A}} E_0 \left[\phi(W) c_{a,m}(\dot{g}_m\circ \mu_{n,j}^*)(a,W)  \left\{1 - \frac{\pi_0(a \miid W)}{\pi_{n,j}(a \miid W)} \right\}        \left\{ \mu_0(a,W) - \mu_{n,j}^*(a, W)\right\} \miid \mathcal{D}_n  \right].
\end{align*}
Applying Cauchy-Schwarz, \ref{cond::boundedEStimators}, \ref{cond::positivity}, and \ref{cond::regularityOnActionSpace}, term (II) can be upper bounded in absolute value by 
\begin{align*}
    |\text{(II)}| &\lesssim  \max_{a \in \mathcal{A}}\max_{j \in [J]}  P_0 \left| \phi  \{\mu_0(a,\cdot) - \mu_{n,j}(a, \cdot)\} \{\pi_0(a\miid \cdot) - \pi_{n,j}(a \miid  \cdot)\} \right| \\
    &\lesssim   \max_{j \in [J]}  P_0 \left|\phi  \{\mu_0  - \mu_{n,j} \} \{\pi_0  - \pi_{n,j} \} \right| ,
\end{align*}
where the final inequality follows from Condition \ref{cond::positivity} and finiteness of $\mathcal{A}$. 
To bound term (I), note
\begin{align*}
  \text{(I) } & =    \sum_{a \in \mathcal{A}} E_0 \left[  \phi(W) c_{a,m} \left\{ (g_m \circ \mu_{n,j}^*)(a,W)   - (g_m \circ \mu_0)(a,W)\right\} \miid \mathcal{D}_n\right]\\
   & \quad  -   \sum_{a \in \mathcal{A}} E_0 \left[  \phi(W)        c_{a,m}(\dot{g}_m\circ \mu_{n,j}^*)(a,W)   \left\{\mu_{n,j}^*(a, W)- \mu_0(a,W) \right\} \miid \mathcal{D}_n \right].
\end{align*}
First, note that the right-hand side simplifies to zero in the case where $\mathbb{I}_{g_1, g_2} = 0$, since $g_m \circ \mu = \mu$ and $\dot{g}_m \circ \mu = 1 $. Recall that $g_m$ is twice differentiable with Lipschitz continuous derivative. Hence, when $\mathbb{I}_{g_1, g_2} = 1$, using \ref{cond::regularityOnActionSpace}, \ref{cond::boundedEStimators}, \ref{cond::positivity}, and a bound on the remainder of a first-order Taylor expansion of $g_m$, we can continue the previous display as:
\begin{align*}
    \text{(I)} &=    \sum_{a \in \mathcal{A}} E_0 \Bigg[    \phi(W) c_{a,m} \Big\{ (g_m \circ \mu_{n,j}^*)(a,W)   - (g_m \circ \mu_0)(a,W) \\
    &\hspace{10em}- (\dot{g}_m\circ \mu_{n,j}^*)(a,W)\left(\mu_{n,j}^*(a, W)- \mu_0(a,W) \right)\Big\} \,\Big|\, \mathcal{D}_n\Bigg]\\
     & \lesssim \mathbb{I}_{g_1, g_2}   \max_{j \in [J]} P_0 \left| \phi \{\mu_{n,j}^*(a, \cdot ) - \mu_{0}(a, \cdot )\}^2 \right| \\  & \lesssim \mathbb{I}_{g_1, g_2}  \max_{j \in [J]} P_0 \left| \phi \cdot (\mu_{n,j}^*  - \mu_{0})^2 \right| .
\end{align*}
Combining the above bounds and recalling that $\phi = \phi_1 - \phi_2$, we find
 \begin{align*}
         \left|\text{(I)} + \text{(II)}  \right| & \lesssim \max_{j \in [J]}  P_0 \left| (\phi_1-\phi_2) \cdot (\mu_0  - \mu_{n,j} )(\pi_0  - \pi_{n,j} )\right| +   \mathbb{I}_{g_1, g_2}   \max_{j \in [J]} P_0 \left|(\phi_1-\phi_2) (\mu_{n,j}^*  - \mu_{0} )^2 \right|\\
        &  \lesssim  \| \phi_1-\phi_2\|_{\infty} \left\{\max_{j \in [J]}\|\mu_0  - \mu_{n,j}\| \|\pi_0  - \pi_{n,j}\| +   \mathbb{I}_{g_1, g_2}   \max_{j \in [J]} \|\mu_{n,j}^*  - \mu_{0} \|^2   \right\}\\
        &  \lesssim  \delta_{\infty} \left\{\max_{j \in [J]}\|\mu_0  - \mu_{n,j}\| \|\pi_0  - \pi_{n,j}\| +   \mathbb{I}_{g_1, g_2}   \max_{j \in [J]} \|\mu_{n,j}^*  - \mu_{0} \|^2   \right\}\\
       & \lesssim  \delta_{\infty} \left\{r_n^* n^{-\gamma/(2\gamma+1)}  +  \mathbb{I}_{g_1, g_2}  (r_n^*)^2 \right\}.
 \end{align*}
 where the second inequality follows from Cauchy-Schwarz and Events \ref{event::mu_n}-\ref{event::debias_mu_n_star}. The first bound of the lemma then follows. The second bound follows from an identical proof, taking $\phi_1 := \phi$ and $\phi_2 := 0$ for an arbitrary $\phi \in \mathcal{F}$ with $\| \phi\| \leq \delta$.

\end{proof}

\begin{lemma}
Assume the conditions of Theorem \ref{theorem::EpLearnerRate}. On event $A_n$, for any uniformly bounded function class $\mathcal{G}$, we have
\begin{align*}
 \sup_{\phi \in \mathcal{G}} \left|\overline{P}_0\left\{   \overline{\Delta}^{(m)}_{\pi_{n,\diamond}, \mu_{n,\diamond}^*}(\cdot\,, \phi) -   \overline{\Delta}_{\pi_{n,\diamond}, \mu_{n,\diamond}}^{*(m)}(\cdot\,, \phi)\right\} \right| %&\lesssim \max_{j \in [J]} P_0\big |(\Pi_{k(n)}\phi) (\mu_{n,j}^* - \mu_{n,j})(\mu_{n,j}^* - \mu_0)\big |\\
&\lesssim  \sup_{\phi \in \mathcal{G}} \norm{\phi}_{\infty}  (r_n^*)^2.
\end{align*}
 \label{lemma::P0boundSieve1}
\end{lemma}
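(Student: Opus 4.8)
The plan is to treat this as a standard product-form second-order bias bound that exploits a shared-residual cancellation. First I would note that the two debiasing terms being compared differ only in their weight factors, since both carry the residual of the \emph{debiased} regression $\mu_{n,j}^{*}$: for each $j \in [J]$ and $o = (w,a,y)$,
\[
\overline{\Delta}^{(m)}_{\pi_{n,j}, \mu_{n,j}^*}(o, \phi) - \overline{\Delta}^{*(m)}_{\pi_{n,j}, \mu_{n,j}}(o, \phi) = \frac{\phi(w)\,\{y - \mu_{n,j}^{*}(a,w)\}}{\pi_{n,j}(a\miid w)}\,\bigl\{H_{m,\mu_{n,j}^*}(a,w) - H_{m,\mu_{n,j}}(a,w)\bigr\}.
\]
Conditioning on the full sample $\mathcal{D}_n$ (so that $\pi_{n,j}$, $\mu_{n,j}$, $\mu_{n,j}^{*}$ are all frozen), I would compute the $P_0$-expectation in two stages: first over $Y$ given $(A,W)$, using $E_0[Y \miid A, W] = \mu_0(A,W)$ to replace $y - \mu_{n,j}^{*}(a,w)$ by $\mu_0(a,w) - \mu_{n,j}^{*}(a,w)$; then summing over the finite set $\mathcal{A}$ by conditioning on $W$, which converts the weight $1/\pi_{n,j}(a \miid W)$ into the ratio $\pi_0(a \miid W)/\pi_{n,j}(a \miid W)$, bounded by a constant under \ref{cond::positivity}.

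Next I would bound the weight discrepancy via Lipschitz-continuity of $\dot{g}_m$ (available since $g_m$ has a Lipschitz derivative and $\mu_{n,j}, \mu_{n,j}^{*}$ are uniformly bounded by \ref{cond::boundedEStimators}) together with the fixed known constants $c_{a,m}$, giving $|H_{m,\mu_{n,j}^*}(a,w) - H_{m,\mu_{n,j}}(a,w)| \lesssim |\mu_{n,j}^{*}(a,w) - \mu_{n,j}(a,w)|$. Applying Cauchy--Schwarz pointwise in $W$ and using \ref{cond::positivity} and the finiteness of $\mathcal{A}$ to pass from $L^2(P_{0,W})$ norms of individual $a$-slices to the $L^2(P_0)$ norm (as is done elsewhere in the paper), I would obtain
\[
\bigl|P_0\bigl\{\overline{\Delta}^{(m)}_{\pi_{n,j}, \mu_{n,j}^*}(\cdot, \phi) - \overline{\Delta}^{*(m)}_{\pi_{n,j}, \mu_{n,j}}(\cdot, \phi)\bigr\}\bigr| \lesssim \|\phi\|_{\infty}\,\|\mu_{n,j}^{*} - \mu_0\|\,\|\mu_{n,j}^{*} - \mu_{n,j}\|.
\]
On the event $A_n$, event \ref{event::debias_mu_n_star} gives $\|\mu_{n,j}^{*} - \mu_0\| \lesssim r_n^*$, while the triangle inequality with \ref{event::mu_n} and \ref{event::debias_mu_n_star} gives $\|\mu_{n,j}^{*} - \mu_{n,j}\| \le \|\mu_{n,j}^{*} - \mu_0\| + \|\mu_{n,j} - \mu_0\| \lesssim r_n^* + n^{-\beta/(2\beta+1)} \lesssim r_n^*$. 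Hence each fold-specific term is $\lesssim \|\phi\|_{\infty} (r_n^*)^2$; averaging over $j \in [J]$ via $\overline{P}_0 = J^{-1}\sum_{j \in [J]} P_0$ and taking the supremum over $\phi \in \mathcal{G}$ gives the stated bound.

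I do not expect a genuine obstacle here: the estimate is a textbook second-order remainder bound, and the only care needed is bookkeeping — conditioning on $\mathcal{D}_n$ so that all nuisances are deterministic, and invoking $A_n$ (together with \ref{cond::positivity}--\ref{cond::boundedEStimators}) at the right places so that the propensity ratio is $O(1)$ and both outcome-regression discrepancies are $O(r_n^*)$. The point worth stressing is structural rather than technical: precisely because $\overline{\Delta}^{(m)}_{\pi_{n,j}, \mu_{n,j}^*}$ and $\overline{\Delta}^{*(m)}_{\pi_{n,j}, \mu_{n,j}}$ share the residual of the same debiased regression $\mu_{n,j}^{*}$, their difference decouples into the product of two separately small factors, $\|\mu_{n,j}^{*} - \mu_{n,j}\|$ and $\|\mu_{n,j}^{*} - \mu_0\|$, rather than leaving a first-order term — which is what yields the $(r_n^*)^2$ rate.
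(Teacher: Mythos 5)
Your proposal is correct and follows essentially the same route as the paper: condition on $\mathcal{D}_n$, take iterated expectations to replace $y - \mu_{n,j}^*$ by $\mu_0 - \mu_{n,j}^*$, use Lipschitz continuity of $\dot g_m$ to bound the weight discrepancy by $|\mu_{n,j}^* - \mu_{n,j}|$, and close with Cauchy--Schwarz and the event bounds. The only cosmetic difference is that the paper first splits $\mu_{n,j}^* - \mu_{n,j} = (\mu_{n,j}^* - \mu_0) - (\mu_{n,j} - \mu_0)$ inside the expectation before applying Cauchy--Schwarz, producing the two terms $\|\mu_{n,j}^* - \mu_0\|^2$ and $\|\mu_{n,j}-\mu_0\|\,\|\mu_{n,j}^* - \mu_0\|$, whereas you apply Cauchy--Schwarz directly to get $\|\mu_{n,j}^* - \mu_0\|\,\|\mu_{n,j}^* - \mu_{n,j}\|$ and then invoke the triangle inequality on the second factor; both yield $(r_n^*)^2$.
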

\begin{proof}[Proof of Lemma \ref{lemma::P0boundSieve1}]
In the following, we work on event $A_n$. By definition and the law of iterated expectations, for $\phi \in \mathcal{G}$, we have
\begin{align*}
    \overline{P}_0 \overline{\Delta}^{*(m)}_{\pi_{n,\diamond}, \mu_{n,\diamond}}(\cdot\,, \phi) &= \frac{1}{J}\sum_{j \in [J]} \int \frac{ 1}{\pi_{n,j}(a\miid w)} \left\{   H_{m,\mu_{n,j}}(a,w) \cdot \phi(w)  \right\}\left[y - \mu_{n,j}^{*}(a,w) \right] dP_0(o)\\
     &= \frac{1}{J}\sum_{j \in [J]} \int \frac{ 1}{\pi_{n,j}(a\miid w)} \left\{   H_{m,\mu_{n,j}}(a,w) \cdot \phi(w)  \right\}\left[\mu_0(a,w) - \mu_{n,j}^{*}(a,w) \right] dP_0(o).
\end{align*} 
Similarly,
\begin{align*}
    \overline{P}_0 \overline{\Delta}^{(m)}_{\pi_{n,\diamond}, \mu_{n,\diamond}^*}(\cdot\,, \phi) &= \frac{1}{J}\sum_{j \in [J]} \int \frac{ 1}{\pi_{n,j}(a\miid w)} \left\{   H_{m,\mu_{n,j}^*}(a,w) \cdot \phi(w)  \right\}\left[\mu_0(a,w) - \mu_{n,j}^{*}(a,w) \right] dP_0(o).
\end{align*} 

Next, for each $j \in [J]$, recall from the proof of Lemma \ref{lemma::empProcDeltastar} that 
\begin{equation}
\left|H_{m,\mu_{n,j}^{*}}(a,w) -  H_{m,\mu_{n,j}}(a,w) \right| \lesssim \left|\mu_{n,j}^{*}(a,w) - \mu_{n,j}(a,w) \right|.
\label{proofeqn::HLipschitz}
\end{equation}
Hence, by Cauchy-Schwarz and \ref{cond::positivity}, the above display implies
\begin{align*}
&\left|\overline{P}_0\left\{   \overline{\Delta}^{(m)}_{\pi_{n,\diamond}, \mu_{n,\diamond}^*}(\cdot\,;  \phi) -   \overline{\Delta}_{\pi_{n,\diamond}, \mu_{n,\diamond}}^{*(m)}(\cdot\,;  \phi)\right\} \right| \\
&\lesssim \max_{j \in [J]} \left\{P_0\big | \phi \left(\mu_{n,j}^* - \mu_{n,j}\right)\left(\mu_{n,j}^* - \mu_0\right) \big | \right\}\\
& \lesssim  \sup_{\phi \in \mathcal{G}} \norm{\phi}_{\infty}  \max_{j \in [J]} \left\{P_0\big |   \left(\mu_{n,j}^* - \mu_{n,j}\right)\left(\mu_{n,j}^* - \mu_0\right) \big | \right\}\\
&\lesssim  \sup_{\phi \in \mathcal{G}} \norm{\phi}_{\infty} \max_{j \in [J]} \left\{ P_0 \left(\mu_{n,j}^* - \mu_{0}\right)^2  + P_0 \left|\left(\mu_{n,j} - \mu_{0}\right)\left(\mu_{n,j}^* - \mu_{0}\right) \right|\right\} \\
&\lesssim \sup_{\phi \in \mathcal{G}} \norm{\phi}_{\infty}  \max_{j \in [J]}\left\{  \norm{\mu_{n,j}^* - \mu_0}^2 + \norm{\mu_{n,j} - \mu_{n,j}}\norm{\mu_{n,j}^* - \mu_{n,j}} \right\}\\
&\lesssim \sup_{\phi \in \mathcal{G}} \norm{\phi}_{\infty}  (r_n^*)^2,
\end{align*}
where the final inequality follows from Events \ref{event::mu_n} and \ref{event::debias_mu_n_star}.
The result then follows after taking the supremum on the left-hand side over $\phi \in \mathcal{G}$.
\end{proof}

\begin{lemma}
Assume the conditions of Theorem \ref{theorem::EpLearnerRate}. On event $A_n$, for any uniformly bounded function class $\mathcal{G}$, we have
    \begin{align*}
 \sup_{\phi \in \mathcal{G}} \left|\overline{P}_0  \left\{ \overline{\Delta}^{(m)}_{\pi_{n,\diamond}, \mu_{n,\diamond}^*}(\cdot\,;  \phi - \Pi_{k(n)}\phi) \right\} \right|  &  \lesssim   \sup_{\phi \in \mathcal{G}} \norm{\phi - \Pi_{k(n)}\phi}  \max_{j \in [J]} \norm{\mu_{n,j}^* - \mu_0}\\
 & \lesssim   \sup_{\phi \in \mathcal{G}} \norm{\phi - \Pi_{k(n)}\phi}  r_n^*.
 \end{align*}

  \label{lemma::P0boundSieve2}
\end{lemma}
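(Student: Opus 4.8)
The plan is to follow the template of the proof of Lemma~\ref{lemma::P0boundSieve1}, exploiting that the residual appearing in $\overline{\Delta}^{(m)}$ integrates, under $P_0$, to the outcome-regression estimation error. First I would fix $j \in [J]$ and $\phi \in \mathcal{G}$, write $\psi := \phi - \Pi_{k(n)}\phi$, and apply the tower property with respect to the conditional law of $Y$ given $(A,W)$, using $E_0[Y \miid A=a, W=w] = \mu_0(a,w)$, to obtain
\begin{align*}
  P_0\,\overline{\Delta}^{(m)}_{\pi_{n,j}, \mu_{n,j}^*}(\cdot\,;\psi)
  = \int \sum_{a \in \mathcal{A}} \frac{\pi_0(a\miid w)}{\pi_{n,j}(a\miid w)}\, H_{m,\mu_{n,j}^*}(a,w)\,\psi(w)\,\bigl\{\mu_0(a,w) - \mu_{n,j}^*(a,w)\bigr\}\,dP_{0,W}(w)\,.
\end{align*}

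Next I would bound this integral on the event $A_n$. Condition~\ref{cond::positivity} gives $\pi_0(a\miid w)/\pi_{n,j}(a\miid w) \le 1/\eta$; condition~\ref{cond::boundedEStimators}, together with the Lipschitz continuity of $\dot g_m$ (hence its boundedness on $[-M,M]$), yields a uniform bound on $|H_{m,\mu_{n,j}^*}(a,w)| = |c_{a,m}\dot g_m(\mu_{n,j}^*(a,w))|$; and $\mathcal{A}$ is finite. The Cauchy--Schwarz inequality then gives
\begin{align*}
  \bigl| P_0\,\overline{\Delta}^{(m)}_{\pi_{n,j}, \mu_{n,j}^*}(\cdot\,;\psi) \bigr|
  \lesssim \|\psi\| \cdot \max_{a \in \mathcal{A}} \bigl\| \mu_0(a,\cdot) - \mu_{n,j}^*(a,\cdot) \bigr\|_{L^2(P_{0,W})}\,,
\end{align*}
and a further use of positivity --- namely $\|\mu_{n,j}^* - \mu_0\|^2 \ge \eta \sum_{a\in\mathcal{A}} \|\mu_{n,j}^*(a,\cdot) - \mu_0(a,\cdot)\|_{L^2(P_{0,W})}^2$ --- controls the last factor by $\eta^{-1/2}\|\mu_{n,j}^* - \mu_0\|$, so that $\bigl| P_0\,\overline{\Delta}^{(m)}_{\pi_{n,j}, \mu_{n,j}^*}(\cdot\,;\psi) \bigr| \lesssim \|\psi\|\,\|\mu_{n,j}^* - \mu_0\|$.

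Finally I would average over $j \in [J]$, substitute back $\psi = \phi - \Pi_{k(n)}\phi$, and take the supremum over $\phi \in \mathcal{G}$, which yields the first displayed bound; the second then follows immediately from Event~\ref{event::debias_mu_n_star}, since $\max_{j\in[J]}\|\mu_{n,j}^* - \mu_0\| \le M n^{-\beta/(2\beta+1)} + M\sqrt{k(n)\log n/n} \lesssim r_n^*$. There is no substantive obstacle here --- the argument is a routine second-order bias computation analogous to Lemmas~\ref{lemma::P0boundDR} and \ref{lemma::P0boundSieve1} --- and the only point requiring a little care is the bookkeeping between the $L^2(P_0)$ norm (in which $\|\mu_{n,j}^* - \mu_0\|$ is measured) and the $L^2(P_{0,W})$ norm (in which $\|\phi - \Pi_{k(n)}\phi\|$ and the per-arm regression errors live), which is handled by the positivity lower bound on $\pi_0$.
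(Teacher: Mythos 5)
Your argument is correct and is essentially the paper's own proof: condition on $(A,W)$ to replace $Y$ by $\mu_0(A,W)$, bound the $\pi_{n,j}^{-1}$ and $H_{m,\mu_{n,j}^*}$ factors via \ref{cond::positivity}--\ref{cond::boundedEStimators}, apply Cauchy--Schwarz, and finish with Events \ref{event::mu_n} and \ref{event::debias_mu_n_star}. The one small detour you take --- integrating out $A$ to get per-arm errors and then re-assembling via a positivity lower bound --- is unnecessary, since applying Cauchy--Schwarz directly in $L^2(P_0)$ over $(A,W)$ already gives $\|\psi\|_{L^2(P_{0,W})}\|\mu_{n,j}^*-\mu_0\|_{L^2(P_0)}$ because $\psi(W)$ does not depend on $A$; but this does not affect correctness.
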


 \begin{proof}[Proof of Lemma \ref{lemma::P0boundSieve2}]
 In the following, we work on event $A_n$. Let $\phi \in \mathcal{G}$ be arbitrary. By the law of iterated expectations, \ref{cond::positivity}, \ref{cond::boundedEStimators}, and Cauchy-Schwarz, we have
 \begin{align*}
& \hspace{-1.5cm} \left|\overline{P}_0  \left\{ \overline{\Delta}_{\pi_{n,\diamond}, \mu_{n,\diamond}^*}^{(m)}(\cdot\,;  \phi - \Pi_{k(n)}\phi) \right\} \right|    \\
 &\lesssim \frac{1}{J}\sum_{j=1}^J   \left| E_0 \left[\pi_{n,j}^{-1}(A\miid W)  H_{m, \mu_{n,j}^*}(A,W) \left(\phi(W) - \Pi_{k(n)}\phi(W)\right) \left(Y - \mu_{n,j}^*(A,W) \right) \miid \mathcal{D}_n \right] \right|\\
    &\lesssim \max_{j \in [J]}   \left| E_0 \left[\pi_{n,j}^{-1}(A\miid W)  H_{m, \mu_{n,j}^*}(A,W) \left(\phi(W) - \Pi_{k(n)}\phi(W)\right) \left(\mu_0(A,W) - \mu_{n,j}^*(A,W) \right) \miid \mathcal{D}_n \right]\right|\\
     &\lesssim  \sup_{\phi \in \mathcal{G}} \norm{\phi - \Pi_{k(n)}\phi}  \max_{j \in [J]} \norm{\mu_{n,j}^* - \mu_0} \\ 
     & \lesssim   \sup_{\phi \in \mathcal{G}} \norm{\phi - \Pi_{k(n)}\phi}  r_n^*,
 \end{align*}
  where the final inequality follows from Events \ref{event::mu_n} and \ref{event::debias_mu_n_star}.
\end{proof}

\section{Proofs of main results}

\subsection{Proofs for results of Section \ref{section::setup3} }
\label{appendix::convexloss}
 
We define a loss function $(o,\theta) \mapsto L(o,\theta)$ defined on $\mathcal{O} \times \mathcal{F}$ to be $\gamma$-strongly convex \citep{bertsekas2003convex} for some $\gamma > 0$ if, for all $\theta_1, \theta_2 \in \mathcal{F}$ and $o \in \mathcal{O}$, the following holds:
 $$L(o, \theta_1) - L(o, \theta_2) \geq \frac{d}{d\varepsilon}L(o, \theta_2 + \varepsilon(\theta_1 - \theta_2)) \big |_{\varepsilon = 0} + \frac{\gamma}{2} |\theta_1(w) - \theta_2(w)|^2. $$
In fact, if $\theta(o) \mapsto L(o,\theta)$ is twice-differentiable then the loss $L$ is $\gamma$-strongly convex if and only if the second derivative of the map $\theta(o) \mapsto L(o,\theta)$ is positive and bounded below by $\gamma/2$ (Section 3.3. of \cite{fawzi2017topics}).

\begin{lemma}
Assuming the stated conditions in Section \ref{section::setup}, the minimizer $\theta_0 := \argmin_{\theta \in \overline{\mathcal{F}}} R_{0}(\theta)$ exists and is unique.
\label{lemma::uniquePopMinimizer2}
\end{lemma}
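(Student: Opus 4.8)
The plan is to observe that $R_0$, viewed as a functional on the Hilbert space $L^2(P_{0,W})$, inherits $\gamma$-strong convexity from the pointwise loss $L_{\mu_0}$, and then to run the textbook argument that a continuous, $\gamma$-strongly convex functional attains a unique minimum over a nonempty closed convex subset of a Hilbert space; here the subset is $\overline{\mathcal{F}}$, which is convex, closed, and nonempty as the $L^2(P_{0,W})$-closure of the convex action space $\mathcal{F}$.

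First I would record the regularity of $R_0$. Writing $A_m(w):=\sum_{s\in\mathcal{A}}c_{s,m}g_m(\mu_0(s,w))$ for $m\in\{1,2\}$, boundedness of $Y$ forces $\mu_0$ to be bounded, and continuity of $g_1,g_2$ then forces $A_1,A_2$ to be bounded; combined with the Lipschitz property of $h_1,h_2$ this gives $|L_{\mu_0}(w,\theta)|\lesssim 1+|\theta(w)|$ and $|L_{\mu_0}(w,\theta)-L_{\mu_0}(w,\theta')|\lesssim|\theta(w)-\theta'(w)|$, so $R_0$ is finite-valued on, and Lipschitz with respect to, $L^2(P_{0,W})$. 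Next I would transfer strong convexity by integrating. Applying the pointwise $\gamma$-strong convexity inequality for $L_{\mu_0}(w,\cdot)$ to a pair $\theta_1,\theta_2$ and taking $E_0$ yields
\[ R_0(\theta_1)-R_0(\theta_2)\ \ge\ G_0(\theta_2)[\theta_1-\theta_2]+\tfrac{\gamma}{2}\norm{\theta_1-\theta_2}^2, \]
where $G_0(\theta_2)[v]:=E_0\big[\big(\dot h_1(\theta_2(W))A_1(W)+\dot h_2(\theta_2(W))A_2(W)\big)v(W)\big]$ defines a bounded linear functional on $L^2(P_{0,W})$, bounded because $\dot h_1,\dot h_2$ and $A_1,A_2$ are bounded. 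Evaluating this at a fixed $\theta^{\dagger}\in\mathcal{F}$ and bounding the linear term by Cauchy--Schwarz gives $R_0(\theta)\ge R_0(\theta^{\dagger})-\norm{G_0(\theta^{\dagger})}\,\norm{\theta-\theta^{\dagger}}+\tfrac{\gamma}{2}\norm{\theta-\theta^{\dagger}}^2$, so $R_0$ is coercive and bounded below on $\overline{\mathcal{F}}$; in particular $R^{*}:=\inf_{\theta\in\overline{\mathcal{F}}}R_0(\theta)$ is finite and every minimizing sequence is $\norm{\cdot}$-bounded. Applying the pointwise inequality instead at the midpoint $m:=\tfrac12(\theta_1+\theta_2)$ against $\theta_1$ and against $\theta_2$ and adding, the first-order terms cancel (each is linear in $\theta_i(w)-m(w)=\pm\tfrac12(\theta_1(w)-\theta_2(w))$), and integrating produces the midpoint bound
\[ R_0\!\left(\tfrac{\theta_1+\theta_2}{2}\right)\ \le\ \tfrac12 R_0(\theta_1)+\tfrac12 R_0(\theta_2)-\tfrac{\gamma}{8}\norm{\theta_1-\theta_2}^2,\qquad \theta_1,\theta_2\in\overline{\mathcal{F}}. \]

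Existence and uniqueness then follow. For existence, take a minimizing sequence $(\theta_n)\subseteq\overline{\mathcal{F}}$; since $\overline{\mathcal{F}}$ is convex each midpoint $\tfrac12(\theta_m+\theta_n)$ lies in $\overline{\mathcal{F}}$, so the midpoint bound gives $\tfrac{\gamma}{8}\norm{\theta_m-\theta_n}^2\le\tfrac12 R_0(\theta_m)+\tfrac12 R_0(\theta_n)-R^{*}\to 0$, whence $(\theta_n)$ is Cauchy in $L^2(P_{0,W})$ and converges to some $\theta_0$, which lies in the closed set $\overline{\mathcal{F}}$; continuity of $R_0$ then gives $R_0(\theta_0)=\lim_n R_0(\theta_n)=R^{*}$, so $\theta_0$ is a minimizer. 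For uniqueness, if $\theta_0,\theta_0'\in\overline{\mathcal{F}}$ both attain $R^{*}$, the midpoint bound gives $R^{*}\le R_0\!\left(\tfrac12(\theta_0+\theta_0')\right)\le R^{*}-\tfrac{\gamma}{8}\norm{\theta_0-\theta_0'}^2$, forcing $\norm{\theta_0-\theta_0'}=0$. I expect no genuine obstacle: the only real work is the regularity bookkeeping of the second paragraph --- finiteness and continuity of $R_0$ on $L^2(P_{0,W})$ and the passage from the pointwise strong-convexity inequality to its integrated form --- which is routine given boundedness of $Y$ and the Lipschitz/differentiability hypotheses on $h_1,h_2,g_1,g_2$, after which the conclusion is the standard strongly-convex minimization argument in a Hilbert space.
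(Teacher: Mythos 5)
Your proof is correct, and it takes a genuinely different route from the paper's. The paper also integrates the pointwise $\gamma$-strong convexity to get strong convexity of $R_0$ on $L^2(P_{0,W})$ and notes $R_0$ is Lipschitz on $L^2(P_{0,W})$, but then stops and invokes Theorem 5.5 of Alexanderian as a black box (``a strongly convex continuous functional on a closed, bounded, convex subset of a Hilbert space admits a unique minimizer''), relying in particular on the uniform boundedness of $\overline{\mathcal{F}}$ from condition C4. You instead derive a midpoint inequality $R_0(\tfrac12(\theta_1+\theta_2))\le\tfrac12 R_0(\theta_1)+\tfrac12 R_0(\theta_2)-\tfrac{\gamma}{8}\|\theta_1-\theta_2\|^2$ and run a self-contained Cauchy-sequence argument: the midpoint bound forces any minimizing sequence to be strongly Cauchy, and closedness of $\overline{\mathcal{F}}$ plus Lipschitz continuity of $R_0$ yield existence; uniqueness falls out of the same bound. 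Your route buys two things: it is elementary (no black-box citation, no appeal to weak compactness or weak lower semicontinuity) and it dispenses with boundedness of $\overline{\mathcal{F}}$ by deriving coercivity from strong convexity directly. The only thing worth flagging as unstated is that the midpoint bound is established pointwise for $\theta_1,\theta_2\in\mathcal{F}$ and then used on $\overline{\mathcal{F}}$; this extension is justified by density of $\mathcal{F}$ in $\overline{\mathcal{F}}$ and Lipschitz continuity of $R_0$ (both sides of the inequality pass to $L^2$-limits), and should be mentioned in one line, though the paper's own proof is equally elliptical on this point.
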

\begin{proof}

$\gamma$-strong convexity of the loss function $L_{\mu_0}$ implies, for any $\theta_1, \theta_2 \in \mathcal{F}$ that
$$L_{\mu_0}(w, \theta_1) - L_{\mu_0}(w, \theta_2) \geq \frac{d}{d\varepsilon}L_{\mu_0}(w, \theta_2 + \varepsilon(\theta_1 - \theta_2)) \big |_{\varepsilon = 0} + \frac{\gamma}{2} |\theta_1(w) - \theta_2(w)|^2. $$
Taking the expectation of both sides and exchanging the order of integration and differentiation gives
$$R_{0}(\theta_1) - R_{0}(\theta_2) \geq \frac{d}{d \varepsilon} R_{0}(\theta_2 + \varepsilon (\theta_1 - \theta_2) )\miid_{\varepsilon = 0} + \frac{\gamma}{2} \norm{\theta_1 -\theta_2}_{P_0}^2.$$
It follows that the functional $\theta \mapsto R_{0}(\theta)$ is strongly convex as a mapping from $L^2(P_{0,W})$ to $\mathbb{R}$. We now show that this functional is also continuous. Note for $i=1,2$ that since $g_i$ is continuous and $\mu_0$ has bounded range, we have $g_i \circ \mu_0$ has uniformly bounded range. Since $h_1,h_2$ are Lipschitz continuous, we have that $\theta(w) \mapsto L_{\mu_0}(w, \theta(w))$ is also Lipschitz continuous. Thus, there exists some constant $L >0$ such that
$$\left|R_{0}(\theta_1) - R_{0}(\theta_2)\right| \leq L E_{0} \left|\theta_1(W) - \theta_2(W) \right| \leq L \norm{\theta_1 - \theta_2},$$
by Cauchy-Schwarz. It follows that $\theta \mapsto R_{\mu}(\theta)$ is a Lipschitz continuous functional defined on $L^2(P_0)$. By Theorem 5.5 of \cite{alexanderian2019optimization}, a strongly convex and continuous functional defined on a closed and bounded convex subset of a Hilbert space admits a unique minimizing solution. Since $\overline{\mathcal{F}}$ is closed, bounded, and convex, we conclude that $\argmin_{\theta \in \overline{\mathcal{F}}} R_{0}(\theta)$ is nonempty and contains a unique solution.

\end{proof}

\begin{lemma}
Suppose there exists a constant $\gamma>0$ such that it is $P_0$-almost surely true that $\sum_{a \in \mathcal{A}} c_{a,1} \cdot g_1(\mu_0(a,W))\not=0$, $\ddot{h}_2(\theta(W))\not=0$, and
$$\frac{\ddot{h}_1(\theta(W))}{\ddot{h}_2(\theta(W))} > - \frac{ \sum_{a \in \mathcal{A}} c_{a,2} \cdot g_2(\mu_0(a,W))    }{ \sum_{a \in \mathcal{A}} c_{a,1} \cdot g_1(\mu_0(a,W))  } +   \frac{\gamma}{\ddot{h}_2(\theta(W)) \sum_{a \in \mathcal{A}} c_{a,1} \cdot g_1(\mu_0(a,W)) }.$$
Then, the loss $L_{\mu_0}$ of \eqref{eqn::GeneralClassRisk} is strongly convex.
\label{lemma::uniquePopMinimizer1}
\end{lemma}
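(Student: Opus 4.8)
The plan is to reduce the claimed strong convexity to a pointwise, one-dimensional statement about the scalar map $t\mapsto L_{\mu_0}(w,t)$ for $P_0$--almost every $w$, and then recognize the displayed hypothesis as an algebraic rearrangement of the corresponding second-derivative lower bound. Fix such a $w$ and abbreviate $A(w):=\sum_{a\in\mathcal{A}}c_{a,1}g_1(\mu_0(a,w))$ and $B(w):=\sum_{a\in\mathcal{A}}c_{a,2}g_2(\mu_0(a,w))$, both finite since $\mathcal{A}$ is finite, $Y$ (hence $\mu_0$) is bounded, and $g_1,g_2$ are continuous. Then, from \eqref{eqn::GeneralClassRisk}, $L_{\mu_0}(w,\theta)=h_1(\theta(w))A(w)+h_2(\theta(w))B(w)$, and since $A(w),B(w)$ do not depend on $\theta(w)$ and $h_1,h_2$ are twice differentiable with continuous second derivative, the map $t\mapsto L_{\mu_0}(w,t)$ is twice differentiable with second derivative $\psi_w''(t)=\ddot h_1(t)A(w)+\ddot h_2(t)B(w)$. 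In particular, the functions $g_1,g_2$ enter only through the $\theta$--independent coefficients $A(w),B(w)$ and so play no role beyond guaranteeing finiteness.

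Next I would show that, under the hypotheses $A(W)\neq 0$ and $\ddot h_2(\theta(W))\neq 0$, the displayed inequality is equivalent to $\psi_W''(\theta(W))\geq\gamma$. Writing $\ddot h_i:=\ddot h_i(\theta(W))$, $A:=A(W)$, $B:=B(W)$, the displayed inequality says $\ddot h_1/\ddot h_2 + B/A - \gamma/(\ddot h_2 A)>0$, i.e. $(\ddot h_1 A+\ddot h_2 B-\gamma)/(\ddot h_2 A)>0$; provided the multiplier $\ddot h_2 A$ is positive (the implicit requirement needed to clear the denominator, which holds in the CATE and CRR parameterizations, with the degenerate case $\ddot h_2\equiv 0$ read as the limit $\psi_W''=\ddot h_1 A\geq\gamma$), this collapses to $\ddot h_1 A+\ddot h_2 B\geq\gamma$, i.e. exactly $\psi_W''(\theta(W))\geq\gamma$. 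Because the action space $\mathcal{F}$ is convex, every chord $\varepsilon\mapsto\theta_2+\varepsilon(\theta_1-\theta_2)$ between elements of $\overline{\mathcal{F}}$ remains in $\overline{\mathcal{F}}$, so applying the hypothesis over all of $\overline{\mathcal{F}}$ yields $\psi_w''\geq\gamma$ throughout the range of the action space, for $P_0$--almost every $w$.

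Finally I would conclude by the standard twice-differentiability characterization of strong convexity (recalled just before Lemma~\ref{lemma::uniquePopMinimizer2}): a first-order Taylor expansion with exact remainder shows $\psi_w''\geq\gamma$ on the relevant range implies $L_{\mu_0}(w,\theta_1)-L_{\mu_0}(w,\theta_2)\geq\frac{d}{d\varepsilon}L_{\mu_0}(w,\theta_2+\varepsilon(\theta_1-\theta_2))|_{\varepsilon=0}+\tfrac{\gamma}{2}|\theta_1(w)-\theta_2(w)|^2$ for all $\theta_1,\theta_2\in\overline{\mathcal{F}}$; since this holds $P_0$--a.s.\ in $w$, the loss $L_{\mu_0}$ is $\gamma$--strongly convex in the sense used in Section~\ref{section::setup} (from which Lemma~\ref{lemma::uniquePopMinimizer2} then delivers strong convexity of $R_0$ and existence and uniqueness of $\theta_0$). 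The main obstacle here is bookkeeping rather than analysis: one must be careful that the quantifier on $\theta$ in the displayed hypothesis is read over the whole convex action space (not a single element) so that the Hessian bound holds along every chord, and one must track the sign of the multiplier $\ddot h_2(\theta(W))A(W)$ when clearing denominators, handling cleanly the degenerate case $\ddot h_2\equiv0$ that arises for the CATE loss.
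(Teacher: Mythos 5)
Your proof matches the paper's: both reduce strong convexity of $L_{\mu_0}$ to the pointwise second-derivative bound $\ddot h_1(\theta(w))A(w) + \ddot h_2(\theta(w))B(w) \geq \gamma$ and then invoke the standard twice-differentiable characterization via an exact second-order Taylor expansion along the chord $\varepsilon\mapsto\theta_2+\varepsilon(\theta_1-\theta_2)$. Your side observations about the implicit sign requirement $\ddot h_2(\theta(W))A(W)>0$ when clearing denominators, and about reading the degenerate case $\ddot h_2\equiv0$ (which actually holds for both the CATE and CRR losses) as the limit $\ddot h_1 A\geq\gamma$, are correct refinements that the paper leaves unstated.
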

\begin{proof}
The stated condition implies that the second derivative of the loss with respect to $\theta(w)$ holding $w \in  \mathcal{W}$ fixed satisfies:
$$\frac{d^2}{d^2\theta(w)}L_{\mu_0}(w,\theta) = \ddot{h}_1(\theta(w)) \cdot \sum_{a \in \mathcal{A}} c_{a,1} (g_1 \circ \mu_0)(a,w) + \ddot{h}_2(\theta(w)) \cdot \sum_{a \in \mathcal{A}} c_{a,2} (g_2 \circ \mu_0)(a,w) > \gamma.$$
The mean value theorem combined with an exact second order Taylor expansion of $\varepsilon \mapsto L_{\mu_0}(w,\theta_2 + \varepsilon (\theta_1 - \theta_2))$ at $\varepsilon =0$ implies, for some $\widetilde \varepsilon_w \in [0,1]$ and $\widetilde{\theta}(w) := \theta_2(w) + \widetilde \varepsilon_w (\theta_1(w) - \theta_2(w))$, that
\begin{align*}
    L_{\mu_0}(w, \theta_1) - L_{\mu_0}(w, \theta_2) &= \frac{d}{d\varepsilon}L_{\mu_0}(w, \theta_2 + \varepsilon(\theta_1 - \theta_2)) \big |_{\varepsilon = 0} + \frac{1}{2}\frac{d^2L_{\mu_0}(w,\theta) }{d^2\theta(w)} \big|_{\theta(w) = \widetilde{\theta}(w)} |\theta_1(w) - \theta_2(w)|^2\\
    & \geq \frac{d}{d\varepsilon}L_{\mu_0}(w, \theta_2 + \varepsilon(\theta_1 - \theta_2)) \big |_{\varepsilon = 0} + \frac{\gamma}{2} |\theta_1(w) - \theta_2(w)|^2,
\end{align*}
as desired.
 
\end{proof}

\begin{proof}[Proof of Theorem \ref{theorem::generalEIF}]

We first determine the efficient influence function of parameters of the form $P \mapsto E_P \left\{\varphi(W) (g \circ \mu_P)(a,W) \right\}$ for $\varphi \in \{h_m \circ \theta: \theta \in \mathcal{F}, m \in \{1,2\}\}$ and $g \in \{g_1, g_2\}$. Note that the population risk corresponding to \eqref{eqn::GeneralClassRisk} can be expressed as a linear combination of the parameters of this form.  

To this end, let $(P_{\varepsilon} : \varepsilon \in \mathbb{R}) \subset \mathcal{M}$ be an arbitrary quadratic mean differentiable submodel with $P_{\varepsilon} = P$ at $\varepsilon = 0$ and score $v \in L^2_0(P)$ at $\varepsilon = 0$. Abusing notation, we compute
\begin{align*}
    \frac{d}{d\varepsilon} E_{P_{\varepsilon}} \left\{\varphi(W) (g \circ \mu_{P_{\varepsilon}})(a,W) \right\} \Big |_{ \varepsilon = 0} &=  \frac{d}{d\varepsilon} \left[ E_{P_{\varepsilon}}   \left\{\varphi(W) (g \circ \mu_{P})(a,W) \right\}   \right] \Big |_{ \varepsilon = 0} \\
   & \quad +   E_P\left\{\varphi(W)\frac{d}{d\varepsilon}  (g \circ \mu_{P_{\varepsilon}})(a,W) \Big |_{ \varepsilon = 0} \right\}   \\
     & = E_P\left\{  \varphi(W) (g \circ \mu_{P})(a,W)   v(O) \right\}  \\
     & \quad +  \frac{d}{d\varepsilon} \left[ E_P  \left\{\varphi(W) (\dot{g} \circ \mu_{P})(a,W)  \mu_{P_{\varepsilon}}(a,W) \right\}   \right]\Big |_{ \varepsilon = 0} .
\end{align*} 
Note,
\begin{align*}
   & \hspace{-1cm} \frac{d}{d\varepsilon} \left[ E_P \left\{\varphi(W) (\dot{g} \circ \mu_{P})(a,W)  \mu_{P_{\varepsilon}}(a,W) \right\} \right] \Big |_{ \varepsilon = 0} \\
    & =  \frac{d}{d\varepsilon} \left[  \int \left\{\varphi(w) (\dot{g} \circ \mu_{P})(a,w)  \frac{1(s=a)Y}{\pi_P(a \miid w)} \right\} P_{\varepsilon}(dy \miid a, w) P_{A,W}(s,w) \right] \Big |_{ \varepsilon = 0}  \\
     & =    E_P\left\{\varphi(W) (\dot{g} \circ \mu_{P})(a,W)  \frac{1(A=a)Y}{\pi_P(a \miid W)} v_Y(O) \right\}    \\
      & =    E_P \left\{\varphi(W) (\dot{g} \circ \mu_{P})(a,W)  \frac{1(A=a)(Y-\mu_P(A,W))}{\pi_P(a \miid W)}v(O)  \right\}  dP(O)  ,
\end{align*}  
where $o \mapsto v_Y(o) := v(o) - E_P[v(O) \miid A=a ,W=w] \in L^2_0(P)$ is the projection of $v$ onto the Hilbert subspace consisting of functions that are, conditional on $A,W$, mean-zero functions of $Y$. The efficient influence function of the parameter $P \mapsto E_P \left\{\varphi(W) (g \circ \mu_P)(a,W) \right\}$ can now be read off as
\begin{align*}
    D: o \mapsto &\varphi(w) (g \circ \mu_{P})(s,w) - E_P[\varphi(W) (g \circ \mu_{P})(s,W)] \\
    &\;+ \left\{\varphi(W) (\dot{g} \circ \mu_{P})(s,w)  \frac{1(a=s)}{\pi_P(a \miid w)} (Y-\mu_P(a,w))\right\}
\end{align*}
noting that 
$  \frac{d}{d\varepsilon} E_{P_{\varepsilon}} \left\{\varphi(W) (g \circ \mu_{P_{\varepsilon}})(a,W) \right\} \Big |_{ \varepsilon = 0} = \langle D, v \rangle_{L^2(P)} .$
The result then follows from linearity of the derivative operator, noting, for $\theta \in \mathcal{F}$, that
$$R_P(\theta) =  \sum_{s \in \mathcal{A}} c_{s,2} E_P \left\{\varphi_1(W) (g_1 \circ \mu_P)(s,W) \right\} +   \sum_{s \in \mathcal{A}} c_{s,2} E_P \left\{\varphi_2(W) (g_2 \circ \mu_P)(s,W) \right\}$$
is a linear combination of parameters of the above form, where $\varphi_m := h_m \circ \theta$ for $m \in \{1,2\}$.

\end{proof}

\subsection{Proof of Theorem   \ref{theorem::EPriskEff}}
\begin{proof}

Before proceeding with the proof, we introduce some notation. Denote $\Pi_{k(n)}\mathcal{H}:= \{\Pi_{k(n)} \phi: \phi \in \mathcal{H}\} \subset \mathcal{H}_{k(n)}$, and $\mathcal{H} - \Pi_{k(n)}\mathcal{H} := \{\phi - \Pi_{k(n)}\phi: \phi \in \mathcal{H}\}$. Define the sieve approximation rate $\rho_{n, \infty} := n^{-1/2} + \{\log k(n)\}^{\nu} k(n)^{-\rho}$ where $\nu, \rho>0$ are the exponents of \ref{cond::sieveApproxthrm}. We note, by Lemma \ref{lemma::sieveRateSup}, it holds that $\sup_{\phi \in \mathcal{H}}\| \phi - \Pi_{k(n)}\phi \|_{\infty} \lesssim \rho_{n, \infty}$.  We recall that $\mathbb{I}_{g_1, g_2}$ denotes the indicator that takes the value $0$ if the functions $g_1$ and $g_2$ in \eqref{eqn::GeneralClassRisk} are the identity function and $1$ otherwise. We will assume, without loss of generality, that event $A_n$ and, therefore, events \ref{event::mu_n}-\ref{event::boundedCoef}, occur.

\textbf{Proof strategy.} To establish the result of the theorem, we proceed as follows. First, we establish that the debiasing term is asymptotically negligible in that $$\|\overline{P}_n {\Delta}_{\pi_{n,\diamond}, \mu_{n,\diamond}^*}\|_{\ell^{\infty}(\mathcal{F})} := \sup_{\theta \in \mathcal{F}} |\overline{P}_n {\Delta}_{\pi_{n,\diamond}, \mu_{n,\diamond}^*}(\cdot; \theta) |=\smallO_p(n^{-1/2}).$$
Noting that $\overline{P}_n {\Delta}_{\pi_{n,\diamond}, \mu_{n,\diamond}^*}= R_{n,k(n)} - R_{n,\pi_{n,\diamond}, \mu_{n,\diamond}^*}$, this implies that $\|R_{n,k(n)} - R_{n,\pi_{n,\diamond}, \mu_{n,\diamond}^*}\|_{\ell^{\infty}(\mathcal{F})} =\smallO_p(n^{-1/2})$. Consequently, the EP-learner risk estimator $R_{n,k(n)}$ is asymptotically equivalent to the one-step risk estimator $R_{n,\pi_{n,\diamond}, \mu_{n,\diamond}^*}$. Afterwards, using the previous result, we show that the EP-learner risk estimator $R_{n,k(n)}$ satisfies $\|R_{n,k(n)} - R_{n,0} \|_{\ell^{\infty}(\mathcal{F})} =\smallO_p(n^{-1/2})$ and is, thus, asymptotically equivalent to the oracle-efficient one-step risk estimator $R_{n,0}$. To do so, it suffices, by the triangle inequality, to establish that $\|R_{n, \pi_n, \mu_n^*} - R_{n,0} \|_{\ell^{\infty}(\mathcal{F})} =\smallO_p(n^{-1/2})$. The desired weak convergence result then follows from weak convergence of $R_{n,0}$ to $R_0$ in $\ell^{\infty}(\mathcal{F})$ and a functional form of Slutsky's lemma.

\textbf{Controlling the debiasing term.} We begin by controlling the debiasing remainder term $\|\overline{P}_n {\Delta}_{\pi_{n,\diamond}, \mu_{n,\diamond}^*}\|_{\ell^{\infty}(\mathcal{F})}$. By the definition of $\overline{\Delta}^{(m)}_{\pi_{n, \diamond}, \mu_{n,\diamond}^*}(\cdot, \phi)$ for $\phi \in \mathcal{H}$, we have that
\begin{align*}
\sup_{\theta \in \mathcal{F}} \left|\overline{P}_n {\Delta}_{\pi_{n, \diamond}, \mu_{n, \diamond}^*}(\cdot\,; \theta) \right| &\lesssim  \max_{m \in \{1,2\}} \sup_{\phi \in \mathcal{H}} \left|\overline{P}_n \overline{\Delta}^{(m)}_{\pi_{n, \diamond}, \mu_{n, \diamond}^*}(\cdot\,, \phi) \right| .%&=   \max_{m \in \{1,2\}}   \sup_{\phi \in \mathcal{H}} \left|\frac{1}{n}\sum_{j \in [J]}\sum_{i \in\mathcal{T}_n^j} \frac{ 1}{\pi_{n,j}(A_i\miid W_i)} \left\{  H_{m,\mu_{n,j}^{*}}(A_i,W_i) \cdot \phi(W_i)  \right\}\left[Y_i - \mu_{n,j}^{*}(A_i,W_i) \right] \right|.
\end{align*}
Hence, it suffices to bound, for each $m \in \{1,2\}$, the term $\sup_{\phi \in \mathcal{H}} \left|\overline{P}_n \overline{\Delta}^{(m)}_{\pi_{n, \diamond}, \mu_{n, \diamond}^*}(\cdot\,, \phi) \right|$. Noting that $\phi \mapsto \overline{\Delta}_{\pi_{n, \diamond}, \mu_{n, \diamond}^*}^{(m)}(\cdot\,, \phi)$ is linear as a mapping in $\phi$, the triangle inequality gives   
\begin{align*}
\sup_{\phi \in \mathcal{H}} \left|\overline{P}_n \overline{\Delta}^{(m)}_{\pi_{n,\diamond}, \mu_{n,\diamond}^*}(\cdot\,, \phi) \right| &\leq  \sup_{\phi \in \mathcal{H}} \left|\overline{P}_n \overline{\Delta}^{(m)}_{\pi_{n,\diamond}, \mu_{n,\diamond}^*}(\cdot\,; \Pi_{k(n)}\phi) \right| \\
&\quad+  \sup_{\phi \in \mathcal{H}}  \left|\overline{P}_n  \left\{ \overline{\Delta}^{(m)}_{\pi_{n,\diamond}, \mu_{n,\diamond}^*}(\cdot\,;  \phi) - \overline{\Delta}^{(m)}_{\pi_{n,\diamond}, \mu_{n,\diamond}^*}(\cdot\,; \Pi_{k(n)}\phi)  \right\}\right|.
\end{align*}
We denote the two terms on the right-hand side of the above display as:
\begin{align*}
    \text{(I)} &:=  \sup_{\phi \in \mathcal{H}} \left|\overline{P}_n \overline{\Delta}^{(m)}_{\pi_{n,\diamond}, \mu_{n,\diamond}^*}(\cdot\,; \Pi_{k(n)}\phi) \right|;\\
    \text{(II)} &:=  \sup_{\phi \in \mathcal{H}}  \left|\overline{P}_n  \left\{ \overline{\Delta}^{(m)}_{\pi_{n,\diamond}, \mu_{n,\diamond}^*}(\cdot\,;  \phi) - \overline{\Delta}^{(m)}_{\pi_{n,\diamond}, \mu_{n,\diamond}^*}(\cdot\,; \Pi_{k(n)}\phi)  \right\}\right|.
\end{align*}
 We bound each of the above terms in turn. To bound (I), recall that the first-order equations characterizing the debiased outcome regression estimators $(\mu_{n,j}^*: j \in [J])$ of Algorithm \ref{alg::debiasing} imply that
 $$\sup_{\phi \in \mathcal{H}} \left|\overline{P}_n \overline{\Delta}^{*(m)}_{\pi_{n,\diamond}, \mu_{n,\diamond}}(\cdot\,; \Pi_{k(n)}\phi) \right| = 0.$$
Next, note that $\overline{\Delta}^{*(m)}_{\pi_{n,\diamond}, \mu_{n,\diamond}} = \overline{\Delta}^{(m)}_{\pi_{n,\diamond}, \mu_{n,\diamond}^*}$ if $\mathbb{I}_{g_1, g_2} = 0$. Hence, in view of the previous display, term (I) is zero in the case where $\mathbb{I}_{g_1, g_2} = 0$. If $\mathbb{I}_{g_1, g_2} = 1$, we can further expand term (I) as follows:
\begin{align*}
\text{(I)} & =  \mathbb{I}_{g_1, g_2} \sup_{\phi \in \mathcal{H}}\left|\overline{P}_n \overline{\Delta}_{\pi_{n,\diamond}, \mu_{n,\diamond}}^{*(m)}(\cdot\,; \Pi_{k(n)}\phi)  + \overline{P}_n\left\{   \overline{\Delta}^{(m)}_{\pi_{n,\diamond}, \mu_{n,\diamond}^*}(\cdot\,; \Pi_{k(n)}\phi) -   \overline{\Delta}_{\pi_{n,\diamond}, \mu_{n,\diamond}}^{*(m)}(\cdot\,; \Pi_{k(n)}\phi)\right\} \right| \\
& \leq  \mathbb{I}_{g_1, g_2} \sup_{\phi \in \mathcal{H}}\left|\overline{P}_n \overline{\Delta}_{\pi_{n,\diamond}, \mu_{n,\diamond}}^{*(m)}(\cdot\,; \Pi_{k(n)}\phi)  \right| \\
&\quad+  \mathbb{I}_{g_1, g_2} \sup_{\phi \in \mathcal{H}} \left| \overline{P}_n\left\{   \overline{\Delta}^{(m)}_{\pi_{n,\diamond}, \mu_{n,\diamond}^*}(\cdot\,; \Pi_{k(n)}\phi) -   \overline{\Delta}_{\pi_{n,\diamond}, \mu_{n,\diamond}}^{*(m)}(\cdot\,; \Pi_{k(n)}\phi)\right\} \right| \\
& \leq  \text{(Ia)} + \text{(Ib)} +\text{(Ic)},
\end{align*}
where, using that $\overline{P}_n = \overline{P}_0 + (\overline{P}_n - \overline{P}_0)$, we define:
\begin{align*}
    \text{(Ia)} &:=\mathbb{I}_{g_1, g_2} \sup_{\phi \in \mathcal{H}} \left|\overline{P}_n \overline{\Delta}_{\pi_{n,\diamond}, \mu_{n,\diamond}}^{*(m)}(\cdot\,; \Pi_{k(n)}\phi)\right|; \\
    \text{(Ib)} &:= \mathbb{I}_{g_1, g_2} \sup_{\phi \in \mathcal{H}}\left|\overline{P}_0\left\{   \overline{\Delta}^{(m)}_{\pi_{n,\diamond}, \mu_{n,\diamond}^*}(\cdot\,; \Pi_{k(n)}\phi) -   \overline{\Delta}_{\pi_{n,\diamond}, \mu_{n,\diamond}}^{*(m)}(\cdot\,; \Pi_{k(n)}\phi)\right\}\right|;\\
    \text{(Ic)}  &:= \mathbb{I}_{g_1, g_2}\sup_{\phi \in \mathcal{H}} \left| (\overline{P}_n - \overline{P}_0)\left\{   \overline{\Delta}^{(m)}_{\pi_{n,\diamond}, \mu_{n,\diamond}^*}(\cdot\,; \Pi_{k(n)}\phi) -   \overline{\Delta}_{\pi_{n,\diamond}, \mu_{n,\diamond}}^{*(m)}(\cdot\,; \Pi_{k(n)}\phi)\right\} \right|.
\end{align*}
The first order derivative equations solved by $(\mu_{n,j}^*: j \in [J])$ imply that term (Ia) is zero. We will bound term (Ib) using the Cauchy-Schwarz inequality and term (Ic) using empirical process techniques. Note, by \ref{cond::sieveApproxthrm} and \ref{cond::regularityOnActionSpace}, we have $\sup_{\phi \in \mathcal{H}} \norm{ \Pi_{k(n)}\phi}_{\infty} \leq  \sup_{\phi \in \mathcal{H}} \norm{ \phi}_{\infty} + \sup_{\phi \in \mathcal{H}} \norm{ \phi - \Pi_{k(n)}\phi}_{\infty} = O(1) + \rho_{n,\infty} = O(1)$, since $\rho_{n,\infty} = o(1)$. Hence, we have $\Pi_{k(n)} \mathcal{H}$ is a uniformly bounded function class. By Lemma \ref{lemma::P0boundSieve1}, uniform boundedness of $\Pi_{k(n)}\mathcal{H}$, and \ref{cond::outcomerateDebiased}, we have
\begin{align*}
\text{(Ib)} &\lesssim \mathbb{I}_{g_1, g_2} (r_n^*)^2 .
\end{align*}
Next, applying Lemma \ref{lemma::empProcDeltastar} with the function class $\mathcal{G}:=\Pi_{k(n)}\mathcal{H}$, and the entropy integral bound of Lemma \ref{lemma::metricentropybounds}, we find that 
\begin{align}
    \text{(Ic)} &= \mathbb{I}_{g_1, g_2} \mathcal{O}_p\bigg( n^{-1/2} r_n^* \mathcal{J}_{\infty} \left(  1 / r_n^*  , \mathcal{F}\right)    +  r_n^*  \sqrt{k(n) \log n / n}\bigg) .
 \label{proof::lemma::boundIc}
 \end{align}
Turning to term (II), we introduce the following bound:
\begin{align*}
 \text{(II)} &= \sup_{\phi \in \mathcal{H}} \left|\overline{P}_n  \left\{ \overline{\Delta}^{(m)}_{\pi_{n,\diamond}, \mu_{n,\diamond}^*}(\cdot\,;  \phi - \Pi_{k(n)}\phi) \right\} \right| \\
 &\leq   \sup_{\phi \in \mathcal{H}} \left|\overline{P}_0  \left\{ \overline{\Delta}^{(m)}_{\pi_{n,\diamond}, \mu_{n,\diamond}^*}(\cdot\,;  \phi - \Pi_{k(n)}\phi) \right\} \right| +  \sup_{\phi \in \mathcal{H}} \left|(\overline{P}_n - \overline{P}_0)  \left\{ \overline{\Delta}^{(m)}_{\pi_{n,\diamond}, \mu_{n,\diamond}^*}(\cdot\,;  \phi - \Pi_{k(n)}\phi) \right\} \right|\\
  & \leq  \text{(IIa)} + \text{(IIb)} +\text{(IIc)},
\end{align*}
where we define:
\begin{align*}
 \text{(IIa)} &:=  \sup_{\phi \in \mathcal{H}} \left| \overline{P}_0  \left\{ \overline{\Delta}^{(m)}_{\pi_{n,\diamond}, \mu_{n,\diamond}^*}(\cdot\,;  \phi - \Pi_{k(n)}\phi) \right\} \right|;\\
 \text{(IIb)} &:=    \sup_{\phi \in \mathcal{H}} \left|(\overline{P}_n - \overline{P}_0)  \left\{  \overline{\Delta}^{(m)}_{\pi_{n,\diamond}, \mu_0}(\cdot\,;  \phi - \Pi_{k(n)}\phi) \right\} \right|;\\
 \text{(IIc)} &:=  \sup_{\phi \in \mathcal{H}} \left| (\overline{P}_n - \overline{P}_0)  \left\{ \overline{\Delta}^{(m)}_{\pi_{n,\diamond}, \mu_{n,\diamond}^*}(\cdot\,;  \phi - \Pi_{k(n)}\phi) -   \Delta_{\pi_{n,\diamond}, \mu_0}(\cdot\,;  \phi - \Pi_{k(n)}\phi) \right\} \right|.
\end{align*}
By Lemma \ref{lemma::P0boundSieve2}, term (IIa) satisfies 
 \begin{align*}
   \text{(IIa)} &= \mathcal{O}_p\left( \sup_{\phi \in \mathcal{H}} \norm{\phi - \Pi_{k(n)}\phi}  r_n^* \right) = \mathcal{O}_p\left( \rho_{n,\infty}r_n^* \right).
 \end{align*}
Next, applying  Lemma \ref{lemma::metricentropybounds}, Lemma \ref{lemma::empProcDeltastar3} with $\mathcal{G}:= \mathcal{H}-\Pi_{k(n)}\mathcal{H}$, and Markov's inequality, we find that
$$ \text{(IIb)}= \mathcal{O}_p\left(n^{-1/2}\mathcal{J}_{\infty}(\max\{n^{-1/2},\rho_{n,\infty}\}, \mathcal{F})\right).$$
Similarly, by Lemma \ref{lemma::empProcDeltastar2}, \ref{cond::outcomerateDebiased}, and Markov's inequality, we have that
\begin{align*}
     \text{(IIc)} = \mathcal{O}_p\left( n^{-1/2} r_n^* \mathcal{J}_{\infty} \left( (\rho_{n,\infty} + n^{-1/2}) / r_n^*  , \mathcal{F}\right)    + r_n^* \rho_{n,\infty} \sqrt{k(n) \log n / n} \right).
\end{align*} 
Combining all the bounds for (I) and (II), we obtain 
 \begin{align*}
\text{(I)} &=   \mathbb{I}_{g_1,g_2}  \mathcal{O}_p\left(  n^{-2\beta/(2\beta+1)} + k(n)\log n /n + n^{-1/2} r_n^* \mathcal{J}_{\infty} \left( 1 / r_n^*  , \mathcal{F}\right) \right);   \\
 \text{(II)} & =   \mathcal{O}_p\bigg( \left\{\rho_{n,\infty} + \rho_{n,\infty}\sqrt{k(n) \log n / n}  \right\}\left\{  n^{-\beta/(2\beta+1)}   + \sqrt{k(n)\log n /n} \right\} \\
     & \quad + n^{-1/2} r_n^* \mathcal{J}_{\infty} \left( \max\{\rho_{n,\infty}, n^{-1/2}\} / r_n^*  , \mathcal{F}\right) +  n^{-1/2}\mathcal{J}_{\infty}\left(\max\{\rho_{n,\infty}, n^{-1/2}\}, \mathcal{F}\right) \bigg).
\end{align*}

We will now show that (I) $+$ (II) $=o_p(n^{-1/2})$ and, so, $\|\overline{P}_n {\Delta}_{\pi_{n,\diamond}, \mu_{n,\diamond}^*}\|_{\ell^{\infty}(\mathcal{F})} =\smallO_p(n^{-1/2})$. To do so, we first show that the following entropy integral remainders are negligible:
\begin{align}
    &n^{-1/2} r_n^* \mathcal{J}_{\infty} \left( 1 / r_n^*  , \mathcal{F}\right) + n^{-1/2} r_n^* \mathcal{J}_{\infty} \left( \max\{\rho_{n,\infty}, n^{-1/2}\} / r_n^*  , \mathcal{F}\right) +  n^{-1/2}\mathcal{J}_{\infty}\left(\max\{ \rho_{n,\infty} , n^{-1/2}\}, \mathcal{F} \right) \nonumber \\
    &\quad=\smallO_p(n^{-1/2}).
\label{eqnproof::efficiencyTheorem::A}
\end{align}
Note $r_n^* = o(1)$ by \ref{cond::outcomerateSecond}, and the upper bound on the growth rate $k(n)$. Also, by \ref{cond::sieveApproxthrm} and the lower bound on the sieve growth rate $k(n)$, we have $\rho_{n,\infty} = o(1)$. Thus, by \ref{cond::regularityOnActionSpace}, each term in the above display is $o_p(n^{-1/2})$. 

In view of the above displays, to show that (I) is  $o_p(n^{-1/2})$, it remains to show that 
$$\mathbb{I}_{g_1,g_2}O_p\left( n^{-2\beta/(2\beta+1)} + k(n)\log n /n \right) =\smallO_p(n^{-1/2}).$$ By \ref{cond::outcomerateSecond}, the outcome regression rate exponent satisfies $\beta > 1/2$ when $\mathbb{I}_{g_1,g_2} = 1$ and, thus,  $n^{-2\beta/(2\beta+1)} = o_P(n^{-1/2})$. Moreover,  since the sieve growth rate satisfies $k(n) = o(\sqrt{n}/\log n)$, we have $k(n)\log n /n  =\smallO_p(n^{-1/2})$. Finally, to show that (II) is $o_p(n^{-1/2})$, it remains to show that, under \ref{cond::sieveApproxthrm}, $\rho_{n,\infty}\sqrt{k(n) \log n /n} =\smallO_p(n^{-1/2})$ and $\rho_{n,\infty} n^{-2\beta/(2\beta+1)}  =\smallO_p(n^{-1/2}) $. The former statement holds since $\rho > 1/2$ and, thus, $\rho_{n,\infty}\sqrt{k(n) \log n} = \{\log k(n)\}^{\nu} k(n)^{-\rho+1/2} (\log n) =\smallO_p(1) $ by \ref{cond::sieveApproxthrm}. The latter statement holds since $k(n)$ satisfies:
\begin{align*}
  & \rho_{n,\infty} n^{-\beta/(2\beta+1)}  = \{\log k(n)\}^{\nu} k(n)^{-\rho} (1/n)^{\beta/(2\beta+1)} = o(n^{-1/2});\\
      \iff & \{\log k(n)\}^{2\nu}  k(n)^{-2\rho}  = o(n^{-1}n^{2\beta/(2\beta+1)} ) = o(n^{-1/(2\beta + 1)}) ;\\
       \iff & k(n)  = \{\log k(n)\}^{\nu/\rho}\omega(n^{(1/2\rho)/(2\beta+1)}  ),
\end{align*}  
where the final expression is true by assumption. We conclude that $\|\overline{P}_n {\Delta}_{\pi_{n,\diamond}, \mu_{n,\diamond}^*}\|_{\ell^{\infty}(\mathcal{F})} =\smallO_p(n^{-1/2})$, which establishes the first statement of the theorem.

\textbf{Establishing oracle-efficiency.} Next, we establish asymptotic equivalence with the oracle efficient one-step estimator $(R_{n,0}(\theta): \theta \in \mathcal{F})$ and, consequently, weak convergence of the EP-learner risk estimator. We begin by demonstrating weak convergence of the oracle one-step risk estimator $(R_{n,0}(\theta): \theta \in \mathcal{F})$. First, the class $\mathcal{F}$ is Donsker since, by \ref{cond::regularityOnActionSpace}, it has finite uniform entropy integral (Theorem 2.8.3 of \citealt{vanderVaartWellner}). Moreover, by \ref{cond::positivity} and Theorem \ref{theorem::generalEIF}, the risk functional $P \mapsto R_P(\theta)$ is pathwise differentiable with efficient influence function $D_0(\cdot\,; \theta)$ for each $\theta \in \mathcal{F}$. Note that the map $\theta \mapsto R_{n,0}(\theta) = R_0(\theta) + P_n D_0(\cdot\,; \theta)$ is pointwise asymptotically linear with $\theta$-specific influence function being the efficient influence function $D_0(\cdot\,; \theta)$. By \ref{cond::positivity} and Lipschitz-continuity of $(h_1, h_2)$ in \eqref{eqn::popriskRR}, the class $(D_0(\cdot\,; \theta): \theta \in \mathcal{F})$ is a Lipschitz transformation of the Donsker class $\mathcal{M}$, and is, thus, also Donsker by preservation of the Donsker property (Theorem 2.10.6 of \citealt{vanderVaartWellner}). Hence, by the functional central limit theorem for Donsker classes (Theorem 2.8.2 of \citealt{vanderVaartWellner}), the process $(\sqrt{n}\{R_{n,0}(\theta) - R_0(\theta)\}: \theta \in \mathcal{F})$ converges weakly in $\ell^{\infty}(\mathcal{F})$ to a tight mean-zero Gaussian process with the claimed covariance structure. Next, we establish weak convergence of EP-learner risk estimator $(\sqrt{n}\{R_{n,k(n)}(\theta) - R_0(\theta)\}: \theta \in \mathcal{F})$ to the same limit in $\ell^{\infty}(\mathcal{F})$. By Slutsky's lemma for weak convergence in Banach spaces (Lemma 1.10.2 of \citealt{vanderVaartWellner}), it suffices to show that $\|R_{n,k(n)}  - R_{n,0}\|_{\ell^{\infty}(\mathcal{F})} = \sup_{\theta \in \mathcal{F}} \left| R_{n,k(n)}(\theta) - R_{n,0}(\theta)\right| =\smallO_p(n^{-1/2})$, so that $R_{n,k(n)}$ is asymptotically equivalent to the oracle one-step risk estimator.

By the triangle inequality, this weak convergence result follows so long as 
$$ \sup_{\theta \in \mathcal{F}} \left| R_{n,k(n)}(\theta) - R_{\pi_{n,\diamond}, \mu_{n,\diamond}^*}(\theta)\right| + \sup_{\theta \in \mathcal{F}} \left| R_{\pi_{n,\diamond} , \mu_{n,\diamond}^*}(\theta) - R_{n,0}(\theta)\right| =\smallO_p(n^{-1/2}).$$
Observe that the first term $\sup_{\theta \in \mathcal{F}} \left| R_{n,k(n)}(\theta) - R_{\pi_{n,\diamond}, \mu_{n,\diamond}^*}(\theta)\right|$ equals the uniform debiasing term $\sup_{\theta \in \mathcal{F}} \left|\overline{P}_n\Delta_{\pi_{n,\diamond}, \mu_{n,\diamond}^*}(\cdot\,\theta) \right|$. Hence, by the first part of this proof, this term is $o_p(n^{-1/2})$. We now turn to the second term. Note the oracle loss is unbiased in that $\overline{P}_0  L_{\pi_0, \mu_0}(\cdot\,,\theta) = \overline{P}_0  L_{ \mu_0}(\theta)$. Therefore, we have the expansion:
\begin{align*}
    &\sup_{\theta \in \mathcal{F}} \left|R_{\pi_{n,\diamond}, \mu_{n,\diamond}^*}(\theta) - R_{n,0}(\theta) \right|\\
    &\quad=   \sup_{\theta \in \mathcal{F} }\left|\overline{P}_n\left\{ L_{\pi_{n,\diamond}, \mu_{n,\diamond}^*}(\theta)-  L_{\pi_0, \mu_0}(\cdot\,,\theta) \right\} \right| \\
    &\quad =   \sup_{\theta \in \mathcal{F}} \left|(\overline{P}_n - \overline{P}_0) \left\{L_{\pi_{n,\diamond}, \mu_{n,\diamond}^*}(\theta)-  L_{\pi_0, \mu_0}(\cdot\,,\theta)\right\}+  \overline{P}_0 \left\{L_{\pi_{n,\diamond}, \mu_{n,\diamond}^*}(\theta)-  L_{\pi_0, \mu_0}(\cdot\,,\theta) \right\} \right|\\
     &\quad \leq \text{(I)} + \text{(II)},
\end{align*}  
where we define the terms:
\begin{align*}
  \text{(I)} &:= \sup_{\theta \in \mathcal{F}} \left|(\overline{P}_n - \overline{P}_0) \left\{L_{\pi_{n,\diamond}, \mu_{n,\diamond}^*}(\cdot\,;\theta)-  L_{\pi_0, \mu_0}(\cdot\,,\theta)\right\}\right|;\\
    \text{(II)}  &:= \sup_{\theta \in \mathcal{F}} \left|\overline{P}_0 \left\{L_{\pi_{n,\diamond}, \mu_{n,\diamond}^*}(\theta)-  L_{\mu_0}(\cdot\,,\theta) \right\} \right|.
\end{align*}  
By Lemma \ref{lemma::empProcDeltastar4}, boundedness of $\mathcal{F}$, and Markov's inequality, we have term (I) satisfies
\begin{align*}
  \text{(I)} &= \mathcal{O}_p\left(   n^{-1/2} s_n^* \mathcal{J}_{\infty} \left( 1 / s_n^*  , \mathcal{F}\right)\right) + \mathcal{O}_p \left(r_n^*\sqrt{k(n) \log n / n} \right).
\end{align*}
By Condition \ref{cond::A2Nuisance}, \ref{cond::outcomerateFirst}, and \ref{cond::outcomerateSecond} and the upper bound on the growth rate $k(n)$, we have $s_n^*$ and $r_n^*$ are $o(1)$. Hence, by \ref{cond::regularityOnActionSpace}, the first term on the right-hand side of the above display is $o_p(n^{-1/2})$. Observe that the second term on the right-hand side is $o_p(n^{-1/2})$ so long as both $k(n)\log n / n =\smallO_p(n^{-1/2})$ and $n^{-\beta/(2\beta+1)}\sqrt{k(n) \log n / n} =\smallO_p(n^{-1/2})$. The first rate holds since $k(n) = o(\sqrt{n}/\log n)$. The second rate holds since, by \ref{cond::A2Nuisance},
$$  n^{-\beta/(2\beta+1)}\sqrt{k(n) \log n / n} = n^{-(1/2)/(2\beta+1)}\sqrt{k(n) \log n} = o(n^{-1/2}),$$
where, for the final inequality, we use that $k(n) = o(n^{2\beta/(2\beta+1)}/ \log n$ by assumption. Putting it all together, we conclude that term (I) is $o_p(n^{-1/2})$. To bound term (II), note by Lemma \ref{lemma::P0boundDR}, \ref{cond::outcomerateDebiased}, and Event \ref{event::debias_mu_n_star} that
    \begin{align*}
       |\text{(II)}| \leq   \sup_{\theta \in \mathcal{F}  }  \left|\overline{P}_0 \left\{L_{\pi_{n,\diamond}, \mu_{n,\diamond}^*}(\theta)-  L_{\mu_0}(\cdot\,,\theta) \right\}\right| & \lesssim      r_n^* n^{-\gamma/(2\gamma+1)}+   \mathbb{I}_{g_1, g_2}  (r_n^*)^2.
 \end{align*}
By \ref{cond::A2Nuisance} we have $\beta > 1/(4\gamma)$ and, therefore, $n^{-\beta/(2\beta+1)} n^{-\gamma/(2\gamma+1)}=\smallO_p(n^{-1/2})$. By \ref{cond::A2Nuisance} and the upper bound on the sieve growth rate $k(n) = o(n^{2\gamma/(2\gamma+1)})$, we have $ \sqrt{k(n) \log n /n} \cdot  n^{-\gamma/(2\gamma+1)} =\smallO_p(n^{-1/2})$. Thus, the first term of the previous display is $o_p(n^{-1/2})$.  The second term on the right-hand side of the above display is $o_p(n^{-1/2})$ by \ref{cond::A2Nuisance} and the upper bound on the growth rate $k(n) = o(n^{1/2}/ \log n$. Combining all our bounds, we conclude that $\sup_{\theta \in \mathcal{F}} \left|R_{\pi_{n,\diamond}, \mu_{n,\diamond}^*}(\theta) - R_{n,0}(\theta)\right| =\smallO_p(n^{-1/2})$ from which the weak convergence result follows. This completes the proof.

 \end{proof}

\subsection{Additional technical lemmas for Theorems \ref{theorem::oracleRate} and \ref{theorem::EpLearnerRate}}
We recall the oracle empirical risk minimizer $\theta_{n,0} = \argmin_{\theta \in \mathcal{F}} R_{n,0}(\theta)$ of the oracle efficient one-step risk estimator $R_{n,0}$. To establish oracle-efficency of the ERM-based EP-learner with respect to the oracle learner $\theta_{n,0}$, we require the following additional technical lemmas. Together, these lemmas establish that the excess risk $R_{n,0}(\theta) - R_{n,0}(\theta_{n,0})$ at the oracle minimizer $\theta_{n,0}$ can be lower bounded by the quadratic term $\gamma \|\theta - \theta_{n,0}\|^2$, up to a negligible empirical process remainder. Importantly, this result holds even if $R_{n,0}$ is nonconvex. 

\begin{lemma}[Approximate strong convexity of oracle-efficient risk estimator]
Under the setup of Section \ref{section::riskfunctions}, there exists some constant $\gamma > 0$ such that for any $\theta \in \mathcal{F}$ we have
     \begin{align*}
     R_{n,0}(\theta ) - R_{n,0}(\theta_{n,0})   
     &\geq \gamma \norm{\theta - \theta_{n,0}}^2 + \frac{1}{2}\sum_{m\in\{1,2\}} (\overline{P}_n-\overline{P}_0) \left\{ w_{0,m} h''_m(\widetilde \theta_{n})(\theta - \theta_{n,0})^2 \right\},
 \end{align*} 
 where $\widetilde \theta_{n} \in  \mathcal{F}$ is some random function. 
\label{lemma::quadraticLowerBound}
\end{lemma}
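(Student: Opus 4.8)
The plan is to combine the first-order optimality of $\theta_{n,0}$ over the convex set $\mathcal{F}$ with an exact second-order Taylor expansion along the segment joining $\theta_{n,0}$ to $\theta$, and then to exploit the Neyman-orthogonality of the one-step loss together with the $\gamma$--strong convexity of $L_{\mu_0}$ to peel off the quadratic term, leaving a mean-zero empirical process remainder. As a preliminary, I would record that $R_{n,0}(\theta)=\overline{P}_nL_{\pi_0,\mu_0}(\cdot\,,\theta)$ and that, since $L_{\pi_0,\mu_0}=L_{\mu_0}+\Delta_{\pi_0,\mu_0}$ is affine in $\bigl(h_1(\theta(\cdot)),h_2(\theta(\cdot))\bigr)$, it can be written as $L_{\pi_0,\mu_0}(o,\theta)=\sum_{m\in\{1,2\}}w_{0,m}(o)\,h_m(\theta(w))$ with $w_{0,m}(o):=\sum_{a\in\mathcal{A}}c_{a,m}g_m(\mu_0(a,w))+\pi_0(a\miid w)^{-1}H_{m,\mu_0}(a,w)\{y-\mu_0(a,w)\}$; because $E_0[Y-\mu_0(A,W)\miid A,W]=0$, these weights satisfy the orthogonality identity $E_0[w_{0,m}(O)\miid W=w]=\sum_{a\in\mathcal{A}}c_{a,m}g_m(\mu_0(a,w))$, i.e.\ the residual part of $w_{0,m}$ has conditional mean zero given $W$.

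Next I would carry out the expansion on the one-dimensional path. Since $\mathcal{F}$ is convex and $\theta_{n,0}$ minimizes $R_{n,0}$ over $\mathcal{F}$ (if only a near-minimizer is available, this introduces a controlled additive error that is absorbed into the remainder and does not affect the conclusion), the scalar map $\phi(t):=R_{n,0}\bigl(\theta_{n,0}+t(\theta-\theta_{n,0})\bigr)$ on $[0,1]$ is minimized at $t=0$, and, being a finite average of $C^2$ functions of $t$ (recall $h_1,h_2$ are twice continuously differentiable), it satisfies $\phi'(0)\ge 0$. An exact second-order Taylor expansion then gives, for some random $\widetilde t\in(0,1)$,
\[
R_{n,0}(\theta)-R_{n,0}(\theta_{n,0})=\phi(1)-\phi(0)=\phi'(0)+\tfrac12\phi''(\widetilde t)\ \ge\ \tfrac12\,\overline{P}_n\Bigl\{\textstyle\sum_{m}w_{0,m}(O)\,h''_m(\widetilde\theta_n)\,(\theta-\theta_{n,0})^2\Bigr\},
\]
where $\widetilde\theta_n:=\theta_{n,0}+\widetilde t(\theta-\theta_{n,0})\in\mathcal{F}$ by convexity. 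I would then split $\overline{P}_n=\overline{P}_0+(\overline{P}_n-\overline{P}_0)$; the $(\overline{P}_n-\overline{P}_0)$--piece is exactly the stated remainder. For the $\overline{P}_0$--piece, conditioning on $W$ and using the orthogonality identity above (which relies on $Y$ and $\mu_0$ bounded and the overlap condition to justify the interchange) replaces $w_{0,m}(O)$ by $\sum_{a}c_{a,m}g_m(\mu_0(a,W))$, leaving $\tfrac12 E_0\bigl[\{\sum_m(\sum_a c_{a,m}g_m(\mu_0(a,W)))\,h''_m(\widetilde\theta_n(W))\}(\theta(W)-\theta_{n,0}(W))^2\bigr]$.

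Finally, the $\gamma$--strong convexity of $L_{\mu_0}$ — i.e.\ the second derivative of $t\mapsto L_{\mu_0}(w,t)$ is bounded below by $\gamma$, uniformly over the bounded range of values attained by elements of $\mathcal{F}$ and $P_0$--a.s.\ in $w$ — forces the bracketed factor to be $\ge\gamma$, so the $\overline{P}_0$--piece is $\ge\tfrac{\gamma}{2}\|\theta-\theta_{n,0}\|^2$; relabelling $\gamma/2$ as $\gamma$ yields the claim. The argument is largely bookkeeping rather than a genuine difficulty; the main points requiring care are (i) arranging that the Taylor remainder point is a \emph{single} random element $\widetilde\theta_n$ of $\mathcal{F}$, which is precisely why the expansion is performed along the one-dimensional path $\phi$ and why convexity of $\mathcal{F}$ is used, and (ii) the conditional-mean-zero cancellation and the uniform lower bound from strong convexity over the range of $\mathcal{F}$, both of which lean on the boundedness and positivity assumptions maintained throughout.
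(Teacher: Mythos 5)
Your proposal is correct and follows essentially the same route as the paper's proof: first-order optimality of $\theta_{n,0}$ along the convex segment, an exact second-order Taylor (Lagrange-remainder) expansion of the scalar map $t\mapsto R_{n,0}(\theta_{n,0}+t(\theta-\theta_{n,0}))$ producing a single intermediate $\widetilde\theta_n\in\mathcal{F}$, the split $\overline{P}_n=\overline{P}_0+(\overline{P}_n-\overline{P}_0)$, cancellation of the residual part of $w_{0,m}$ via its conditional-mean-zero property, and the $\gamma$-strong-convexity lower bound on the remaining population term. The only differences are cosmetic (naming the path function $\phi$, parenthetical remarks about near-minimizers).
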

\begin{proof}[Proof of Lemma \ref{lemma::quadraticLowerBound}]
 Since $\mathcal{F}$ is convex, we have,  for any $\theta \in \mathcal{F}$, that $\theta_{n,0} + (\theta-\theta_{n,0})\varepsilon \in \mathcal{F}$ for all $\varepsilon \in [0,1]$. Moreover, 
 $$\frac{d}{d\varepsilon}  R_{n,0}(\theta_{n,0} + (\theta - \theta_{n,0}) \varepsilon) \big |_{\varepsilon = 0} = \lim_{\varepsilon \downarrow 0}  \frac{R_{n,0}(\theta_{n,0} + (\theta - \theta_{n,0}) \varepsilon) - R_{n,0}(\theta_{n,0}) }{\varepsilon}  \geq 0,$$ since $\theta_{n,0}$ minimizes $R_{n,0}$ over $\mathcal{F}$. Recall that the functions $h_1, h_2$ in the definition of $R_{n,0}$ are assumed twice continuously differentiable. By the mean value theorem and a second-order Taylor expansion of $\varepsilon \mapsto R_{n,0}(\theta_{n,0} + (\theta - \theta_{n,0}) \varepsilon)$ around $\varepsilon = 0$, there exists some random $\varepsilon_{n,0} \in [0,1]$ such that
\begin{align*}
    &R_{n,0}(\theta ) - R_{n,0}(\theta_{n,0}) \\
    &\quad = \frac{d}{d\varepsilon}  R_{n,0}(\theta_{n,0} + (\theta - \theta_{n,0}) \varepsilon) \big |_{\varepsilon = 0} +  \frac{1}{2}\frac{d^2}{d\varepsilon^2}  R_{n,0}(\theta_{n,0} + (\theta - \theta_{n,0}) \varepsilon) \big |_{\varepsilon = \varepsilon_{n,0}}  \\
    &\quad\geq  \frac{1}{2}\frac{d^2}{d\varepsilon^2}  R_{n,0}(\theta_{n,0} + (\theta - \theta_{n,0}) \varepsilon) \big |_{\varepsilon = \varepsilon_{n,0}}   .
\end{align*}
We can write the loss $L_{\pi_0,\mu_0}(\theta, \,\cdot\,)$ corresponding to the risk $R_{n,0}(\theta) = \overline{P}_n L_{\pi_0,\mu_0}(\theta, \,\cdot\,)$ as
 $$L_{\pi_0, \mu_0}(\theta,o) =  \sum_{m \in \{1,2\}} w_{0,m}(y,a,w) (h_m \circ \theta)(w),$$
where, for $m \in \{1,2\}$, we define the weight function,
$$o \mapsto w_{0,m}(y,a,w) := \left\{\sum_{s \in \mathcal{A}} c_{s,m} (g_m \circ \mu_0)(s,w)\right\} + \frac{1}{\pi_0(a,w)}H_{m,\mu_0}(a,w)\left\{y - \mu_0(a,w) \right\},$$ 
which is unbiased in that $E[w_{0,m}(Y,A,W) \miid W= w] = \sum_{s \in \mathcal{A}} c_{s,m} (g_m \circ \mu)(s,w)  $. Now, denote $\theta_{\varepsilon}:= \theta_{n,0} + \varepsilon(\theta - \theta_{n,0})$ and observe that %by differentiability of the loss corre $h_m(\theta_{n,0} + \varepsilon(\theta - \theta_{n,0})) = h_m(\theta_{n,0}) + h_m'(\theta_{n,0})(\theta - \theta_{n,0}) \varepsilon + (1/2)h''_m(\theta_{\varepsilon_{n,0}})(\theta - \theta_{n,0})^2 \varepsilon^2$. Taking $\varepsilon = 1$, we find
 $$ \frac{d^2}{d\varepsilon^2}  R_{n,0}(\theta_{n,0} + (\theta - \theta_{n,0}) \varepsilon) \big |_{\varepsilon = \varepsilon_{n,0}}  =   \sum_{m\in\{1,2\}} \overline{P}_n \left\{ w_{0,m} h''_m(\theta_{\varepsilon_{n,0}})(\theta - \theta_{n,0})^2 \right\},$$
 where $\theta_{\varepsilon_{n,0}} \in \mathcal{F}$.
We also have that
\begin{align*}
    \sum_{m\in\{1,2\}} \overline{P}_n \left\{ w_{0,m} h''_m(\theta_{\varepsilon_{n,0}})(\theta - \theta_{n,0})^2 \right\} &= \sum_{m\in\{1,2\}} \overline{P}_0 \left\{ w_{0,m} h''_m(\theta_{\varepsilon_{n,0}})(\theta - \theta_{n,0})^2 \right\} 
    \\
    & \quad  + \sum_{m\in\{1,2\}} (\overline{P}_n-\overline{P}_0) \left\{ w_{0,m} h''_m(\theta_{\varepsilon_{n,0}})(\theta - \theta_{n,0})^2 \right\}.
\end{align*}
We will show that the first term on the right-hand side can be lowered bounded by $\gamma\norm{\theta - \theta_{n,0}}^2$ for some $\gamma >0$. To this end, using that $E_0[w_{0,m}(Y,A,W) \miid W= w] = \sum_{s \in \mathcal{A}} c_{s,m} (g_m \circ \mu)(s,w)  $ for $m \in \{1,2\}$, we find
\begin{align*}
    &\sum_{m\in\{1,2\}} \overline{P}_0 \left\{ w_{0,m} h''_m(\theta_{\varepsilon_{n,0}})(\theta - \theta_{n,0})^2 \right\} \\
    &\quad= \sum_{m\in\{1,2\}} E_0 \left\{h''_m(\theta_{\varepsilon_{n,0}}(W))(\theta(W) - \theta_{n,0}(W))^2 \sum_{s \in \mathcal{A}} c_{s,m} (g_m \circ \mu)(s,W)  \right\} \\
    &\quad = E_0 \left\{    (\theta(W) - \theta_{n,0}(W))^2   \ddot{\ell}_{\theta}(W) \big|_{\theta = \theta_{\varepsilon_{n,0}}}  \right\},
\end{align*}   
where $w \mapsto \ddot{\ell}_{\theta}(w) := \frac{d^2L_{\mu_0}(\theta,w)}{d\theta(w)^2}$. We assumed in Section \ref{section::riskfunctions} that the loss $L_{\mu_0}$ corresponding with the risk $R_0(\theta)$ is $\gamma$-strongly convex (see Appendix \ref{appendix::convexloss}). Hence, we have that there exists some $\gamma > 0$ such that
$ \ddot{\ell}_{\theta}(W) \geq \gamma $ almost surely and, therefore, the previous display implies that
 \begin{align*}
  \sum_{m\in\{1,2\}} \overline{P}_0 \left\{ w_{0,m} h''_m(\theta_{\varepsilon_{n,0}})(\theta - \theta_{n,0})^2 \right\}  \geq \gamma \norm{\theta - \theta_{n,0}}^2.
\end{align*}   
Putting it all together, we find that
 \begin{align*}
     R_{n,0}(\theta ) - R_{n,0}(\theta_{n,0})  &\geq \frac{1}{2}\sum_{m\in\{1,2\}} \overline{P}_n \left\{ w_{0,m} h''_m(\theta_{\varepsilon_{n,0}})(\theta - \theta_{n,0})^2 \right\} \\
     &\geq \frac{\gamma}{2}\norm{\theta - \theta_{n,0}}^2 + \frac{1}{2}\sum_{m\in\{1,2\}} (\overline{P}_n-\overline{P}_0) \left\{ w_{0,m} h''_m(\theta_{\varepsilon_{n,0}})(\theta - \theta_{n,0})^2 \right\},
 \end{align*} 
 where, by convexity of $\mathcal{F}$, we have $\widetilde{\theta}_n:=\theta_{\varepsilon_{n,0}} \in \mathcal{F}$ almost surely.

\end{proof}

\begin{lemma}
    For any random function $\widetilde{\theta}_n \in \mathcal{F}$, $\delta > 0$, and $m \in \{1,2\}$, we have 
    \begin{align*}
       &E_0^n \left\{\sup_{\theta_3, \theta_2, \theta_1 \in \mathcal{F}: \norm{\theta_2 - \theta_1} \leq \delta} \left|(\overline{P}_n - \overline{P}_0)\left\{ w_{0,m} h''_m(\theta_3)(\theta_2 - \theta_1)^2 \right\} \right|\right\} \\
       &\quad\lesssim n^{-1/2} \mathcal{J}_{\infty}(\delta^{[2 - 1/(2\alpha)]} + n^{-1/2}, \mathcal{F}).
     \end{align*}  
  \label{lemma::quadraticLowerBoundEmpiricalProcess}   
\end{lemma}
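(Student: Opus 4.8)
## Proof proposal for Lemma~\ref{lemma::quadraticLowerBoundEmpiricalProcess}

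The plan is to recognize the supremum as a localized empirical process over a function class built by combining $\mathcal{F}$ (through the differences $\theta_2-\theta_1$ squared and $h_m''(\theta_3)$) with the fixed weight $w_{0,m}$, and then to apply the local maximal inequality of Lemma~\ref{lemma::maximalineq::supremumEntropy::oneclass} in sup-norm metric entropy. First I would fix $m\in\{1,2\}$ and introduce the class
\[
\mathcal{G}_\delta := \bigl\{ o\mapsto w_{0,m}(o)\, h_m''(\theta_3(w))\,(\theta_2(w)-\theta_1(w))^2 \ :\ \theta_1,\theta_2,\theta_3\in\mathcal{F},\ \|\theta_2-\theta_1\|\le\delta\bigr\}.
\]
Since $w_{0,m}$ is a single fixed function that is uniformly bounded under \ref{cond::positivity} and \ref{cond::boundedEStimators} (it involves only $\mu_0$, $\pi_0$, bounded constants, and the bounded outcome $Y$), and since $h_m''$ is continuous and $\mathcal{F}$ uniformly bounded by \ref{cond::regularityOnActionSpace}, every element of $\mathcal{G}_\delta$ is uniformly bounded by a universal constant. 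The key envelope computation is the $L^2(P_0)$-size of $\mathcal{G}_\delta$: because $h_m''$ is bounded and $w_{0,m}$ bounded, $\|w_{0,m} h_m''(\theta_3)(\theta_2-\theta_1)^2\| \lesssim \|(\theta_2-\theta_1)^2\| \le \|\theta_2-\theta_1\|_\infty\,\|\theta_2-\theta_1\|$, and then the sup-norm coupling \ref{cond::theorem2::couplings} gives $\|\theta_2-\theta_1\|_\infty \lesssim \|\theta_2-\theta_1\|^{1-1/(2\alpha)} \le \delta^{1-1/(2\alpha)}$, so $\|\mathcal{G}_\delta\| \lesssim \delta^{\,1-1/(2\alpha)}\cdot\delta = \delta^{\,2-1/(2\alpha)}$.

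Next I would bound the sup-norm metric entropy of $\mathcal{G}_\delta$ in terms of that of $\mathcal{F}$. The map $(u_1,u_2,u_3)\mapsto w_{0,m}\cdot h_m''(u_3)\cdot(u_2-u_1)^2$ is Lipschitz on the relevant bounded domain (as a product and composition of bounded Lipschitz pieces), so by Lemma~\ref{lemma::lipschitzPreservation} one gets $\mathcal{J}_\infty(\eta,\mathcal{G}_\delta) \lesssim \mathcal{J}_\infty(\eta,\mathcal{F})$ for all $\eta>0$; in particular $\mathcal{J}_\infty(\infty,\mathcal{G}_\delta)<\infty$ by \ref{cond::regularityOnActionSpace}. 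Then Lemma~\ref{lemma::maximalineq::supremumEntropy::oneclass} applied to $\mathcal{G}_\delta$ with $\overline{P}_n,\overline{P}_0$ (fold-averaging is harmless by the triangle inequality exactly as in the proofs of Lemmas~\ref{lemma::empProcDeltastar}--\ref{lemma::empProcDeltastar4}, and $w_{0,m}$ is deterministic so no conditioning subtleties arise) yields
\[
E_0^n\Bigl[\sup_{g\in\mathcal{G}_\delta}|(\overline{P}_n-\overline{P}_0)g|\Bigr] \lesssim n^{-1/2}\,\mathcal{J}_\infty\bigl(\|\mathcal{G}_\delta\|+n^{-1/2},\,\mathcal{G}_\delta\bigr) \lesssim n^{-1/2}\,\mathcal{J}_\infty\bigl(\delta^{\,2-1/(2\alpha)}+n^{-1/2},\,\mathcal{F}\bigr),
\]
which is precisely the claimed bound. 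One should note the mild point that the inequality in Lemma~\ref{lemma::maximalineq::supremumEntropy::oneclass} requires the radius argument to dominate $\|\mathcal{G}_\delta\|+n^{-1/2}$, and the monotonicity of $\delta\mapsto\mathcal{J}_\infty(\delta,\mathcal{F})$ lets us take the radius to be exactly $\delta^{2-1/(2\alpha)}+n^{-1/2}$.

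The main obstacle — really the only nontrivial step — is pinning down the $L^2(P_0)$ envelope size of $\mathcal{G}_\delta$ correctly: the squared term $(\theta_2-\theta_1)^2$ must be controlled by \emph{one} factor of the $L^2$ radius $\delta$ and \emph{one} factor of the sup norm, and it is exactly the supremum-norm coupling \ref{cond::theorem2::couplings} that converts the second $\|\theta_2-\theta_1\|_\infty$ into $\delta^{1-1/(2\alpha)}$, producing the exponent $2-1/(2\alpha)$. Everything else (uniform boundedness of $w_{0,m}$ and $h_m''(\theta_3)$, Lipschitz preservation of entropy, fold-averaging, the final appeal to the maximal inequality) is routine given the lemmas already established in the preliminaries. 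I would also remark that $\theta_3$ and $\theta_{n,0}$ (hidden inside $\widetilde\theta_n$ in the statement) play no special role here beyond lying in $\mathcal{F}$, so taking the supremum over all $\theta_3\in\mathcal{F}$ only makes the bound larger and the statement follows a fortiori.
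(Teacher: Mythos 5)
Your proposal follows essentially the same route as the paper's own proof: you form the same product class, bound its $L^2(P_0)$ envelope via $\|(\theta_2-\theta_1)^2\| \le \|\theta_2-\theta_1\|_\infty\|\theta_2-\theta_1\|$ together with the coupling \ref{cond::theorem2::couplings} to get the exponent $2-1/(2\alpha)$, control its sup-norm entropy through Lipschitz preservation (Lemma~\ref{lemma::lipschitzPreservation}), and close by invoking Lemma~\ref{lemma::maximalineq::supremumEntropy::oneclass}. The only cosmetic difference is that you localize the class $\mathcal{G}_\delta$ from the start whereas the paper defines the full class $\mathcal{G}$ and restricts to the $L^2$-ball of radius $M\delta^{2-1/(2\alpha)}$ when applying the maximal inequality; this is the same argument.
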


\begin{proof}[Proof of Lemma   \ref{lemma::quadraticLowerBoundEmpiricalProcess}   ]
     Consider the function class $\mathcal{G} := \left\{ w_{0,m} h''_m(\theta_3)(\theta_2 - \theta_1)^2 : \theta_1, \theta_2, \theta_3 \in \mathcal{F}\right\}$. Since $h''_m$ is continuous and $\mathcal{F}$ is uniformly bounded, we have that $h''_m(\theta_3)$ is uniformly bounded over all $\theta_3 \in \mathcal{F}$. Moreover, $w_{0,m}$ is uniformly bounded by Conditions \ref{cond::positivity} and \ref{cond::boundedEStimators}. It follows that $\mathcal{G}$ is a uniformly bounded function class, and, for each $\theta_1, \theta_2, \theta_3 \in \mathcal{F}$, that $\norm{ w_{0,m} h''_m(\theta_3)(\theta_2 - \theta_1)^2} \leq M\norm{(\theta_2 - \theta_1)^2} $ for some $M>0$. Note $(\theta_1, \theta_2, \theta_3) \mapsto w_{0,m} h''_m(\theta_3)(\theta_2 - \theta_1)^2$ is a Lipschitz-continuous map, since we use that $h''_m$ is Lipschitz continuous (see Section  \ref{section::riskfunctions}). Thus, Lemma \ref{lemma::lipschitzPreservation} implies that $\mathcal{J}_{\infty}(\delta, \mathcal{G}) \lesssim \mathcal{J}_{\infty}(\delta, \mathcal{F})$. By \ref{cond::theorem2::couplings}, we also have that 
     \begin{align*}
         \norm{(\theta_2 - \theta_1)^2} &\leq \norm{\theta_2 - \theta_1}_{\infty}\norm{\theta_2 - \theta_1}\\
         & \leq \norm{\theta_2 - \theta_1}^{[2 - 1/(2\alpha)]}.
     \end{align*} 
     
     Additionally, an immediate application of Lemma \ref{lemma::maximalineq::supremumEntropy::oneclass} gives
     \begin{align*}
       &E_0^n \left\{\sup_{\theta_3, \theta_2, \theta_1 \in \mathcal{F}: \norm{\theta_2 - \theta_1} \leq \delta} \left|(\overline{P}_n - \overline{P}_0)\left\{ w_{0,m} h''_m(\theta_3)(\theta_2 - \theta_1)^2 \right\} \right|\right\} \\
       &\quad\leq  E_0^n \sup_{ g\in \mathcal{G}: \|g\| \leq M\delta^{[2 - 1/(2\alpha)]}} \left|(\overline{P}_n - \overline{P}_0) g \right|\\
       &\quad\lesssim n^{-1/2} \mathcal{J}_{\infty}(\delta^{[2 - 1/(2\alpha)]} + n^{-1/2}, \mathcal{F}).
     \end{align*}   
     
\end{proof}

\subsection{Proofs of Theorems in Section \ref{Section::ERMEPLearner}}
\label{appendix::proofsoracle}
\begin{proof}[Proof of Theorem \ref{theorem::oracleRate}]
Since $\mathcal{F}$ is convex, we have,  for any $\theta \in \mathcal{F}$, that $\theta_{0} + (\theta-\theta_{0})\varepsilon \in \mathcal{F}$ for all $\varepsilon \in [0,1]$. Moreover, since $\theta_{0}$ minimizes $R_{0}$ over $\mathcal{F}$, it holds that
 $$\frac{d}{d\varepsilon}  R_{0}(\theta_{0} + (\theta - \theta_{0}) \varepsilon) \big |_{\varepsilon = 0} = \lim_{\varepsilon \downarrow 0}  \frac{R_{0}(\theta_{0} + (\theta - \theta_{0}) \varepsilon) - R_{0}(\theta_{0}) }{\varepsilon}  \geq 0.$$ 
Using the definition of strong convexity of the risk $R_{0}$ at $\theta_0$ provided in Section \ref{appendix::convexloss}, we have that 
$$ R_0(\theta_{n,0}) - R_0(\theta_0) \geq \frac{d}{d\varepsilon}  R_{0}(\theta_{0} + (\theta_{n,0} - \theta_{0}) \varepsilon) \big |_{\varepsilon = 0} +  \frac{\gamma}{2}\norm{\theta_{n,0} - \theta_0}^2. $$
Thus, we have
$$ \norm{\theta_{n,0} - \theta_0}^2 \lesssim R_0(\theta_{n,0}) - R_0(\theta_0) .$$
Next, using that $\theta_{n,0}$ is the empirical risk minimizer so that $R_{n,0}(\theta_{n,0}) - R_{n,0}(\theta_0) \leq 0$, we obtain the inequality
\begin{align*}
  \norm{\theta_{n,0} - \theta_0}^2  &\lesssim   R_0(\theta_{n,0}) - R_0(\theta_0) \\
  & \leq R_0(\theta_{n,0}) - R_{n,0}(\theta_{n,0})  - \left\{ R_0(\theta_0) -  R_{n,0}(\theta_{n,0})\right\}\\
  & \lesssim (P_n - P_0)\left\{L_{\pi_0, \mu_0}(\theta_{n,0}) -  L_{\pi_0, \mu_0}(\theta_{0}) \right\}.
\end{align*}
 Lemma \ref{lemma::lipschitzPreservation} combined with Lemma \ref{lemma::maximalineq::supremumEntropy::oneclass} implies for any $\delta > n^{-1/2}$ that
$$\phi_n(\delta) := E_0 \sup_{ \theta \in \mathcal{F}: \norm{\theta - \theta_0} \leq \delta:} \left| (P_n - P_0)\left\{L_{\pi_0, \mu_0}(\cdot\,,\theta) -  L_{\pi_0, \mu_0}(\theta_{0})  \right\}\right| \lesssim n^{-1/2}\mathcal{J}_{\infty}(\delta , \mathcal{F}). $$
As in the proof of Theorem \ref{theorem::EpLearnerRate} below, to obtain the rate result, we apply Theorem 3.2.5 of \cite{vanderVaartWellner} where, for $\theta \in \mathcal{F}$, we take $\Theta := \mathcal{F}$, $d(\theta, \theta_0) := \|\theta - \theta_0\|$, $\mathbb{M}(\theta) := - R_0(\theta)$, and $\mathbb{M}_n(\theta) := -R_{n,0}(\theta)$. We then find that $ \norm{\theta_{n,0} - \theta_0} = \mathcal{O}_p(\delta_{n,0})$ for any $\delta_{n,0} > n^{-1/2}$ satisfying $\phi_n(\delta_{n,0}) \leq \delta_{n,0}^2$. In view of the previous display, $\delta_{n,0} = \min \left\{\delta > n^{-1/2}: \mathcal{J}_{\infty}(\delta, \mathcal{F}) \leq \sqrt{n} \delta^2 \right\}$ is one such choice. The result $\delta_{n,0}  = O(n^{-\alpha/(2\alpha+1)})$ then follows from \ref{cond::regularityOnActionSpace}.
\end{proof}

\begin{proof}[Proof of Theorem \ref{theorem::EpLearnerRate}]

For ease of notation, we denote the EP-learner empirical risk minimizer by $\theta_n := \theta_{n,k(n)}^*$. We also denote the quantity we wish to obtain a rate for by $\delta_n := \|\theta_n - \theta_{n,0}\|$. Since $\theta_n$ minimizes the EP-learner risk $R_{n,k(n)}(\theta)$ over $\theta \in \mathcal{F}$, we have $R_{n,k(n)}(\theta_n) - R_{n,k(n)}(\theta_{n,0}) \leq 0$. Using this, we obtain the following upper bound for the excess risk:
    \begin{align*}
     R_{n,0}(\theta_n) - R_{n,0}(\theta_{n,0}) &=   \left\{R_{n,0}(\theta_n) - R_{n,0}(\theta_{n,0}) \right\} - \left\{R_{n,k(n)}(\theta_n) - R_{n,k(n)}(\theta_{n,0}) \right\} \\
     &\quad+ \left\{R_{n,k(n)}(\theta_n) - R_{n,k(n)}(\theta_{n,0}) \right\} \\
     & \leq R_{n,0}(\theta_n) - R_{n,k(n)}(\theta_n) - R_{n,0}(\theta_{n,0}) + R_{n,k(n)}(\theta_{n,0}) \\
     & \leq \phi_{n,1}(\delta_n),
    \end{align*} 
    where, for $\delta >0$, we define
    \begin{align*}
      \phi_{n,1}(\delta) :=  \sup_{\theta_1, \theta_2 \in \mathcal{F}: \norm{\theta_1 - \theta_2} \leq \delta} \left|  R_{n,0}(\theta_2) - R_{n,k(n)}(\theta_2) - R_{n,0}(\theta_{1}) + R_{n,k(n)}(\theta_{1})\right|
    \end{align*}
In addition, by Lemma \ref{lemma::quadraticLowerBound}, we have the following lower bound for the excess risk:
\begin{align*}
    \delta_n^2 &\lesssim R_{n,0}(\theta_n) - R_{n,0}(\theta_{n,0}) +  \phi_{n,2}(\delta_n),
\end{align*}
 where, for $\delta >0$, we define
\begin{align*}
     \phi_{n,2}(\delta):=  \sum_{m \in \{1,2\}} \sup_{\theta_1, \theta_2, \theta_3 \in \mathcal{F}: \norm{\theta_2 - \theta_1} \leq \delta} \left|(\overline{P}_n - \overline{P}_0)\left\{ w_{0,m} h''_m(\theta_3)(\theta_2 - \theta_1)^2 \right\} \right|.
\end{align*}
Finally, combining the lower and upper bounds for the excess risk and letting $\phi_n(\delta) := \phi_{n,1}(\delta) + \phi_{n,2}(\delta)$, we obtain the following inequality for $\delta_n$:
\begin{align*}
 \delta_n^2 \lesssim   \phi_n(\delta_n).
\end{align*}

We now use the above inequality to obtain a rate of convergence for $\delta_n$. Recall that $A_n$ is the event on which \ref{event::mu_n}-\ref{event::boundedCoef} hold, where the constant $M > 0$ is chosen so that $A_n$ occurs, for all $n$ large enough, with probability at least $1 - \varepsilon$, where $\varepsilon > 0$ is arbitrary. We first bound $E_0^n\{\mathbb{I}_{A_n}\phi_n(\delta)\} = E_0^n\{\mathbb{I}_{A_n}\phi_{n,1}(\delta)\} + E_0^n\{\mathbb{I}_{A_n}\phi_{n,2}(\delta)\}$ for a fixed deterministic $\delta >0$. It follows directly from Lemma \ref{lemma::quadraticLowerBoundEmpiricalProcess} that
$$ E_0^n\{\mathbb{I}_{A_n}\phi_{n,2}(\delta)\} \lesssim n^{-1/2} \mathcal{J}_{\infty}(\delta^{[2 - 1/(2\alpha)]} + n^{-1/2}, \mathcal{F}).$$ To bound $E_0^n\{\mathbb{I}_{A_n}\phi_{n,1}(\delta)\}$, let $\alpha > 0$, $\rho > 1/2$, and $\nu \geq 0$ satisfy \ref{cond::sieveApproxthrm} and \ref{cond::theorem2::couplings} and let $\rho_{n,\infty} := \{\log k(n)\}^{\nu} k(n)^{-\rho}$. By Lemma \ref{lemma::sieveRateSup},  we have $\sup_{\phi \in \mathcal{H}} \norm{\phi - \Pi_{k(n)}\phi}_{\infty} = O(\rho_{n,\infty})$. Lemma \ref{lemma::modulusOfCont} given at the end of this proof establishes, for any $\delta \geq n^{-1/2}$, that $ E_0^n \{\mathbb{I}_{A_n}\phi_{n,1}(\delta)\}$ satisfies the following bound:
\begin{align*}
    E_0^n \{\mathbb{I}_{A_n} \phi_{n,1}(\delta)\} &\lesssim  \delta^{1-1/(2\alpha)}   \left\{ r_n^*  n^{-\gamma/(2\gamma+1)} + r_n^*  \sqrt{k(n) \log n / n} + o(n^{-1/2})  \right\}\\
    &  + \quad r_n^*  \min\{\delta, \{\log k(n)\}^{\nu}k(n)^{-\rho}\} + n^{-1/2} \left[ \{\log k(n)\}^{2\nu}\{k(n)^{-\rho}\}\right]^{1-1/(2\alpha)} .
\end{align*}

For the moment, we assume the following bound holds and proceed with the rate of convergence proof. We begin by further bounding the expectation of $E_0^n \{\mathbb{I}_{A_n}\phi_{n,1}(\delta)\}$. First, by \ref{cond::regularityOnActionSpace}, note 
$$ n^{-1/2} \left[ \{\log k(n)\}^{2\nu}\{k(n)^{-\rho}\}\right]^{1-1/(2\alpha)} \lesssim   \{\log k(n)\}^{2\nu} n^{-1/2} k(n)^{-\rho[1- 1/(2\alpha)]}.$$  Next, observe that $k(n) \log n / n  = o(n^{-1/2})$ since $k(n) = o(\sqrt{n} / \log n)$. Moreover, since $k(n) = o(n^{2\beta/(2\beta+1)}/ \log n)$, we have $n^{-\beta/(2\beta+1)} \sqrt{k(n) \log n / n} = o(n^{-1/2})$ by \ref{cond::A2Nuisance}. Hence, it holds that 
$$r_n^* \sqrt{k(n) \log n /n} \leq n^{-\beta/(2\beta+1)}  \sqrt{k(n) \log n / n} + k(n) \log n /n = o(n^{-1/2}).$$
Similarly, since $k(n) = o(n^{2\gamma/(2\gamma+1)}/ \log n)$, we have $ n^{-\gamma/(2\gamma+1)}\sqrt{k(n) \log n / n} = o(n^{-1/2})$ by \ref{cond::A2Nuisance}. Again, by \ref{cond::A2Nuisance}, it holds that $\beta > 1/(4\gamma)$ and, thus, $n^{-\beta/(2\beta+1)} n^{-\gamma/(2\gamma+1)}  =   o(n^{-1/2}) $. Thus, it holds that $r_n^* n^{-\gamma/(2\gamma+1)} = o(n^{-1/2})$. Combining these bounds, we find
\begin{align*}
        E_0^n\{\mathbb{I}_{A_n} \phi_{n,1}(\delta)\} &\lesssim    o(n^{-1/2}) \delta^{1-1/(2\alpha)}  +  r_n^*  \min\{\delta, \{\log k(n)\}^{\nu}k(n)^{-\rho}\}  \\
        &\quad+  \{\log k(n)\}^{2\nu} \cdot n^{-1/2} k(n)^{-\rho[1- 1/(2\alpha)]}.
\end{align*}
Using this bound for $ E_0^n\{\mathbb{I}_{A_n}\phi_{n,1}(\delta)\}$ and recalling that $ E_0^n\{\mathbb{I}_{A_n}\phi_{n,2}(\delta) \}\lesssim n^{-1/2} \mathcal{J}_{\infty}(\delta^{[2 - 1/(2\alpha)]} + n^{-1/2}, \mathcal{F})  $, it follows, for any $\delta > n^{-1/2}$, that
\begin{align}
    E_0^n\{ \mathbb{I}_{A_n}\phi_{n}(\delta) \}
    & \lesssim  o(n^{-1/2}) \delta^{1-1/(2\alpha)}  +  r_n^*  \min\{\delta, \{\log k(n)\}^{\nu}k(n)^{-\rho}\} \nonumber \\
    & \quad  +  \{\log k(n)\}^{2\nu} \cdot n^{-1/2} k(n)^{-\rho[1- 1/(2\alpha)]} + n^{-1/2} \mathcal{J}_{\infty}(\delta^{[2 - 1/(2\alpha)]} + n^{-1/2}, \mathcal{F}).  \label{eqn::upperboundLocalMod}
\end{align}
To obtain the rate result, we apply the proof of Theorem 3.2.5 of \cite{vanderVaartWellner} on the event $A_n$ where, for $\theta \in \mathcal{F}$, we take $\Theta := \mathcal{F}$, $d(\theta, \theta_0) := \|\theta - \theta_0\|$, $\mathbb{M}(\theta) := - R_0(\theta)$, and $\mathbb{M}_n(\theta) := -R_{n,k(n)}(\theta)$. Specifically, for $\widetilde{\delta}_n := \inf\{\delta \geq n^{-1/2}:  E_0^n\{\mathbb{I}_{A_n}\phi_{n}(\delta)\} \lesssim \delta^2 \}$, the proof of Theorem 3.2.5 applied on the event $A_n$ establishes that, for every $\varepsilon' > 0$, we can find a fixed constant $C > 0$ such that
$$P(A_n \cap \{\norm{\theta_{n} - \theta_{n,0}} \geq  C \widetilde{\delta}_n\} ) \leq \varepsilon'.$$
Hence, 
\begin{align*}
    P(\norm{\theta_{n} - \theta_{n,0}} \geq  C \widetilde{\delta}_n ) & \leq P(A_n \cap \{ \norm{\theta_{n} - \theta_{n,0}} \geq  C \widetilde{\delta}_n\} ) + P(A_n^c \cap \{\norm{\theta_{n} - \theta_{n,0}} \geq  C \widetilde{\delta}_n \})\\
    & \leq \varepsilon' +  P(A_n^c) \leq \varepsilon' + \varepsilon,
\end{align*} 
for all $n$ large enough.
Since $\varepsilon > 0$ is arbitrary, we then conclude that the EP-learner satisfies $\norm{\theta_{n} - \theta_{n,0}} = \mathcal{O}_p(\widetilde{\delta}_n)$.

%First, note $\delta \mapsto \frac{E_0^n\{\mathbb{I}_{A_n}\phi_{n}(\delta)\}}{\delta^2}$ is nonincreasing. Hence, if $E_0^n\{\mathbb{I}_{A_n}\phi_{n}(\delta')\} \lesssim \delta'^2 $ for some $\delta' > 0$, then $E_0^n\{\mathbb{I}_{A_n}\phi_{n}(\delta)\} \lesssim \delta^2 $ for all $\delta > \delta'$. 

Next, we determine an upper bound for $\widetilde{\delta}_n$. By the definition of $\widetilde{\delta}_n$, it holds that $\widetilde{\delta}_n \lesssim \delta $ for any $\delta > n^{-1/2}$ such that $E_0^n\{\mathbb{I}_{A_n}\phi_{n}(\delta)\} \lesssim \delta^2$. In view of \eqref{eqn::upperboundLocalMod} and the entropy bound of \ref{cond::regularityOnActionSpace}, the inequality $E_0^n\{\mathbb{I}_{A_n}\phi_{n}(\delta)\} \lesssim \delta^2$ is satisfied, up to a constant, by any $\delta > n^{-1/2}$ such that
\begin{align*}
    \delta^2 &\gtrapprox o(n^{-1/2}) \delta^{1-1/(2\alpha)}  +  r_n^*  \min\{\delta, \{\log k(n)\}^{\nu}k(n)^{-\rho}\}  +  \{\log k(n)\}^{\nu} \cdot n^{-1/2} k(n)^{-\rho[1- 1/(2\alpha)]} \\
& \quad + n^{-1/2} \left\{\delta^{\{2 - 1/(2\alpha)\}\{1 - 1/(2\alpha)\}} \right\}  . 
\end{align*}
We claim the above inequality is satisfied, up to a constant, by any $\delta > n^{-1/2}$ such that all of the following hold simultaneously: (1) $o(n^{-1/2}) \delta^{1-1/(2\alpha)} \lesssim \delta^2$; (2) $r_n^*  \min\{\delta, \{\log k(n)\}^{\nu}k(n)^{-\rho}\}  \lesssim \delta^2$; (3) $\{\log k(n)\}^{2\nu} \cdot n^{-1/2} k(n)^{-\rho[1- 1/(2\alpha)]}  \lesssim \delta^2$; and (4) $n^{-1/2}\left\{\delta^{\{2 - 1/(2\alpha)\}\{1 - 1/(2\alpha)\}} \right\} \lesssim \delta^2$. Note, by monotonicity of $\frac{\delta^{1-1/(2\alpha)} }{\delta^2}$ that $ \delta' > 0$ satisfying (1) implies that any $\delta > \delta'$ also satisfies (1). A similar argument establishes the same property for solutions of inequalities (2) and (3). For (4), the same property can be established upon noting that $\delta \mapsto \frac{\delta^{\{2 - 1/(2\alpha)\}\{1 - 1/(2\alpha)\}}}{\delta^2} $ is also monotone as $[2 - 1/(2\alpha)][1 - 1/(2\alpha)] < 2$. Hence, if we can find individual solutions $\delta_{n,1}, \delta_{n,2}, \delta_{n,3}, \delta_{n,4} > n^{-1/2}$ that respectively satisfy (1), (2), (3), and (4), then their maximum $\max\{\delta_{n,1}, \delta_{n,2}, \delta_{n,3}, \delta_{n,4}\}$ will simultaneously satisfy (1)-(4). 
Note the first and fourth bound are satisfied by $\delta_{n,1} = \delta_{n,4} = o(n^{-\alpha/(2\alpha+1)}) $, since $\alpha > 1/2$ implies $2 - 1/(2\alpha) > 1$. Next, noting $\{\log k(n)\}^{2\nu} \lesssim \{\log n\}^{2\nu}$, the third constraint is satisfied by $\delta_{n,3} > 0$ such that $\delta_{n,3}^2 = O(  \{\log n\}^{2\nu} n^{-1/2} k(n)^{-\rho[1- 1/(2\alpha)]})  $. Lastly, note $r_n^*  \min\{\delta_{n,2}, \{\log k(n)\}^\nu k(n)^{-\rho}\}  \lesssim \delta_{n,2}^2$ is satisfied by some $\delta_{n,2} = O(\min\{ r_n^* , \sqrt{r_n^* \{\log n\}^\nu k(n)^{-\rho}\}}\})$. It follows that $\max\{\delta_{n,1}, \delta_{n,2}, \delta_{n,3}, \delta_{n,4}\}$ and, hence, also $\delta_{n,1}+ \delta_{n,2}+ \delta_{n,3} + \delta_{n,4}$ satisfies (1)-(4), up to a constant. Taking our upper bound for $\widetilde{\delta}_n$ as $\delta_{n,1}+ \delta_{n,2}+ \delta_{n,3} + \delta_{n,4}$, we conclude
\begin{align}
    \widetilde{\delta}_n^2 &\lesssim o(n^{-2\alpha/(2\alpha+1)} ) +  O\left( \{\log n\}^{2\nu} n^{-1/2} k(n)^{-\rho[1- 1/(2\alpha)]}  +  r_n^*\min\{ r_n^* ,  \{\log k(n)\}^\nu k(n)^{-\rho}\} \right).
    \label{eqn::ratewithmin}
\end{align}
Recalling that $r_n^* = n^{-\beta/(2\beta+1)} + \sqrt{k(n) \log n /n}$ and taking the second argument in the minimum operation in \eqref{eqn::ratewithmin}, we obtain the bound:
\begin{align*}
   \widetilde{\delta}_n^2 &\lesssim o(n^{-2\alpha/(2\alpha+1)} ) + O\left(  \{\log n\}^{2\nu} n^{-1/2} k(n)^{-\rho[1- 1/(2\alpha)]}  +  r_n^*  \{\log k(n)\}^\nu k(n)^{-\rho }\right)\\
    & \lesssim  o(n^{-2\alpha/(2\alpha+1)}) + O\left( \{\log n\}^{2\nu}  \left\{n^{-\beta/(2\beta+1)} + \sqrt{k(n) \log n /n} + n^{-1/2} k(n)^{\rho/(2\alpha))}\right\} k(n)^{-\rho} \right)\\
  & \lesssim o(n^{-2\alpha/(2\alpha+1)}) +  O\left(  \{\log n\}^{2\nu} \left\{n^{-\beta/(2\beta+1)} + \sqrt{k(n) \log n /n} + \sqrt{k(n)^{(\rho/\alpha)} / n}\right\} k(n)^{-\rho}\right).
\end{align*}
Taking the first argument in the minimum operation in  \eqref{eqn::ratewithmin}, we also have the following bound:
\begin{align*}
    \widetilde{\delta}_n^2 &\lesssim o(n^{-2\alpha/(2\alpha+1)} ) +  O\left(n^{-2\beta/(2\beta+1)} +  k(n) \log n /n  + \{\log n\}^{2\nu} n^{-1/2} k(n)^{-\rho[1- 1/(2\alpha)]}   \right)  .
\end{align*}
Taking the square root of both sides of the above bounds gives the desired rate of the Theorem. To complete the proof, it remains to prove Lemma \ref{lemma::modulusOfCont}, which we do next.
 \end{proof}

 %$\varepsilon_n^2  =   \{\log n\}^{\nu} n^{-1/2} k(n)^{-\rho[1- 1/(2\alpha)]}  + n^{-2\beta/(2\beta+1)} +  k(n) \log n /n  $ or $\varepsilon_n^2 =    \{\log n\}^{\nu} \left\{n^{-\beta/(2\beta+1)} + \sqrt{k(n) \log n /n} + \sqrt{k(n)^{(\rho/\alpha)} / n}\right\} k(n)^{-\rho}$.

%To determine an upper bound $\widetilde \varepsilon_n$ for $\varepsilon_n$, it suffices to choose $\widetilde \varepsilon_n$ such that the terms $o(n^{-1/2})  \widetilde\varepsilon_n^{1-1/(2\alpha)}$, $r_n^* k(n)^{-\rho}$, $n^{-1/2}k(n)^{-\rho[1- 1/(2\alpha)]}$,  $o(n^{-2\alpha/(2\alpha+1)})$, and  $n^{-1/2} \mathcal{J}_{\infty}( \widetilde\varepsilon_n^2 + n^{-1/2}, \mathcal{F})$ are all, up to a constant, upper bounded by $\widetilde\varepsilon_n^2$. Since we have $\mathcal{J}_{\infty}\left( \delta , \mathcal{F}\right) \lesssim   \delta^{1-1/(2\alpha)} $ by \ref{cond::regularityOnActionSpace},  a simple computation shows the rate $\varepsilon_n$ can be upper bounded as:

%\subsection*{Proof of bound for $  E_0^n\{\phi_{n,1}(\delta)\}$:}

Recall from the proof of Theorem \ref{theorem::EpLearnerRate}, for $\delta >0$, that
$$ \phi_{n,1}(\delta) :=  \sup_{\theta_1, \theta_2 \in \mathcal{F}: \norm{\theta_1 - \theta_2} \leq \delta} \left|  R_{n,0}(\theta_2) - R_{n,k(n)}(\theta_2) - R_{n,0}(\theta_{1}) + R_{n,k(n)}(\theta_{1})\right|.$$
Further recall that $r_n^* := n^{-\beta/(2\beta+1)} + \sqrt{k(n) \log n / n}$, $s_n^* := r_n^* + n^{-\gamma/(2\gamma+1)}$ and, moreover, $\rho_{n,\infty} := \{\log k(n)\}^{\nu} k(n)^{-\rho}$.
The follow lemma concerns the expected value of this quantity.
\begin{lemma}[Proof of bound for $  E_0^n\{ \mathbb{I}_{A_n}\phi_{n,1}(\delta)\}$]
\label{lemma::modulusOfCont}
For any $\delta>n^{-1/2}$, it holds that
\begin{align*}
    E_0^n\{ \mathbb{I}_{A_n} \phi_{n,1}(\delta) \}&\lesssim  \delta^{1-1/(2\alpha)}   \left\{ r_n^*  n^{-\gamma/(2\gamma+1)}  + r_n^*  \sqrt{k(n) \log n / n} + o(n^{-1/2})  \right\}\\
    & \quad + r_n^*  \min\{\delta, \{\log k(n)\}^{\nu} k(n)^{-\rho}\} +  n^{-1/2} \left[ \{\log k(n)\}^{2\nu}\{k(n)^{-\rho}\}\right]^{1-1/(2\alpha)}  .
\end{align*}
\end{lemma}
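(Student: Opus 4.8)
I would begin from the identity $R_{n,k(n)}(\theta)=\overline{P}_nL_{\mu_{n,\diamond}^*}(\theta)=\overline{P}_nL_{\pi_{n,\diamond},\mu_{n,\diamond}^*}(\theta)-\overline{P}_n\Delta_{\pi_{n,\diamond},\mu_{n,\diamond}^*}(\cdot\,;\theta)$ together with $R_{n,0}=\overline{P}_nL_{\pi_0,\mu_0}$, which yields, for any $\theta_1,\theta_2\in\mathcal{F}$,
\[
\left[R_{n,0}-R_{n,k(n)}\right][\theta_2-\theta_1]
=\overline{P}_n\!\left\{\left(L_{\pi_0,\mu_0}-L_{\pi_{n,\diamond},\mu_{n,\diamond}^*}\right)[\theta_2-\theta_1]\right\}
+\overline{P}_n\!\left\{\Delta_{\pi_{n,\diamond},\mu_{n,\diamond}^*}(\cdot\,;\theta_2)-\Delta_{\pi_{n,\diamond},\mu_{n,\diamond}^*}(\cdot\,;\theta_1)\right\}.
\]
Taking the supremum over $\theta_1,\theta_2\in\mathcal{F}$ with $\|\theta_1-\theta_2\|\le\delta$, it suffices to bound in expectation, on the event $A_n$, the localized suprema of the two summands separately; I will call them the \emph{loss-difference term} and the \emph{debiasing term}. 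Throughout I would use the assumptions of Theorem \ref{theorem::EpLearnerRate}, the entropy exponent $\alpha$ of \ref{cond::regularityOnActionSpace}, and the sup-norm coupling \ref{cond::theorem2::couplings}.

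\textbf{The loss-difference term.} Writing $\overline{P}_n=\overline{P}_0+(\overline{P}_n-\overline{P}_0)$, I would control the centered part with the first display of Lemma \ref{lemma::empProcDeltastar4} and the bias part with (the proof of) Lemma \ref{lemma::P0boundDR}, obtaining a bound of the order
\[
n^{-1/2}s_n^*\mathcal{J}_\infty\!\left(\delta/s_n^*,\mathcal{F}\right)
+\sup_{\|\theta_1-\theta_2\|\le\delta}\|\theta_1-\theta_2\|_\infty\left\{r_n^*\,n^{-\gamma/(2\gamma+1)}+r_n^*\sqrt{k(n)\log n/n}+\mathbb{I}_{g_1,g_2}(r_n^*)^2\right\}.
\]
Then I would invoke \ref{cond::theorem2::couplings} to replace $\sup_{\|\theta_1-\theta_2\|\le\delta}\|\theta_1-\theta_2\|_\infty$ by a multiple of $\delta^{1-1/(2\alpha)}$ and \ref{cond::regularityOnActionSpace} to bound $\mathcal{J}_\infty(\delta/s_n^*,\mathcal{F})\lesssim(\delta/s_n^*)^{1-1/(2\alpha)}$, so that $n^{-1/2}s_n^*\mathcal{J}_\infty(\delta/s_n^*,\mathcal{F})\lesssim(s_n^*)^{1/(2\alpha)}n^{-1/2}\delta^{1-1/(2\alpha)}=o(n^{-1/2})\delta^{1-1/(2\alpha)}$, using $s_n^*=o(1)$ from \ref{cond::A2Nuisance} and the growth bound on $k(n)$; likewise $\mathbb{I}_{g_1,g_2}(r_n^*)^2=o(n^{-1/2})$ since $\beta>1/2$ whenever $\mathbb{I}_{g_1,g_2}=1$. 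This reproduces the first line of the claimed bound.

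\textbf{The debiasing term.} Since $\Delta_{\pi,\mu}(\cdot\,;\theta)$ is linear in $(h_m\circ\theta)_{m\in\{1,2\}}$, this reduces to bounding $\sup|\overline{P}_n\overline{\Delta}^{(m)}_{\pi_{n,\diamond},\mu_{n,\diamond}^*}(\cdot\,;\phi)|$ over $\phi$ in the localized class $\{h_m\circ\theta_2-h_m\circ\theta_1:\|\theta_1-\theta_2\|\le\delta\}\subset\mathcal{H}-\mathcal{H}$, which has $L^2(P_0)$-radius $\lesssim\delta$ by Lipschitzness of $h_m$. Mirroring the proof of Theorem \ref{theorem::EPriskEff}, I would split $\phi=\Pi_{k(n)}\phi+(\phi-\Pi_{k(n)}\phi)$, noting $\Pi_{k(n)}\phi\in\mathcal{H}_{k(n)}$ by linearity of the projection. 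On $\Pi_{k(n)}\phi$ the first-order equations solved by Algorithm \ref{alg::debiasing} force $\overline{P}_n\overline{\Delta}^{*(m)}_{\pi_{n,\diamond},\mu_{n,\diamond}}(\cdot\,;\Pi_{k(n)}\phi)=0$, and the residual $\overline{\Delta}^{(m)}_{\pi_{n,\diamond},\mu_{n,\diamond}^*}-\overline{\Delta}^{*(m)}_{\pi_{n,\diamond},\mu_{n,\diamond}}$ vanishes when $\mathbb{I}_{g_1,g_2}=0$ and is otherwise handled by Lemma \ref{lemma::P0boundSieve1} (the $\overline{P}_0$-part) and Lemma \ref{lemma::empProcDeltastar} (the $(\overline{P}_n-\overline{P}_0)$-part), using $\|\Pi_{k(n)}\mathcal{H}\|_\infty=O(1)$ and the entropy estimates of Lemma \ref{lemma::metricentropybounds}; these contributions are $o(n^{-1/2})\delta^{1-1/(2\alpha)}$ after using \ref{cond::A2Nuisance}, \ref{cond::regularityOnActionSpace}, and $r_n^*=o(1)$. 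On $\phi-\Pi_{k(n)}\phi$, the contraction property of $\Pi_{k(n)}$ and Lemma \ref{lemma::sieveRateSup} give $\|\phi-\Pi_{k(n)}\phi\|\lesssim\min\{\delta,\{\log k(n)\}^\nu k(n)^{-\rho}\}$ and $\|\phi-\Pi_{k(n)}\phi\|_\infty\lesssim\{\log k(n)\}^\nu k(n)^{-\rho}$; the $\overline{P}_0$-part is then $\lesssim r_n^*\min\{\delta,\{\log k(n)\}^\nu k(n)^{-\rho}\}$ by Lemma \ref{lemma::P0boundSieve2}, while the $(\overline{P}_n-\overline{P}_0)$-part splits via Lemma \ref{lemma::empProcDeltastar2} and Lemma \ref{lemma::empProcDeltastar3} applied to $\mathcal{H}-\Pi_{k(n)}\mathcal{H}$ restricted to radius $\lesssim\{\log k(n)\}^\nu k(n)^{-\rho}$, which by Lemma \ref{lemma::metricentropybounds} and \ref{cond::regularityOnActionSpace} produces the term $n^{-1/2}[\{\log k(n)\}^{2\nu}k(n)^{-\rho}]^{1-1/(2\alpha)}$ together with lower-order pieces absorbed into the already-identified terms. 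Collecting all contributions yields the stated bound.

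\textbf{Main obstacle.} The conceptual content — that the sieve-adjusted $\mu_{n,\diamond}^*$ annihilates the leading, sieve-supported piece of the debiasing term — is already in place from Theorem \ref{theorem::EPriskEff}; the work here is bookkeeping of the localization. The delicate point is propagating the radius $\delta$ (and its minimum with the sieve-approximation scale $\{\log k(n)\}^\nu k(n)^{-\rho}$) consistently through the projection split and the half-dozen supporting maximal inequalities, and checking that the resulting entropy-integral and second-order remainders either collapse into $o(n^{-1/2})\delta^{1-1/(2\alpha)}$ (using $s_n^*,r_n^*=o(1)$, the polynomial entropy exponent $\alpha$ of \ref{cond::regularityOnActionSpace}, and \ref{cond::theorem2::couplings}) or into the two surviving terms $r_n^*\min\{\delta,\{\log k(n)\}^\nu k(n)^{-\rho}\}$ and $n^{-1/2}[\{\log k(n)\}^{2\nu}k(n)^{-\rho}]^{1-1/(2\alpha)}$, with no stray powers of $\delta$ or $\log k(n)$ left over.
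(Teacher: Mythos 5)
Your decomposition into a loss-difference term and a debiasing term is exactly the paper's split of $E_0^n\{\mathbb{I}_{A_n}\phi_{n,1}(\delta)\}$ into terms (I) and (II), and the lemmas you invoke at each step — Lemma~\ref{lemma::empProcDeltastar4} and Lemma~\ref{lemma::P0boundDR} for the loss-difference piece; the projection split with the first-order conditions, Lemma~\ref{lemma::P0boundSieve1}/Lemma~\ref{lemma::empProcDeltastar} on the $\Pi_{k(n)}$-supported part and Lemma~\ref{lemma::P0boundSieve2}/Lemma~\ref{lemma::empProcDeltastar2}/Lemma~\ref{lemma::empProcDeltastar3} on the sieve residual, with Lemma~\ref{lemma::metricentropybounds} to push the entropy integral through the projection — are precisely the paper's, so this is essentially the same argument. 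One correction is needed: on the $\Pi_{k(n)}\phi$ part you invoke $\|\Pi_{k(n)}\mathcal{H}\|_\infty=O(1)$ and conclude $o(n^{-1/2})\delta^{1-1/(2\alpha)}$. That $O(1)$ envelope is what works in Theorem~\ref{theorem::EPriskEff}, where no localization is performed; here it would give a $\delta$-free term $\mathbb{I}_{g_1,g_2}\left(r_n^*\right)^2 = o(n^{-1/2})$ which does not sit under either $r_n^*\min\{\delta,\rho_{n,\infty}\}$ or $n^{-1/2}\rho_{n,\infty}^{1-1/(2\alpha)}$ uniformly over $\delta>n^{-1/2}$. The fix is the localized envelope $\sup_{\phi\in\mathcal{H}_\delta}\|\Pi_{k(n)}\phi\|_\infty\lesssim\rho_{n,\infty}+\delta^{1-1/(2\alpha)}$ (triangle inequality with Lemma~\ref{lemma::sieveRateSup} and \ref{cond::theorem2::couplings}), which splits this contribution into $\delta^{1-1/(2\alpha)}\left(r_n^*\right)^2=o(n^{-1/2})\delta^{1-1/(2\alpha)}$ plus $\rho_{n,\infty}\left(r_n^*\right)^2$; the latter is not $o(n^{-1/2})\delta^{1-1/(2\alpha)}$ but is dominated by $n^{-1/2}\rho_{n,\infty}^{1-1/(2\alpha)}$ since $\left(r_n^*\right)^2=o(n^{-1/2})$ when $\mathbb{I}_{g_1,g_2}=1$ and $\rho_{n,\infty}\le\rho_{n,\infty}^{1-1/(2\alpha)}$. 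With that replaced, your bookkeeping reproduces the stated bound.
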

\begin{proof}
As shorthand, we denote $\left[F - \widetilde{F} \right]  \left[ \theta_1 - \theta_2\right] := \left[F(\theta_1) - \widetilde{F}(\theta_1)\right] - \left[F(\theta_2) - \widetilde{F}(\theta_2)\right]$ for any two functionals $F, \widetilde{F} \in \ell^{\infty}(\mathcal{F})$.  

To bound $ E_0^n\{ \mathbb{I}_{A_n}\phi_{n,1}(\delta)\}$, we proceed along lines similar to the proof of Theorem  \ref{theorem::EPriskEff}. Adding and subtracting, we have, by the triangle inequality, that
\begin{align}
    E_0^n\{ \mathbb{I}_{A_n}\phi_{n,1}(\delta) \} &\leq E_0^n \left[ \mathbb{I}_{A_n}\sup_{\theta_1 , \theta_2 \in \mathcal{F}: \norm{\theta_1 - \theta_2} \leq \delta } \left| \left[R_{n,0} - R_{\pi_{n,\diamond}, \mu_{n,\diamond}^*}\right]\left[\theta_1 - \theta_2\right] \right| \right] \nonumber \\
    & \quad + E_0^n \left[ \mathbb{I}_{A_n} \sup_{\theta_1 , \theta_2 \in \mathcal{F}: \norm{\theta_1 - \theta_2} \leq \delta } \left| \left[R_{\pi_{n,\diamond}, \mu_{n,\diamond}^*} - R_{n, k(n)}\right]\left[\theta_1 - \theta_2\right] \right| \right] .
  \label{eqn::theorem2::AB}
\end{align}
Turning to the second term in the above bound, note that
\begin{align*}
    & \hspace{-1cm} E_0^n \left[ \mathbb{I}_{A_n} \sup_{\theta_1 , \theta_2 \in \mathcal{F}: \norm{\theta_1 - \theta_2} \leq \delta } \left| \left[R_{\pi_{n,\diamond}, \mu_{n,\diamond}^*} - R_{n, k(n)}\right]\left[\theta_1 - \theta_2\right] \right| \right]  \\
    & = E_0^n\left[ \mathbb{I}_{A_n}\sup_{\theta_1 , \theta_2 \in \mathcal{F}: \norm{\theta_1 - \theta_2} \leq \delta } \left|\overline{P}_n \left\{{\Delta}_{\pi_{n,\diamond}, \mu_{n,\diamond}^*}(\cdot,\theta_1) -   {\Delta}_{\pi_{n,\diamond}, \mu_{n,\diamond}^*}(\cdot,\theta_2)  \right\} \right|\right]\\
    & \leq \sum_{m \in \{1,2\}} E_0^n\left[ \mathbb{I}_{A_n}\sup_{\theta_1 , \theta_2 \in \mathcal{F}: \norm{\theta_1 - \theta_2} \leq \delta } \left|\overline{P}_n  \overline{\Delta}_{\pi_{n,\diamond}, \mu_{n,\diamond}^*}^{(m)}(\cdot, h_m \circ \theta_1 - h_m \circ \theta_2)   \right|\right],
\end{align*}
where we recall, by definition, that the map $\phi \mapsto \overline{P}_n\overline{\Delta}^{(m)}_{\pi_{n,\diamond}, \mu_{n,\diamond}^*}(\cdot\,,\phi)$ is linear for $m \in \{1,2\}$.
Hence, we have
\begin{align*}
    E_0^n\{ \mathbb{I}_{A_n}\phi_{n,1}(\delta) \} &\leq E_0^n \left[ \mathbb{I}_{A_n}\sup_{\theta_1 , \theta_2 \in \mathcal{F}: \norm{\theta_1 - \theta_2} \leq \delta } \left| \left[R_{n,0} - R_{\pi_{n,\diamond}, \mu_{n,\diamond}^*}\right]\left[\theta_1 - \theta_2\right] \right| \right] \leq \text{(I)} + \text{(II)},
\end{align*}
where we denote:
\begin{align*}
   & \text{(I)} := E_0^n\left[ \mathbb{I}_{A_n}\sup_{\theta_1 , \theta_2 \in \mathcal{F}: \norm{\theta_1 - \theta_2} \leq \delta } \left|\left[R_{n,0} - R_{\pi_{n,\diamond}, \mu_{n,\diamond}^*}\right]\left[\theta_1 - \theta_2\right] \right|\right];\\
   & \text{(II)}:=  \sum_{m \in \{1,2\}} E_0^n\left[ \mathbb{I}_{A_n}\sup_{\theta_1 , \theta_2 \in \mathcal{F}: \norm{\theta_1 - \theta_2} \leq \delta } \left|\overline{P}_n  \overline{\Delta}_{\pi_{n,\diamond}, \mu_{n,\diamond}^*}^{(m)}(\cdot,h_m \circ \theta_1 - h_m \circ \theta_2) \right| \right] \\
   &\phantom{\text{(II)}:}=  \sum_{m \in \{1,2\}} E_0^n\left[ \mathbb{I}_{A_n}\sup_{\phi\in \mathcal{H}^{\star}_{\delta} } \left|\overline{P}_n  \overline{\Delta}_{\pi_{n,\diamond}, \mu_{n,\diamond}^*}^{(m)}(\cdot,\phi) \right| \right],
   % & \sum_{m \in \{1,2\}} E_0^n\left[ \mathbb{I}_{A_n}\sup_{\phi \in  \mathcal{H}_{\delta} } \left|\overline{P}_n \overline{\Delta}_{\pi_{n,\diamond}, \mu_{n,\diamond}^*}^{(m)}(\phi) \right|\right] ,
\end{align*}
where we let $\mathcal{H}^{\star}_{\delta} := \left\{h_m\circ \theta_1 - h_m\circ \theta_2 : \theta_1,\theta_2\in\mathcal{F},\|\theta_1-\theta_2\|\le\delta,m\in\{1,2\} \right\}$.

We bound each of the above terms in turn. To bound term (I), consider the decomposition:
\begin{align*}
  &\left[R_{n,0} - R_{\pi_{n,\diamond}, \mu_{n,\diamond}^*}\right]\left[\theta_1 - \theta_2\right]  \\
  &= \left[\overline{P}_n \left( L_{\pi_0, \mu_0} - L_{\pi_{n,\diamond}, \mu_{n,\diamond}^*}\right)\right]\left[\theta_1 - \theta_2 \right]\\
    & = \left[\overline{P}_0\left(L_{\pi_0, \mu_0} - L_{\pi_{n,\diamond}, \mu_{n,\diamond}^*}\right)\right]\left[\theta_1 - \theta_2\right]  + \left[\left(\overline{P}_n - \overline{P}_0 \right)\left(L_{\pi_0, \mu_0} - L_{\pi_{n,\diamond}, \mu_{n,\diamond}^*}\right) \right]\left[\theta_1 - \theta_2\right]. 
\end{align*} 
By \ref{cond::theorem2::couplings}, it holds that $ \sup_{\theta_1 , \theta_2 \in \mathcal{F}: \norm{\theta_1 - \theta_2} \leq \delta } \|\theta_1 - \theta_2\|_{\infty} \leq \delta^{1- 1/(2\alpha)}$. Thus, applying Lemma \ref{lemma::P0boundDR}, it holds that
\begin{align*}
     & \hspace{-1cm} \mathbb{I}_{A_n} \sup_{\theta_1 , \theta_2 \in \mathcal{F}: \norm{\theta_1 - \theta_2} \leq \delta } \left|\left[\overline{P}_0\left(L_{\pi_0, \mu_0} - L_{\pi_{n,\diamond}, \mu_{n,\diamond}^*}\right)\right]\left[\theta_1 - \theta_2\right] \right|  \\
 &       \hspace{2cm} \lesssim \sup_{\theta_1 , \theta_2 \in \mathcal{F}: \norm{\theta_1 - \theta_2} \leq \delta } \|\theta_1 - \theta_2\|_{\infty} \left\{ r_n^*  n^{-\gamma/(2\gamma+1)}  +  \mathbb{I}_{g_1, g_2}  (r_n^*)^2 + r_n^* \sqrt{k(n) \log n / n} \right\}\\
 &  \hspace{2cm} \lesssim \delta^{1- 1/(2\alpha)}\left\{ r_n^*  n^{-\gamma/(2\gamma+1)}  +  \mathbb{I}_{g_1, g_2}  (r_n^*)^2 + r_n^* \sqrt{k(n) \log n / n} \right\},
\end{align*}  
where the right-hand side is deterministic. Note that $ \mathbb{I}_{g_1, g_2} (r_n^*)^2 = o(n^{-1/2})$ by \ref{cond::A2Nuisance} and the constraint that $\mathbb{I}_{g_1, g_2} k(n) = \mathbb{I}_{g_1, g_2} o(n^{2\beta/(2\beta+1)}/ \log n)$.  Hence, the term $ \delta^{1-1/(2\alpha)} \mathbb{I}_{g_1, g_2}   (r_n^*)^2 =  \delta^{1-1/(2\alpha)} o(n^{-1/2})$. Next, the entropy integral bound of \ref{cond::regularityOnActionSpace} implies that $  s_n^* \mathcal{J}_{\infty} \left(   \delta  / s_n^*  , \mathcal{F}\right) \lesssim \delta^{1-1/(2\alpha)}    (s_n^*)^{1/(2\alpha)} $. Hence, by Lemma \ref{lemma::empProcDeltastar4}, it holds that
\begin{align*}
    & \hspace{-2cm} E_0^n \left[ \mathbb{I}_{A_n}  \sup_{\theta_1 , \theta_2 \in \mathcal{F}: \norm{\theta_1 - \theta_2} \leq \delta } \left|\left[\left(\overline{P}_n - \overline{P}_0 \right)\left(L_{\pi_0, \mu_0} - L_{\pi_{n,\diamond}, \mu_{n,\diamond}^*}\right) \right]\left[\theta_1 - \theta_2\right] \right| \right]\\
 &       \hspace{1cm}\lesssim  n^{-1/2} s_n^* \mathcal{J}_{\infty} \left(  \delta  / s_n^*  , \mathcal{F}\right) \\
  &       \hspace{1cm} \lesssim \delta^{1-1/(2\alpha)} \left\{n^{-1/2}   (s_n^*)^{1/(2\alpha)}\right\}.
\end{align*}  
 Consequently, applying both of the above bounds and applying the triangle inequality, we find
\begin{align*}
     \text{(I)}  
     & \lesssim  \delta^{1-1/(2\alpha)} \left[ r_n^* n^{-\gamma/(2\gamma+1)} + o(n^{-1/2}) + r_n^*  \sqrt{k(n) \log n / n} +  n^{-1/2}  (s_n^*)^{1/(2\alpha)} \right].
\end{align*}

Since $r_n^* =o(1)$, we have $ n^{-1/2}r_n^* \mathcal{J}_{\infty} \left( \delta   / r_n^*  , \mathcal{F}\right)\lesssim \delta^{1-1/(2\alpha)}o(n^{-1/2})$. Moreover, the fact that $  \sqrt{k(n) \log n / n}   \leq r_n^*$ implies that
$$  \mathbb{I}_{g_1, g_2}\delta^{1-1/(2\alpha)}   r_n^*\sqrt{k(n) \log n / n} \leq   \mathbb{I}_{g_1, g_2}\delta^{1-1/(2\alpha)}   (r_n^*)^2 = \delta^{1-1/(2\alpha)}   o(n^{-1/2}).$$

We now turn to term (II). Note, by Lipschitz continuity of the fixed functions $h_1$ and $h_2$ in the definition of $R_0$, there exists some Lipschitz constant $L > 0$ such that $ \norm{h_m \circ \theta_1 - h_m \circ \theta_2}_{\infty} \leq L \| \theta_1 - \theta_2\|_{\infty}$ and $ \norm{h_m \circ \theta_1 - h_m \circ \theta_2} \leq L \| \theta_1 - \theta_2\|$ for all $\theta_1, \theta_2 \in \mathcal{F}$ and $m\in\{1,2\}$. In view of this, we define the function classes:
\begin{align*}
    \mathcal{H}_{\delta} & := \left\{\phi_1 - \phi_2: \phi_1, \phi_2 \in \mathcal{H}; \, \norm{\phi_1 - \phi_2} \leq L\delta;\, \norm{\phi_1 - \phi_2}_{\infty} \leq L\delta^{1-1/(2\alpha)} \right\};\\
    \mathcal{H}_{1,\delta}^{k(n)} &:= \left\{ \phi - \Pi_{k(n)}\phi: \phi \in \mathcal{H}_\delta \right\};\;\mathcal{H}_{2,\delta}^{k(n)}:= \left\{\Pi_{k(n)}\phi: \phi \in \mathcal{H}_\delta \right\}.
\end{align*}
 Importantly, for $m\in\{1,2\}$, using the norm coupling of \ref{cond::theorem2::couplings}, it follows that $h_m \circ \theta_1 - h_m \circ \theta_2 \in \mathcal{H}_{\delta}$ for all $\theta_1, \theta_2 \in \mathcal{F}$ with $\|\theta_1 - \theta_2\| \leq \delta$. Hence, 
$ \mathcal{H}^{\star}_{\delta}  \subseteq \mathcal{H}_{\delta}$, which we will use to bound the supremum over $\mathcal{H}^{\star}_{\delta}$ appearing in term (II) by a supremum over $\mathcal{H}_{\delta}$. We will then use the function classes $  \mathcal{H}_{1,\delta}^{k(n)} $ and $  \mathcal{H}_{2,\delta}^{k(n)} $ to further bound the supremum over $\mathcal{H}_{\delta}$.

Before proceeding with the proof, we derive norm and entropy integral bounds for $\mathcal{H}_{\delta}$, $\mathcal{H}_{1,\delta}^{k(n)} $, and $\mathcal{H}_{2,\delta}^{k(n)}$. To this end, by \ref{cond::regularityOnActionSpace}, \ref{cond::sieveApproxthrm0}, and Lemma \ref{lemma::metricentropybounds}, we have $\mathcal{J}_{\infty}(\delta , \mathcal{H}_{\delta}) \lesssim \mathcal{J}_{\infty}(\delta , \mathcal{F})$ and $\mathcal{J}_{\infty}(\delta , \mathcal{H}_{1,\delta}^{k(n)}) + \mathcal{J}_{\infty}(\delta , \mathcal{H}_{2,\delta}^{k(n)}) \lesssim  \mathcal{J}_{\infty}(\{\log k(n)\}^{\nu} \delta , \mathcal{F})$ where $\nu \geq 0$. Moreover, since $\Pi_{k(n)}$ is an orthogonal projection, we have both $ \|\Pi_{k(n)} \phi\|  \leq \|\phi\| \lesssim \delta$ and $ \|\phi - \Pi_{k(n)} \phi \|   \leq \|\phi\| \lesssim \delta$ for any $\phi \in \mathcal{H}_{\delta}$. Also, since $\|\cdot\|\le\|\cdot\|_\infty$ and by Lemma \ref{lemma::sieveRateSup}, $\sup_{\phi \in  \mathcal{H}_{1,\delta}^{k(n)}} \|\phi - \Pi_{k(n)}\phi\|\le \sup_{\phi \in  \mathcal{H}_{1,\delta}^{k(n)}} \|\phi - \Pi_{k(n)}\phi\|_{\infty} \lesssim \rho_{n, \infty}$. Combining the preceding observations, $\sup_{\phi \in  \mathcal{H}_{1,\delta}^{k(n)}}\|\phi\|\lesssim \min\{\rho_{n,\infty}, \delta\}$ and $ \sup_{\phi \in  \mathcal{H}_{2,\delta}^{k(n)}}\|\phi\| \lesssim \delta$. In addition, by the triangle inequality, we have
$$\sup_{\phi \in  \mathcal{H}_{2,\delta}^{k(n)}}\|\phi\|_{\infty} = \sup_{\phi \in  \mathcal{H}_{\delta} }\|\Pi_{k(n)}\phi\|_{\infty} \leq \sup_{\phi \in  \mathcal{H}_{\delta} } \|\phi - \Pi_{k(n)}\phi\|_{\infty} + \sup_{\phi \in  \mathcal{H}_{\delta} }\|\phi\|_{\infty}\lesssim \rho_{n,\infty} + \delta^{1-1/(2\alpha)} ,$$ 
where we used that $ \sup_{\phi \in  \mathcal{H}_{\delta} }\|\phi\|_{\infty} \lesssim \delta^{1-1/(2\alpha)}$ by \ref{cond::theorem2::couplings}.

Proceeding with the proof, observe, by the triangle inequality, that
\begin{align*}
 \text{(II)}  & \lesssim \max_{m \in \{1,2\}} E_0^n \left[ \mathbb{I}_{A_n} \sup_{\phi \in \mathcal{H}_{\delta}}   \left|\overline{P}_n \overline{\Delta}^{(m)}_{\pi_{n,\diamond}, \mu_{n,\diamond}^*}(\cdot\,,\phi) \right|\right]\\
  & \lesssim \text{(IIA)} + \text{(IIB)},
\end{align*}
where
\begin{align*}
   & \text{(IIA)} := \max_{m \in \{1,2\}} E_0^n \left[  \mathbb{I}_{A_n}\sup_{\phi \in  \mathcal{H}_{1,\delta}^{k(n)}} \left|\overline{P}_n \overline{\Delta}^{(m)}_{\pi_{n,\diamond}, \mu_{n,\diamond}^*}(\cdot\,,\phi)\right|\right]; \\
   & \text{(IIB)} :=  \max_{m \in \{1,2\}} E_0^n\left[  \mathbb{I}_{A_n} \sup_{\phi \in  \mathcal{H}_{2,\delta}^{k(n)}} \left|\overline{P}_n \overline{\Delta}^{(m)}_{\pi_{n,\diamond}, \mu_{n,\diamond}^*}(\cdot\,,\phi) \right|\right].
\end{align*} 
To bound (II), it suffices to bound $\text{(IIA)}$ and $\text{(IIB)}$.
To this end, term (IIA) can be further bounded as $\text{(IIA)}  \leq  \text{(IIA1)} + \text{(IIA2)}+ \text{(IIA3)}$,
where
\begin{align*}
   \text{(IIA1)} &:=  \max_{m \in \{1,2\}} E_0^n \left[  \mathbb{I}_{A_n} \sup_{\phi \in  \mathcal{H}_{1,\delta}^{k(n)}} \left|\overline{P}_0 \overline{\Delta}^{(m)}_{\pi_{n,\diamond}, \mu_{n,\diamond}^*}(\cdot\,,\phi) \right| \right];\\
    \text{(IIA2)} &:=  \max_{m \in \{1,2\}} E_0^n \left[  \mathbb{I}_{A_n} \sup_{\phi \in  \mathcal{H}_{1,\delta}^{k(n)}} \left|\left( \overline{P}_n- \overline{P}_0\right) \left(  \overline{\Delta}_{\pi_{n,\diamond}, \mu_0}^{(m)}(\cdot\,,\phi) \right) 
    \right| \right];\\
     \text{(IIA3)} &:=   \max_{m \in \{1,2\}} E_0^n\left[  \mathbb{I}_{A_n} \sup_{\phi \in  \mathcal{H}_{1,\delta}^{k(n)}} \left|\left( \overline{P}_n- \overline{P}_0\right) \left(\overline{\Delta}^{(m)}_{\pi_{n,\diamond}, \mu_{n,\diamond}^*}(\cdot\,,\phi) - \overline{\Delta}_{\pi_{n,\diamond}, \mu_0}^{(m)}(\cdot\,,\phi) \right) 
    \right|\right].
\end{align*}
To bound (IIA1), we apply Lemma \ref{lemma::P0boundSieve2} with $\mathcal{G}:=\mathcal{H}_{1,\delta}^{k(n)}$, the sieve approximation rate of \ref{cond::sieveApproxthrm}, and Event \ref{event::debias_mu_n_star} to obtain the bound:
\begin{align*}
 \text{(IIA1)} & \lesssim    \sup_{\phi \in \mathcal{H}_{1,\delta}^{k(n)}} \norm{\phi - \Pi_{k(n)}\phi}  r_n^* \lesssim \min\{\delta,  \rho_{n,\infty} \} \cdot r_n^*.
 \end{align*}
To bound (IIA2), we similarly apply Lemma \ref{lemma::empProcDeltastar3} with $\delta := \rho_{n,\infty} +  n^{-1/2}$ and the entropy bound of Lemma \ref{lemma::metricentropybounds} to obtain:
\begin{align*}
\text{(IIA2)}& \lesssim n^{-1/2} \mathcal{J}_{\infty}\left(\{\log k(n)\}^{\nu} \{\rho_{n,\infty} +  n^{-1/2}\}, \mathcal{F}\right) \\
& \lesssim n^{-1/2} \left[ \{\log k(n)\}^{\nu}\{\rho_{n,\infty}+  n^{-1/2}\}\right]^{1-1/(2\alpha)}.
\end{align*}
To bound (IIA3), we apply Lemma \ref{lemma::empProcDeltastar2} and the entropy bound of Lemma \ref{lemma::metricentropybounds} to obtain:
\begin{align*}
  \text{(IIA3)} & \lesssim  n^{-1/2} r_n^* \mathcal{J}_{\infty} \left( (\{\log k(n)\}^{\nu}\{ \rho_{n,\infty}  + n^{-1/2}\} ) / r_n^*  ,\mathcal{F} \right) + \min\{\delta^{1 - 1/(2\alpha)}, \rho_{n,\infty}\}  r_n^*  \sqrt{k(n) \log n / n}\\
  & \lesssim  o\left(n^{-1/2} \left[ \{\log k(n)\}^{\nu}\{\rho_{n,\infty}+  n^{-1/2}\}\right]^{1-1/(2\alpha)} \right) +    \min\{\delta, \rho_{n,\infty } \} r_n^*  \sqrt{k(n) \log n / n}   ,
\end{align*}
where the final inequality follows from \ref{cond::regularityOnActionSpace} and $ r_n^* = o(1)$.
Finally, combining the bounds for (IIA1), (IIA2), and (IIA3), we obtain the bound:
\begin{align*}
 \text{(IIA)} & \lesssim   \min\{\delta, \rho_{n,\infty}\}  r_n^*  + n^{-1/2} \left[ \{\log k(n)\}^{\nu}\{\rho_{n,\infty}+  n^{-1/2}\}\right]^{1-1/(2\alpha)} + \min\{\delta, \rho_{n,\infty } \} r_n^*  \sqrt{k(n) \log n / n}   .
\end{align*}
We now turn to (IIB). Firstly, observe that term (IIB) is zero if $\mathbb{I}_{g_1, g_2} = 0$; that is, if $g_1$ and $g_2$ in the risk definition of \eqref{eqn::popriskRR} equal the identity function $(x \mapsto x)$. Therefore, (IIB) can be bounded as:
 \begin{align}
   \text{(IIB)} &=  \mathbb{I}_{g_1, g_2}  \max_{m \in \{1,2\}}  E_0^n \left[  \mathbb{I}_{A_n}\sup_{\phi \in \mathcal{H}_{2, \delta}^{k(n)}}  \left|\overline{P}_n \overline{\Delta}^{(m)}_{\pi_{n,\diamond}, \mu_{n,\diamond}^*}(\cdot\,, \phi) \right| \right]\nonumber\\
   & \leq \mathbb{I}_{g_1, g_2}\max_{m \in \{1,2\}}  E_0^n \left[  \mathbb{I}_{A_n}\sup_{\phi \in \mathcal{H}_{2, \delta}^{k(n)}} \left| \overline{P}_n \overline{\Delta}_{\pi_{n,\diamond}, \mu_{n,\diamond}}^{*(m)}(\cdot\,, \phi) \right| \right] \nonumber \\ %\label{eqn::boundZeroSieve}\\
   & \quad +\mathbb{I}_{g_1, g_2} \max_{m \in \{1,2\}}  E_0^n\left[  \mathbb{I}_{A_n}\sup_{\phi \in \mathcal{H}_{2, \delta}^{k(n)}} \left| (\overline{P}_n-  \overline{P}_0) \left( \overline{\Delta}^{(m)}_{\pi_{n,\diamond}, \mu_{n,\diamond}^*}( \phi) - \overline{\Delta}_{\pi_{n,\diamond}, \mu_{n,\diamond}}^{*(m)}(\cdot\,, \phi)\right) \right| \right]\label{eqn::boundSplitSieve} \\
    & \quad +\mathbb{I}_{g_1, g_2} \max_{m \in \{1,2\}}  E_0^n\left[  \mathbb{I}_{A_n}\sup_{\phi \in \mathcal{H}_{2, \delta}^{k(n)}} \left| \overline{P}_0\left( \overline{\Delta}^{(m)}_{\pi_{n,\diamond}, \mu_{n,\diamond}^*}(\cdot\,, \phi) - \overline{\Delta}_{\pi_{n,\diamond}, \mu_{n,\diamond}}^{*(m)}(\cdot\,, \phi)\right) \right| \right]. %\nonumber\\
    % &\leq  0 +  \text{(IIB1)} +  \text{(IIB2)}  +  \text{(IIB3)}, \nonumber
 \label{eqn::theorem2::justprojection}
\end{align}
The first term on the right is zero since $\mathcal{H}_{2, \delta}^{k(n)} \subset \mathcal{H}_{k(n)}$ and the sieve-MLEs $\{\mu_{n,j}^*: j \in [J]\}$ satisfy $ \overline{P}_n \overline{\Delta}_{\pi_{n,\diamond}, \mu_{n,\diamond}}^{*(m)}(\cdot\,, \phi) = 0$ for all $\phi \in\mathcal{H}_{k(n)}$. By the triangle inequality and using that $\mathcal{H}_{2,\delta}^{k(n)}\subseteq \{\phi_1+\phi_2 : \phi_1\in \mathcal{H}_{\delta},\phi_2\in \mathcal{H}_{1,\delta}^{k(n)}\}$, we can further upper bound \eqref{eqn::boundSplitSieve} as
\begin{align*}
     \text{(IIB)}  \leq  \text{(IIB1)} +  \text{(IIB2)}  +  \text{(IIB3)},
\end{align*}
where we define:
\begin{align*}
    \text{(IIB1)} &:= \mathbb{I}_{g_1, g_2} \max_{m \in \{1,2\}}  E_0^n \left[  \mathbb{I}_{A_n}\sup_{\phi \in \mathcal{H}_{\delta} } \left| (\overline{P}_n-  \overline{P}_0) \left( \overline{\Delta}^{(m)}_{\pi_{n,\diamond}, \mu_{n,\diamond}^*}( \phi) - \overline{\Delta}_{\pi_{n,\diamond}, \mu_{n,\diamond}}^{*(m)}(\cdot\,, \phi)\right) \right|\right]\nonumber \\
      \text{(IIB2)}  & := \mathbb{I}_{g_1, g_2} \max_{m \in \{1,2\}}  E_0^n\left[  \mathbb{I}_{A_n}\sup_{\phi \in \mathcal{H}_{1, \delta}^{k(n)}} \left| (\overline{P}_n-  \overline{P}_0) \left( \overline{\Delta}^{(m)}_{\pi_{n,\diamond}, \mu_{n,\diamond}^*}( \phi) - \overline{\Delta}_{\pi_{n,\diamond}, \mu_{n,\diamond}}^{*(m)}(\cdot\,, \phi)\right) \right|\right[ \nonumber\\
       \text{(IIB3)}  & := \mathbb{I}_{g_1, g_2} \max_{m \in \{1,2\}}  E_0^n\left[  \mathbb{I}_{A_n}\sup_{\phi \in \mathcal{H}_{2, \delta}^{k(n)}} \left| \overline{P}_0\left( \overline{\Delta}^{(m)}_{\pi_{n,\diamond}, \mu_{n,\diamond}^*}(\cdot\,, \phi) - \overline{\Delta}_{\pi_{n,\diamond}, \mu_{n,\diamond}}^{*(m)}(\cdot\,, \phi)\right) \right| \right].
\end{align*}
 We now bound terms (IIB1), (IIB2), and (IIB3). Notice that term (IIB2) is identical to term (IIA3), which we bounded earlier. Hence, our earlier bound implies that
$$ \text{(IIB2)} \lesssim  \mathbb{I}_{g_1, g_2}  o\left(n^{-1/2} \left[ \{\log k(n)\}^{\nu}\{\rho_{n,\infty}+  n^{-1/2}\}\right]^{1-1/(2\alpha)} \right) +  \mathbb{I}_{g_1, g_2} \rho_{n,\infty }  r_n^*  \sqrt{k(n) \log n / n}   .$$
We claim that (IIB1) satisfies the bound:
\begin{align*}
    \text{(IIB1)} &\lesssim \, \mathbb{I}_{g_1, g_2} \left\{n^{-1/2} r_n^* \mathcal{J}_{\infty} \left( \delta  / r_n^*  , \mathcal{F}\right) + \sup_{\phi \in \mathcal{H}_{\delta}} \|\phi\|_{\infty} r_n^*\sqrt{k(n) \log n / n} \right\} \\
    &\lesssim \, \mathbb{I}_{g_1, g_2} \left\{n^{-1/2} r_n^* \mathcal{J}_{\infty} \left( \delta  / r_n^*  , \mathcal{F}\right) +   \delta^{1-1/(2\alpha)}r_n^*\sqrt{k(n) \log n / n} \right\}.
\end{align*}
In the above, the first inequality follows from Lemma \ref{lemma::empProcDeltastar} with $\mathcal{G} :=  \mathcal{H}_{\delta}$ and the entropy bound $\mathcal{J}_{\infty}(\delta , \mathcal{H}_{\delta}) \lesssim \mathcal{J}_{\infty}(\delta,  \mathcal{F})$ that we derived earlier. For the final inequality, we used that $\sup_{\phi \in \mathcal{H}_{\delta}} \|\phi\|_{\infty}  \lesssim \delta^{1-1/(2\alpha)}$ by \ref{cond::theorem2::couplings}. Next, to bound (IIB3), note, by Lemma \ref{lemma::P0boundSieve1}, \ref{cond::sieveApproxthrm}, Lemma \ref{lemma::metricentropybounds}, and Event \ref{event::debias_mu_n_star}, that:
\begin{align*}
\text{(IIB3)} &\lesssim  \mathbb{I}_{g_1, g_2}\sup_{\phi \in \mathcal{H}_{2, \delta}^{k(n)}} \norm{ \phi}_{\infty}  \left\{r_n^*\right\}^2\leq   \mathbb{I}_{g_1, g_2}\{\delta^{1-1/(2\alpha)} + \rho_{n,\infty} \}\left\{r_n^*\right\}^2.
\end{align*}
Combining the bounds for (IIB1), (IIB2), and (IIB3), we find:
\begin{align*}
       \text{(IIB)} &\lesssim   \mathbb{I}_{g_1, g_2} n^{-1/2} \left[ \{\log k(n)\}^{\nu}\{\rho_{n,\infty}+  n^{-1/2}\}\right]^{1-1/(2\alpha)} + \mathbb{I}_{g_1, g_2} n^{-1/2} r_n^* \mathcal{J}_{\infty} \left( \delta   / r_n^*  , \mathcal{F}\right) \\ 
          & \quad   + \mathbb{I}_{g_1, g_2}   \delta^{1-1/(2\alpha)}   r_n^*\sqrt{k(n) \log n / n}   
     +  \mathbb{I}_{g_1, g_2}\{\delta^{1-1/(2\alpha)} + \rho_{n,\infty} \}\left\{r_n^*\right\}^2.
    % &\lesssim   \mathbb{I}_{g_1, g_2} n^{-1/2} \left[ \{\log k(n)\}^{\nu}\{\rho_{n,\infty}+  n^{-1/2}\}\right]^{1-1/(2\alpha)} + \mathbb{I}_{g_1, g_2}o(n^{-1/2})\delta^{1-1/(2\alpha)} 
    % +  \mathbb{I}_{g_1, g_2}\{\delta^{1-1/(2\alpha)} + \rho_{n,\infty} \}\left\{r_n^*\right\}^2\\
     %   &\lesssim   \mathbb{I}_{g_1, g_2}  n^{-1/2} \left[ \{\log k(n)\}^{\nu}\{\rho_{n,\infty}+  n^{-1/2}\}\right]^{1-1/(2\alpha)}  +    \mathbb{I}_{g_1, g_2}o(n^{-1/2})\delta^{1-1/(2\alpha)}      +    \mathbb{I}_{g_1, g_2} \rho_{n,\infty} \left(r_n^*\right)^2  ,
\end{align*}
Now, note that $ \mathbb{I}_{g_1, g_2} (r_n^*)^2 = o(n^{-1/2})$ by \ref{cond::A2Nuisance} and the constraint that $k(n) = o(n^{2\beta/(2\beta+1)}/ \log n)$. Since $r_n^* =o(1)$, we have $ n^{-1/2}r_n^* \mathcal{J}_{\infty} \left( \delta   / r_n^*  , \mathcal{F}\right)\lesssim \delta^{1-1/(2\alpha)}o(n^{-1/2})$. Moreover, since $  \sqrt{k(n) \log n / n}   \leq r_n^*$, we have 
$$  \mathbb{I}_{g_1, g_2}\delta^{1-1/(2\alpha)}   r_n^*\sqrt{k(n) \log n / n} \leq   \mathbb{I}_{g_1, g_2}\delta^{1-1/(2\alpha)}   (r_n^*)^2 = \delta^{1-1/(2\alpha)}   o(n^{-1/2}).$$ 
Combining these bounds, we find 
\begin{align*}
       \text{(IIB)} &\lesssim   \mathbb{I}_{g_1, g_2}  n^{-1/2} \left[ \{\log k(n)\}^{\nu}\{\rho_{n,\infty}+  n^{-1/2}\}\right]^{1-1/(2\alpha)}  +    \mathbb{I}_{g_1, g_2}o(n^{-1/2})\delta^{1-1/(2\alpha)}   \\
       &\quad+    \mathbb{I}_{g_1, g_2} \rho_{n,\infty} \left(r_n^*\right)^2 .
\end{align*}

Thus, combining our bounds for (IIA) and (IIB), we obtain:
\begin{align*}
  \text{(II)} & \lesssim   r_n^*  \min\{\delta, \rho_{n,\infty}\}    + n^{-1/2} \left[ \{\log k(n)\}^{\nu}\{\rho_{n,\infty}+  n^{-1/2}\}\right]^{1-1/(2\alpha)} + \min\{\delta, \rho_{n,\infty } \} r_n^*  \sqrt{k(n) \log n / n}  \\ 
  & \quad +   \mathbb{I}_{g_1, g_2}  n^{-1/2} \left[ \{\log k(n)\}^{\nu}\{\rho_{n,\infty}+  n^{-1/2}\}\right]^{1-1/(2\alpha)}  +    \mathbb{I}_{g_1, g_2}o(n^{-1/2})\delta^{1-1/(2\alpha)}      \\
  &\quad +    \mathbb{I}_{g_1, g_2} \rho_{n,\infty} \left(r_n^*\right)^2 .
\end{align*}
To bound the final term above, note, by \ref{cond::A2Nuisance} and the condition that $\mathbb{I}_{g_1, g_2} k(n) = \mathbb{I}_{g_1, g_2} o(n^{2\beta/(2\beta+1)}/ \log n)$, that 
\begin{align*}
    \mathbb{I}_{g_1, g_2} \rho_{n,\infty} \left(r_n^*\right)^2 &= \rho_{n,\infty}  o(n^{-1/2}) \\
    & = \left[ \{\log k(n)\}^{\nu}\{\rho_{n,\infty}+  n^{-1/2}\}\right]^{1-1/(2\alpha)}O\left(  n^{-1/2}\right),
\end{align*} 
where, for the final equality, we use that $\log k(n) = O(\log n)$ and $1 - \frac{1}{2\alpha} < 1$. Furthermore, for the term $\min\{\delta, \rho_{n,\infty } \} r_n^*  \sqrt{k(n) \log n / n} $, we have the trivial bound $\min\{\delta, \rho_{n,\infty } \} r_n^*  \sqrt{k(n) \log n / n}  \leq \min\{\delta, \rho_{n,\infty } \} r_n^*  $, since $ \sqrt{k(n) \log n / n}  = o(1)$.
Hence,
\begin{align*}
  \text{(II)} 
     & \lesssim   r_n^*  \min\{\delta, \rho_{n,\infty}\} + n^{-1/2} \left[ \{\log k(n)\}^{\nu}\{\rho_{n,\infty}+  n^{-1/2}\}\right]^{1-1/(2\alpha)}  + 
     \mathbb{I}_{g_1, g_2} o(n^{-1/2}) \delta^{1-1/(2\alpha)}.
\end{align*}
Here, we combined alike terms and used that $\mathbb{I}_{g_1, g_2}(r_n^*)^2 =    o(n^{-1/2})$ by \ref{cond::A2Nuisance}.  We also used that $ \rho_{n,\infty} o(n^{-1/2})  = \left[ \{\log k(n)\}^{\nu}\{\rho_{n,\infty}+  n^{-1/2}\}\right]^{1-1/(2\alpha)}O\left(  n^{-1/2}\right)$, since $\log k(n) = O(\log n)$ and $1 - \frac{1}{2\alpha} < 1$. 

Finally, combining our bounds for terms (I) and (II) and combining terms, we obtain the following bound:
\begin{align*}
    E_0^n\{  \mathbb{I}_{A_n} \phi_{n,1}(\delta)\} &\leq  \text{(I)}+  \text{(II)} \\
    & \lesssim    \delta^{1-1/(2\alpha)} \left[ r_n^* n^{-\gamma/(2\gamma+1)} + o(n^{-1/2}) + r_n^*  \sqrt{k(n) \log n / n} +  n^{-1/2}  (s_n^*)^{1/(2\alpha)} \right]\\
    & \quad + r_n^*  \min\{\delta, \rho_{n,\infty}\} + n^{-1/2} \left[ \{\log k(n)\}^{\nu}\{\rho_{n,\infty}+  n^{-1/2}\}\right]^{1-1/(2\alpha)}  \\
   + &  \quad 
     \mathbb{I}_{g_1, g_2} o(n^{-1/2}) \delta^{1-1/(2\alpha)}  \\
      & \lesssim   \delta^{1-1/(2\alpha)}   \left[ r_n^*   n^{-\gamma/(2\gamma+1)}  + r_n^*  \sqrt{k(n) \log n / n} +  n^{-1/2}  (s_n^*)^{1/(2\alpha)} + o(n^{-1/2})  \right]\\
    & \quad  + r_n^*  \min\{\delta, \rho_{n,\infty}\} + n^{-1/2} \left[ \{\log k(n)\}^{\nu}\{\rho_{n,\infty}+  n^{-1/2}\}\right]^{1-1/(2\alpha)} .
\end{align*}
Recall that $ \rho_{n,\infty} := \{\log k(n)\}^{\nu} k(n)^{-\rho} \gtrapprox n^{-1/2}$ with $\rho > 1/2$ and note, by \ref{cond::A2Nuisance} and \ref{cond::theorem2::couplings}, that $(s_n^*)^{1/(2\alpha)} = o(1)$. Using this, we finally obtain the desired bound:
\begin{align*}
    E_0^n\{ \mathbb{I}_{A_n} \phi_{n,1}(\delta) \}&\lesssim  \delta^{1-1/(2\alpha)}   \left\{ r_n^*  n^{-\gamma/(2\gamma+1)}  + r_n^*  \sqrt{k(n) \log n / n} + o(n^{-1/2})  \right\}\\
    & \quad + r_n^*  \min\{\delta, \{\log k(n)\}^{\nu} k(n)^{-\rho}\} +  n^{-1/2} \left[ \{\log k(n)\}^{2\nu}\{k(n)^{-\rho}\}\right]^{1-1/(2\alpha)}  .
\end{align*}

%We can further bound the above as follows. Let $\alpha > 0$ satisfy Condition \ref{cond::theorem2::supnorm}. Applying Condition \ref{cond::theorem2::supnorm} and using that $\norm{\Pi_{k(n)} h} \leq \norm{h}$ for each $\phi \in \mathcal{H}$, we have $\delta_{\infty}  + \delta_{k(n),1}^{\infty} +  \mathbb{I}_{g_1, g_2}  \delta^{\infty}_{k(n),2} \lesssim  \delta^{1-1/(2\alpha)}  + \delta_{k(n),1}^{\alpha}  +  \mathbb{I}_{g_1, g_2}  \delta^{1-1/(2\alpha)}_{k(n),2}  \lesssim \delta^{1-1/(2\alpha)}$. Denote $s_n^* := \max_{j \in [J]} E_0\norm{\pi_{n,j} - \pi_0} + \max_{1\leq j \leq J} E_0\norm{\mu_{n,j} - \mu_0} + \sqrt{k(n) \log n /n}$ and recall $\rho_{n,\infty} = \delta_{k(n),1}$ is the localized sieve approximation rate. We then obtain
 % \begin{align*}
  %  \phi_{n,1}(\delta) & \lesssim   \delta \max_{j \in [J]} \left\{E_0\norm{( \mu_{n,j}^*-\mu_0   ) (\pi_0  - \pi_{n,j} )}    \right\}\\
   % & \quad +\rho_{n,\infty} \left\{\max_{j \in [J]} E_0\norm{\mu_{n,j} - \mu_0}  + \sqrt{k(n) \log n /n} \right\} \\
    % & \quad +  \mathbb{I}_{g_1, g_2}\delta \max_{j \in [J]} \left\{E_0\norm{(\mu_{n,j} - \mu_0)^2} +E_0\norm{(\mu_{n,j}^* - \mu_0)^2} \right\}\\
    %& \quad +  n^{-1/2}s_n^* \mathcal{J}_{\infty} \left( \max\{ \delta , n^{-1/2}\}  /s_n^*  , \mathcal{F}\right)  +  n^{-1/2} \mathcal{J}_{\infty}\left(\max\{n^{-1/2}, \\rho_{n,\infty}\}, \mathcal{F}\right)\\
  %&% \quad +\delta^{1-1/(2\alpha)}  \left(\max_{j \in [J]}E_0\norm{\mu_{n,j} - \mu_0}  \sqrt{k(n) \log n / n}  + k(n) \log n / n\right) 
%\end{align*}
%where $\rho_{n,\infty} = \delta_{k(n),1}$ is the localized sieve approximation rate.
 \end{proof}

 \subsection{Proof of Theorem \ref{theorem::EpLearnerOracleEff}}

\begin{proof}[Proof of Theorem \ref{theorem::EpLearnerOracleEff}]
 
   It suffices to show $\norm{\theta_{n,k(n)}^* - \theta_{n,0}} =\smallO_p(n^{-\alpha/(2\alpha+1)})$ as this implies, by \ref{cond::theorem2::oracleRateTight}, that
    $$\|\theta_{n, k(n)}^* - \theta_{n,0}\|/E_0^n\|\theta_{n,0} - \theta_0\|=\smallO_p(n^{-\alpha/(2\alpha+1)})/ E_0^n\|\theta_{n,0} - \theta_0\|=\smallO_p(1),$$
    as desired.

    To this end, note that $n^{1/2} = o(n^{2c(\beta,\gamma)/(2c(\beta,\gamma) + 1)})$ since $\beta, \gamma > 1/2$. Therefore, since $k(n) \leq  o(n^{1/2}/\log n)$, we have $k(n) \leq o(n^{2c(\beta,\gamma)/(2c(\beta,\gamma) + 1)}/ \log n )$ satisfies the growth rate bound of Theorem \ref{theorem::EpLearnerRate}. Thus, by Theorem \ref{theorem::EpLearnerRate}, we have $\norm{\theta_{n,k(n)}^* - \theta_{n,0}} =\smallO_p(n^{-\alpha/(2\alpha+1)}) + \mathcal{O}_p(\varepsilon_n)$, where $\varepsilon_n$ is defined above Theorem \ref{theorem::EpLearnerRate}. Hence, to show $\norm{\theta_{n,k(n)}^* - \theta_{n,0}} =\smallO_p(n^{-\alpha/(2\alpha+1)})$, it suffices to show that $\varepsilon_n =\smallO_p(n^{-\alpha/(2\alpha+1)})$. In view of the minimum in the definition of $\varepsilon_n$ and recalling that we assume $\rho = \alpha$, it further suffices to show that 
    $$a_n = \{\log n\}^{2\nu}\left\{n^{-\beta/(2\beta+1)}  + \sqrt{k(n) \log n /n} + \sqrt{k(n)/ n}\right\} k(n)^{-\alpha}= o(n^{-2\alpha/(2\alpha+1)}).$$
    This holds by assumption on our growth condition on $k(n)$.
For the choices of $\beta$, $\gamma$, and $\alpha$ required by our conditions, Lemma \ref{lem:exist-k} guarantees the existence of a sequence $k(n)$ for which this holds satisfying our conditions. We now turn to proving Lemma \ref{lem:exist-k}.

 \end{proof}

\begin{lemma}\label{lem:exist-k}
Suppose $\alpha>1/2$ and $0<\min\{\tfrac{1}{2}, \beta\}\leq\gamma$. If $\beta > \min\{\tfrac{1}{2}, 2\alpha/(4\alpha^2+1)\}$, then there exists a sequence $k(n)$ such that 
\[
\frac{(\log n)\,k(n)}{n^{2c(\beta,\gamma)/(2c(\beta,\gamma)+1)}}\to 0
\quad\text{and}\quad
a_n\, n^{\tfrac{2\alpha}{2\alpha+1}} \to 0,
\]
where $c(\beta,\gamma) = \min\{\gamma, \beta, 1/2\}$ and
\[
a_n = \{\log n\}^{2\nu}\Bigl\{n^{-\beta/(2\beta+1)} + \sqrt{k(n)\log n /n} + \sqrt{k(n)/n}\Bigr\}\, k(n)^{-\alpha}.
\]
\end{lemma}
\begin{proof}[Proof of Lemma \ref{lem:exist-k} (short version)]
Let $c:=c(\beta,\gamma)=\min\{\gamma,\beta,1/2\}$. Since $\gamma\ge \min\{\beta,1/2\}$ by assumption, we have
$c=\min\{\beta,1/2\}$. Take $k(n)=n^{t}/(\log n)^{s}$ with $t\in(0,1)$ and $s>1$.

\emph{(1) First constraint.} 
\[
\frac{(\log n)k(n)}{n^{2c/(2c+1)}}=n^{\,t-\frac{2c}{2c+1}}(\log n)^{1-s}\to 0
\quad\Longleftrightarrow\quad
t<U(c):=\frac{2c}{2c+1}.
\]

\emph{(2) Second constraint.}
\[
a_n=\{\log n\}^{2\nu}\Bigl\{n^{-\beta/(2\beta+1)}+\sqrt{k(n)\log n/n}+\sqrt{k(n)/n}\Bigr\}k(n)^{-\alpha}.
\]
Since $(\log n)^r=o(n^\delta)$ for any fixed $r\in\mathbb{R}$ and $\delta>0$, logarithmic factors can be absorbed once the polynomial exponents in $n$ are strictly negative. In particular, for the $\sqrt{k\log n/n}$ term, the extra $(\log n)^{1/2}$ factor is negligible compared to $n^\delta$ for some $\delta>0$. It is therefore enough that the $n$-exponents are negative for
\[
n^{-\beta/(2\beta+1)}k(n)^{-\alpha}n^{2\alpha/(2\alpha+1)} 
\quad\text{and}\quad
\Bigl(\frac{k(n)}{n}\Bigr)^{1/2}k(n)^{-\alpha}n^{2\alpha/(2\alpha+1)}.
\]
Writing $k(n)^{-\alpha}=n^{-\alpha t}(\log n)^{\alpha s}$ gives the two inequalities
\[
-\frac{\beta}{2\beta+1}-\alpha t+\frac{2\alpha}{2\alpha+1}<0
\quad\text{and}\quad
\frac{t-1}{2}-\alpha t+\frac{2\alpha}{2\alpha+1}<0,
\]
i.e.
\[
t>\frac{1}{\alpha}\!\left(\frac{2\alpha}{2\alpha+1}-\frac{\beta}{2\beta+1}\right)
\quad\text{and}\quad
t>\frac{1}{2\alpha+1}.
\]
Hence the second constraint is equivalent to $t>L(\alpha,\beta)$ where
\[
L(\alpha,\beta):=\max\!\left\{\frac{1}{2\alpha+1},\ \frac{1}{\alpha}\!\left(\frac{2\alpha}{2\alpha+1}-\frac{\beta}{2\beta+1}\right)\right\}.
\]

\emph{(3) Nonemptiness of }$(L(\alpha,\beta),U(c))$.
Let $m:=\min\{\beta,1/2\}$ so $U(c)=U(m)=\frac{2m}{2m+1}$. It remains to show
\[
\frac{1}{\alpha}\!\left(\frac{2\alpha}{2\alpha+1}-\frac{\beta}{2\beta+1}\right)\;<\;\frac{2m}{2m+1}.
\]
If $\beta\le 1/2$ then $m=\beta$ and the above is equivalent to $\beta>\frac{2\alpha}{4\alpha^2+1}$.  
If $\beta\ge 1/2$, note that $g(\beta):=\frac{1}{\alpha}\!\left(\frac{2\alpha}{2\alpha+1}-\frac{\beta}{2\beta+1}\right)$ is decreasing in $\beta$ (since $\beta\mapsto \beta/(2\beta+1)$ is increasing). Thus
\[
g(\beta)\le g(1/2)=\frac{6\alpha-1}{4\alpha(2\alpha+1)}<\frac{1}{2}
\quad\text{since }8\alpha^2-8\alpha+2=8(\alpha-\tfrac12)^2>0,
\]
and $2m/(2m+1)=1/2$ when $m=1/2$. Hence in all cases $L(\alpha,\beta)<U(c)$ provided
$\beta>\min\{1/2,\,2\alpha/(4\alpha^2+1)\}$.

Pick any $t\in\bigl(L(\alpha,\beta),U(c)\bigr)$ and, say, $s=2$. Then both constraints hold, completing the proof.
\end{proof}

Recall that $$a_n = \{\log n\}^{2\nu}\left\{n^{-\beta/(2\beta+1)}  + \sqrt{k(n) \log n /n} + \sqrt{k(n)/ n}\right\} k(n)^{-\alpha}$$ under the conditions of the theorem

\begin{lemma}\label{lem:exist-k}
Suppose $\alpha>1/2$ and $0<\min\{\tfrac{1}{2}, \beta\}\leq\gamma$. If $\beta > \min\{\tfrac{1}{2}, 2\alpha/(4\alpha^2+1)\}$, then there exists a sequence $k(n)$ such hat $\frac{(\log n)\,k(n)}{n^{2c(\beta,\gamma)/(2c(\beta,\gamma)+1)}}\to 0$   and $a_n n^{\tfrac{2\alpha}{2\alpha+1}} \rightarrow 0 $, where $c(\beta,\gamma) = \min\{\gamma, \beta, 1/2\}$
 
\end{lemma}
\begin{proof}[Proof of Lemma \ref{lem:exist-k}]
For $k(n)=n^t/(\log n)^s$, the requirement $a_n=o(n^{-2\alpha/(2\alpha+1)})$ together with
\[
\frac{(\log n)k(n)}{n^{2\beta/(2\beta+1)}}\to 0,
\qquad 
n^{-1/2}(\log n)k(n)\to 0
\]
is equivalent to
\[
L(\alpha,\beta) < t < U(\beta), \qquad s>1,
\]
where 
\[
L(\alpha,\beta):=\max\!\Bigl\{\tfrac{1}{2\alpha+1},\,\tfrac{1}{\alpha}\Bigl(\tfrac{2\alpha}{2\alpha+1}-\tfrac{\beta}{2\beta+1}\Bigr)\Bigr\},
\qquad
U(\beta):=\min\!\Bigl\{\tfrac12,\,\tfrac{2\beta}{2\beta+1}\Bigr\}.
\]
We claim this interval is nonempty iff $\beta>2\alpha/(4\alpha^2+1)$. Fix $\epsilon>0$ small and choose $\sigma$ with $L(\alpha,\beta)+\epsilon<\sigma<U(\beta)$; then
\[
k(n):=\frac{n^\sigma}{(\log n)^2}
\]
satisfies
\[
\omega\!\Bigl(n^{\epsilon+L(\alpha,\beta)}\Bigr)\le k(n)\le o\!\bigl(n^{U(\beta)}/\log n\bigr),
\]
since $\sigma<U(\beta)$ implies $n^\sigma=o(n^{U(\beta)}/\log n)$.

It remains to prove the claim. Let
\[
A:=\frac{1}{2\alpha+1},\qquad 
B:=\frac{1}{\alpha}\Bigl(\frac{2\alpha}{2\alpha+1}-\frac{\beta}{2\beta+1}\Bigr),\qquad
C:=\tfrac12,\qquad
D:=\frac{2\beta}{2\beta+1}.
\]
Then $L(\alpha,\beta)=\max\{A,B\}$ and $U(\beta)=\min\{C,D\}$. Since $A<C$ for all $\alpha>0$, the condition $L<U$ is equivalent to $B<U$.

If $\beta\le\tfrac12$, then $U=D$ and
\[
B<D 
\;\Longleftrightarrow\;
\frac{1}{\alpha}\Bigl(\frac{2\alpha}{2\alpha+1}-\frac{\beta}{2\beta+1}\Bigr)
< \frac{2\beta}{2\beta+1}
\;\Longleftrightarrow\;
\beta>\frac{2\alpha}{4\alpha^2+1}.
\]
If $\beta\ge\tfrac12$, then $U=C=\tfrac12$ and, since $A<\tfrac12$, we need $B<\tfrac12$. One checks that $\beta>\tfrac{2\alpha}{4\alpha^2+1}$ implies $B<\tfrac12$, whereas if $\beta\le\tfrac{2\alpha}{4\alpha^2+1}$ then $B\ge D\ge U$, so $L\ge U$.

Hence $L(\alpha,\beta)<U(\beta)$ iff $\beta>\tfrac{2\alpha}{4\alpha^2+1}$, proving the claim and the lemma.
\end{proof}

\section{Rate for debiased outcome regression nuisance}
\label{appendix::debiasedrate}

In this section, we verify that Condition~\ref{cond::outcomerateDebiased} holds when using the squared error loss in Algorithm~\ref{alg::debiasing} (Method~2). Although we focus on the squared error loss here, the argument relies primarily on the strong convexity of the loss and thus extends, with minor modifications, to more general loss functions.

The following lemma shows, under reasonable conditions, that 
$$\sum_{j=1}^J \bigl\|\mu_{n,j}^* - \mu_0\bigr\|_{\overline P_0}
\;\lesssim\;
\sum_{j=1}^J  \bigl\|\mu_{n,j} - \mu_0\bigr\|_{\overline P_0}
\;+\;
O_p\!\left(\sqrt{\tfrac{k(n)\log n}{n}}\right).$$
Assuming this bound, Condition~\ref{cond::outcomerateDebiased} follows directly from the nuisance rate condition (Condition~\ref{cond::outcomerateSecond}) on the initial nuisance estimators $(\mu_{n,j}: j \in [J])$.

Let $k = k(n)$. In the following lemma, define the population analogue of $\beta_n$ in Algorithm~\ref{alg::debiasing} as
\[
\beta_0
= \argmin_{\beta \in \mathbb{R}^{2k}}
\sum_{j=1}^K
E_0\!\left[
\frac{1}{\pi_{n,j}(A \mid W)}
\left\{Y - \mu_{n,j}(A, W) - \beta^\top \widehat{\varphi}_{k,j}(A, W)\right\}^2
\;\middle|\;
\mathcal{D}_n \setminus \mathcal{D}_n^j
\right].
\]

\begin{lemma}\label{lem:proj_rate}
Let $\beta_n$ and $\mu_{n,\diamond}^*$ be obtained from Algorithm~\ref{alg::debiasing} using Method~2.  
Assume Condition~\ref{cond::boundPos} and suppose that, for all $n$, 
$\max\{\|\beta_n\|_{\infty},\,\|\beta_0\|_{\infty}\}<M$ almost surely.  
Let $k=k(n)$. Then
\[
\bigl\|\mu_{n,\diamond}^* - \mu_0\bigr\|_{\overline P_0}
\;\le\;
\bigl\|\mu_{n,\diamond} - \mu_0\bigr\|_{\overline P_0}
\;+\;
O_p\!\left(\sqrt{\tfrac{k\log n}{n}}\right).
\]
\end{lemma}
\begin{proof}
   Let $k = k(n)$. The estimated coefficient $\beta_n$ obtained from Method 2 of Algorithm~\ref{alg::debiasing} can be written as:
    $$\beta_n = \argmin_{\beta \in \mathbb{R}^{2k}} \sum_{i=1}^n \frac{1}{\pi_{n,j(i)}(A_i \mid W_i)} \left\{Y_i - \mu_{n,j(i)}(A_i, W_i) - \beta^\top \widehat{\varphi}_{k, j(i)}(A_i, W_i) \right\}^2.$$
    Denote the corresponding population risk minimizer by 
    $$\beta_0 = \argmin_{\beta \in \mathbb{R}^{2k}} \sum_{j=1}^K E_0\left[\frac{1}{\pi_{n,j}(A \mid W)} \left\{Y - \mu_{n,j}(A, W) - \beta^\top \widehat{\varphi}_{k, j}(A, W) \right\}^2 \mid  \mathcal{D}_n \backslash \mathcal{D}_n^j\right].$$
    The first-order optimality conditions for $\beta_n$ imply, for all $\beta \in \mathbb{R}^{2k}$ that
    \begin{align*}
        \sum_{i=1}^n \frac{1}{\pi_{n,j(i)}(A_i \mid W_i)}\{\mu_{n, j}(\beta)-\mu_{n,j}\}(A_i,W_i)\left\{Y_i - \mu_{n,j(i)}^*(A_i, W_i) \right\} = 0,
    \end{align*}
    where $\mu_{n, j}(\beta) := \mu_{n,j} + \beta^\top \widehat{\varphi}_{k, j}$. Using our notation, the above implies that 
    \[
    \frac{1}{J}\sum_{j=1}^J P_{n,j}\!\left[\tfrac{1}{\pi_{n,j}}\,\{\mu_{n, j}(\beta)-\mu_{n,j}\}\{\mathcal{I}_Y - \mu_{n,j}^*\}\right]  =  0,
    \]
    where $\mathcal{I}_Y : o \mapsto y$ denotes the $y$ coordinate projection map. Adding and subtracting terms, we find, for all $\beta \in \mathbb{R}^{2k}$, that
    \begin{align*}
  \frac{1}{J}\sum_{j=1}^J  P_0\!\left[\tfrac{1}{\pi_{n,j}}\,\{\mu_{n, j}(\beta)-\mu_{n,j}\}\{\mathcal{I}_Y - \mu_{n,j}^*\}\right]   =  \frac{1}{J}\sum_{j=1}^J  (P_0 - P_{n,j})\!\left[\tfrac{1}{\pi_{n,j}}\,\{\mu_{n, j}(\beta)-\mu_{n,j}\}\{\mathcal{I}_Y - \mu_{n,j}^*\}\right].
    \end{align*}
    Moreover, the first-order optimality conditions of $\beta_0$ imply, for all $\beta \in \mathbb{R}^{2k}$, that
    $$  \frac{1}{J}\sum_{j=1}^J  P_0\!\left[\tfrac{1}{\pi_{n,j}}\,\{\mu_{n, j}(\beta)-\mu_{n,j}\}\{\mathcal{I}_Y - \mu_{n,j}(\beta_0) \}\right]  = 0. $$
Hence, combining the previous two displays, we obtain
\[
\frac{1}{J}\sum_{j=1}^J P_0\!\left[\tfrac{1}{\pi_{n,j}}\,\{\mu_{n, j}(\beta)-\mu_{n,j}\}\{\mu_{n,j}(\beta_0)-\mu_{n,j}^*\}\right]
= \frac{1}{J}\sum_{j=1}^J (P_0 - P_{n,j})\!\left[\tfrac{1}{\pi_{n,j}}\,\{\mu_{n, j}(\beta)-\mu_{n,j}\}\{\mathcal{I}_Y - \mu_{n,j}^*\}\right].
\]
Evaluating at $\beta=\beta_0$ and $\beta=\beta_n$ and subtracting the two identities yields
\[
\frac{1}{J}\sum_{j=1}^J P_0\!\left[\tfrac{1}{\pi_{n,j}}\,\{\mu_{n,j}(\beta_0)-\mu_{n,j}(\beta_n)\}\{\mu_{n,j}(\beta_0)-\mu_{n,j}^*\}\right]
= \frac{1}{J}\sum_{j=1}^J (P_0 - P_{n,j})\!\left[\tfrac{1}{\pi_{n,j}}\,\{\mu_{n,j}(\beta_0)-\mu_{n,j}(\beta_n)\}\{\mathcal{I}_Y - \mu_{n,j}^*\}\right].
\]
Since, by definition, $\mu_{n,j}(\beta_n)=\mu_{n,j}^*$, this simplifies to
\[
\frac{1}{J}\sum_{j=1}^J P_0\!\left[\tfrac{1}{\pi_{n,j}}\,\{\mu_{n,j}(\beta_0)-\mu_{n,j}^*\}^2\right]
= \frac{1}{J}\sum_{j=1}^J (P_0 - P_{n,j})\!\left[\tfrac{1}{\pi_{n,j}}\,\{\mu_{n,j}(\beta_0)-\mu_{n,j}^*\}\{\mathcal{I}_Y - \mu_{n,j}^*\}\right].
\]
Hence, using our notation and that $\tfrac{1}{\pi_{n,j}} < \eta^{-1}$ by \ref{cond::positivity}, we obtain the regret inequality:
$$\|\mu_{n,\diamond}^*  - \mu_{n,\diamond}(\beta_0) \|_{\overline{P}_0}^2 \lesssim  \frac{1}{J}\sum_{j=1}^J (P_0 - P_{n,j})\!\left[\tfrac{1}{\pi_{n,j}}\,\{\mu_{n,j}(\beta_0)-\mu_{n,j}^*\}\{\mathcal{I}_Y - \mu_{n,j}^*\}\right].$$
Since $\|\beta_n\|_{\infty} < M$ almost surely, both $\mu_{n,j}^*$ and $\mu_{n,j}(\beta_0)$ belong to the bounded function class:
\[
\mathcal{H}_{n,j}^k := \bigl\{\, \mu_{n,j} + \beta^\top \widehat{\varphi}_{k,j} : \beta \in \mathbb{R}^{2k}, \|\beta\|_{\infty} < M \,\bigr\}.
\]
Note that, since by Condition~\ref{cond::boundPos} both $\mu_{n,j}$ and $\widehat{\varphi}_{k,j}$ are almost surely uniformly bounded in $k$ and $j$, it follows that every element of $\mathcal{H}_{n,j}^k$ is uniformly bounded by a fixed constant. 

Define the random quantity $\hat\delta_n := \bigl\|\mu_{n,\diamond}^* - \mu_{n,\diamond}(\beta_0)\bigr\|_{\overline P_0},$
which we aim to bound. To obtain a rate bound, we use that the above regret inequality implies:
\[
\hat\delta_n^2
\;\le\;
\frac{1}{J}\sum_{j=1}^J  \sup_{\,g_j \in \mathcal{G}_{n,j}^k:\; \|g_j\|_{P_0} \le J C\,\hat\delta_n}
\; \left|(P_0 - P_{n,j})\, g_j\right|,
\]
where $\mathcal{G}_{n,j}^k
:= \bigl\{\, \tfrac{1}{\pi_{n,j}}(\mu_{n,j}(\beta_0)-h)\,(\mathcal{I}_Y - h) : h \in \mathcal{H}_{n,j}^k \,\bigr\},$
and \(C>0\) is a constant such that $\bigl\|\tfrac{1}{\pi_{n,j}}(\mu_{n,\diamond}(\beta_0)-\mu_{n,\diamond}^*)\,(\mathcal{I}_Y - \mu_{n,\diamond}^*)\bigr\|_{\overline P_0}
\;\le\; C\,\bigl\|\mu_{n,\diamond}(\beta_0)-\mu_{n,\diamond}^*\bigr\|_{\overline P_0},$ which is finite by Condition~\ref{cond::boundPos}.

To obtain an in-probability bound on $\hat \delta_n$, we follow a standard ERM analysis that leverages maximal inequalities for empirical processes based on uniform entropy integrals. Conditional on the training data $\mathcal{D}_n \setminus \mathcal{D}_{n,j}$, the function class $\mathcal{G}_{n,j}^k$ has a uniform entropy integral satisfying $\mathcal{J}(\delta,\mathcal{G}_{n,j}^k) \lesssim \mathcal{J}(\delta,\mathcal{H}_{n,j}^k)$ by preservation of uniform entropy integrals under Lipschitz continuous transformations \citep{vanderVaartWellner}. Moreover, $\mathcal{H}_{n,j}^k$ is a bounded subset of a fixed affine space of (at most) dimension $2k$ and therefore has VC–subgraph dimension bounded by $O(k)$ \citep{vanderVaartWellner}. By standard bounds on uniform entropy integrals for classes with finite VC–subgraph dimension \citep{vanderVaartWellner}, we have $\mathcal{J}(\delta,\mathcal{H}_{n,j}^k) \;\lesssim\; \delta \sqrt{k \log\!\bigl(1/\delta\bigr)},$
and, thus, $\mathcal{J}(\delta,\mathcal{G}_{n,j}^k) \;\lesssim\; \delta \sqrt{k \log\!\bigl(1/\delta\bigr)}.$
Applying the local maximal inequality of \cite{vanderVaart2011local} (Theorem 2.1), we  have, for any fixed $\delta > \sqrt{k \log n / n}$, that
$$E_n^0\left[\frac{1}{J}\sum_{j=1}^J  \sup_{\,g_j \in \mathcal{G}_{n,j}^k:\; \|g_j\|_{P_0} \le J C\,\hat\delta}
\; \left|(P_0 - P_{n,j})\, g_j\right| \right] \lesssim \delta \sqrt{k \log n /n}. $$

Informally, if the above bound held with the random radius $\hat\delta_n$, then by Markov's inequality our regret inequality would yield
\[
\hat\delta_n^2 \;\lesssim\; \mathcal{O}_p\!\left(\hat\delta_n \sqrt{\frac{k \log n}{n}}\right),
\]
from which it follows that $\hat\delta_n = \mathcal{O}_p\!\left(\sqrt{\frac{k \log n}{n}}\right)$. While the bound does not hold directly for the random $\hat\delta_n$, a standard peeling argument---see the proof of Theorem~\ref{theorem::EpLearnerRate} and Section 3.2 in \cite{vanderVaartWellner}---establishes that this rate is indeed correct.

Hence, we conclude that
$$\bigl\|\mu_{n,\diamond}^* - \mu_{n,\diamond}(\beta_0)\bigr\|_{\overline P_0} =  \mathcal{O}_p\!\left(\sqrt{\frac{k \log n}{n}}\right).$$
Moreover, by the triangle inequality, we have that
\begin{align*}
    \bigl\|\mu_{n,\diamond}^* -\mu_0\bigr\|_{\overline P_0} &\leq \bigl\|\mu_{n,\diamond}(\beta_0) - \mu_0\bigr\|_{\overline P_0} + \bigl\|\mu_{n,\diamond}^* - \mu_{n,\diamond}(\beta_0)\bigr\|_{\overline P_0} \\
    &\leq   \bigl\|\mu_{n,\diamond}(\beta_0) - \mu_0\bigr\|_{\overline P_0} + \mathcal{O}_p\!\left(  \sqrt{\frac{k \log n}{n}}\right).
\end{align*}
Moreover, since $\mu_{n,\diamond}(\beta_0)$ minimizes the distance $\|\mu_{\diamond} - \mu_0\|_{\overline{P}_0}$ over $\mu_{\diamond} \in \mathcal{H}_{n, \diamond}^k$ and the initial estimate $\mu_{n,\diamond}$ is an element of $\mathcal{H}_{n, \diamond}^k$ (take $\beta = 0$), we have that
$\bigl\|\mu_{n,\diamond}(\beta_0) - \mu_0\bigr\|_{\overline P_0}  \leq \|\mu_{n,\diamond} - \mu_0\|_{\overline{P}_0}.$
Hence
$$\bigl\|\mu_{n,\diamond}^* -\mu_0\bigr\|_{\overline P_0} \leq  \|\mu_{n,\diamond} - \mu_0\|_{\overline{P}_0} + \mathcal{O}_p\!\left(  \sqrt{\frac{k \log n}{n}}\right),$$
where $k$ may grow with $n$.

\end{proof}

\end{document}